\documentclass{article}
% Packages
\usepackage[utf8]{inputenc} % allow utf-8 input
\usepackage[T1]{fontenc}    % use 8-bit T1 fonts
\usepackage{booktabs}       % professional-quality tables
\usepackage{nicefrac}       % compact symbols for 1/2, etc.
\usepackage{microtype}      % microtypography
\usepackage{times}             % times font

\usepackage[round]{natbib}
\usepackage{amssymb,amsmath,amsthm,bbm}
\usepackage[margin=1in]{geometry}
\usepackage{verbatim,float,url,dsfont}
\usepackage{graphicx,subfigure,psfrag}
\usepackage{algorithm,algorithmic}
\usepackage{mathtools,enumitem}
\usepackage{xr-hyper}
\usepackage[colorlinks=true,citecolor=blue,urlcolor=blue,linkcolor=blue]{hyperref}
\usepackage{multirow}

% Theorems and such
\newtheorem{theorem}{Theorem}
\newtheorem{lemma}{Lemma}
\newtheorem{corollary}{Corollary}
\newtheorem{proposition}{Proposition}
\theoremstyle{definition}
\newtheorem{remark}{Remark}

% Assumption
\newtheorem*{assumption*}{\assumptionnumber}
\providecommand{\assumptionnumber}{}
\makeatletter

\makeatother

% Widebar
\makeatletter
\newcommand*\rel@kern[1]{\kern#1\dimexpr\macc@kerna}
\newcommand*\widebar[1]{%
  \begingroup
  \def\mathaccent##1##2{%
    \rel@kern{0.8}%
    \overline{\rel@kern{-0.8}\macc@nucleus\rel@kern{0.2}}%
    \rel@kern{-0.2}%
  }%
  \macc@depth\@ne
  \let\math@bgroup\@empty \let\math@egroup\macc@set@skewchar
  \mathsurround\z@ \frozen@everymath{\mathgroup\macc@group\relax}%
  \macc@set@skewchar\relax
  \let\mathaccentV\macc@nested@a
  \macc@nested@a\relax111{#1}%
  \endgroup
}
\makeatother

% Min and max
\newcommand{\argmin}{\mathop{\mathrm{argmin}}}

\newcommand{\minimize}{\mathop{\mathrm{minimize}}}
\newcommand{\st}{\mathop{\mathrm{subject\,\,to}}}

% Shortcuts
\def\R{\mathbb{R}}

\def\E{\mathbb{E}}
\def\P{\mathbb{P}}
\def\T{\mathsf{T}}
\def\Cov{\mathrm{Cov}}

\def\half{\frac{1}{2}}

\def\df{\mathrm{df}}

\def\nul{\mathrm{null}}

\def\nuli{\mathrm{nullity}}

\def\supp{\mathrm{supp}}
\def\diag{\mathrm{diag}}

\def\hf{\hat{f}}

\def\halpha{\hat{\alpha}}

\def\htheta{\hat{\theta}}

\def\cK{\mathcal{K}}
\def\cH{\mathcal{H}}

\def\cN{\mathcal{N}}

\def\cS{\mathcal{S}}
\def\cT{\mathcal{T}}
\def\cW{\mathcal{W}}
\def\cX{\mathcal{X}}

\usepackage{caption}
\usepackage{mathtools}
\usepackage{ragged2e}
\newcommand{\prox}{\mathsf{prox}}

\def\S{\mathbb{S}}
\def\cC{\mathcal{C}}

\def\th{^{\text{th}}}
\def\TV{\mathrm{TV}}
\def\BV{\mathrm{BV}}
\def\KTV{\mathrm{KTV}}
\def\Risk{\mathrm{Risk}}

\def\one{\mathds{1}}

\def\dop{\Delta}
\def\dmat{D}
\def\hmat{H}
\def\genmat{D}
\def\kronmat{D_{n,d}}
\def\graphmat{S_{n,d}}
\def\dmatk{\dmat^{(k+1)}}
\def\hmatk{\hmat^{(k+1)}}
\def\kronmatk{\kronmat^{(k+1)}} 
\def\graphmatk{\graphmat^{(k+1)}} 

\def\kronset{\cT_{n,d}}
\def\sobolset{\cW_{n,d}}
\def\holderset{\cC_{n,d}}
\def\graphset{\cS_{n,d}}

\makeatletter
\newcommand{\vast}{\bBigg@{4}}
\newcommand{\Vast}{\bBigg@{5}}
\newcommand{\VAST}{\bBigg@{6}}
\makeatother

\usepackage{xcolor}
 % YW's notes
 % AH's notes

\title{Multivariate Trend Filtering for Lattice Data} 
\author{
Veeranjaneyulu Sadhanala$^a$ \and
Yu-Xiang Wang$^b$ \and 
Addison J. Hu$^c$ \and 
Ryan J. Tibshirani$^c$}
\date{\normalsize
$^a$Google \quad
$^b$University of California Santa Barbara \quad
$^c$Carnegie Mellon University}

\begin{document}
\maketitle

\vspace{-10pt}
\begin{abstract}
We study a multivariate version of trend filtering, called Kronecker trend
filtering or KTF, for the case in which the design points form a lattice in
$d$ dimensions. KTF is a natural extension of univariate trend filtering
\citep{steidl2006splines, kim2009trend, tibshirani2014adaptive}, and is defined
by minimizing a penalized least squares problem whose penalty term sums the 
absolute (higher-order) differences of the parameter to be estimated along each
of the coordinate directions. The corresponding penalty operator can be
written in terms of Kronecker products of univariate trend filtering penalty
operators, hence the name Kronecker trend filtering. Equivalently, one can view
KTF in terms of an $\ell_1$-penalized basis regression problem where the basis
functions are tensor products of falling factorial functions, a piecewise
polynomial (discrete spline) basis that underlies univariate trend filtering.

This paper is a unification and extension of the results in
\citet{sadhanala2016total, sadhanala2017higher}. We develop a complete set of
theoretical results that describe the behavior of $k\th$ order Kronecker trend 
filtering in $d$ dimensions, for every $k \geq 0$ and $d \geq 1$. This reveals a 
number of interesting phenomena, including the dominance of KTF over linear
smoothers in estimating heterogeneously smooth functions, and a phase transition
at $d=2(k+1)$, a boundary past which (on the high dimension-to-smoothness side)
linear smoothers fail to be consistent entirely. We also leverage recent results
on discrete splines from \citet{tibshirani2020divided}, in particular, discrete
spline interpolation results that enable us to extend the KTF estimate to any
off-lattice location in constant-time (independent of the size of the lattice
$n$).
\end{abstract}

\section{Introduction}
\label{sec:intro}

We consider a standard nonparametric regression model, relating real-valued 
responses $y_i \in \R$, $i=1,\ldots,n$ to design points $x_i \in \cX \subseteq
\R^d$, 
$i=1,\ldots,n$,   
\begin{equation}
\label{eq:data_model1}
y_i = f_0(x_i) + \epsilon_i, \quad i=1,\ldots,n,
\end{equation}
where $f_0 : \cX\to \R$ is the (unknown) regression function to be estimated,
and $\epsilon_i$, $i=1,\ldots,n$ are mean zero stochastic errors. In this
paper, we will focus on functions $f_0$ that display heterogeneous smoothness
across the domain $\cX$, in a sense we will make precise later. We will also
focus on the case in which the design points form a $d$-dimensional lattice:
that is, we assume \smash{$n=N^d$}, and 
\begin{equation}
\label{eq:uniform_lattice}
\{x_1,\ldots,x_n\} = \{1/N, 2/N, \ldots, 1\}^d := Z_{n,d}.
\end{equation}
The assumption of a uniformly-spaced lattice, embedded in the unit cube
$[0,1]^d$, is used only for simplicity. Essentially all results (both methods
and theory) translate over to the case of a somewhat more general lattice
structure, a Cartesian product \smash{$\{z_{i1}\}_{i=1}^{N_1} \times
  \{z_{i2}\}_{i=1}^{N_2} \times \cdots \times \{z_{id}\}_{i=1}^{N_d}$}, where
\smash{$n = \prod_{j=1}^d N_j$}, and the sets in this product are otherwise 
arbitrary. We return to this point in Section \ref{sec:general_lattice}.  

This paper is a unification and extension of \citet{sadhanala2016total,
  sadhanala2017higher} (more will be said about the relationship to these papers
in Section \ref{sec:related_work}). The models of smoothness for $f_0$ that we
will study are based on \emph{total variation} (TV). For a univariate function
$g : [a,b] \to \R$, recall that its total variation is defined as    
$$
\TV(g; [a,b]) = \sup_{a < z_1 < \cdots < z_{m+1} < b} \; \sum_{i=1}^m 
|g(z_i) - g(z_{i+1})|.
$$
For a multivariate function $f : [0,1]^d \to \R$, we will consider notions of
smoothness that revolve around the following \emph{discrete} version of
multivariate total variation:  
$$
\TV(f; Z_{n,d}) = \sum_{j=1}^d \sum_{\substack{x,z \in Z_{n,d} \\ z = x+e_j/N}}   
|f(x) - f(z)|.
$$
Here we use $e_j$ to denote the $j\th$ standard basis vector in $\R^d$, and
hence the inner sum is taken over all pairs of lattice points $x,z \in Z_{n,d}$
that differ in the $j\th$ coordinate by $1/N$ (and match in all other
coordinates). We will also consider higher-order versions of discrete
multivariate TV, which are based on higher-order differences in the summands in
the above display. We will connect our discrete notions of TV smoothness to
standard continuum notions of total variation in Section \ref{sec:tv_lines}.

Broadly speaking, there are many multivariate nonparametric regression methods
available. Many of the methods in common use are \emph{linear smoothers}:
estimators of the form \smash{$\hf(x) = w(x)^\T y$}, for a suitable weight 
function $w :\cX \to \R^n$ (which can depend on the design $x_1,\ldots,x_n$),
where we use $y = (y_1,\ldots,y_n) \in \R^n$ for the response vector. Examples
include kernel smoothing, thin-plate splines, and reproducing kernel Hilbert
space estimators. A critical shortcoming of linear smoothers is that they cannot 
be \emph{locally adaptive}---they cannot adapt to different local levels of 
smoothness exhibited by $f_0$ over $\cX$. This is a phenomenon
that has been well-documented in various settings; see Section
\ref{sec:related_work}. 

The limitations of linear smoothers---and the need for nonlinear adaptive
methods---is a major theme in this paper. The central method that drives this
story is a multivariate extension of trend filtering, which is indeed nonlinear
and locally adaptive, a claim that will be supported by experiments and theory
in the coming sections. We must note at the outset that all of the developments
in this paper hinge on the assumption of lattice data (meaning, a lattice
structure for the design points). A multivariate extension of trend filtering
for scattered data would require a completely different approach (unlike, say,
kernel smoothing or reproducing kernel Hilbert space methods, which apply
regardless of the structure of the design points). The intersection of
multivariate nonparametric regression methods and locally adaptive methods is
actually quite small, especially when we further intersect this with the set of
simple methods that are easy to use in practice, are well-understood
theoretically. For this reason, we see the contributions of the current paper,
though limited to lattice data, as being worthwhile. The development of new
multivariate trend filtering methods for scattered data is important, and an
extension is discussed in Section \ref{sec:scattered_data}, but a comprehensive
study is left to future work.

\subsection{Review: trend filtering}
\label{sec:tf}

Before describing the main proposal, we review \emph{trend filtering}, a 
relatively recent method for univariate nonparametric regression, independently
proposed by \citet{steidl2006splines, kim2009trend}. For a univariate design, 
equally-spaced (say) on the unit interval, $x_i=i/n$, $i=1,\ldots,n$, and an
integer $k \geq 0$, the $k\th$ order trend filtering estimate is defined by the
solution of the optimization problem: 
\begin{equation}
\label{eq:tf}
\minimize_{\theta \in \R^n} \; \frac{1}{2} \|y-\theta\|_2^2 + \lambda
\|\dmatk_n \theta\|_1. 
\end{equation}
Here $\lambda \geq 0$ denotes a tuning parameter, $y=(y_1,\ldots,y_n) \in 
\R^n$ is the response vector, and \smash{$\dmatk_n \in \R^{(n-k-1) \times n}$}
is the difference operator of order $k+1$, which we will also loosely call
discrete derivative operator of order $k+1$. This can be defined recursively in
the following manner:    
\begin{equation}
\label{eq:diff_mat}
\begin{gathered}
\dmat^{(1)}_n = 
\left[\begin{array}{rrrrrr} 
-1 & 1 & 0 & \ldots & 0 & 0 \\
0 & -1 & 1 & \ldots & 0 & 0 \\
\vdots & & & & & \\
0 & 0 & 0 & \ldots & -1 & 1 
\end{array}\right], \\
\dmatk_n = \dmat^{(1)}_{n-k} \, \dmat^{(k)}_n, \quad k = 1,2,3,\ldots.   
\end{gathered}
\end{equation}
The intuition behind problem \eqref{eq:tf} is as follows. The penalty term,
which penalizes the discrete derivatives of $\theta$ of order $k+1$, can be
equivalently seen as penalizing the \emph{differences} in $k\th$ discrete 
derivatives of $\theta$ at adjacent design points (due to
\eqref{eq:diff_mat}). By the sparsity-inducing property of the $\ell_1$ norm,
the $k\th$ discrete derivatives of the trend filtering solution \smash{$\htheta$} 
will be exactly equal at a subset of adjacent design points, and
\smash{$\htheta$} will therefore exhibit the structure of a $k\th$ degree
piecewise polynomial, with adaptively-chosen knots. 

Here is a summary of the properties of trend filtering, as a nonparametric
regression tool.\footnote{We have defined it in this subsection for an
  evenly-spaced design, for simplicity, but trend filtering can still be defined
  for an arbitrary design, and all of the following properties still hold; see
  Section \ref{sec:general_lattice}.}  

\begin{itemize}
\item The discrete trend filtering estimate, which is defined over the design
  points, can be ``naturally'' extended to a $k\th$ degree piecewise polynomial
  function, in fact, a $k\th$ degree discrete spline, on $[0,1]$. 
\item Trend filtering is computationally efficient (several fast algorithms
  exist for the structured, convex problem \eqref{eq:tf}), and is not much
  slower to compute than (say) the smoothing spline. 
\item Trend filtering is more locally adaptive than the smoothing spline (or any
  linear smoother). This not only carries theoretical backing (next point), but
  is clearly noticeable in practice as well. 
\item Trend filtering attains the minimax rate (in squared empirical norm) of
  \smash{$n^{-(2k+2)/(2k+3)}$} for estimating a function $f_0$ whose $k\th$ weak
  derivative has bounded total variation. The minimax linear rate (the best 
  worst-case risk that can be attained by a linear smoother) over this class is  
  \smash{$n^{-(2k+1)/(2k+2)}$}.    
\end{itemize}
Support for the above facts can be found in \citet{tibshirani2014adaptive}, and
the discrete spline (numerical analytic) perspective behind trend filtering
is further developed in in \citet{tibshirani2020divided}. More will be said
about all of these properties in the coming sections, as analogous properties
will be developed for a multivariate extension of trend filtering. 

To prepare for this multivariate extension, it helps to recast the discrete
problem \eqref{eq:tf} in just a slightly different form. First, some notation:
for a vector $\theta \in \R^n$, we will (when convenient) index it by the
underlying design points, and write its components as $\theta(x_i)$,
$i=1,\ldots,n$ in place of $\theta_i$, $i=1,\ldots,n$. Next, we define a
difference operator, which we will again loosely refer to as a discrete
derivative operator, by
$$
(\dop\theta)(x_i) = 
\begin{cases}
\theta(x_{i+1})-\theta(x_i) & \text{if $i \leq n-1$}, \\
0 & \text{else}.
\end{cases}
$$
Naturally, we can view $\dop\theta$ as a vector in $\R^n$ with components 
$(\dop\theta)(x_i)$, $i=1,\ldots,n$. Higher-order discrete derivatives are
obtained by repeated application of the same formula; we abbreviate
$(\dop^2\theta)(x_i) = (\dop(\dop\theta))(x_i)$, and so on. In this new
notation, we can now rewrite problem \eqref{eq:tf} as
\begin{equation}
\label{eq:tf_dd}
\minimize_{\theta \in \R^n} \; \frac{1}{2} \sum_{i=1}^n \big(y_i -
\theta(x_i)\big)^2 + \lambda \sum_{i=1}^n |(\dop^{k+1}\theta)(x_i)|.
\end{equation}

\subsection{Kronecker trend filtering} 
\label{sec:ktf}

Let us return to the setting of multivariate design points on a lattice, as in
\eqref{eq:uniform_lattice} (where recall we use \smash{$N=n^{1/d}$}, assumed 
to be integral). Building from the univariate notation and definitions at the
end of the last subsection, we now introduce multivariate analogs. For a vector 
$\theta \in \R^n$, we will (when convenient) index its components by their
lattice positions, denoted $\theta(x)$, $x \in Z_{n,d}$. For each
$j=1,\ldots,d$, we define the discrete derivative of $\theta$ in the $j\th$
coordinate direction at a location $x$ by   
$$
(\dop_{x_j} \theta)(x) = 
\begin{cases}
\theta(x+e_j/N)-\theta(x) & \text{if $x,x+e_j/N \in Z_{n,d}$}, \\ 
0 & \text{else}.
\end{cases}
$$
We write \smash{$\dop_{x_j}\theta \in \R^n$} for the vector with components 
\smash{$(\dop_{x_j}\theta)(x)$}, $x \in Z_{n,d}$. As before, higher-order
discrete derivatives are simply defined by repeated application of the above
definition; we use abbreviations  
\smash{$(\dop_{x_j^2}\theta)(x)=(\dop_{x_j}(\dop_{x_j}\theta))(x)$},
\smash{$(\dop_{x_j,x_\ell}\theta)(x)=(\dop_{x_j}(\dop_{x_\ell}\theta))(x)$},
and so on.

With this notation in place, we define a multivariate version of trend
filtering, that we call \emph{Kronecker trend filtering} (KTF). Given an integer
$k \geq 0$, the $k\th$ order KTF estimate is defined by the solution of the
optimization problem:    
\begin{equation}
\label{eq:ktf_dd}
\minimize_{\theta \in \R^n} \; \frac{1}{2} \sum_{i=1}^n \big(y_i -
\theta(x_i)\big)^2 + \lambda \sum_{j=1}^d \sum_{x \in Z_{n,d}} 
|(\dop_{x_j^{k+1}} \theta)(x)|.
\end{equation}
Note the close analogy between \eqref{eq:tf_dd} and \eqref{eq:ktf_dd}: the
latter extends the former by adding up absolute discrete derivatives of $\theta$
of order $k+1$ along each one of the $d$ coordinate directions. A similar
intuition carries over from the univariate case, regarding the role of the
penalty in \eqref{eq:ktf_dd}, and the structure of the solution. As we can see,
the KTF problem penalizes the differences in $k\th$ discrete derivatives of 
$\theta$ at lattice positions $x$ and $z$, for all $x$ and $z$ that are adjacent
along any one of the $d$ coordinate directions. By the sparsifying nature of
the $\ell_1$ norm, the KTF solution \smash{$\htheta$} will have equal $k\th$
discrete derivatives between neighboring points on the lattice (and more so for
larger $\lambda$, generally speaking). Hence, along any line segment parallel to
one of the coordinate axes, the KTF solution \smash{$\htheta$} will have the
structure of a $k\th$ degree piecewise polynomial, with adaptively-chosen
knots. This intuition will be made rigorous in Section \ref{sec:ktf_continuous}.

We can also rewrite the KTF problem \eqref{eq:ktf_dd} in a more compact form, so
that it resembles \eqref{eq:tf}: 
\begin{equation}
\label{eq:ktf}
\minimize_{\theta \in \R^n} \; \frac{1}{2} \|y-\theta\|_2^2 + \lambda
\|\kronmatk \theta\|_1, 
\end{equation} 
where we define 
\begin{equation}
\label{eq:ktf_pen_mat}
\kronmatk = \left[\begin{array}{c}
\dmatk_N \otimes I_N \otimes \cdots \otimes I_N \\    
I_N \otimes \dmatk_N \otimes \cdots \otimes I_N \\ 
\vdots \\
I_N \otimes I_N \otimes \cdots \otimes \dmatk_N  
\end{array}\right].
\end{equation}
Here \smash{$\dmatk_N \in \R^{(N-k-1) \times N}$} is the discrete 
derivative matrix from \eqref{eq:diff_mat} (as would be used in $k\th$ order
univariate trend filtering on $N$ points); $I_N \in \R^{N\times N}$ denotes the
identity matrix; and $A \otimes B$ denotes the Kronecker product of matrices
$A,B$. Each block of rows in \eqref{eq:ktf_pen_mat} is made up of a total of
$d-1$ Kronecker products (a total of $d$ matrices). The Kronecker product
structure behind the penalty matrix in \eqref{eq:ktf_pen_mat} is what inspires
the name Kronecker trend filtering. In a similar vein, we will also refer to
\smash{$\|\kronmatk \theta\|_1$} as the $k\th$ order \emph{Kronecker total
variation} (KTV) of $\theta$. Note that when $k=0$, the KTF problem
\eqref{eq:ktf} reduces to anisotropic TV denoising on the $d$-dimensional
lattice, and when $d=1$, it reduces to $k\th$ order univariate trend filtering
\eqref{eq:tf}.

Before we delve any deeper into its properties, which will start in Section
\ref{sec:properties}, we give a simple example of KTF in Figure
\ref{fig:intro}. The example portrays an underlying function $f_0$ in $d=2$
dimensions that has two peaks in opposite corners of the unit square $[0,1]^2$,
one larger and one smaller, and is otherwise very smooth. Based on noisy
observations of $f_0$ over a 2d lattice, the estimates from KTF of orders
$k=0,1,2$ are able to capture this behavior. Meanwhile, estimates from kernel
smoothing are not, and they either oversmooth the larger peak or undersmooth the
valleys, depending on the choice of the bandwidth. Advocates of kernel
smoothing might say that this problem can be solved by giving the kernel a
locally varying bandwidth (that is, modeling the bandwidth itself as a function
of $x \in [0,1]^d$). While this can work in principle, for example, using
Lepski's method (see Section \ref{sec:related_work}), it can often be hard to
implement in practice. More to the point: the comparison in Figure
\ref{fig:intro} is not meant to portray the kernel smoothing framework as
being wholly untenable; rather, it is just meant to portray the differences in 
adaptivity of KTF versus kernel smoothing when each method is allowed only a
single tuning parameter.

\begin{figure}[t]
\centering
\includegraphics[width=0.32\textwidth]{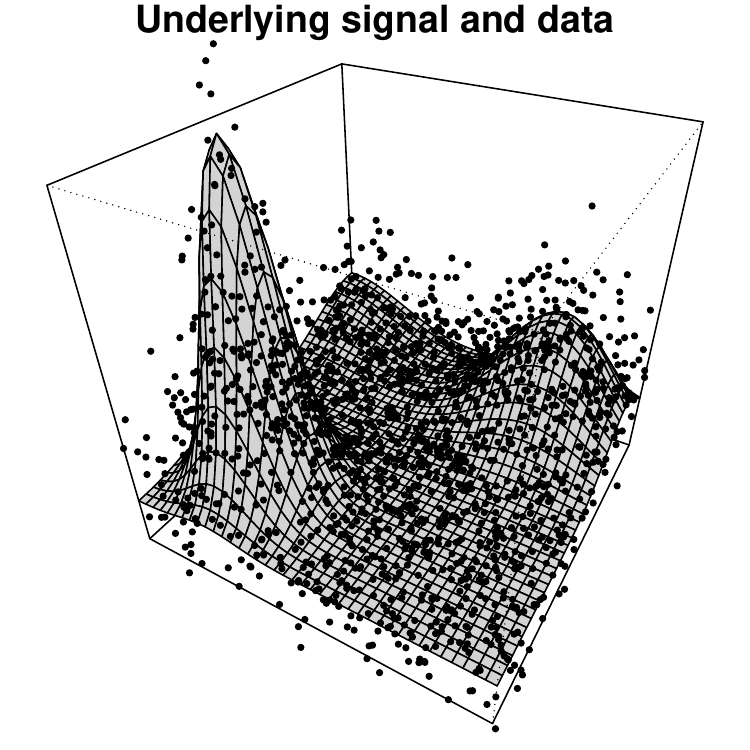}
\includegraphics[width=0.32\textwidth]{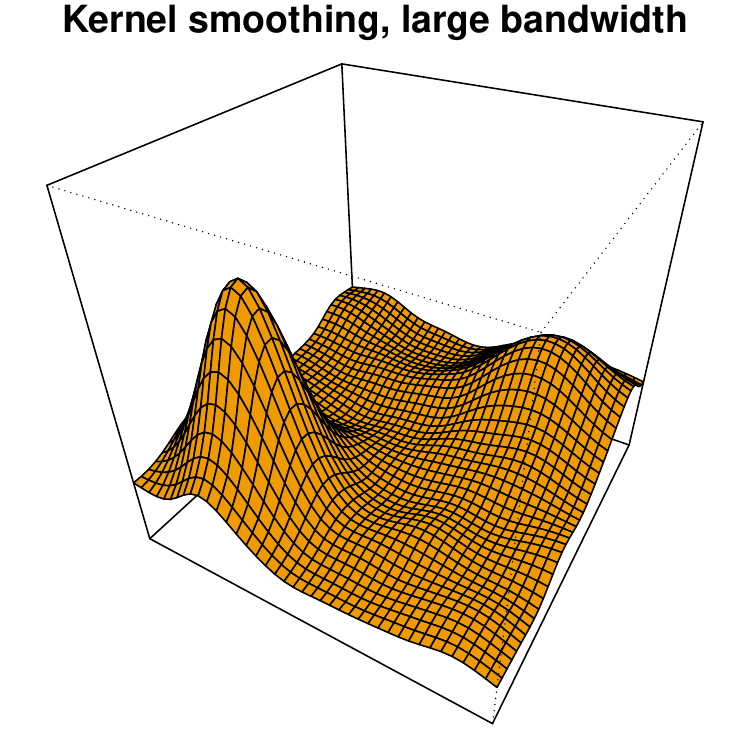} 
\includegraphics[width=0.32\textwidth]{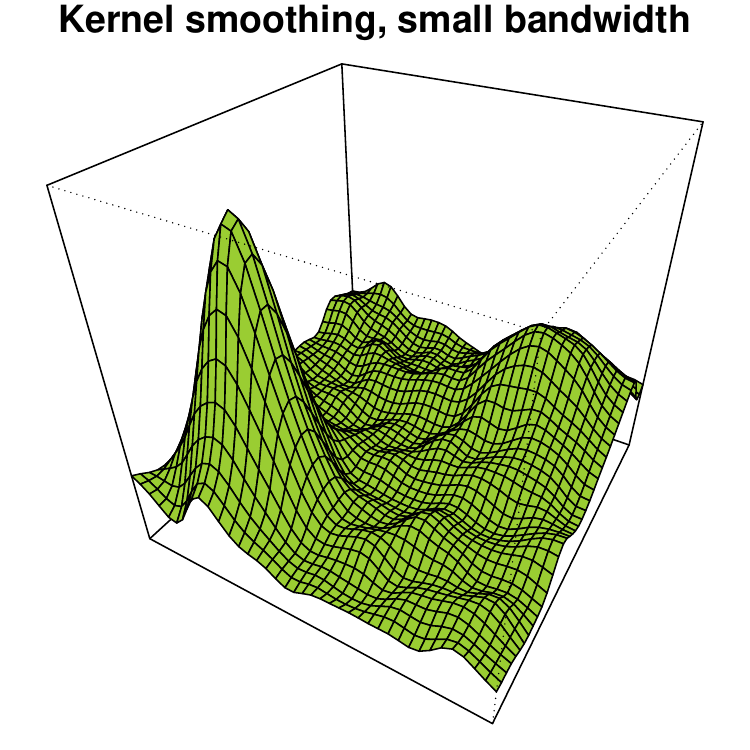} \\
\includegraphics[width=0.32\textwidth]{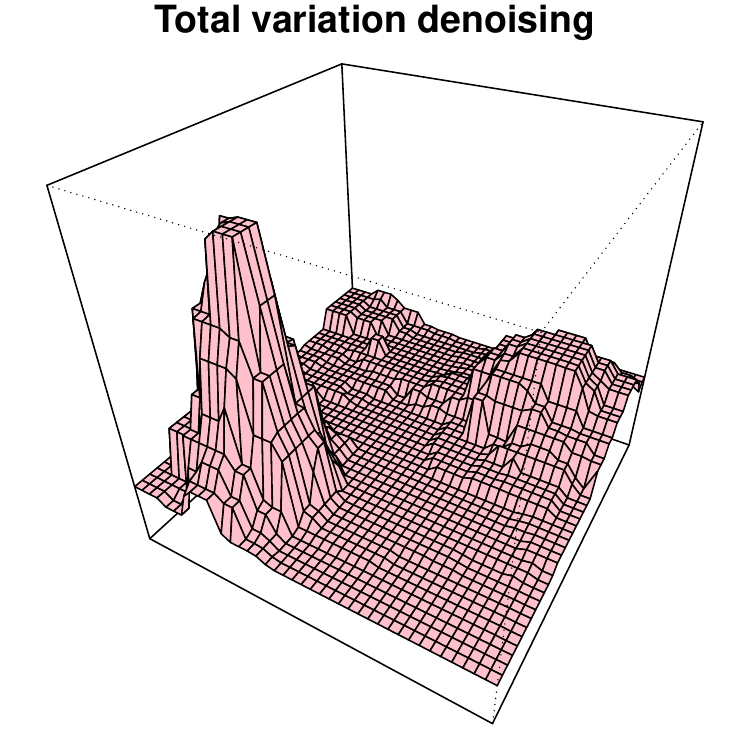} 
\includegraphics[width=0.32\textwidth]{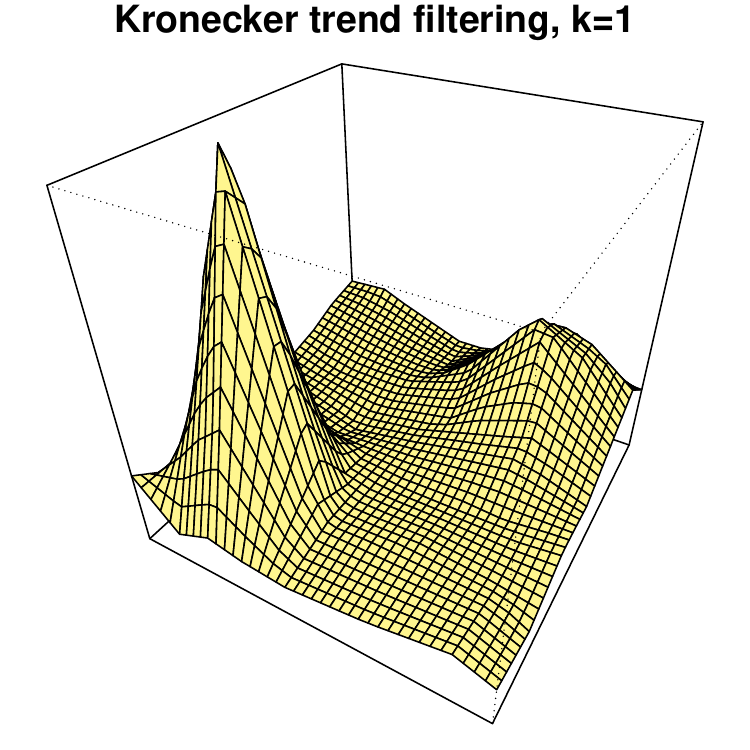} 
\includegraphics[width=0.32\textwidth]{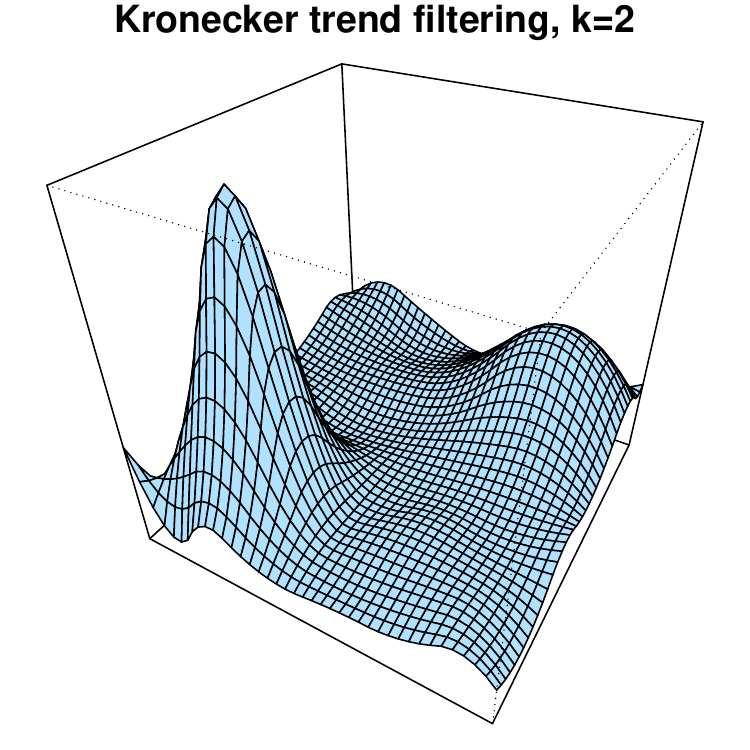}
\caption{\it\small Top left: underlying regression function $f_0$ evaluated
  over a square lattice with $n=40^2=1600$ points, and associated responses
  (formed by adding noise) shown as black points. Top middle and top right:
  kernel smoothing (with a spherical Gaussian kernel) fit to this data using
  large and small bandwidth values, respectively. Bottom left, middle, and
  right: Kronecker trend filtering estimates of orders $k=0,1,2$, respectively 
  (recall, KTF with $k=0$ reduces to anisotropic total variation denoising). We
  see that, in order to capture the larger of the two peaks in $f_0$, kernel 
  smoothing must significantly undersmooth the other peak (and surrounding
  areas); instead, with more regularization, it undersmooths throughout. The
  KTF estimates are able to adapt to heterogeneity in the smoothness of
  $f_0$. Also, each exhibits a distinct structure based on the polynomial order 
  $k$.}    
\label{fig:intro}
\end{figure}

\subsection{Related work}
\label{sec:related_work}

Our paper builds from a line of work on trend filtering, which, recall, enforces 
smoothness by penalizing the $\ell_1$ norm of discrete derivatives of a given
order \citep{steidl2006splines, kim2009trend, tibshirani2014adaptive, 
  wang2014falling, ramdas2016fast}. This can be seen as a discrete analog of the  
\emph{locally adaptive regression spline} estimator, which penalizes the TV of
the $k\th$ derivative of a function. This estimator was formally studied (and
named) by \citet{mammen1997locally}, though the same core idea---regularization
via the TV of derivatives---can also be found in \citet{koenker1994quantile},
and even earlier in \citet{nemirovski1984signal, nemirovski1985rate}. Notably,
the latter paper derives rates of convergence and proves that linear smoothers 
cannot be minimax rate optimal over TV classes, a result that was later
generalized to Besov spaces by \citet{donoho1998minimax}.  

For an overview of trend filtering, its connection to
classical theory on splines and divided differences, and related results that
bridge discrete and continuum representations (such as that connecting trend
filtering and locally adaptive regression splines), we refer to
\citet{tibshirani2020divided}. Trend filtering was extended to general graphs by
\citet{wang2016trend} (more on this below), and was analyzed in an additive
model setting by \citet{sadhanala2019additive}. The Kronecker trend filtering
estimator was proposed by \citet{sadhanala2017higher}. Rates of convergence for
KTF have been developed in different special cases, distributed among several
papers. Minimax results for TV denoising on lattices can be found in
\citet{hutter2016optimal} (upper bounds) and \citet{sadhanala2016total} (lower
bounds); with respect to KTF and KTV classes, this covers the case of $k=0$ and
all dimensions $d$. Meanwhile, \citet{sadhanala2017higher} gave minimax results
(matching upper and lower bounds) for all $k$ and $d=2$. The current paper
completes the landscape, deriving minimax theory for all smoothness orders $k$
and all dimensions $d$. The majority of this paper (including the minimax
theory) was completed in 2018 and can be found in the Ph.D.\ thesis of the first
author \citep{sadhanala2019phd}.

\paragraph{Graph-based TV methods.} 

Our paper is complementary to the line of research on locally adaptive
nonparametric estimation over graphs, such as \citet{gavish2010multiscale,  
  sharpnack2013detect, wang2016trend, gobel2018construction, padilla2018dfs,
  padilla2020adaptive, ye2021nonparametric}. The lattice structure that we
consider in this paper can be cast as a particular $d$-dimensional grid graph
with $n$ nodes (that is, with all side lengths equal to
\smash{$N=n^{1/d}$}). However, many of the methods proposed in the
aforementioned references seek to be far more general and operate over arbitrary
graph structures. This generality comes with several challenges, from conceptual
to theoretical.

For example, \citet{wang2016trend} developed \emph{graph trend filtering} (GTF),
which estimates a signal that takes values over the nodes of an arbitrary graph
by penalizing the $\ell_1$ norm its graph derivatives, which are defined via an
iterated graph Laplacian operator. Applied to the $d$-dimensional grid graph,
and translated into the notation of our paper, the penalty term used in $k\th$ 
order GTF for a signal $\theta$ at a point in the lattice $x \in Z_{n,d}$ is:    
$$
\begin{cases} 
\displaystyle
\sum_{j_1=1}^d \Bigg| \sum_{j_2,\ldots,j_q=1}^d 
\Big(\dop_{x_{j_1},x_{j_2}^2,\ldots,x_{j_q}^2} \theta \Big)(x)  
\Bigg| & \text{for $k$ even, where $q=k/2$}, \\
\displaystyle
\Bigg| \sum_{j_1,\ldots,j_q=1}^d 
\Big(\dop_{x_{j_1}^2,x_{j_2}^2,\ldots,x_{j_q}^2} \theta \Big)(x)   
\Bigg| & \text{for $k$ odd, where $q=(k+1)/2$}.
\end{cases}
$$
Compared to the analogous penalty term used in KTF \eqref{eq:ktf_dd}, which 
is just \smash{$\sum_{j=1}^d |(\dop_{x_j^{k+1}} \theta)(x)|$}, we see that the
above is much is harder to interpret. GTF clearly considers some form of mixed 
derivatives (whereas KTF does not and is anisotropic), but it is generally
unclear what kind of smoothness GTF is promoting. Moreover,
\citet{sadhanala2017higher} argue that using the GTF penalty operator in order
to define a smoothness class for the analysis of multivariate signals is
problematic, in the following sense: for any $k \geq 1$ and any dimension $d$,
there are $k\th$ order Holder smooth functions whose discretization to the
lattice is arbitrarily nonsmooth as measured by the $k\th$ order GTF penalty
operator. This is due to issues in the way the GTF penalty operator measures
smoothness on the boundaries of the lattice. 

\paragraph{Continuous-time multivariate TV methods.}

There is a rich body of work in applied mathematics on the denoising of signals
or images by promoting total variation smoothness, beginning with the seminal
paper by \citet{rudin1992nonlinear}, which gave rise to the so-called
Rudin-Osher-Fatemi (ROF) functional. This was then further developed and
extended by \citet{rudin1994total, vogel1996iterative, chambolle1997image, 
chan2000highorder, candes2002new, chan2005aspects, dong2011automated}. 
Papers in this line of work tend to be cast in continuous-time,\footnote{For $d
  \geq 2$, it may be more appropriate to call this ``continuous-space'', but we
  stick with the term continuous-time for simplicity.}   
which means that the estimand, estimator, and typically even the data itself are
each functions of one or more variables on a continuous domain (such as
$[0,1]^d$). A related line of work, inspired by ROF, considers discretization as
a step in numerical optimization, see, for example,
\citet{chambolle2004algorithm, chambolle2005total, almansa2008tv}.      

More recently, \citet{delalamo2021frameconstrained} studied estimation of a
multivariate function of bounded variation under the white noise model, in
arbitrary dimension $d$. This may be interpreted as a continuum analog of our
setting, albeit for $k=0$ only. They derive minimax rates for $L^p$ estimation
of TV bounded functions. When $p=2$ (matching our analysis), their estimator 
obtains (up to log factors) the minimax rate of  $n^{-1/d}$ on the  
squared $L^2$ error scale, for any $d \geq 2$, which agrees with the minimax
rate in our discrete setting. We note that our work, while motivated from
discrete principles, bears rigorous connections to continuous-time formulations
of multivariate TV; see Sections \ref{sec:ktf_continuous} and \ref{sec:tv_lines}.       

% AH: For their model, the fast rate of $n^{-p/(2+d)}$ occurs at $1 \leq p \leq
% 1 + 2/d$, and slow rate of $n^{-1/pd}$ occurs at $p > 1+2/d$. 
% These rates match the fast and slow rates for $k=0$ TV denoising, and I 
% want to say that this is also a manifestation of the Sobolev Embedding 
% Theorem, but I haven't been able to match up their phase boundary 
% ($p < 1 + 2/d$ for fast rates) with ours ($2(k+1) > d$ for fast rates). 

% RJT: Interesting ... I agree it doesn't seem too easy to align their phase
% transition with ours, at first glance, but could be worth looking into later.

\paragraph{Alternative models for multivariate TV smoothness.} 

Recently, there has been a stream of work studying different generalizations 
of TV and trend filtering penalties to multiple dimensions, including
\citet{bibaut2019fast, fang2021multivariate, ortelli2021tensor,
ki2021mars}. These papers are based on notions of smoothness related to the
Hardy-Krause variation of a multivariate function. For a function to be smooth
in the Hardy-Krause sense, it must exihibit an order of smoothness that scales
with the dimension $d$,\footnote{For example, for a smooth function $f$, its
  Hardy-Krause variation can be expressed in terms of the $L^1$ norm of
  \smash{$\partial^d f / \prod_{j=1}^d \partial x_j$}.} 
which leads to error rates that are (nearly) dimension-free.  However,
practically speaking, assuming that the inherent smoothness of the regression 
function is on par with the ambient dimension may not be reasonable in some
applications. A distinct feature of our work is that the smoothness order $k$
(which translates into the max degree of the local polynomial in the fitted
model) is a user-defined parameter, and is not tied in any way to the dimension
$d$.

One may adopt a different perspective and model the regression function as being
multivariate piecewise constant, or more generally multivariate piecewise
polynomial, which can be broadly interpreted as a ``strong sparsity'' analog to
the ``weak sparsity'' assumption that underlies TV smoothness. In the univariate
case, trend filtering (or TV denoising) has been analyzed under such ``strong
sparsity'' assumptions by \citet{dalalyan2017prediction, lin2017sharp,
guntuboyina2020adaptive, ortelli2021prediction}, and others, yielding faster
rates of convergence under a certain min-length condition on the polynomial
pieces in the true signal. The multivariate case is much more subtle; for
example, for a 2d piecewise constant signal model, \citet{chatterjee2021new}
prove that the error rate of TV denoising depends on the orientation of the
boundary of the true pieces (whether they are axis-aligned or not). Currently,
it appears the most comprehensive theory for the multivariate piecewise
polynomial model is given by \citet{chatterjee2021adaptive}, who propose and
analyze CART-style estimators, inspired by earlier work of
\citet{donoho1997cart}. While we believe that the KTF estimator will exhibit
some degree of adaptivity to multivariate piecewise polynomials, we also believe
it will have some nontrivial failure cases (suboptimality), given what the
existing 1d and 2d analyses have shown.

\paragraph{Locally adaptive kernel smoothing.}

Lepski's method, which originated in the seminal paper by
\citet{lepskii1991problem}, is a procedure for selecting a local bandwidth in  
kernel smoothing; roughly speaking, at each point $x$ in the domain, it 
chooses the largest bandwidth (from a discrete set of possible values) such that
the kernel estimate at $x$ is within a carefully-defined error tolerance to
estimates at smaller bandwidths. Since its introduction, many papers have 
studied and generalized Lepski's method, see, for example, 
\citet{lepskii1992asymptotically,   
  lepskii1993asymptotically, lepski1997optimal1, lepski1997optimal2, 
  kerkyacharian2001nonlinear, kerkyacharian2008nonlinear,
  goldenshluger2008universal, goldenshluger2009structural,
  goldenshluger2011bandwidth, goldenshluger2013general,
  lepski2015adaptive}. Most related to the current paper is
\citet{kerkyacharian2001nonlinear, kerkyacharian2008nonlinear}, who consider
estimation in anisotropic Besov classes using Lepski's method, under the white
noise model. The ``dense'' and ``sparse'' zones in their work roughly correspond
to the  cases $s > 1/2$ and $s \leq 1/2$ in our theory, respectively (see Figure
\ref{fig:theory}). This will be revisited in Remark \ref{rem:besov_minimax}.    

The work referenced above---and the broader literature on locally adaptive 
kernel methods---is focused on establishing sharp theoretical guarantees.
Practical and computational considerations are not a focus. These are some
nontrivial barriers to practical implementation, even as basic as the fact that
there are effectively many tuning parameters (leading to somewhat arbitrary
practical design choices that could be made) in the proposed methods.         

\paragraph{Tensor product and hyperbolic wavelets.} 

Wavelets have a rich history in signal processing, approximation theory, and
other disciplines; classic references include \citet{daubechies1992ten,
  chui1992introduction, meyer1993progress, mallat2009wavelet}. Seminal work by
\citet{donoho1998minimax}, on minimax estimation over univariate Besov spaces
using wavelet-based estimators, contributed greatly to the popularity of
wavelets in statistics. It appears that the earliest work on multivariate
wavelet approximation is \citet{meyer1985principe, meyer1990ondelettes} and 
\citet{mallat1989theory, mallat1989multiresolution}, which focuses on
separable wavelet bases, formed from tensor products of univariate wavelet 
basis functions within each resolution level. A second approach, which is
well-studied in approximation theory but less so in statistics, instead
constructs ``truly multivariate'' wavelet bases by generating multiresolution
subspaces in the ambient domain, which gives rise to nonseparable bases; 
see \citet{meyer1990ondelettes, riemenschneider1992wavelets, chui1992compactly,
  lorentz1992wavelets, devore1992wavelets}. In both cases described above, the
support of the wavelet function has the same scale in each coordinate
direction. The desire to effectively represent functions with different degrees
of smoothness in different directions thus led to the development of hyperbolic
wavelets, whose basis functions are formed by taking tensor products of
univariate wavelet basis functions \emph{across} resolution levels
\citep{neumann1997wavelet, devore1998hyperbolic, neumann2000multi}. Of
particular relevance to our work is the latter paper, and connections will be 
drawn in Remark \ref{rem:besov_minimax}.

Practically speaking, we generally find multivariate wavelet denoising to be
more sensitive to the level of noise than an estimator like KTF, and for wavelet
denoising to suffer worse performance when the signal-to-noise ratio is at low
or moderate levels. Evidence for this is given later in
Figure \ref{fig:error_analysis} in Section \ref{sec:error_analysis}. This is
also consistent with what is observed in the univariate setting in
\citet{tibshirani2014adaptive}.   

\subsection{Summary and outline}

A summary of results in this paper and outline for this paper is given below. 

\begin{itemize}
\item In Section \ref{sec:properties}, we derive some basic properties of KTF,
  including an equivalent continuous-time formulation for problem
  \eqref{eq:ktf}, which provides insights into the local structure of KTF
  estimates.  

\item In Section \ref{sec:tv_lines}, we derive an expression for higher-order
  multivariate TV (in the standard measure-theoretic sense) in terms of an 
  integrating univariate TV on line segments running parallel to the coordinate
  axes. We use this to motivate the definition of KTV smoothness, the central
  notion of smoothness used in this paper---it is simply a discretization of the 
  aforementioned integral.        

\item In Section \ref{sec:classes}, we introduce the smoothness classes of
  interest for our study of minimax theory, and examine the relationships
  between them.   

\item In Section \ref{sec:theory}, we derive a complete set of results on the
  minimax estimation risk, as measured in the squared $\ell_2$ norm, over the
  set \smash{$\kronset^k(C_n)$} of vectors $\theta$ with $k\th$ order KTV 
  smoothness satisfying \smash{$\|\kronmatk \theta\|_1 \leq C_n$}, for a 
  given sequence $C_n>0$. We prove that KTF is minimax rate optimal (up to 
  log factors) for any $k,d$, and derive lower bounds on the minimax linear risk
  (that is, the best worst-case risk over all linear smoothers) which show that 
  linear estimators are suboptimal for any $k,d$. Interestingly, the minimax
  rates reveal a phase transition at $2(k+1)=d$, and in the low
  smoothness-to-dimension regime, linear smoothers fail to be consistent 
  altogether. See Figure \ref{fig:theory} for a more detailed summary.    

\item In Section \ref{sec:optimization}, we describe and compare specialized
  convex optimization algorithms that can be used to compute the KTF solution in
  \eqref{eq:ktf}.  

\item In Section \ref{sec:interpolation}, we present an extremely efficient and 
  simple algorithm for interpolating the discrete KTF estimate \smash{$\htheta$} 
  (the solution in \eqref{eq:ktf}), defined over the lattice, into a function
  \smash{$\hf$} (the solution in \eqref{eq:ktf_continuous}), defined over the  
  underlying continuum domain $[0,1]^d$. Remarkably, this interpolation
  method, which builds from discrete spline interpolation results from
  \citet{tibshirani2020divided}, runs in \emph{constant-time} (independent of
  $n$).    

\item In Section \ref{sec:experiments}, we carry out empirical experiments that 
  compare KTF and various other nonparametric regression estimators, and examine
  whether the empirical error rates match the minimax theory derived in Section
  \ref{sec:theory}.  

\item In Section \ref{sec:discussion}, we conclude with a discussion, and cover  
  some extensions and directions for future work. 
\end{itemize}

\begin{figure}[t!]
\centering
\includegraphics[width=0.75\textwidth]{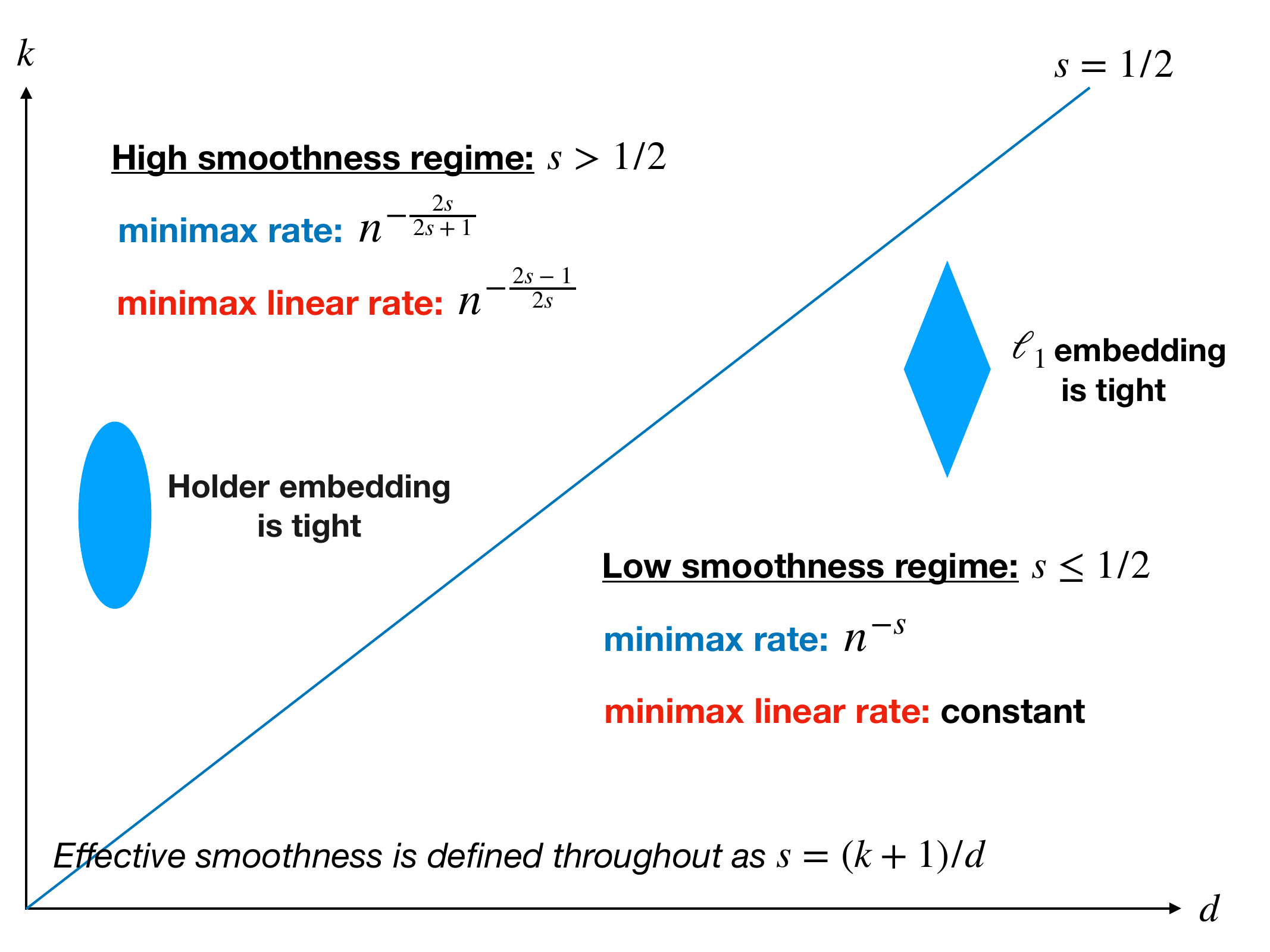}
\captionsetup{singlelinecheck=off}
\caption[]{\it\small Summary of the minimax results developed in this paper. The
  central object of our study is the set \smash{$\kronset^k(C_n^*)$} of vectors 
  $\theta$ defined over the $d$-dimensional lattice $Z_{n,d}$, with $k\th$ order
  KTV smoothness satisfying \smash{$\|\kronmatk \theta\|_1 \leq C_n^*$}, for a 
  sequence \smash{$C_n^*>0$} obeying what we call the canonical scaling, to be
  made precise later. The following two statements hold, generally (regardless
  of $k,d$):   
  \begin{enumerate}
    \item KTF achieves the minimax rate (up to log factors) over
      \smash{$\kronset^k(C_n^*)$}; and 
    \item no linear smoother is able to achieve the minimax rate over this
      class. 
  \end{enumerate}
  However, the story is more interesting, due to a phase transition 
  occurring at $2(k+1)=d$. Defining a notion of \emph{effective smoothness} by
  $s=(k+1)/d$, this can be explained as follows. When $s>1/2$, the minimax rate
  has the more classical form \smash{$n^{-2s/(2s+1)}$}, matching the minimax
  rate for a $k\th$ Holder class in dimension $d$ (or an $s\th$ order Holder
  class in the univariate case). Indeed, the lower bound on the minimax rate
  that we derive is given by embedding a Holder class into
  \smash{$\kronset^k(C_n^*)$}. Meanwhile, the minimax linear risk (the best 
  worst-case risk among linear smoothers) scales as \smash{$n^{-(2s-1)/(2s)}$},
  which can be interpreted as the rate of KTF (or any other minimax optimal
  method) for a problem with a half less degree of effective smoothness. When $s
  \leq 1/2$, the minimax rate takes on the less classical form $n^{-s}$, and the
  lower bound is obtained by embedding a suitable $\ell_1$ ball into
  \smash{$\kronset^k(C_n^*)$}. Further, the gap between the minimax linear and 
  nonlinear rates is even more dramatic: the minimax linear rate is constant,
  which means no linear smoother is even consistent over
  \smash{$\kronset^k(C_n^*)$} (in the sense of worst-case risk). Finally, though 
  not reflected in the figure, we note that when $s < 1/2$ the KTV class and its
  embedded Holder class exhibit different minimax rates, $n^{-s}$ versus
  \smash{$n^{-2s/(2s+1)}$}, respectively. Whether KTF can adapt to the latter
  (faster) Holder rate in the low smoothness-to-dimension regime, $s < 1/2$, is
  an open question.}  
\label{fig:theory}
\end{figure}
\section{Basic properties}
\label{sec:properties}

In this section, we cover a number of basis properties that reflect the
structure and complexity of KTF estimates. 

\subsection{Unpenalized component}

We start by examining the null space of the KTF penalty matrix in
\eqref{eq:ktf_pen_mat}. A word on notation here, and in general: when
convenient, we will use $x_j$ to denote the $j\th$ component of a vector $x$,
which should not be confused with our use of $x_i$ to denote the $i\th$ design
point (itself a $d$-dimensional vector). While this is an unfortunate clash of
notation, the meaning should always be clear form the context (and further, we 
will keep the use of indices $i,j$ for the two cases consistent throughout to
aid the interpretation---hence $x_j$ will always be univariate, the $j\th$
component of a vector $x$, and $x_i$ will always be $d$-dimensional, the $i\th$
design point).

\begin{proposition}
\label{prop:ktf_null_space}
The null space of the KTF penalty matrix in \eqref{eq:ktf_pen_mat} has dimension
\smash{$(k+1)^d$}. Furthermore, it is spanned by a polynomial basis made up of 
elements   
$$
p(x) = x_1^{a_1} x_2^{a_2} \cdots x_d^{a_d}, \quad x \in Z_{n,d},
$$
for all $a_1,\ldots,a_d \in \{0,\ldots,k\}$.
\end{proposition}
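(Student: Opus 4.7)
The plan is to characterize the null space of each of the $d$ block rows of $\kronmatk$ separately, then intersect. I would first recall the classical fact that, for $N \geq k+1$, the univariate operator $\dmatk_N$ has a $(k+1)$-dimensional null space equal to the span of monomials $1, x, x^2, \ldots, x^k$ evaluated on the grid $\{1/N, 2/N, \ldots, 1\}$; this is a standard property of the $(k+1)$st order forward difference operator, since an order $k+1$ difference annihilates exactly polynomials of degree at most $k$, and the Vandermonde argument shows these restrictions are linearly independent on $N \geq k+1$ distinct abscissas. Denote this $(k+1)$-dimensional subspace of $\R^N$ by $P$.

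Next, for each $j$ let $B_j = I_N \otimes \cdots \otimes \dmatk_N \otimes \cdots \otimes I_N$ with $\dmatk_N$ in the $j$-th slot. Viewing $\theta \in \R^{N^d}$ as a $d$-way tensor, $B_j \theta$ applies $\dmatk_N$ along axis $j$ while leaving the other axes untouched, so $B_j$ is block diagonal after an appropriate reshape. From the identity
\[
\nul(I_N \otimes \cdots \otimes A \otimes \cdots \otimes I_N) = \R^N \otimes \cdots \otimes \nul(A) \otimes \cdots \otimes \R^N
\]
(with the $\nul(A)$ factor in position $j$), applied to $A = \dmatk_N$, I obtain $\nul(B_j) = V_j := \R^N \otimes \cdots \otimes P \otimes \cdots \otimes P$-free-elsewhere, i.e., $V_j$ is the tensor product with $P$ in the $j$-th factor and $\R^N$ in every other factor. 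Since $\kronmatk$ stacks the blocks $B_1, \ldots, B_d$, we have $\nul(\kronmatk) = \bigcap_{j=1}^d V_j$.

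The key step is showing $\bigcap_{j=1}^d V_j = P^{\otimes d}$. I would pick a basis $\{p_0, \ldots, p_k, q_{k+1}, \ldots, q_{N-1}\}$ of $\R^N$ with $P = \spa\{p_0, \ldots, p_k\}$, giving an induced tensor-product basis of $(\R^N)^{\otimes d}$. Any $\theta$ has a unique expansion in this basis, and the condition $\theta \in V_j$ forces all basis tensors in the expansion to have their $j$-th factor lying in $\{p_0, \ldots, p_k\}$. Intersecting over $j$ forces every factor to lie in $P$, so $\bigcap_j V_j = P^{\otimes d}$, which has dimension $(k+1)^d$.

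Finally, since $P$ is spanned by the univariate monomials $x_j \mapsto x_j^a$ for $a = 0, \ldots, k$ (evaluated on the grid), the tensor product $P^{\otimes d}$ is spanned by $x \mapsto x_1^{a_1} x_2^{a_2} \cdots x_d^{a_d}$ for all $(a_1, \ldots, a_d) \in \{0, \ldots, k\}^d$, restricted to $Z_{n,d}$. These $(k+1)^d$ monomials are linearly independent on $Z_{n,d}$ by a multivariate Vandermonde argument (or by iterating the univariate independence one axis at a time), yielding the claimed basis. The only mildly subtle point is the intersection-of-null-spaces step, but with the explicit basis above it reduces to elementary bookkeeping.
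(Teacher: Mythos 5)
Your proof is correct, but it takes a different route from the paper's. The paper computes the nullity spectrally: it writes $D^\T D$ (with $D = \kronmatk$) as the $d$-fold Kronecker sum of $Q^\T Q$ with itself (where $Q = \dmatk_N$), invokes the fact that the eigenvalues of a Kronecker sum are all sums $\rho_{i_1}+\cdots+\rho_{i_d}$ of eigenvalues of the summands, and counts the multiplicity of zero --- which, since each $\rho_i \geq 0$, requires every $\rho_{i_j}=0$ and hence gives $(k+1)^d$; it then separately checks that the $(k+1)^d$ monomial evaluations lie in the null space and are linearly independent. You instead identify the null space of each block $B_j$ as a tensor product $\R^N \otimes \cdots \otimes P \otimes \cdots \otimes \R^N$ (using that the identity factors have trivial null space), and compute the intersection $\bigcap_j \nul(B_j) = P^{\otimes d}$ via a basis-extension argument. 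Your approach is more elementary (no spectral theory) and has the advantage of producing the basis and the dimension in one stroke, rather than counting the dimension first and verifying the basis afterward; the paper's approach is shorter if one takes the Kronecker-sum eigenvalue fact as given, and the same spectral decomposition is reused elsewhere in the paper (e.g., in the incoherence and eigenvalue lemmas), so it is the more economical choice in context. All the steps you flag as potentially subtle --- the null space of $I\otimes\cdots\otimes A\otimes\cdots\otimes I$, and the intersection step --- are handled correctly.
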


The proof is elementary and is deferred to Appendix \ref{app:proofs} (all proofs
in this paper are deferred to the appendix). This proposition reveals that the
KTF penalty matrix has quite a rich null space, thus KTF lets a significant 
component of the response vector $y$ ``pass through'' unpenalized. In contrast
to univariate trend filtering, which preserves univariate polynomials of degree
$k$ (precisely, it preserves the projection of $y$ onto this subspace), KTF
preserves ``much more'' than multivariate polynomials of degree $k$: it
preserves multivariate polynomials of \emph{max degree} $k$.\footnote{By a
  multivariate polynomial of degree $k$, we mean (adhering to the standard 
  classification) a sum of terms of the form \smash{$b \prod_{j=1}^d
    x_j^{a_j}$}, where the sum of degrees satisfies \smash{$\sum_{j=1}^d a_j
    \leq k$}. By a multivariate polynomial of max degree $k$, we mean the same,
  but where the degrees satisfy $a_j \leq k$, $j=1,\ldots,d$.}   
To see an example, when $k=1$ and $d=2$, the KTF estimator---which we might
be tempted to call ``linear-order'' KTF (to use an analogous term as we do in
univariate trend filtering)---preserves any polynomial of the form $p(x) = ax_1
+ bx_2 + cx_1x_2$. This is of course \emph{not} a linear function, but a
bilinear one (due to the cross-product term $x_1x_2$).

\subsection{Review: trend filtering in continuous-time} 

For univariate trend filtering \eqref{eq:tf}, an equivalent continuous-time
formulation was derived in \citet{tibshirani2014adaptive}: 
\begin{equation}
\label{eq:tf_continuous}
\minimize_{f \in \cH^k_n} \; \frac{1}{2} \sum_{i=1}^n \big(y_i - f(x_i)\big)^2 +   
\lambda \, \TV(f^{(k)}).
\end{equation}
Problems \eqref{eq:tf}, \eqref{eq:tf_continuous} are equivalent in
the sense that their solutions \smash{$\htheta,\hf$}, respectively, satisfy
\smash{$\htheta_i = \hf(x_i)$}, $i=1,\ldots,n$. In \eqref{eq:tf_continuous},
we use $f^{(k)}$ to denote the $k\th$ weak derivative of $f$, and $\TV(\cdot) =  
\TV(\cdot; [a,b])$ to denote the total variation operator defined with respect
to any interval $[a,b]$ containing the design points (henceforth, when
convenient, we will drop the underlying domain from our notation for univariate 
TV). The optimization in \eqref{eq:tf_continuous} is performed over all
functions $f$ that lie in a space \smash{$\cH^k_n$}, which is the span of   
\smash{$h^k_{n,j}$}, $j=1,\ldots,n$, the following $k\th$ degree piecewise  
polynomials:     
\begin{equation}
\label{eq:ffb}
\begin{gathered}
h^k_{n,i}(x) = \frac{1}{(i-1)!} \prod_{j=1}^{i-1}(x-j/n),  
\quad i=1,\ldots,k+1, \\
h^k_{n,i}(x) = \frac{1}{k!} \prod_{j=i-k}^{i-1} (x-j/n) \cdot   
1\{x > (i-1)/n\}, \quad i=k+2,\ldots,n. 
\end{gathered}
\end{equation}
(Here, for convenience, we interpret the empty product to be equal to 1.)  The
functions in \eqref{eq:ffb} are called the $k\th$ degree \emph{falling factorial   
  basis}. Observe that they depend on the $n$ underlying design points 
$1/n,2/n,\ldots,1$, which we express through the first subscript $n$ in
\smash{$h^k_{n,i}$}. Note also the similarity between the above basis and the
standard truncated power basis for splines; in fact, when $k=0$ or $k=1$, the
two bases are equal and \smash{$\cH^k_n$} is just a space of splines with
knots at the design points. However, when $k \geq 2$, this is no longer
true---the falling factorial functions are $k\th$ degree piecewise polynomials 
with (mildly) discontinuous derivatives of all orders $1,\ldots,k-1$, and
therefore they span a different space than that of $k\th$ degree splines---they 
space the space of $k\th$ degree \emph{discrete splines}, which are piecewise
polynomials that have continuous discrete derivatives (rather than derivatives)
at their knot points. See \citet{tibshirani2020divided}.

To be clear, the original formulation \eqref{eq:tf} is more computationally
convenient (it is a structured convex problem for which several fast algorithms
exist, discussed in Section \ref{sec:related_work}). But the variational
formulation \eqref{eq:tf_continuous} is important because it provides rigorous
backing to the intuition that a $k\th$ order trend filtering estimate exhibits 
the structure of a $k\th$ degree piecewise polynomial, with adaptively-chosen
knots (a feature of the $\ell_1$ penalty in \eqref{eq:tf} or TV penalty in 
\eqref{eq:tf_continuous}); moreover, it shows how to extend the trend filtering
estimate from a discrete sequence, defined over the design points, to a function
on the continuum interval $[0,1]$. 

\subsection{Continuous-time}
\label{sec:ktf_continuous}

We now develop a similar continuous-time representation for KTF.

\begin{proposition}
\label{prop:ktf_continuous}
Let $h^k_{N,i} : [0,1] \to \R$, $i=1,\ldots,N$ denote the $k\th$ degree falling   
factorial functions \eqref{eq:ffb} with respect to design points
$1/N, 2/N, \ldots, 1$, and \smash{$\cH^k_{n,d}$} denote the space spanned by all   
$d$-wise tensor products of these functions. That is, abbreviating \smash{$h_i =
  h^k_{N,i}$}, $i=1,\ldots,N$, this is the space of all functions $f: [0,1]^d
\to \R$ of the form    
\begin{equation}
\label{eq:ffb_tensor}
f(x) = \sum_{i_1,\ldots,i_d=1}^N \alpha_{i_1,\ldots,i_d} h_{i_1}(x_1)
h_{i_2}(x_2) \cdots h_{i_d}(x_d), \quad x \in [0,1]^d,
\end{equation}
for coefficients $\alpha \in \R^n$ (whose components we denote as
\smash{$\alpha_{i_1,\ldots,i_d}$}, $i_1,\ldots,i_d\in\{1,\ldots,N\}$). Then
the KTF estimator defined in \eqref{eq:ktf} is equivalent to the optimization
problem: 
\begin{equation}
\label{eq:ktf_continuous}
\minimize_{f \in \cH^k_{n,d}} \; \frac{1}{2} \sum_{i=1}^n \big(y_i -
f(x_i)\big)^2 + \lambda \sum_{j=1}^d \sum_{x_{-j}}    
\TV\bigg(\frac{\partial^k f(\cdot,x_{-j})} {\partial x_j^k}\bigg), 
\end{equation}
where $f(\cdot,x_{-j})$ denotes $f$ as function of the $j\th$ coordinate 
with all other dimensions fixed at $x_{-j}$, \smash{$(\partial^k/\partial
x_j^k)(\cdot)$} denotes the $k\th$ partial weak derivative operator with respect  
to $x_j$, and inner sum in the second term in \eqref{eq:ktf_continuous} is
interpreted as a sum over $Z_{m,d-1}$, where $m=N^{d-1}$ (that is, a sum over
the $(d-1)$-dimensional uniformly-spaced lattice with $N^{d-1}$ total
points). The discrete \eqref{eq:ktf} and continuous-time
\eqref{eq:ktf_continuous} problems are equivalent in the sense that at their
solutions \smash{$\htheta,\hf$}, respectively, we have: \smash{$\htheta_i =
  \hf(x_i)$}, $i=1,\ldots,n$.   
\end{proposition}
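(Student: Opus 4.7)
The strategy is to reduce the multivariate equivalence to the univariate continuous-time equivalence (problem \eqref{eq:tf} versus \eqref{eq:tf_continuous}, from \citet{tibshirani2014adaptive}) applied slice by slice along each coordinate axis, and then to match the sum of univariate discrete penalties with \smash{$\|\kronmatk \theta\|_1$} via the Kronecker block structure of the penalty matrix in \eqref{eq:ktf_pen_mat}. Concretely, I would reparametrize \eqref{eq:ktf_continuous} through the evaluation map \smash{$E : \cH^k_{n,d} \to \R^n$}, \smash{$E(f) = (f(x))_{x \in Z_{n,d}}$}, and show that (a) $E$ is a bijection, and (b) for every \smash{$f \in \cH^k_{n,d}$} with $\theta = E(f)$, the TV penalty in \eqref{eq:ktf_continuous} equals \smash{$\|\kronmatk \theta\|_1$}. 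Given (a) and (b), the data-fidelity terms match by construction, and so the two optimization problems have the same minimizers modulo the identification $\theta = E(f)$.

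For step (a), I would use the fact that the univariate falling factorial functions \smash{$h^k_{N,1},\ldots,h^k_{N,N}$} evaluated at the knots $\ell/N$, $\ell = 1,\ldots,N$ form a lower triangular matrix $H_N$ with nonzero diagonal (this is immediate from the product form in \eqref{eq:ffb}, since \smash{$h^k_{N,i}(\ell/N)$} vanishes for $\ell < i - 1$ when $i \ge k+2$ and is a positive constant when $\ell = i - 1$; for $i \le k+1$ an analogous direct check works). By the tensor product form \eqref{eq:ffb_tensor}, the multivariate evaluation map on \smash{$\cH^k_{n,d}$} is given by the Kronecker product \smash{$H_N \otimes H_N \otimes \cdots \otimes H_N$}, which is invertible, establishing the bijection.

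For step (b), fix a coordinate direction $j$ and a lattice point $x_{-j} \in Z_{m,d-1}$ with $m = N^{d-1}$. Plugging these into \eqref{eq:ffb_tensor}, the restriction \smash{$g(t) = f(t, x_{-j})$} is a linear combination of the univariate functions $h_{i_j}(t)$ with coefficients depending on the other $h_{i_\ell}(x_\ell)$, so \smash{$g \in \cH^k_N$}. Applying the univariate equivalence of \eqref{eq:tf} and \eqref{eq:tf_continuous} (which yields the identity \smash{$\TV(g^{(k)}) = \|\dmatk_N \xi\|_1$} for $\xi_\ell = g(\ell/N)$) gives
\begin{equation*}
\TV\bigg(\frac{\partial^k f(\cdot,x_{-j})}{\partial x_j^k}\bigg) = \big\|\dmatk_N \theta_{\cdot, x_{-j}}\big\|_1,
\end{equation*}
where \smash{$\theta_{\cdot,x_{-j}}$} is the length-$N$ slice of $\theta$ along direction $j$ through $x_{-j}$. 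Summing over all \smash{$x_{-j} \in Z_{m,d-1}$} reproduces exactly the $j$\th row block in \eqref{eq:ktf_pen_mat} (i.e., the operator \smash{$I_N \otimes \cdots \otimes \dmatk_N \otimes \cdots \otimes I_N$} with \smash{$\dmatk_N$} in the $j$\th slot), so the sum equals \smash{$\|(I_N \otimes \cdots \otimes \dmatk_N \otimes \cdots \otimes I_N) \theta\|_1$}; summing further over $j = 1,\ldots,d$ gives \smash{$\|\kronmatk \theta\|_1$} by definition.

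The main obstacle is the invocation of the univariate identity \smash{$\TV(g^{(k)}) = \|\dmatk_N \xi\|_1$} for \smash{$g \in \cH^k_N$}; although stated implicitly in the excerpt, it is the crux of the univariate result and relies on the fact that the $k$\th weak derivative of a falling factorial basis function is a piecewise constant (indeed, a truncated power of degree zero, up to a constant), so that summing the jumps gives precisely a $(k+1)$\th order discrete difference. Once this is in hand, the remaining multivariate bookkeeping is just the Kronecker accounting above.
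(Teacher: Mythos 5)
Your proposal is correct and follows essentially the same route as the paper: both reduce the multivariate claim to the univariate falling-factorial identity relating $\TV(g^{(k)})$ to $\|\dmatk_N \xi\|_1$ applied along each coordinate slice, combined with the Kronecker block structure of \eqref{eq:ktf_pen_mat}; the paper merely makes the bijection explicit by changing variables to the tensor-product coefficient vector $\alpha$ via the inverse of the falling factorial basis matrix (which also produces the basis form \eqref{eq:ktf_basis} as a byproduct), whereas you invoke the univariate equivalence of \eqref{eq:tf} and \eqref{eq:tf_continuous} as a black box. One cosmetic slip: for $i \ge k+2$ the function $h^k_{N,i}(\ell/N)$ vanishes for all $\ell \le i-1$ (the indicator in \eqref{eq:ffb} is strict) and is first nonzero at $\ell = i$, but the evaluation matrix is still triangular with nonzero diagonal, so your invertibility argument for the evaluation map stands.
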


\begin{remark}
Similar to the discrete-time terminology, we will refer to \smash{$\sum_{j=1}^d
  \sum_{x_{-j}} \TV(\partial^k f(\cdot,x_{-j})/\partial x_j^k)$}, the penalty
functional in the continuous-time representation \eqref{eq:ktf_continuous} of
the KTF problem, as the $k\th$ order \emph{Kronecker total variation} (KTV) of
$f$. A key implication of Proposition \ref{prop:ktf_continuous} is that the
solution \smash{$\hf$} in \eqref{eq:ktf_continuous} not only interpolates the
solution \smash{$\htheta$} in \eqref{eq:ktf}, but it is exactly as smooth in
continuous-time as \smash{$\htheta$} is in discrete-time, as measured by KTV:  
$$
\|\kronmatk \htheta\|_1 = \sum_{j=1}^d \sum_{x_{-j}}    
\TV\bigg(\frac{\partial^k \hf(\cdot,x_{-j})} {\partial x_j^k}\bigg).
$$
How to form the interpolant \smash{$\hf$} is discussed briefly in the remark
after the next, and covered in detail in Section \ref{sec:interpolation}.
\end{remark}

\begin{remark}
From \eqref{eq:ffb_tensor}, the basis underlying the continuous-time
representation \eqref{eq:ktf_continuous} of the KTF optimization problem, we can
see that a $k\th$ order KTF estimate exhibits the structure of a tensor product
of $k\th$ degree discrete splines, with adaptively knots, chosen to promote 
higher-order TV smoothness along the coordinate axes. In other words, locally,
it exhibits the structure of a multivariate polynomial of max degree $k$. When
$k=1$ and $d=2$, for example, the structure is locally of the form
\smash{$\hf(x) = ax_1 + bx_2 + cx_1x_2$}, which is a bilinear function and has
local curvature. Such curvature is somewhat visible in Figure \ref{fig:intro}
(bottom row, middle panel), and it will be even more apparent later when we
discuss interpolation, see Figure \ref{fig:interp_example}.
\end{remark}

\begin{remark}
\label{rem:ktf_basis}
The proof of Proposition \ref{prop:ktf_continuous} reveals that
\eqref{eq:ktf_continuous} has an equivalent form, transcribed here for 
convenience: 
\begin{equation}
\label{eq:ktf_basis}
\minimize_{\alpha \in \R^n} \; \frac{1}{2} \Big\| y - \Big(\hmatk_N \otimes 
\cdots \otimes \hmatk_N\Big) \alpha \Big\|_2^2 +  
\lambda k! \left\| \left[\begin{array}{c}
I^0_N \otimes \hmatk_N \otimes \cdots \otimes \hmatk_N \\
\hmatk_N \otimes I^0_N \otimes \cdots \otimes \hmatk_N \\
\vdots \\
\hmatk_N \otimes \hmatk_N \otimes \cdots \otimes I^0_N
\end{array}\right] \alpha \right\|_1,
\end{equation}
where \smash{$\hmatk_N \in \R^{N \times N}$} is the falling factorial basis
matrix (with columns given by evaluations of the falling factorial functions at
the design points) and \smash{$I^0_N \in \R^{(N-k-1) \times N}$} denotes the
last $N-k-1$ rows of the identity $I_N$. Interestingly, the penalty in
\eqref{eq:ktf_basis} is not a pure $\ell_1$ penalty on the coefficients $\alpha$
(as it would be in basis form in the univariate case) but an $\ell_1$ penalty on
aggregated (positive linear combinations of) coefficients. 

The basis formulation in \eqref{eq:ktf_basis} gives us a natural recipe for how
to extend a KTF estimate from a discrete sequence, defined over the lattice
points, to a function on the hypercube $[0,1]^d$. This is simply: 
\begin{equation}
\label{eq:ktf_extend}
\hf(x) = \sum_{i_1,\ldots,i_d=1}^N \halpha_{i_1,\ldots,i_d} h_{i_1}(x_1) 
h_{i_2}(x_2) \cdots h_{i_d}(x_d), \quad x \in [0,1]^d,
\end{equation}
where \smash{$\halpha$} is the solution in \eqref{eq:ktf_basis}. Though it is
not obvious from the basis expansion in \eqref{eq:ktf_extend}, it turns out that
at any point $x \in [0,1]^d$, we can form the prediction \smash{$\hf(x)$} in
constant-time, starting from the fitted values \smash{$\htheta_i = \hf(x_i)$}, 
$i=1,\ldots,n$. This leverages recent advances in discrete spline interpolation
from \citet{tibshirani2020divided}, and is covered in Section
\ref{sec:interpolation}. 
\end{remark}

\subsection{Degrees of freedom}

Given data from a model \eqref{eq:data_model1}, where the errors $\epsilon_i$, 
$i=1,\ldots,n$ are i.i.d.\ with mean zero and variance $\sigma^2$, recall that
the \emph{degrees of freedom} of an estimator \smash{$\htheta_i = \hf(x_i)$},
$i=1,\ldots,n$ of the means $\theta_{0i} = f_0(x_i)$, $i=1,\ldots,n$ is a
quantitative reflection of its complexity, defined as \citep{efron1986biased,
  hastie1990generalized}:   
$$
\df(\htheta) = \frac{1}{\sigma^2} \sum_{i=1}^n \Cov(y_i, \htheta_i).
$$
When the errors are Gaussian, \citet{tibshirani2011solution,
  tibshirani2012degrees} derived an expression for the degrees of freedom of any
generalized lasso estimator, based on Stein's formula
\citep{stein1981estimation}. This covers KTF in \eqref{eq:ktf} as a special
case, and thus translates into the following result for our setting: if
$\epsilon_i \sim N(0,\sigma^2$), $i=1,\ldots,n$ in \eqref{eq:data_model1}, and 
\smash{$\htheta$} denotes the solution in \eqref{eq:ktf} with active set 
$$
A = \supp\big(\kronmatk \htheta\big) = 
\big\{ i : \big[\kronmatk \theta\big]_i \not= 0\big\},
$$
then 
\begin{equation}
\label{eq:ktf_df}
\df(\htheta) = \E\Big[\nuli\Big( \big[\kronmatk]_{-A} \Big)\Big],
\end{equation}
where $\nuli(M)$ denotes the nullity (dimension of the null space) of a matrix
$M$, and $M_{-S}$ denotes the submatrix of $M$ given by removing all rows
indexed by a set $S$. From the above, we of course have the natural estimator of
degrees of freedom 
\begin{equation}
\label{eq:ktf_df_unbiased}
\widehat\df(\htheta) = \nuli\Big( \big[\kronmatk]_{-A} \Big),
\end{equation}
which is unbiased for \eqref{eq:ktf_df}. 

The expression in \eqref{eq:ktf_df_unbiased} is easy to interpret when $k=0$: in
this case, it reduces to the number of connected constant pieces in the KTF
solution \smash{$\htheta$}, where connectivity is interpreted with respect to
the underlying $d$-dimensional grid graph. This follows from the fact that the
penalty matrix \smash{$\dmat^{(1)}_{n,d}$} in this case is the edge incidence 
operator on the grid graph, and any submatrix of this penalty matrix (defined
by removing a subset of rows) is itself the edge incidence operator with respect 
to a subgraph of the original grid (induced by removing a subset of the 
edges). This result and its interpretation was already given in
\citet{tibshirani2011solution, tibshirani2012degrees} in the context of TV
denoising on a graph.

When $k \geq 1$, the unbiased estimator of degrees of freedom in
\eqref{eq:ktf_df_unbiased} is not as easy to interpret. This is because, at a
high level, there is no longer a clear link between the local structure
exhibited by \smash{$\htheta$} and whether or not a particular entry of
\smash{$\kronmatk\htheta$} is nonzero. However, we show in Appendix \ref{app:df}
that it is possible to compute the right-hand side in \eqref{eq:ktf_df_unbiased}
with a simple, direct algorithm that runs in linear time (more precisely, the
algorithm requires $O(ndk)$ operations).

\section{Interlude: total variation on lines}
\label{sec:tv_lines}

In this section, we take a continuum perspective, looking at total variation
defined over functions in $\R^d$, and connect it to the discrete notion of total
variation used in the previous sections used to define the KTF estimator.

\subsection{Measure-theoretic total variation}

Let $U$ be an open, bounded subset of $\R^d$ and $L^p(U)$ denote the space of
real-valued functions on $U$ with finite $L^p$ norm, $\int_U |f(x)|^p \, dx <
\infty$. A function $f \in L^1(U)$ is said to be of \emph{bounded variation}
(BV) provided $\TV(f; U) < \infty$, where 
\begin{equation}
\label{eq:tv_aniso}
\TV(f; U) = \sup \bigg\{ \int_U f(x) \, \mathrm{div} \phi(x) \, dx : \phi \in  
C^\infty_c(U; \R^d), \; \|\phi(x)\|_\infty \leq 1 \; \text{for all $x \in U$}
\bigg\}.
\end{equation}
Above, \smash{$C^\infty_c(U; \R^d)$} denotes the space of infinitely
continuously differentiable functions from $U$ to $\R^d$ with compact support,
and $\mathrm{div}(\cdot)$ denotes the divergence operator, \smash{$\mathrm{div}
  \phi   = \sum_{j=1}^d \partial \phi_j/\partial x_j$}. We call $\TV(f; U)$ the  
\emph{total variation} of $f$; this is the standard measure-theoretic definition
used in modern analysis; see, for example, Chapter 5 of
\citet{evans2015measure}. To be clear, it would be more precise to call our
definition in \eqref{eq:tv_aniso} the \emph{aniostropic} total variation of $f$
(due to the use of the $\ell_\infty$ norm in the constraint in
\eqref{eq:tv_aniso} on the test function $\phi$), but we often drop the
reference to the anisotropic qualifier for simplicity.

To build intuition, we note that if $f \in W^{1,1}(U)$, that is, $f$ is in
$L^1(U)$ and it is weakly differentiable and its weak derivative $\nabla f$ is
also in $L^1(U)$, then
\begin{equation}
\label{eq:tv_smooth}
\TV(f; U) = \int_U \|\nabla f(x)\|_1 \, dx.
\end{equation}
Writing $\BV(U)$ for the space of bounded variation functions, the above shows
that $W^{1,1}(U) \subseteq \BV(U)$. Importantly, this is a strict inclusion,
because, for example, the indicator function of a set that has smooth boundary
is of bounded variation, but it is not in $W^{1,1}(U)$ (it is not weakly
differentiable).  

\subsection{Univariate total variation revisited}

In order to connect the discrete notions of TV that we use in this paper to the 
standard measure-theoretic definition of TV defined in \eqref{eq:tv_aniso}, we
must first refine our definition of univariate TV. For a function $g : [a,b] \to
\R$, we define its total variation as: 
\begin{equation}
\label{eq:tv_uni}
\TV(g; [a,b]) = \sup_{\substack{a < z_1 < \cdots < z_{m+1} < b \\ 
  z_1,\ldots,z_{m+1} \in \mathrm{AC}(g)}} \; \sum_{i=1}^m |g(z_i) - g(z_{i+1})|,  
\end{equation}
where the supremum is only taken over the set $\mathrm{AC}(g)$ of points of 
approximate continuity of $g$. Approximate continuity is a weak notion of
continuity that excludes, for example, point discontinuities; see Section 1.7.2
of \citet{evans2015measure}. Observe that the definition in \eqref{eq:tv_uni}
differs from that given in the introduction in that the latter does not require
that the supremum be taken over points of approximate continuity. Some authors,
including \citet{evans2015measure}, differentiate these definitions by calling
the latter the \emph{variation} of $g$ and \eqref{eq:tv_uni} the \emph{essential
  variation} of $g$. An intuitive way of interpreting their connection is as
follows: the essential variation of $g$ is the infimum of the variation
achievable by any function \smash{$\tilde{g}$} that agrees with $g$ Lebesgue 
almost everywhere. 

The reason the refinement in \eqref{eq:tv_uni} is important, when using the
measure-theoretic definition in \eqref{eq:tv_aniso} as a basis for defining the
BV space, is that BV functions (as with $L^p$ functions and Sobolev functions)
are only well-defined up to a set of Lebesgue measure zero. That is, if $f$ and
\smash{$\tilde{f}$} agree Lebesgue almost everywhere, then their TV as defined
in \eqref{eq:tv_aniso} (as with $L^p$ norms or Sobolev norms) must also
agree. Therefore, it should be clear that \eqref{eq:tv_uni} is the proper
univariate notion here, as otherwise redefining $g$ at a point would change its
univariate TV (without restricting the supremum to points of approximate
continuity). Lastly, and reassuringly, the multivariate measure-theoretic
definition in \eqref{eq:tv_aniso} reduces to the univariate definition in
\eqref{eq:tv_uni} once we take $U=(a,b)$ (see Theorem 5.21 in
\citet{evans2015measure}).

\subsection{Total variation on lines}

We now proceed in the opposite direction to the end of the last subsection:
instead of reducing the multivariate definition to the univariate case, we will
use the univariate definition of TV to approach the multivariate
one. Interestingly, as we will see next, it turns out that \eqref{eq:tv_uni} can
be used as a building block for \eqref{eq:tv_aniso} for an open, bounded set $U
\subseteq \R^d$. In words, the next result says that the multivariate notion of
TV on $U$ is given by aggregating the univariate notion on all line segments
parallel to the coordinate axes, anchored at boundary points of $U$.    

\begin{theorem}
\label{thm:tv_lines}
Let $U \subseteq \R^d$ be an open, bounded, convex set. Then for any $f \in
\BV(U)$,  
\begin{equation}
\label{eq:tv_lines}
\TV(f; U) = \sum_{j=1}^d \int_{U_{-j}} \TV \big( f(\cdot,x_{-j}); I_{x_{-j}}
\big) \, dx_{-j},   
\end{equation}
where for each $j=1,\ldots,d$, we define $U_{-j} = \{x_{-j} : (x_j, x_{-j}) \in
U \;\, \text{for some $x_j$}\}$, and \smash{$I_{x_{-j}} = [a_{x_{-j}},
  b_{x_{-j}}]$}, with  
\begin{align*}
a_{x_{-j}} &= \inf\{ x_j : (x_j, x_{-j}) \in U \}, \\
b_{x_{-j}} &= \sup\{ x_j : (x_j, x_{-j}) \in U \}.
\end{align*}
Recall $f(\cdot,x_{-j})$ denotes $f$ as function of the $j\th$ coordinate with 
all other dimensions fixed at $x_{-j}$. Lastly, the univariate TV operator in
the integrand in \eqref{eq:tv_lines} is to be interpreted in the essential
variation sense, as in \eqref{eq:tv_uni}.  
\end{theorem}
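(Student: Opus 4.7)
The plan is to establish \eqref{eq:tv_lines} first for smooth functions by direct computation, and then pass to general $\BV(U)$ by mollification together with a lower-semicontinuity argument. The bridge between the two sides is Fubini applied to \eqref{eq:tv_smooth}: for $f \in C^1(U) \cap W^{1,1}(U)$, one has $\TV(f;U) = \int_U \sum_{j=1}^d |\partial f/\partial x_j|\,dx$, and Fubini slices this as $\sum_j \int_{U_{-j}} \int_{I_{x_{-j}}} |\partial_j f(x_j, x_{-j})|\,dx_j\,dx_{-j}$; for a $C^1$ function on an interval the inner integral $\int|g'|$ coincides with the univariate total variation, so \eqref{eq:tv_lines} holds for such $f$. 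Convexity of $U$ is what guarantees each slice $I_{x_{-j}}$ is an interval, so this reduction is legitimate.

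For general $f \in \BV(U)$, I would prove the two inequalities separately. For the easier direction $\TV(f;U) \leq \text{RHS}$, take any $\phi \in C^\infty_c(U;\R^d)$ with $\|\phi\|_\infty \leq 1$ and apply Fubini to each summand of $\int_U f\,\mathrm{div}\phi\,dx = \sum_j \int_{U_{-j}} \int_{I_{x_{-j}}} f(x_j, x_{-j})\, \partial_{x_j}\phi_j(x_j, x_{-j})\,dx_j\,dx_{-j}$. For each fixed $x_{-j}$, $\phi_j(\cdot, x_{-j})$ is an admissible test function on $I_{x_{-j}}$, and the distributional characterization of univariate TV (which, when taken in the essential-variation sense \eqref{eq:tv_uni}, equals the $C^\infty_c$ dual norm) bounds the inner integral by $\TV(f(\cdot, x_{-j}); I_{x_{-j}})$. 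Integrating and passing to the supremum over $\phi$ gives the bound.

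For the reverse inequality, let $\rho_\epsilon$ be a standard radial mollifier and set $f_\epsilon = \rho_\epsilon * f$ on $U_\epsilon = \{x : \mathrm{dist}(x, \partial U) > \epsilon\}$. Since $f_\epsilon$ is smooth, the smooth-case identity gives $\TV(f_\epsilon; U_\epsilon) = \sum_j \int_{(U_\epsilon)_{-j}} \TV(f_\epsilon(\cdot, x_{-j}); I^\epsilon_{x_{-j}})\,dx_{-j}$. By the standard strict-approximation theorem for BV functions, $\TV(f_\epsilon; U_\epsilon) \to \TV(f; U)$ as $\epsilon \to 0$. On the other hand, for almost every $x_{-j}$, Fubini tells us $f_\epsilon(\cdot, x_{-j}) \to f(\cdot, x_{-j})$ in $L^1_{\mathrm{loc}}$, and lower semicontinuity of the univariate essential variation under $L^1$ convergence yields $\TV(f(\cdot, x_{-j}); I_{x_{-j}}) \leq \liminf_\epsilon \TV(f_\epsilon(\cdot, x_{-j}); I^\epsilon_{x_{-j}})$; Fatou's lemma then gives the desired $\sum_j \int_{U_{-j}} \TV(f(\cdot, x_{-j}); I_{x_{-j}})\,dx_{-j} \leq \TV(f; U)$.

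The main obstacle I expect is the reverse direction, and specifically the justification that on almost every coordinate line the restriction $f_\epsilon(\cdot, x_{-j})$ converges to $f(\cdot, x_{-j})$ in $L^1$ despite mollification mixing across lines; this requires invoking Fubini on the joint $L^1$ convergence of $f_\epsilon \to f$, which holds for \emph{some} subsequence along \emph{almost every} line. A related subtlety is that one must work with the essential-variation refinement in \eqref{eq:tv_uni}, since $f$ is only defined up to Lebesgue null sets; without this refinement the line TVs would not be well-defined, and lower semicontinuity under $L^1$ convergence would fail. Boundary issues from shrinking $U_\epsilon$ up to $U$ should be handled by monotone convergence, using convexity of $U$ to ensure nesting of the sliced intervals.
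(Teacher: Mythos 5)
Your proposal is correct, and its core is the same as the paper's: both proofs hinge on mollifying $f$, invoking the representation \eqref{eq:tv_smooth} for the smooth function $f_\epsilon$, and applying Fubini together with the separability of the $\ell_1$ norm to slice the integral into univariate total variations along coordinate lines. Where you genuinely diverge is in the limiting step. The paper's proof simply asserts, citing standard approximation results for BV functions (Theorem 5.22 of Evans--Gariepy), that \emph{both} sides of the mollified identity converge to the corresponding sides of \eqref{eq:tv_lines}; the convergence of the integrated line-variations $\sum_j \int \TV(f_\epsilon(\cdot,x_{-j}))\,dx_{-j}$ is in fact the delicate part and is not an off-the-shelf fact. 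Your two-inequality structure sidesteps this: the upper bound $\TV(f;U) \leq \mathrm{RHS}$ follows directly from Fubini on the test-function pairing and the dual characterization of univariate essential variation (no mollification needed), and the lower bound needs only a $\liminf$, supplied by lower semicontinuity of the essential variation along almost every line plus Fatou. You also correctly flag the two real subtleties---almost-every-line $L^1$ convergence of the mollified restrictions, and the necessity of the essential-variation refinement \eqref{eq:tv_uni} for the line TVs to be well-defined and lower semicontinuous---neither of which the paper's proof addresses explicitly. In short, your argument buys a self-contained and more rigorous treatment of the limit at the cost of a slightly longer proof; the paper's buys brevity by outsourcing the convergence to a cited reference.
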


\begin{remark}
The assumption of convexity of $U$ in Theorem \ref{thm:tv_lines} is used for 
simplicity, to ensure that each coordinatewise slice of $U$---intersecting it
with a line segment parallel to the coordinate axis---is an interval. The proof
trivially extends to the case in which each slice is a finite union of
intervals; more complex structures could likely be handled via more complex
arguments.
\end{remark}

\begin{remark}
The above result is inspired by Theorem 5.22 of \citet{evans2015measure}. In the 
proof, we mollify $f$ in order to invoke the representation in
\eqref{eq:tv_smooth} for the TV of a smooth function, and then we leverage the 
separability of the $\ell_1$ norm (that is, we leverage the fact that the
integrand in \eqref{eq:tv_smooth} decomposes into a sum of absolute partial 
derivatives) in order to derive \eqref{eq:tv_lines}. Curiously, an analogous
result does not seem straightforward to derive for the case of isotropic TV;
this being defined by using an $\ell_2$ norm constraint on the test function 
$\phi$ in \eqref{eq:tv_aniso} (and for functions in $W^{1,2}(U)$, it would
reduce to the integral of $\ell_2$ norm of the weak derivative, instead of the
$\ell_1$ norm as in \eqref{eq:tv_smooth}).
\end{remark}

\subsection{Connection to KTV smoothness}

We connect the representation in \eqref{eq:tv_lines} to the KTF penalty
functional, which we call KTV smoothness. First note that we can rewrite the
definition of the anisotropic TV of a function $f$ in \eqref{eq:tv_aniso} as   
$$
\TV(f; U) = \sum_{j=1}^d \underbrace{
\sup \bigg\{ \int_U f(x) \; \frac{\partial \phi(x)}{\partial x_j} \, dx :   
\phi \in C^\infty_c(U), \; |\phi(x)| \leq 1 \; \text{for all $x \in U$}  
\bigg\}}_{V_j(f; U)},
$$
where \smash{$C^\infty_c(U)$} is the space of infinitely continuously
differentiable real-valued functions on $U$. Note that $V_j(f; U)$, as defined
above, measures the variation of $f$ along the $j\th$ coordinate
direction. Now consider the following definition of $k\th$ order multivariate 
TV, for an integer $k \geq 0$: 
\begin{equation}
\label{eq:tv_aniso_k}
\TV^k(f; U) = \sum_{j=1}^d V_j \bigg( \frac{\partial^k f}{\partial x_j^k}; U
\bigg), 
\end{equation}
where \smash{$\partial^k f/\partial x_j^k$} denotes the $k\th$ partial 
weak derivative of $f$ with respect to $x_j$. The formulation in
\eqref{eq:tv_aniso_k} is, in a sense, among the many possible options for 
higher-order TV in the multivariate setting, the ``most'' anisotropic. It only
looks at the variation in the partial derivatives along the coordinate
directions with respect to which they are defined (that is, the variation in the
$j\th$ partial derivative along the $j\th$ coordinate axis). 

Thanks to the representation in Theorem \ref{thm:tv_lines}, we can rewrite
the definition of $k\th$ order TV in \eqref{eq:tv_aniso_k} as: 
\begin{equation}
\label{eq:tv_lines_k}
\TV^k(f; U) = \sum_{j=1}^d \int_{U_{-j}} \TV \bigg( \frac{\partial^k
f(\cdot,x_{-j})}{\partial x_j^k} \bigg) \, dx_{-j},
\end{equation}
where we have made the dependence on the domain \smash{$I_{x_{-j}}$} in the TV
operator in the integrand implicit. Finally, we are ready to draw the connection
to KTF. Observe that the penalty functional underlying KTF, the second term in
the criterion of its continuous-time formulation \eqref{eq:ktf_continuous},
is given by taking the notion of $k\th$ order TV in \eqref{eq:tv_lines_k} and
approximating the integral via discretization; that is, the integral over
$U_{-j}$ is simply replaced by a sum over an embedded lattice.
\section{Smoothness classes}
\label{sec:classes}

We present various discrete smoothness classes, then connect them to each
other and to traditional Holder smoothness classes defined in continuous-time,
to derive what we refer to as \emph{canonical scalings} for the radii of the
discrete classes. This paves the way for the minimax analysis in the next
section.       

\subsection{Discrete TV and Sobolev classes} 

First we define the $k\th$ order \emph{Kronecker total variation} (KTV) class,
for a radius $\rho>0$, by 
\begin{equation}
\label{eq:ktv_class}
\kronset^k(\rho) = \big\{ \theta \in \R^n : \|\kronmatk \theta\|_1 \leq \rho  
\big\}. 
\end{equation}
It is a priori unclear what scaling for the radius $\rho$ in
\eqref{eq:ktv_class} makes for an ``interesting''  smoothness class for 
theoretical analysis. This is discussed at length in \citet{sadhanala2016total},
and was one of the original motivations for that paper. There, it is shown that 
taking $\rho$ to be a constant (as $n \to \infty$) leads to seemingly very fast 
minimax rates, however, in this regime trivial estimators turn out to be rate
optimal (such as the sample mean estimator \smash{$\htheta_i = \bar{y}$}, 
$i=1,\ldots,n$).     

In order to begin reasoning about scalings for $\rho$ in \eqref{eq:ktv_class},
it helps to define the order $k+1$ discrete $\ell_2$-Sobolev class:
\begin{equation}
\label{eq:sobolev_class}
\sobolset^{k+1}(\rho) = \big\{ \theta \in \R^n : \|\kronmatk \theta\|_2 \leq  
\rho \big\}. 
\end{equation}
Observe that \smash{$\sobolset^{k+1}(\rho)$} only considers partial derivatives
of order $k+1$ aligned with one of the coordinate axes, rather than considering
all mixed derivatives of total order $k+1$, as we would in a traditional Sobolev
class. For simplicity, we drop reference to the $\ell_2$ prefix when referring
to \eqref{eq:sobolev_class} henceforth.  

By the inequality \smash{$\|v\|_2 \leq \sqrt{p} \|\theta\|_1$} for vectors $v
\in \R^p$, and the fact that the number of rows of \smash{$\kronmatk$} can be 
upper bounded by $dn$, we have the following embedding:     
$$
\sobolset^{k+1}(\rho) \subseteq \kronset^k\big( \sqrt{dn} \rho \big), \quad  
\text{for any $\rho>0$}. 
$$
This shows that any reasonable regime for analysis must have $\rho$ varying with
$n$ in \eqref{eq:ktv_class}, or in \eqref{eq:sobolev_class} (or both), because a 
constant radius in one class would translate into a growing or diminishing
radius in the other, by the above display. However, it still leaves unspecified
what precise scalings for the radii in \eqref{eq:ktv_class} and 
\eqref{eq:sobolev_class}, would correspond to ``interesting'' classes,
comparable in some sense to choices of radii in analogous continuous-time TV or
Sobolev smoothness classes. We answer this question in the next subsection, by
introducing discrete and continuum Holder classes, and pursuing further
embeddings. 

\subsection{Discrete and continuum Holder classes}

Now we recall the traditional definition for the $k\th$ order Holder class of
functions from $[0,1]^d$ to $\R$, of radius $L>0$:
\begin{multline*}
C^k(L; [0,1]^d) = \Bigg\{ f : [0,1]^d \to \R: \text{$f$ is $k$ times
  differentiable and for all integers $\alpha_1,\ldots,\alpha_d \geq
  0$}, \\ \text{with $\alpha_1+\cdots+\alpha_d=k$}, \;\, 
\bigg| \frac{\partial^k f(x)}{\partial x_1^{\alpha_1} \cdots \partial
  x_d^{\alpha_d}} - \frac{\partial^k f(z)}{\partial x_1^{\alpha_1} \cdots
  \partial x_d^{\alpha_d}} \bigg| \leq L \|x-z\|_2, \;\, 
\text{for all $x,z \in [0,1]^d$} \Bigg\}.  
\end{multline*}
We define a discretized version of this class by simply evaluating the functions 
in \smash{$C^k(L; [0,1]^d)$} on the lattice $Z_{n,d}$:
\begin{equation}
\label{eq:holder_class}
\holderset^k(L) = \big\{ \theta \in \R^n : \text{there exists some $f \in  
  C^k(L; [0,1]^d)$ such that $\theta(x) = f(x)$, $x \in Z_{n,d}$} \big\}. 
\end{equation}
The next proposition derives embeddings for the discrete Holder class
\eqref{eq:holder_class} into the discrete Sobolev \eqref{eq:sobolev_class} and
KTV \eqref{eq:ktv_class} classes. It is a direct consequence of Lemma A.6 in
\citet{sadhanala2017higher} (which is a classical result of sorts that
quantifies the error of the forward difference approximation of the derivative
of a Holder function).     

\begin{proposition}[\citealt{sadhanala2017higher}]
\label{prop:ktv_sobolev_holder_embed}
The discrete classes in \eqref{eq:ktv_class}--\eqref{eq:holder_class} satisfy,
for any $L > 0$, 
\begin{equation}
\label{eq:ktv_sobolev_holder_embed}
\holderset^k(L) 
\subseteq \sobolset^{k+1} \big( c_1 L n^{\frac{1}{2}-\frac{k+1}{d}} \big)  
\subseteq \kronset^k \big( c_2 L n^{1-\frac{k+1}{d}} \big),
\end{equation}
where $c_1,c_2>0$ are constants depending only on $k,d$. 
\end{proposition}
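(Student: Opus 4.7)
The plan is to prove the two embeddings separately, with the second being essentially a general $\ell_1$-vs-$\ell_2$ norm inequality and the first being where all the analytic content lies.

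For the second embedding $\sobolset^{k+1}(c_1 L n^{1/2 - (k+1)/d}) \subseteq \kronset^k(c_2 L n^{1 - (k+1)/d})$, I would simply note that the penalty matrix $\kronmatk$ has at most $dn$ rows (each of the $d$ blocks contributes $(N-k-1) N^{d-1} \leq N^d = n$ rows). Applying the standard inequality $\|v\|_1 \leq \sqrt{p}\,\|v\|_2$ to $v = \kronmatk \theta \in \R^p$ with $p \leq dn$ yields $\|\kronmatk \theta\|_1 \leq \sqrt{dn}\,\|\kronmatk \theta\|_2$, and the second inclusion then follows with $c_2 = c_1 \sqrt{d}$.

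For the first embedding $\holderset^k(L) \subseteq \sobolset^{k+1}(c_1 L n^{1/2 - (k+1)/d})$, my plan is to work entrywise in $\kronmatk \theta$. Fix $\theta \in \holderset^k(L)$, choose a representative $f \in C^k(L; [0,1]^d)$ with $\theta(x) = f(x)$ on $Z_{n,d}$, and consider any entry of $\kronmatk \theta$ of the form $(\dop_{x_j^{k+1}} \theta)(x)$. Unrolling the recursive definition of $\dop_{x_j}$, this entry equals the $(k+1)$-th forward finite difference of $f$ along coordinate $j$ at step size $h = 1/N = n^{-1/d}$, namely $\sum_{i=0}^{k+1} (-1)^{k+1-i} \binom{k+1}{i} f(x + i e_j/N)$. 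Invoking Lemma A.6 of \citet{sadhanala2017higher} (the finite-difference/Holder estimate), which bounds such a $(k+1)$-th difference of an order-$k$ Holder function by a constant multiple of $L h^{k+1}$, yields the pointwise bound
\[
|(\dop_{x_j^{k+1}} \theta)(x)| \leq c\, L\, n^{-(k+1)/d}
\]
with a constant $c$ depending only on $k$. Summing squares over the at most $dn$ entries of $\kronmatk \theta$ then gives $\|\kronmatk \theta\|_2^2 \leq dn \cdot c^2 L^2 n^{-2(k+1)/d}$, and taking square roots yields the first inclusion with $c_1 = c \sqrt{d}$.

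The main obstacle is really just careful bookkeeping: one must verify that the recursive definition of the discrete derivative $\dop_{x_j^{k+1}}$ produces exactly the forward finite difference expression to which Lemma A.6 applies (with the correct step size $h = 1/N$ and no normalizing factor, since the paper's $\dmat$ matrices carry no $1/h$), and one must count rows of each Kronecker block in $\kronmatk$ to confirm the $dn$ row bound. After that, the proposition follows by chaining the two inclusions, with the constants absorbing factors of $\sqrt{d}$ as needed.
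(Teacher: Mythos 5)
Your proposal is correct and follows essentially the same route as the paper, which treats the first inclusion as a direct consequence of Lemma A.6 of \citet{sadhanala2017higher} (the forward-difference/Holder estimate applied entrywise to $\kronmatk\theta$ with step size $1/N$) and the second as the standard bound $\|v\|_1 \leq \sqrt{p}\,\|v\|_2$ with $p \leq dn$ rows, exactly as you do. The bookkeeping you flag (row count of $dn$, absence of a $1/h$ normalization in $\dmat$) checks out, so nothing further is needed.
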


Motivated by the last result, as in \citet{sadhanala2017higher}, we define the
{\it canonical scalings} for the discrete Sobolev and KTV classes as 
\begin{align}
\label{eq:sobolev_canonical}
B_n^* &= n^{\frac{1}{2}-\frac{k+1}{d}}, \\
\label{eq:ktv_canonical}
C_n^* &= n^{1-\frac{k+1}{d}},
\end{align}
so that \smash{$\holderset^k(1) \subseteq \sobolset^{k+1}(c_1B_n^*) \subseteq
  \kronset^k(c_2C_n^*)$}, for constants $c_1,c_2>0$ that depend only on
$k,d$. Thus, by analogy to classical results on nonparametric estimation over
Holder spaces, we should expect the minimax rate over
\smash{$\sobolset^{k+1}(B_n^*)$} (in the squared $\ell_2$ norm) to be
\smash{$n^{-2(k+1)/(2(k+1)+d)}$}. This is indeed the case, as we will show at
the end of Section \ref{sec:theory}. The minimax rate over
\smash{$\kronset^k(C_n^*)$}, on the other hand, will turn out to be more exotic,
and is the focus of the majority of the next section.
\section{Estimation theory}
\label{sec:theory}

We derive a number of results on estimation theory over KTV classes. We begin
by deriving upper bounds on the error of the KTF estimator, and then study lower 
bounds. Throughout, we assume the data model in \eqref{eq:data_model1} with
$\theta_{0i} = f_0(x_i)$, $i=1,\ldots,n$ and i.i.d.\ normal errors, to be
precise: 
\begin{equation}
\label{eq:data_model2}
y_i \sim N(\theta_{0,i},\sigma^2), \quad \text{independently, for
  $i=1,\ldots,n$}.  
\end{equation}
To set some basic notation, based on estimators \smash{$\htheta$} of the mean
$\theta_0$ in \eqref{eq:data_model2}, we define for a subset $\cK \subseteq
\R^n$,  
$$
R(\cK) = \inf_{\htheta} \; \sup_{\theta_0 \in \cK} \; \frac{1}{n} \E \|\htheta -
\theta_0\|_2^2,  
$$
which is called the \emph{minimax risk} over $\cK$. Also of interest will be  
$$
R_L(\cK) = \inf_{\htheta \; \text{linear}} \; \sup_{\theta_0 \in \cK} \;
\frac{1}{n} \E \|\htheta - \theta_0\|_2^2,    
$$
called the \emph{minimax linear risk} over $\cK$, the infimum being restricted
to linear estimators \smash{$\htheta$} (that is, of the form \smash{$\htheta
  = S y$} for a matrix $S \in \R^{n \times n}$). To finish our discussion of
notation, for deterministic sequences $a_n,b_n$ we write $a_n = O(b_n)$ when
$a_n/b_n$ is upper bounded for large enough $n$, we write $a_n = \Omega(b_n)$
when $a_n^{-1} = O(b_n^{-1})$, and $a_n \asymp b_n$ when both $a_n=O(b_n)$ and
$a_n = \Omega(b_n)$. For random sequences $A_n,B_n$, we write $A_n = O_\P(B_n)$ 
when $A_n/B_n$ is bounded in probability. In the theory that follows, all
asymptotics are for $n \to \infty$ with $k,d$ fixed.  

As a side remark, although not the focus of the current paper, analogous theory
can be established for graph trend filtering on grids (see Section
\ref{sec:related_work} for a discussion of its relation to KTF), which we defer
to Appendix \ref{app:gtf}.  

\subsection{Upper bounds on estimation risk}

To derive upper bounds on the risk of KTF, we leverage the following simple
generalization of a key result from \citet{wang2016trend}. Here and henceforth,
for an integer $a \geq 1$, we abbreviate $[a] = \{1,\ldots,a\}$.  

\begin{theorem}[\citealt{wang2016trend}]
\label{thm:genlasso_upper_bd}
Consider the generalized lasso estimator \smash{$\htheta$} with penalty matrix 
$\genmat \in \R^{r \times n}$, defined by the solution of 
\begin{equation}
	\label{eq:genlasso_estimator}
\minimize_{\theta \in \R^n} \; \frac{1}{2} \|y-\theta\|_2^2 + \lambda \|\genmat 
\theta\|_1
\end{equation}
Suppose that $\genmat$ has rank $q$, and denote by $\xi_1 \leq \cdots \leq 
\xi_q$ its nonzero singular values. Also let $u_1, \ldots, u_q \in \R^r$ be the 
corresponding left singular vectors. Assume that these vectors, except possibly
for those in a set $I \subseteq [q]$, are \emph{incoherent}, meaning that for a 
constant $\mu \geq 1$,   
$$
\|u_i\|_\infty \leq \mu/\sqrt{n}, \quad i \in [q] \setminus I.
$$
Then under the data model \eqref{eq:data_model2}, choosing 
$$
\lambda \asymp \mu \sqrt{\frac{\log r}{n} \sum_{i\in [q] \setminus I}
  \frac{1}{\xi_i^2}},   
$$
the generalized lasso estimator satisfies 
\begin{equation}
\label{eq:genlasso_upper_bd}
\frac{1}{n} \|\htheta - \theta_0\|_2^2  =  O_\P \Bigg(  
\frac{\nuli(\genmat)}{n} + \frac{|I|}{n} + \frac{\mu}{n} \sqrt{ \frac{\log r}{n}    
\sum_{i\in [q] \setminus I} \frac{1}{\xi_i^2}} \cdot \| \genmat \theta_0 \|_1\,
\Bigg).    
\end{equation}
\end{theorem}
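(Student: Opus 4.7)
The plan is to follow the now-standard ``basic inequality + decomposition'' strategy for generalized lasso analyses, with the twist that the coordinates indexed by $I$ are peeled off and absorbed into the unpenalized term rather than controlled through Hölder's inequality.

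First I would start from the optimality of $\htheta$ in \eqref{eq:genlasso_estimator}, which upon substituting $y = \theta_0 + \epsilon$ yields the basic inequality
$$
\tfrac{1}{2}\|\htheta - \theta_0\|_2^2 \;\leq\; \langle \epsilon, \htheta - \theta_0\rangle + \lambda\big(\|\genmat \theta_0\|_1 - \|\genmat \htheta\|_1\big).
$$
Let $\genmat = U \Sigma V^\T$ be the compact SVD, with $V \in \R^{n \times q}$, $\Sigma = \diag(\xi_1,\ldots,\xi_q)$, and $U \in \R^{r \times q}$ having columns $u_1,\ldots,u_q$. Decompose $\R^n$ into three mutually orthogonal pieces: $\nul(\genmat)$, $\spa\{v_i : i \in I\}$, and $\spa\{v_i : i \in [q]\setminus I\}$, with corresponding projections $P_0$, $P_I$, $P_{I^c}$. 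Writing $w = \htheta-\theta_0$ and splitting $\langle \epsilon, w\rangle$ accordingly is the key step.

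For the first two pieces I would apply Cauchy--Schwarz followed by Young's inequality:
$$
\langle P_0\epsilon + P_I\epsilon,\, w\rangle \;\leq\; \tfrac{1}{4}\|w\|_2^2 + 2\big(\|P_0\epsilon\|_2^2 + \|P_I\epsilon\|_2^2\big),
$$
and since $\|P_0\epsilon\|_2^2$ and $\|P_I\epsilon\|_2^2$ are $\sigma^2$ times chi-squared random variables with $\nuli(\genmat)$ and $|I|$ degrees of freedom respectively, these contribute $O_\P(\nuli(\genmat) + |I|)$. For the third piece I would use $V_{I^c}^\T w = \Sigma_{I^c}^{-1} U_{I^c}^\T \genmat w$ (since $U_{I^c}^\T U = [I \; 0]$ after suitable permutation) to write $P_{I^c} w = V_{I^c}\Sigma_{I^c}^{-1}U_{I^c}^\T \genmat w$, and then apply the adjoint followed by Hölder:
$$
\langle P_{I^c}\epsilon,\, P_{I^c} w\rangle \;=\; \langle U_{I^c}\Sigma_{I^c}^{-1}V_{I^c}^\T \epsilon,\, \genmat w\rangle \;\leq\; \|U_{I^c}\Sigma_{I^c}^{-1}V_{I^c}^\T \epsilon\|_\infty \cdot \|\genmat w\|_1.
$$
The $\ell$-th entry of $U_{I^c}\Sigma_{I^c}^{-1}V_{I^c}^\T \epsilon$ is Gaussian with variance $\sigma^2 \sum_{i \in [q]\setminus I} u_i(\ell)^2/\xi_i^2 \leq \sigma^2(\mu^2/n)\sum_{i \in [q]\setminus I} 1/\xi_i^2$ by incoherence, so a standard Gaussian maximal inequality over $r$ coordinates gives an $O_\P(\sigma\mu\sqrt{(\log r/n)\sum_{i\in[q]\setminus I} 1/\xi_i^2})$ bound on the $\ell_\infty$ norm.

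Finally I would combine: the triangle inequality $\|\genmat w\|_1 \leq \|\genmat \htheta\|_1 + \|\genmat \theta_0\|_1$ is plugged back, and $\lambda$ is chosen at (a constant multiple larger than) the $\ell_\infty$ bound above, so that the $+\|\genmat \htheta\|_1$ term is absorbed against the $-\lambda \|\genmat \htheta\|_1$ coming from the basic inequality, leaving $2\lambda\|\genmat \theta_0\|_1$. Rearranging and dividing by $n$ yields \eqref{eq:genlasso_upper_bd}. The main subtlety, and the point where this generalizes the original Wang et al.\ argument, is the treatment of the exceptional singular vectors indexed by $I$: instead of forcing the Hölder route through all $q$ directions (which would blow up the variance because of the $\mu/\sqrt{n}$ bound failing on $I$), they are siphoned off into an $\ell_2$ estimate and contribute an additional $|I|/n$ term, playing the role of $|I|$ extra unpenalized dimensions. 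The rest is bookkeeping and standard Gaussian concentration.
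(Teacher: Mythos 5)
Your argument is correct and is essentially the canonical proof of this result: the paper itself does not reprove Theorem \ref{thm:genlasso_upper_bd} but imports it from \citet{wang2016trend}, whose Theorem 6 is established by exactly the route you describe --- basic inequality, orthogonal decomposition of $\langle \epsilon, \htheta-\theta_0\rangle$ along the null space, the exceptional singular directions in $I$, and the remaining row space, with Cauchy--Schwarz/Young plus $\chi^2$ concentration on the first two pieces and the H\"{o}lder-plus-incoherence $\ell_\infty$ bound (driving the choice of $\lambda$) on the third. Your identification of the set $I$ as the ``generalization'' knob --- trading $|I|/n$ in the risk for removing those directions from the variance sum $\sum_{i \in [q]\setminus I}\xi_i^{-2}$ --- is precisely how the theorem is deployed later in the proof of Theorem \ref{thm:ktf_upper_bd}, so nothing is missing.
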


We will now apply this result to KTF in \eqref{eq:ktf}, and choose the set $I$
in order to balance the second and third terms on the right-hand side in
\eqref{eq:genlasso_upper_bd}. Throughout, we will make reference to the 
\emph{effective degree of smoothness} (or effective smoothness for short),
defined 
by 
$$
s = \frac{k+1}{d}.
$$

\begin{theorem}
\label{thm:ktf_upper_bd}
Let \smash{$\htheta$} denote the KTF estimator in \eqref{eq:ktf}. Under the data
model \eqref{eq:data_model2}, denote \smash{$C_n = \|\kronmatk \theta_0\|_1$},
and assume $C_n>0$. Choosing    
\begin{align*}
\lambda \asymp 
\begin{cases}
\sqrt{\log n} & \text{if $s < 1/2$}, \\
\log n \vphantom{(n / C_n)^{\frac{2s-1}{2s+1}}} & \text{if $s = 1/2$}, \\  
(\log n)^{\frac{1}{2s+1}} (n / C_n)^{\frac{2s-1}{2s+1}} & \text{if $s > 1/2$},     
\end{cases}
\end{align*}
the KTF estimator satisfies 
$$
\frac{1}{n} \|\htheta - \theta_0\|_2^2 = O_\P \bigg( \frac{1}{n} +
\frac{\lambda}{n} C_n \bigg).
$$
\end{theorem}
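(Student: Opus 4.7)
The plan is to invoke the generalized lasso bound of Theorem \ref{thm:genlasso_upper_bd} with $\genmat = \kronmatk$, and then tune the exceptional set $I$ and the regularization parameter $\lambda$ so as to balance the $|I|/n$ term against the $\lambda C_n / n$ term. Several ingredients required by Theorem \ref{thm:genlasso_upper_bd} are immediate: by Proposition \ref{prop:ktf_null_space}, $\nuli(\kronmatk) = (k+1)^d$ is an absolute constant, so the first term contributes $O(1/n)$; and the row count $r = O(n)$, so $\log r = O(\log n)$. What remains is a careful spectral analysis of $\kronmatk$.

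The key observation is that
$$
(\kronmatk)^\T \kronmatk \;=\; \sum_{j=1}^d I_N \otimes \cdots \otimes (\dmatk_N)^\T \dmatk_N \otimes \cdots \otimes I_N,
$$
so all $d$ summands share a common eigenbasis made up of $d$-fold tensor products of eigenvectors of $(\dmatk_N)^\T \dmatk_N$. The univariate eigenvectors are well-known to be close to sinusoids in the bulk, with only $O(k)$ boundary-like exceptions; taking tensor products, I would verify that outside a boundary index set of size $O(N^{d-1}) = O(n^{1-1/d})$, the left singular vectors of $\kronmatk$ are $\mu$-incoherent with $\mu$ a constant depending only on $k, d$. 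In the same regime, the squared singular values satisfy $\xi_{i_1,\ldots,i_d}^2 \asymp \sum_{j=1}^d (i_j/N)^{2(k+1)}$.

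With this spectral picture, I would take $I$ to be the union of the boundary-type indices and all indices $(i_1,\ldots,i_d)$ with $\xi^2_{i_1,\ldots,i_d} \leq t$, for a threshold $t$ to be chosen. A volume argument on the sublevel sets of $\|\cdot\|^{2(k+1)}_{2(k+1)}$ gives $|I| \asymp n \cdot t^{d/(2(k+1))} = n \cdot t^{1/(2s)}$ (plus the negligible boundary contribution), while the tail sum $\sum_{i \notin I} 1/\xi_i^2$ is estimated by comparison with the integral $\int (\|u\|^{2(k+1)}_{2(k+1)})^{-1} du$ over a suitable shell. This integral exhibits the phase transition at $2(k+1) = d$, i.e., $s = 1/2$: it converges in the bulk when $s > 1/2$ (so removing small singular values controls the sum at scale $n^{2s}/\text{something}$), it is critical with a $\log n$ factor when $s = 1/2$, and it diverges when $s < 1/2$ so that $I$ must absorb nearly all small singular values, yielding $\sum_{i \notin I} 1/\xi_i^2 = O(n/t)$. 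Solving the balancing equations $|I|/n \asymp \lambda C_n/n$ together with $\lambda \asymp \mu \sqrt{(\log n)/n \cdot \sum_{i \notin I} 1/\xi_i^2}$ in each of the three regimes produces exactly the three scalings for $\lambda$ stated in the theorem, and substituting back into \eqref{eq:genlasso_upper_bd} gives the claimed rate $(1/n)\|\htheta - \theta_0\|_2^2 = O_\P(1/n + \lambda C_n/n)$.

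The main obstacle is the spectral step: establishing the uniform incoherence bound $\|u_i\|_\infty \lesssim 1/\sqrt{n}$ for the left singular vectors of $\kronmatk$ outside a provably small boundary set, together with the two-sided asymptotics $\xi^2 \asymp \sum_j (i_j/N)^{2(k+1)}$. The Kronecker structure reduces this to a statement about $\dmatk_N$, but controlling the higher-order difference operator's eigenvectors near the endpoints (and tracking how boundary irregularities compound under tensor products) requires quantitative results in the spirit of the univariate analysis of \citet{wang2016trend} and \citet{tibshirani2014adaptive}. Once this piece is in place, the remainder of the argument is a bookkeeping exercise: dyadic partitioning of the spectrum, the volume-of-sublevel-set estimate, and direct substitution into Theorem \ref{thm:genlasso_upper_bd}.
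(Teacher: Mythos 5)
Your high-level route is the same as the paper's: apply Theorem \ref{thm:genlasso_upper_bd} with $\genmat = \kronmatk$, exploit the Kronecker-sum structure of $(\kronmatk)^\T\kronmatk$ to get the eigenvalues as $d$-fold sums of univariate ones, bound the partial sum $\sum_{i\notin I}\xi_i^{-2}$ by an integral comparison that produces the phase transition at $s=1/2$, and balance $|I|$ against the third term. The eigenvalue work and the balancing are essentially what the paper does (via Lemmas \ref{lem:kron_tf_evals}, \ref{lem:ktf_gtf_eigval_ineq}, and \ref{lem:lap_eigs_ddim}, which get the needed lower bounds on $\xi_i$ by Cauchy interlacing against the grid Laplacian rather than a two-sided asymptotic, but that difference is cosmetic).

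The genuine gap is in your treatment of incoherence. You propose to certify incoherence only for ``bulk'' singular vectors and to dump the $O(k)$ boundary-like univariate exceptions --- hence a multi-index set of size $O(kdN^{d-1}) = O(n^{1-1/d})$ after tensoring --- into the exceptional set $I$. But Theorem \ref{thm:genlasso_upper_bd} charges you $|I|/n$ for this, i.e.\ an additive $n^{-1/d}$, which is \emph{not} $O(1/n + \lambda C_n/n)$: under the canonical scaling and $s>1/2$ one needs $|I| \lesssim \lambda C_n \asymp n^{1/(2s+1)}(\log n)^{1/(2s+1)}$, and $n^{1-1/d} \leq n^{1/(2s+1)}$ forces $2(k+1)(d-1)\leq d$, which fails for every $k\geq 1$, $d\geq 2$ (and the same failure occurs for $s<1/2$). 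So either your boundary set is unnecessary or your proof does not establish the theorem. The paper avoids this entirely: Lemma \ref{lem:incoherence_kron} writes down every left singular vector of $\kronmatk$ explicitly as a normalized stack of tensor products $\gamma_{i_j}\, v_{i_1}\otimes\cdots\otimes u_{i_j}\otimes\cdots\otimes v_{i_d}/\sqrt{\sum_j\rho_{i_j}}$ and shows \emph{all} of them are $\mu^d$-incoherent, provided both the left and right singular vectors of the univariate $\dmatk_N$ are incoherent \emph{at every index} --- the latter being the hard Toeplitz-approximation result imported from \citet{sadhanala2017higher} (plus discrete Legendre polynomials for the null space). With full incoherence in hand, $I$ is reserved solely for small singular values and has size $(r+k+1)^d$, which is what makes the balancing come out to the stated rates. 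To repair your argument you would need to either prove the all-indices univariate incoherence (not just the bulk statement) or show the exceptional rows can be handled by a mechanism other than the $|I|/n$ term. Note also the minor subtlety your sketch glosses over: the left singular vectors of the stacked operator involve both the left and the right univariate singular vectors, so incoherence of both families is required.
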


\begin{remark}
The result in the above theorem for the case of $k=0$ (TV denoising) and any $d$ 
was already established in \citet{hutter2016optimal}. The result for $d=2$ and
any $k$ was given in \citet{sadhanala2017higher}. For $d=1$ and any $k$, the
result was established in \citet{mammen1997locally, tibshirani2014adaptive}
(though the results in the latter papers are sharper by log factors). 

Theorem \ref{thm:ktf_upper_bd} covers all $k$ and $d$. Its proof, an application
of Theorem \ref{thm:genlasso_upper_bd} to KTF, requires checking the incoherence
of the penalty matrix \smash{$\genmat = \kronmatk$}, and bounding the partial  
sum of squared reciprocal singular values \smash{$\sum_{i\in [q] \setminus I} \, 
  \xi_i^{-2}$}, for an appropriate set $I$ that corresponds to small
eigenvalues. The former---checking the incoherence of the left singular vectors
of the KTF penalty matrix---turns out to be the harder calculation. However, the
hardest part of this calculation was already done in
\citet{sadhanala2017higher}, who established the incoherence of
\smash{$\dmat^{(k+1)}_n$} (the trend filtering penalty matrix, or equivalently,
the KTF penalty matrix when $d=1$), using complex approximation results for
the eigenvectors of Toeplitz matrices. To handle KTF in arbitrary dimension $d$,
in the current paper, we use careful arguments that relate singular vectors of
Kronecker products to singular vectors of their constituent matrices.   
\end{remark}

\begin{remark}
The choice of tuning parameter $\lambda$ in Theorem \ref{thm:ktf_upper_bd}
generally depends on the smoothness level $C_n$, as well as the order $k$ of the
underlying KTV smoothness class (which appears through $s = (k+1) / d$). In this 
sense, our KTF bounds are weaker than some of the sharpest results in the
literature on nonparametric estimation, which are adaptive to underlying
smoothness parameters ($k$ and $C_n$ in our setting). However, it is interesting
to note that the case $s \leq 1/2$, that is, $2k+2\leq d$, appears to be
special: KTF ends up being adaptive to $C_n$. In other words, the prescribed
choice of tuning parameter is \smash{$\lambda \asymp \sqrt{\log n}$} when $s <
1/2$, and \smash{$\lambda \asymp \log n$} when $s = 1/2$, neither of which 
depend on $C_n$. 
\end{remark}

Under the canonical scaling \eqref{eq:ktv_canonical}, the error bound in Theorem
\ref{thm:ktf_upper_bd} reduces to the following.   

\begin{corollary}
\label{cor:ktf_upper_bd_canonical}
Assume the conditions of Theorem \ref{thm:ktf_upper_bd}. When \smash{$C_n \asymp
  L_n C_n^* = L_n n^{1-s}$}, where $C_n^*$ is the canonical scaling in
\eqref{eq:ktv_canonical} for the KTV smoothness level, the KTF estimator
satisfies     
\begin{equation}
\begin{aligned}
\label{eq:ktf_upper_bd_canonical}
\frac{1}{n} \|\htheta - \theta_0\|_2^2 =
\begin{cases}
O_\P (L_n n^{-s} \sqrt{\log n} \, \vphantom{L_n^{\frac{2}{2s+1}}}) & \text{if $s
  < 1/2$}, \\   
O_\P (L_n n^{-s}  \log n) \vphantom{L_n^{\frac{2}{2s+1}}} & \text{if $s = 1/2$},
\\   
O_\P \big( L_n^{\frac{2}{2s+1}} n^{-\frac{2s}{2s+1}} (\log n)^{\frac{1}{2s+1}}
\big) & \text{if $s > 1/2$}.  
\end{cases}
\end{aligned}
\end{equation}
\end{corollary}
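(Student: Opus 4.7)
The result is a direct corollary of Theorem \ref{thm:ktf_upper_bd}: my plan is to simply substitute $C_n \asymp C_n^\ast = n^{1-s}$ into the general error bound $\frac{1}{n}\|\htheta - \theta_0\|_2^2 = O_\P(1/n + \lambda C_n/n)$ and simplify, casewise, according to the regime of $s$. Since the prescribed $\lambda$ itself depends on the regime, I will handle each of the three cases in turn and then verify in each case that the $1/n$ term is dominated by the $\lambda C_n/n$ term, so it may be absorbed.

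In the case $s<1/2$, with $\lambda \asymp \sqrt{\log n}$, the bound becomes $\frac{1}{n} + n^{-s}\sqrt{\log n}$; since $s<1/2<1$, the second term dominates and gives $O_\P(n^{-s}\sqrt{\log n})$. In the case $s=1/2$, with $\lambda \asymp \log n$, the bound is $\frac{1}{n} + n^{-s}\log n$, which again simplifies to $O_\P(n^{-s}\log n)$ by the same domination. These two cases are immediate.

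The case $s>1/2$ requires a small exponent calculation. Here $\lambda C_n = (\log n)^{\frac{1}{2s+1}} (n/C_n)^{\frac{2s-1}{2s+1}} C_n = (\log n)^{\frac{1}{2s+1}} n^{\frac{2s-1}{2s+1}} C_n^{\frac{2}{2s+1}}$. Plugging in $C_n = n^{1-s}$, the exponent of $n$ becomes $\frac{2s-1}{2s+1} + \frac{2(1-s)}{2s+1} = \frac{1}{2s+1}$, so $\lambda C_n/n = (\log n)^{\frac{1}{2s+1}} n^{-\frac{2s}{2s+1}}$. Since $\frac{2s}{2s+1}<1$, this dominates $1/n$, and we obtain $O_\P(n^{-\frac{2s}{2s+1}}(\log n)^{\frac{1}{2s+1}})$.

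There is really no obstacle here beyond the bookkeeping of exponents in the $s>1/2$ case; the work has already been done in Theorem \ref{thm:ktf_upper_bd}. The only thing worth noting explicitly is that the corollary only requires $C_n \asymp n^{1-s}$ (not equality), and the constants absorbed into the $O_\P$ notation absorb the multiplicative constants from this equivalence without affecting the stated rates.
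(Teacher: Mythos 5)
Your proposal is correct and is exactly the argument the paper intends: the corollary is obtained by substituting $C_n \asymp C_n^* = n^{1-s}$ into the bound of Theorem \ref{thm:ktf_upper_bd} with the prescribed $\lambda$ in each regime, and the exponent computation $\frac{2s-1}{2s+1} + \frac{2(1-s)}{2s+1} = \frac{1}{2s+1}$ in the $s>1/2$ case matches. The paper does not write out a separate proof precisely because the result is this direct plug-in, so there is nothing further to check.
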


\begin{remark}
Recall the continuous-time analog of the KTF penalty in
\eqref{eq:ktf_continuous}. It turns out, as we show in Appendix
\ref{app:ktf_upper_bd_continuous}, that the assumption that the mean vector
$\theta_0$ is KTV smooth in Theorem \ref{thm:ktf_upper_bd} and Corollary
\ref{cor:ktf_upper_bd_canonical} can be broadened into an assumption that the
KTV of the mean \emph{function} $f_0$ is KTV smooth, in the sense of the penalty
functional in \eqref{eq:ktf_continuous}. 
\end{remark}

\subsection{Lower bounds on estimation risk}
\label{sec:lower_bounds}

Next we give lower bounds on the minimax risk over KTV classes. 

\begin{theorem}
\label{thm:ktv_lower_bd}
The minimax risk for KTV class defined in \eqref{eq:ktv_class} satisfies, for
any sequence $C_n \leq n$, 
\begin{equation}
\label{eq:ktv_lower_bd}
R\big(\kronset^k(C_n)\big) = \Omega \Bigg( \frac{1}{n} + \frac{C_n}{n} +
\bigg(\frac{C_n}{n} \bigg)^{\frac{2}{2s+1}} \Bigg). 
\end{equation}
\end{theorem}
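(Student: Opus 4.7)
The plan is to establish the three terms in \eqref{eq:ktv_lower_bd} separately; since $R(\kronset^k(C_n))$ is bounded below by each individual term, the sum (up to constants) follows by taking their maximum. Each term arises from a distinct subclass of $\kronset^k(C_n)$.

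The $1/n$ term is purely parametric. By Proposition \ref{prop:ktf_null_space}, the constant vectors lie in the null space of $\kronmatk$, so Le Cam's two-point method applied to $\theta_0 = 0$ and $\theta_1 = (c\sigma/\sqrt{n})\mathbf{1}$ (with $c$ a small absolute constant ensuring $O(1)$ KL divergence) yields the standard $\Omega(1/n)$ bound, both alternatives being in $\kronset^k(C_n)$ for free.

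The $(C_n/n)^{2/(2s+1)}$ term comes from embedding a Holder class. By Proposition \ref{prop:ktv_sobolev_holder_embed}, $\holderset^k(L) \subseteq \kronset^k(c_2 L n^{1-s})$, so the choice $L = C_n/(c_2 n^{1-s})$ gives $\holderset^k(L) \subseteq \kronset^k(C_n)$ and hence $R(\kronset^k(C_n)) \geq R(\holderset^k(L))$. The classical minimax lower bound for estimating a $k$-Holder function of radius $L$ on a uniform $d$-dimensional lattice---obtained via the usual bump-function hypercube construction, in which one scales and translates a single compactly supported $C^\infty$ bump across $[0,1]^d$, pushes each copy to the boundary of $\holderset^k(L)$, and applies Assouad---is of order $L^{2/(2s+1)} n^{-2s/(2s+1)}$. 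Substituting $L$ and collapsing exponents produces $(C_n/n)^{2/(2s+1)}$ up to constants.

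The $C_n/n$ term is the most delicate and arises from a sign-hypercube packing of sparse spikes. Choose $m = \Theta(\min(n, C_n/\sigma))$ lattice points $z_1, \ldots, z_m \in Z_{n,d}$ spaced at least $k+2$ apart along every coordinate axis, and set $\theta_\eta = \sigma \sum_{j=1}^m \eta_j e_{z_j}$ for $\eta \in \{\pm 1\}^m$. Since $\kronmatk$ has entries of size $O(1)$, each $\kronmatk e_{z_j}$ is supported in a thin window of size $O(k+2)$ around $z_j$ along each axis with entries bounded by a constant $c_{k,d}$. The spacing condition makes these supports disjoint, so $\|\kronmatk \theta_\eta\|_1 \leq c_{k,d}\sigma m \leq C_n$ after absorbing constants into the implicit factor in $m$. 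Because flipping one coordinate of $\eta$ alters $\theta_\eta$ by $2\sigma e_{z_j}$ and produces $O(1)$ Gaussian KL divergence, Assouad's lemma yields an $\ell_2^2/n$ lower bound of order $\sigma^2 m/n \asymp C_n/n$. The theorem's assumption $C_n \leq n$ is exactly what guarantees enough well-spaced points fit in $Z_{n,d}$ (of side length $N = n^{1/d}$).

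I expect the sparse term to be the main obstacle: one must carefully bound the KTV of a single spike (being mindful of boundary effects when $z_j$ lies within $k+1$ steps of the lattice boundary, which can be sidestepped by restricting the candidate set to the interior), verify that the spacing and amplitude can be simultaneously arranged so that $m = \Theta(C_n/\sigma)$, and confirm that the resulting Assouad packing is rich enough to deliver the stated rate.
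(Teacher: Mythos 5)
Your proposal is correct and follows the paper's overall architecture --- bound each of the three terms separately, with the parametric term coming from the null space of \smash{$\kronmatk$} and the \smash{$(C_n/n)^{2/(2s+1)}$} term coming from the Holder embedding of Proposition \ref{prop:ktv_sobolev_holder_embed} with $L \asymp C_n n^{s-1}$ (the paper then cites the Holder lower bound from \citet{sadhanala2017higher} rather than re-deriving the bump-hypercube argument, but the content is the same). Where you genuinely diverge is the middle term $C_n/n$: the paper embeds the $\ell_1$ ball \smash{$B_1(C_n/(2^{k+1}d))$} into \smash{$\kronset^k(C_n)$} using the column-norm bound \smash{$\|\kronmatk\|_{1,\infty} = 2^{k+1}d$} and then invokes Proposition 5 of \citet{birge2001gaussian} for minimax risk over $\ell_1$ balls, whereas you construct a sparse-spike sign hypercube of $m \asymp \min(n, C_n/\sigma)$ coordinates and apply Assouad directly. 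Your route is more self-contained (no external $\ell_1$-ball machinery) and delivers the stated $\Omega(C_n/n)$ rate; the Birg\'e--Massart route is sharper in that it carries an extra $\sqrt{\log}$ factor in the moderately sparse regime, but that gain is not needed for the theorem as stated. Two of your anticipated ``delicate'' points are actually non-issues: the $\ell_1$ triangle inequality gives \smash{$\|\kronmatk\theta_\eta\|_1 \leq \sigma m \, \|\kronmatk\|_{1,\infty}$} with no spacing or disjoint-support requirement whatsoever, and the column-norm bound holds uniformly including boundary columns, so no interior restriction is needed. The only care genuinely required is scaling the spike amplitude by a small enough constant so that the single-flip total variation distance stays bounded away from $1$ in Assouad's lemma.
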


\begin{remark}
The result in Theorem \ref{thm:ktv_lower_bd} for $s \geq 1/2$ (that is, $2k+2
\geq d$) was already derived in \citet{sadhanala2017higher}. More precisely,
these authors established the third term on the right-hand side in
\eqref{eq:ktv_lower_bd}, by using the Holder embedding in
\eqref{eq:ktv_sobolev_holder_embed} of Proposition
\ref{prop:ktv_sobolev_holder_embed}, and suitably adapting classical results on
minimax bounds for Holder spaces \citep{korostelev1993minimax,
tsybakov2009introduction}, which ends up being tight in the $s > 1/2$ regime.
Moreover, the lower bound in the middle term in \eqref{eq:ktv_lower_bd} was
derived by \citet{sadhanala2016total}, for $k=0$ and all $d$, which was obtained 
by embedding an appropriate $\ell_1$ ball into \smash{$\kronset^k(C_n)$}, and
appealing to minimax theory over $\ell_1$ balls from
\citet{birge2001gaussian}. This ends up being tight for $k=0$ and all $d$. 

In the current work, we establish tight lower bounds over all $k,d$, by
essentially combining these two strategies. As we will see comparing to upper
bounds in the next remark, the Holder embedding ends up being tight for $s >
1/2$, and the $\ell_1$ embedding for $s \leq 1/2$.
\end{remark}

\begin{remark}
\label{rem:ktv_lower_bd_canonical}
Plugging in \smash{$C_n \asymp L_n C_n^* = L_n n^{1-s}$}, where $C_n^*$ is the
canonical scaling in \eqref{eq:ktv_canonical}, and simplifying to keep only the 
dominant terms, we see that \eqref{eq:ktv_lower_bd} becomes  
$$
R\big(\kronset^k(C_n^*)\big) = 
\begin{cases}
\Omega(L_n n^{-s} \vphantom{L_n^{\frac{2}{2s+1}}}) & \text{if $s \leq 1/2$}, \\ 
\Omega\big( L_n^{\frac{2}{2s+1}} n^{-\frac{2s}{2s+1}} \big) & \text{if $s >
  1/2$}.   
\end{cases}
$$
By comparing this to \eqref{eq:ktf_upper_bd_canonical}, we can see that KTF is 
minimax rate optimal for estimation over KTV classes, under the canonical
scaling, up to log factors. 
\end{remark}

\begin{remark}
From the upper bound in \eqref{eq:ktf_upper_bd_canonical} and the Holder 
embedding in \eqref{eq:ktv_sobolev_holder_embed}, we see that for $s \geq
1/2$, KTF achieves the rate of \smash{$n^{-2s/(2s+1)}$} (up to log factors) over
\smash{$\cH_d^k(1)$}. This matches the optimal rate for estimation over Holder
classes (see \citet{sadhanala2017higher} for a formal statement and proof for
the discretized class \smash{$\cH_d^k(1)$}), which means that KTF automatically
adapts Holder smooth signals.      

However, when $s < 1/2$, the same results show that KTF achieves a rate of  
$n^{-s}$ (ignoring log factors) over \smash{$\cH_d^k(1)$}, which is slower than  
the optimal rate of \smash{$n^{-2s/(2s+1)}$}. Whether this upper bound is
pessimistic and KTF can actually adapt to \smash{$\cH_d^k(1)$}, or whether
this upper bound is tight and KTF fails to adapt to Holder smooth signals,
remains to be formally resolved. Later, we investigate this empirically in
Section \ref{sec:homogeneous_smoothness}.
\end{remark}

\begin{remark}
\label{rem:besov_minimax}
It is interesting to compare minimax results for anisotropic Besov spaces, under
the white noise model. Past work on this topic includes \citet{neumann2000multi}
with a focus on hyperbolic wavelets, as well as
\citet{kerkyacharian2001nonlinear, kerkyacharian2008nonlinear} with a focus on
Lepski's method applied to kernel smoothing. A nice summary of past work, and
what appears to be the most comprehensive results, can be found in
\citet{lepski2015adaptive}. It should be noted that this line of work considers
a more general setup than ours (albeit in the white noise model), in a few ways:
anisotropic Besov classes with an arbitrary smoothness index in each coordinate
direction; error measured in $L^p$ norm, for arbitrary $p \geq 1$; and so
on. That said, translating their results to match our setting as best as we can
(recalling general embeddings of BV spaces into Besov spaces; for example,
\citet{devore1993constructive}), we find that the minimax rate under the squared
$L^2$ loss, for the anisotropic Besov class with integrability index 1,
smoothness index $k+1$ in each coordinate direction, and any third index $q \geq
1$, is indeed \smash{$n^{-2s/(2s+1)}$} for $s > 1/2$ (or $2k+2 > d$). This
regime is what Lepski and others refer to as the ``dense zone''.

In what they refer to as the ``sparse zone'', $s \leq 1/2$, the minimax risk
under the $L^2$ loss for the same Besov class is a constant. This is due to the
fact that this Besov space fails to embed compactly into $L^2$ (see Section 5.5
of \citet{johnstone2015gaussian} for a general discussion of the implications of 
this phenomenon). If the underlying regression function is itself additionally
assumed to be bounded in $L^\infty$ norm, then we believe the minimax rate in
their white noise setting will be $n^{-s}$, matching that in our discrete
setting. Evidence of this claim includes the result in
\citet{delalamo2021frameconstrained} for the BV space (under the white noise
model), who derive a rate of $n^{-1/d}$, assuming such $L^\infty$ boundedness;
as well as the Besov results in \citet{goldenshluger2014adaptive}, who also
assume $L^\infty$ boundedness, but study density estimation.  
\end{remark}

\subsection{Minimax rates for linear smoothers}

We now study whether linear smoothers can achieve the minimax rate over the KTV
class in \eqref{eq:ktv_class}. Before stating our result, we define a truncated 
eigenmaps estimator based on \smash{$\kronmatk$} as follows. Denote by $\xi_i
\geq 0$, $i \in [N]^d$ its singular values (noting that $(k+1)^d$ of these are
zero), where along each dimension in the multi-index, the singular values 
are sorted in increasing order. Denote also by $v_i \in \R^n$, $i \in [N]^d$ its
corresponding right singular vectors. Then, for a subset $Q \subseteq [N]^d$, we
denote by $V_Q \in \R^{n \times |Q|}$ the matrix with columns given by $v_i$, $i 
\in Q$, and define the projection estimator   
\begin{equation}
\label{eq:eigenmaps}
\htheta = V_Q V_Q^\T y.
\end{equation}
Note that this reduces to the Laplacian eigenmaps estimator (here the Laplacian 
is that of the grid graph) when $k=0$.  

\begin{theorem}
\label{thm:ktv_minimax_linear}
The minimax linear risk over the KTV class in \eqref{eq:ktv_class} satisfies,
for any sequence \smash{$C_n \leq \sqrt{n}$},  
\begin{equation}
\label{eq:ktv_lower_bd_linear}
R_L\big( \kronset^k(C_n) \big) =
\begin{cases}
\Omega(1/n + C_n^2/n) & \text{if $s < 1/2$}, \\ 
\Omega(1/n + C_n^2/n \log(1 + n / C_n^2)) & \text{if $s = 1/2$}, \\      
\Omega\big( 1/n + (C_n^2/n)^{\frac{1}{2s}} \big) & \text{if $s > 1/2$}.      
\end{cases}
\end{equation}
This is achieved in rate by the projection estimator in \eqref{eq:eigenmaps},   
where we set $Q = [\tau]^d$ for \smash{$\tau^d \asymp (C_n n^{s-1/2})^{1/s}$},  
in the case $s > 1/2$. When $s < 1/2$, the simple polynomial projection
estimator, which projects onto all multivariate polynomials of max degree $k$ 
(equivalently, the estimator in \eqref{eq:eigenmaps} with $Q = [k+1]^d$),
achieves the rate in \eqref{eq:ktv_lower_bd_linear}. When $s=1/2$, either
estimator achieves the rate in \eqref{eq:ktv_lower_bd_linear} up to a log
factor. Lastly, if \smash{$C_n^2 = O(n^\alpha)$} for $\alpha < 1$, and still 
$s=1/2$, then either estimator achieves the rate in
\eqref{eq:ktv_lower_bd_linear} without the additional log factor. 
\end{theorem}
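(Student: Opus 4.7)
Plan. The proof splits into a lower bound on $R_L(\kronset^k(C_n))$ and matching upper bounds via the projection estimators specified in the theorem. Both parts hinge on the spectral structure of $\kronmatk$: the identity $(\kronmatk)^\T \kronmatk = \sum_{j=1}^d I_N^{\otimes(j-1)} \otimes (\dmatk_N)^\T \dmatk_N \otimes I_N^{\otimes(d-j)}$ shows the right singular vectors are tensor products $v_i = v_{i_1} \otimes \cdots \otimes v_{i_d}$ of univariate singular vectors of $\dmatk_N$, with $\xi_i^2 = \sum_{j=1}^d \xi^2_{\mathrm{1D},i_j}$ for $i \in [N]^d$, and $\xi_{\mathrm{1D},\ell} \asymp (\ell/N)^{k+1}$ for $\ell \geq k+2$. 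The univariate incoherence used in the proof of Theorem~\ref{thm:ktf_upper_bd} (following \citet{sadhanala2017higher}) lifts by tensor structure to yield $\|u_i\|_\infty \lesssim \mu/\sqrt{n}$ uniformly for the left singular vectors $u_i$ of $\kronmatk$.

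Lower bound. The $1/n$ term is immediate from the $(k+1)^d$-dimensional null space of $\kronmatk$, on which no linear estimator can improve on the parametric rate. For $s > 1/2$, I would exhibit the inscribed ellipsoid $\cE = \{\theta : \|\kronmatk \theta\|_2^2 \leq C_n^2/(dn)\} \subseteq \kronset^k(C_n)$ (containment by Cauchy--Schwarz, since $\kronmatk$ has $O(dn)$ rows) and apply Pinsker's theorem: $R_L(\kronset^k(C_n)) \geq R_L(\cE) \asymp \sigma^2 |Q|/n$ with $Q = \{i : \xi_i^2 \lesssim C_n^2/(n\sigma^2)\}$. Since $\xi_i^2$ is essentially controlled by $(\max_j i_j/N)^{2(k+1)}$, counting gives $|Q| \asymp \tau^d$ at $\tau^d = (C_n n^{s-1/2})^{1/s}$, producing the $(C_n^2/n)^{1/(2s)}$ rate. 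For $s \leq 1/2$, the $C_n^2/n$ term follows from a spike argument: the coordinate vector $c e_i$ has $\|\kronmatk (c e_i)\|_1 \lesssim c$ (only finitely many $(k+1)$-th differences are affected), so $c e_i \in \kronset^k(C_n)$ for $c \asymp C_n$, and the standard hardest one-dimensional subproblem bound (\citealt{donoho1990minimax}) gives $R_L \gtrsim C_n^2/n$.

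Upper bound. For the projection estimator $\hat\theta = V_Q V_Q^\T y$, the variance is $\sigma^2 |Q|/n$. The key for the bias is the pointwise coefficient bound
$$|v_i^\T \theta_0| \;=\; \frac{|u_i^\T \kronmatk \theta_0|}{\xi_i} \;\leq\; \frac{\|u_i\|_\infty \|\kronmatk \theta_0\|_1}{\xi_i} \;\leq\; \frac{\mu C_n}{\sqrt{n}\, \xi_i},$$
valid for $\theta_0 \in \kronset^k(C_n)$ and any $i$ with $\xi_i > 0$, whence $\|(I - V_Q V_Q^\T)\theta_0\|_2^2 \leq (\mu^2 C_n^2/n) \sum_{i \notin Q,\, \xi_i > 0} \xi_i^{-2}$. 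For $s > 1/2$ with $Q = [\tau]^d$, summing over multi-indices with $\max_j i_j > \tau$ gives $\sum \xi_i^{-2} \asymp n^{2s}/\tau^{2(k+1)-d}$, and the bias--variance balance at $\tau^d \asymp (C_n n^{s-1/2})^{1/s}$ matches the rate $(C_n^2/n)^{1/(2s)}$. For $s < 1/2$ with $Q = [k+1]^d$ (the null space), the analogous sum is $\sum \xi_i^{-2} \asymp n$ (dominated by ``generic'' indices with all $i_j \asymp N$), giving bias${}^2/n \asymp C_n^2/n$ alongside variance $\sigma^2 (k+1)^d/n \asymp 1/n$. At $s = 1/2$, the sum becomes $\asymp n \log n$, producing the logarithmic factor; when $C_n^2 = O(n^\alpha)$ for $\alpha < 1$, one can instead take a slightly larger $Q$ for which the sum is only $\asymp n$, removing the log.

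Main obstacle. The technical core is twofold: (i) extending the univariate incoherence from Theorem~\ref{thm:ktf_upper_bd} to the full Kronecker structure and verifying $\|u_i\|_\infty \lesssim \mu/\sqrt{n}$ uniformly over all multi-indices (including indices with some coordinates at the boundary of $[N]$, where the univariate estimates are most delicate); and (ii) carrying out the sharp combinatorial counting for $\sum_{i \notin Q} \xi_i^{-2}$ using the profile $\xi_i^2 = \sum_j \xi^2_{\mathrm{1D},i_j}$. The latter computation produces the phase transition at $s = 1/2$ that drives the three qualitatively distinct rates; once it is in hand, the bias--variance balance and the Pinsker step on the inscribed ellipsoid are routine.
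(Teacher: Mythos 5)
Your upper-bound argument (projection estimator, coefficient bound $|v_i^\T\theta_0|\leq\|u_i\|_\infty\|\kronmatk\theta_0\|_1/\xi_i$ via incoherence, eigenvalue tail sums, bias--variance balance) matches the paper's route and is sound in outline. The genuine gap is in your lower bound for $s>1/2$. Pinsker's theorem applied to the inscribed ellipsoid $\{\theta:\|\kronmatk\theta\|_2\leq C_n/\sqrt{dn}\}$ yields only the minimax risk of that Sobolev ball, namely $\Omega\big((C_n^2/n^2)^{1/(2s+1)}\big)$, which under the canonical scaling is $n^{-2s/(2s+1)}$ --- strictly \emph{smaller} than the claimed $\Omega\big((C_n^2/n)^{1/(2s)}\big)=n^{-(2s-1)/(2s)}$. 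The whole point of this theorem is that linear smoothers pay a price on the KTV ball that they do not pay on its inscribed ellipsoid; the hard instances are the spiky extreme points $C_n(\kronmatk)^+e_i$ of the $\ell_1$-type constraint set, which lie far outside any inscribed $\ell_2$ ellipsoid, so no ellipsoid embedding can produce the stated bound. The paper instead writes the worst-case bias of a linear smoother $S$ as $C_n^2\max_i\|((S-I)(\kronmatk)^+)_i\|_2^2$, lower-bounds the maximum by the average over columns (a Frobenius norm), and explicitly minimizes the resulting quadratic in $S$ to obtain the spectral filter $S_0=a_n(\sigma^2(\kronmatk)^\T\kronmatk+a_nI)^{-1}$ with $a_n=C_n^2/m$; the value $r(S_0)=\frac1n\sum_i\frac{\sigma^2a_n}{\sigma^2\xi_i+a_n}$ is then bounded below by counting eigenvalues with $\sigma^2\xi_i\leq a_n$, which is where the $(C_n^2/n)^{1/(2s)}$ rate and the $s=1/2$ logarithm actually come from. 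Without some argument of this type (or an explicit hardest-rectangle/quadratic-hull computation over the spike set), the $s>1/2$ and $s=1/2$ lower bounds are not established.

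A secondary issue: for $s\leq1/2$, a single one-dimensional subproblem $[-cC_n,cC_n]e_i$ has minimax linear risk $\asymp\min\{C_n^2,\sigma^2\}/n$, which is just $\sigma^2/n$ once $C_n\gtrsim\sigma$, so it does not give $C_n^2/n$. You need the full $\ell_1$-ball embedding $B_1(C_n/(2^{k+1}d))\subseteq\kronset^k(C_n)$ together with the fact that the minimax linear risk over $B_1(r)$ is governed by its quadratically convex hull $B_2(r)$ (Donoho--Liu--MacGibbon), giving $\asymp r^2/n$ for $r\leq\sqrt{n}$; the paper notes exactly this alternative in a remark. This part is recoverable with the standard citation, but as written the step fails.
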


\begin{remark}
\label{rem:ktv_minimax_linear_canonical}
Plugging in \smash{$C_n \asymp L_n C_n^* = L_n n^{1-s}$}, where $C_n^*$ is the
canonical scaling in \eqref{eq:ktv_canonical}, and simplifying to keep only the
dominant terms (when $L_n \geq 1$), we see that \eqref{eq:ktv_lower_bd_linear}
becomes    
$$
R_L\big(\kronset^k(C_n^*)\big) =
\begin{cases}
\Omega(1) & \text{if $s \leq 1/2$}, \\ 
\Omega\big( L_n^{\frac{1}{s}} n^{-\frac{2s-1}{2s}} \big) & \text{if $s > 1/2$}.   
\end{cases}
$$
These minimax linear rates display a stark difference to the minimax rates for 
the KTV class in Remark \ref{rem:ktv_lower_bd_canonical} (achieved by KTF up to
log factors, in \eqref{eq:ktf_upper_bd_canonical}). When $s > 1/2$, the minimax
linear rate of \smash{$L_n^{\frac{1}{s}} n^{-(2s-1)/(2s)}$} can be interpreted
as the rate of the optimal nonlinear estimator for a problem whose effective
smoothness level has been decremented by a half ($s-1/2$ in place of $s$). More
dramatically, when $s \leq 1/2$, we see that \emph{no linear smoother is
  consistent} over the KTV class, in the sense of worst-case risk. This
contributes an interesting addition to the line of work on the suboptimality of
linear smoothers for nonparametric regression over heterogeneous smoothness
classes, dating back to \citet{nemirovski1985rate, donoho1998minimax}.       
\end{remark}

\subsection{Summary of rates}

Table \ref{tab:theory} presents a summary of the minimax rates derived in the
previous three subsections. (It offers a more detailed summary than Figure
\ref{fig:theory}.) The upper bound on minimax risk is from Corollary
\ref{cor:ktf_upper_bd_canonical}, the lower bound on the minimax risk from
Theorem \ref{thm:ktv_lower_bd} and Remark \ref{rem:ktv_lower_bd_canonical}, and
the minimax linear risk is from Theorem \ref{thm:ktv_minimax_linear} and Remark
\ref{rem:ktv_minimax_linear_canonical}.

\renewcommand\arraystretch{1.5}
\begin{table}[htb]
\centering
\begin{tabular}{|c|c|c|c|}
\hline
Regime & $R$, upper bound & $R$, lower bound & $R_L$, linear risk \\
\hline
$s < 1/2$ &  $n^{-s} \sqrt{\log n}$ & $n^{-s}$ & 1 \\ 
$s = 1/2$ & $n^{-\frac{1}{2}} \log n$ & $n^{-\frac{1}{2}}$ & 1 \\ 
$s > 1/2$ & $n^{-\frac{2s}{2s+1}} (\log n)^{\frac{1}{2s+1}}$ & 
$n^{-\frac{2s}{2s+1}}$ & $n^{-\frac{2s-1}{2s}}$ \\
\hline
\end{tabular}
\caption{\it\small Minimax rates over the KTV class \smash{$\kronset^k(C_n^*)$},
  where the canonical scaling is \smash{$C_n^* = n^{1-s}$}, and recall
  $s=(k+1)/d$. We use the abbreviations \smash{$R = R(\kronset^k(C_n^*))$} and
  \smash{$R_L = R_L(\kronset^k(C_n^*))$}.}  
\label{tab:theory}
\end{table}

\subsection{Minimax rates over Sobolev classes}

For completeness, we establish the minimax rate for the (discrete) Sobolev class  
defined in \eqref{eq:sobolev_class}. The lower bounds are simply those from
the Holder class (due to the embedding in Proposition
\ref{prop:ktv_sobolev_holder_embed}), and as we show next, this is achieved in
rate by the eigenmaps estimator in \eqref{eq:eigenmaps}.

\begin{theorem}
\label{thm:sobolev_minimax}
The minimax risk over the (discrete) Sobolev class in \eqref{eq:sobolev_class}
satisfies, for any sequence \smash{$B_n \leq \sqrt{n}$},   
$$
R \big( \sobolset^{k+1}(B_n) \big) \asymp \frac{1}{n} + \bigg(\frac{B_n^2}{n}
\bigg)^{\frac{1}{2s+1}}.
$$
The lower bound is due to the Holder embedding in
\eqref{eq:ktv_sobolev_holder_embed} (and the lower bound on the discretized
Holder class derived in \citet{sadhanala2017higher}), and the upper bound is
from the estimator in \eqref{eq:eigenmaps}, with $Q = [\tau]^d$ for
\smash{$\tau^d \asymp (B_n^2 n^{2s})^{1/(2s+1)}$}. Finally, when \smash{$B_n
  \asymp L_n B_n^*$} where $B_n^*$ is the canonical scaling in
\eqref{eq:sobolev_canonical}, the minimax rate is \smash{$L_n^{2/(2s+1)}
  n^{-2s/(2s+1)}$}.    
\end{theorem}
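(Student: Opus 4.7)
The plan has two independent parts: a lower bound via the Holder embedding of Proposition \ref{prop:ktv_sobolev_holder_embed}, and an upper bound via a bias-variance analysis of the projection estimator \eqref{eq:eigenmaps}.

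For the lower bound, I would invoke Proposition \ref{prop:ktv_sobolev_holder_embed}, which gives $\holderset^k(L) \subseteq \sobolset^{k+1}(c_1 L n^{1/2-s})$. Choosing $L \asymp B_n n^{s-1/2}$ embeds an $L$-radius Holder class into $\sobolset^{k+1}(B_n)$, whence $R(\sobolset^{k+1}(B_n)) \geq R(\holderset^k(L))$. The classical minimax rate over a $(k+1)$-smooth Holder class in $d$ dimensions with radius $L$ is $L^{2/(2s+1)} n^{-2s/(2s+1)}$, and its discretized version for $\holderset^k(L)$ is already established in \citet{sadhanala2017higher}. Substituting $L \asymp B_n n^{s-1/2}$ yields $(B_n^2/n)^{1/(2s+1)}$, and the $1/n$ term is the trivial parametric lower bound.

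For the upper bound, the key input is the singular value structure of $\kronmatk$, which I would extract from the stacked Kronecker form \eqref{eq:ktf_pen_mat}. A direct computation gives
\begin{equation*}
(\kronmatk)^\T \kronmatk = \sum_{j=1}^d I_N \otimes \cdots \otimes (\dmatk_N)^\T \dmatk_N \otimes \cdots \otimes I_N,
\end{equation*}
a Kronecker sum. If $\mu_1 \leq \cdots \leq \mu_N$ denote the eigenvalues of the univariate Gramian $(\dmatk_N)^\T \dmatk_N$, with $\mu_1=\cdots=\mu_{k+1}=0$ and $\mu_j \asymp (j/N)^{2(k+1)}$ for $j > k+1$, then the squared singular values of $\kronmatk$ are $\xi_i^2 = \sum_{j=1}^d \mu_{i_j}$ for $i=(i_1,\ldots,i_d) \in [N]^d$, with right singular vectors $v_i$ being Kronecker products of the univariate eigenvectors. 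Expanding $\theta_0 = \sum_i \alpha_i v_i$, the Sobolev constraint becomes $\sum_i \xi_i^2 \alpha_i^2 \leq B_n^2$. For $\htheta = V_Q V_Q^\T y$ with $Q = [\tau]^d$ and $\tau \geq k+1$, the variance is $\sigma^2 \tau^d / n$; the squared bias per coordinate is $\sum_{i \notin Q} \alpha_i^2 / n \leq B_n^2/(n \xi_*^2)$, and any $i \notin Q$ has some $i_j > \tau$, so $\xi_i^2 \geq \mu_{\tau+1} \asymp \tau^{2(k+1)}/n^{2s}$ (using $N^{2(k+1)} = n^{2s}$). Balancing the two terms gives $\tau^{d+2(k+1)} \asymp B_n^2 n^{2s}$, so $\tau^d \asymp (B_n^2 n^{2s})^{1/(2s+1)}$, and the resulting rate is $(B_n^2/n)^{1/(2s+1)}$. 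The assumption $B_n \leq \sqrt{n}$ is exactly what guarantees $\tau \leq N$. Plugging in $B_n \asymp n^{1/2-s}$ gives $n^{-2s/(2s+1)}$ for the canonical case.

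The main obstacle is the univariate eigenvalue estimate $\mu_j \asymp (j/N)^{2(k+1)}$ for the $(k+1)$-order discrete difference Gramian, which requires nontrivial asymptotic analysis of a banded Toeplitz-type structure; however, this has already been carried out in \citet{sadhanala2017higher}, and the Kronecker sum identity then lifts it cleanly to arbitrary $d$ and couples it with a standard bias-variance balance. Notably, unlike the KTV case of Theorem \ref{thm:ktv_minimax_linear}, here the $\ell_2$ nature of the Sobolev constraint aligns perfectly with projection onto leading singular vectors, so the linear estimator \eqref{eq:eigenmaps} is itself minimax rate-optimal.
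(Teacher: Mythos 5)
Your proposal is correct and follows essentially the same route as the paper: the lower bound via the Holder embedding of Proposition \ref{prop:ktv_sobolev_holder_embed} combined with the discretized Holder lower bound from \citet{sadhanala2017higher} (plus the parametric term from the null space), and the upper bound via a bias--variance decomposition of the projection estimator in which the bias is controlled by the smallest nonzero eigenvalue of $(\kronmatk)^\T\kronmatk$ outside $Q=[\tau]^d$, using the Kronecker-sum eigenstructure and the univariate eigenvalue asymptotics $\mu_j \asymp (j/N)^{2(k+1)}$. The only cosmetic difference is that the paper phrases the bias bound as $\sigma_{\max}^2\big((I-S_Q)\genmat^\dagger\big) = 1/\rho_Q$ with $\rho_Q$ bounded via the sine-based interlacing inequality, whereas you expand $\theta_0$ in the singular basis; these are the same calculation.
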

\section{Optimization algorithms}
\label{sec:optimization}

In this section, we describe a number of numerical algorithms for solving the
convex KTF problem \eqref{eq:ktf}, analyze their asymptotic time complexity, and  
benchmark their performance. In particular, we will describe a family of
specialized ADMM algorithms that adapt to the structure of the KTF problem, and
as we will show, can find moderately accurate solutions much faster than a
general purpose ``off-the-shelf'' solver. This general purpose solver can be
applied to the dual of \eqref{eq:ktf}, which is a simple box-constrained
quadratic program, and is amenable to standard interior point methods. Details 
are deferred to Appendix \ref{app:more_opt}.

\subsection{Specialized ADMM algorithms}

Motivated by the popularity of operating splitting methods in machine learning 
over the last decade, and specifically by their success in application to trend 
filtering and TV denoising problems (for example, see \citet{ramdas2016fast,
  wang2016trend, barbero2018modular}), we consider the application of similar
methods to Kronecker trend filtering. We consider a proximal Dykstra algorithm, 
Douglas-Rachford splitting, and a family of specialized ADMM algorithms. For
brevity, the details on the former two are deferred to Appendix
\ref{app:more_opt}. 

Our specialized ADMM approach is inspired by that of \citet{ramdas2016fast}, for
univariate trend filtering. To reformulate \eqref{eq:ktf} into ``ADMM form'',
where the criterion decomposes as a sum of functions of separate optimization
variables, we must introduce auxiliary variables. To do so, we rely on the
following observation that decomposes the KTF penalty operator into the product
of a block-diagonal matrix and a lower-order KTF penalty operator. 

\begin{proposition}
\label{prop:ktf_decomp}
For each $j=1,2,\ldots,k+1$, the KTF penalty operator in \eqref{eq:ktf_pen_mat}
obeys (where \smash{$\dmat^{(0)}_N = I_N$} for convenience):
$$
\kronmatk  = \underbrace{\begin{bmatrix}
\dmat^{(k+1-j)}_N \otimes I_N \otimes \cdots \otimes I_N  & & \\    
& I_N \otimes \dmat^{(k+1-j)}_N \otimes \cdots \otimes I_N & \\ 
& & \ddots & \\
& & & I_N \otimes I_N \otimes \cdots \otimes \dmat^{(k+1-j)}_N  \\ 
\end{bmatrix}}_{M^{(k+1-j)}_{n,d}} \kronmat^{(j)}.
$$
\end{proposition}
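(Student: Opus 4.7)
The plan is to reduce the proposition to two elementary facts: (i) a univariate factorization of the discrete derivative operator, and (ii) the mixed-product property of Kronecker products. Once both are in hand, the stated decomposition falls out by inspection of each row-block of $\kronmatk$.

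\textbf{Step 1: Univariate factorization.} First I would establish that for every $0 \le j \le k+1$,
\[
\dmat^{(k+1)}_N \;=\; \dmat^{(k+1-j)}_{N-j}\,\dmat^{(j)}_N,
\]
interpreting $\dmat^{(0)}_N = I_N$. This follows by induction on $j$ directly from the recursion $\dmat^{(k+1)}_N = \dmat_{N-k}\dmat^{(k)}_N$ in \eqref{eq:diff_mat}: repeatedly peel off one factor of $\dmat$ from the left of $\dmat^{(k+1)}_N$ and collect the remaining ones on the right. The dimensions match because $\dmat^{(j)}_N \in \R^{(N-j)\times N}$ and $\dmat^{(k+1-j)}_{N-j} \in \R^{(N-k-1)\times(N-j)}$. (Here I am reading the $N$ subscript in the proposition's statement as shorthand for ``appropriately sized'', i.e.\ $\dmat^{(k+1-j)}_{N-j}$, which is what makes the block dimensions agree.)

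\textbf{Step 2: Lift each row-block via the mixed-product rule.} Fix $i \in \{1,\dots,d\}$ and consider the $i\th$ row-block of $\kronmatk$, namely $I_N \otimes \cdots \otimes \dmat^{(k+1)}_N \otimes \cdots \otimes I_N$ with the difference operator in slot $i$. Using Step 1 in slot $i$ and $I_N = I_N \cdot I_N$ in every other slot, and then invoking $(A_1 \otimes \cdots \otimes A_d)(B_1 \otimes \cdots \otimes B_d) = (A_1 B_1) \otimes \cdots \otimes (A_d B_d)$, I get
\[
I_N \otimes \cdots \otimes \dmat^{(k+1)}_N \otimes \cdots \otimes I_N
\;=\; \bigl(I_N \otimes \cdots \otimes \dmat^{(k+1-j)}_{N-j} \otimes \cdots \otimes I_N\bigr)\bigl(I_N \otimes \cdots \otimes \dmat^{(j)}_N \otimes \cdots \otimes I_N\bigr),
\]
where both factors have the non-identity matrix in slot $i$.

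\textbf{Step 3: Assemble into the block-diagonal form.} Stacking the $d$ identities from Step 2 vertically, the right-hand factors form exactly the row-stacked operator $\kronmat^{(j)}$. The left-hand factors, one per block-row of $\kronmatk$, combine into the block-\emph{diagonal} matrix whose $i\th$ diagonal block has $\dmat^{(k+1-j)}_{N-j}$ in slot $i$ (with $I_N$ elsewhere); this is precisely $M^{(k+1-j)}_{n,d}$. The block-diagonal (rather than block-row) structure appears because when a block-row $[A_1;\,A_2;\,\dots;\,A_d]$ factors as $A_i = B_i C_i$ for each $i$ with distinct $C_i$, stacking gives $\mathrm{diag}(B_1,\dots,B_d)\,[C_1;\,C_2;\,\dots;\,C_d]$.

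The main potential obstacle is purely bookkeeping: keeping careful track of the matrix dimensions (the ``$N$'' versus ``$N-j$'' subscripts) so that the mixed-product identity in Step 2 applies cleanly in every slot, and so that the block-diagonal matrix in Step 3 has the right row/column counts to line up with the row-stacked $\kronmat^{(j)}$. No nontrivial estimates are required; the whole argument is a direct structural computation.
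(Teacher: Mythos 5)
Your proof is correct and is exactly the argument the paper has in mind: the paper omits the proof, stating only that it "follows directly from the univariate recursion in \eqref{eq:diff_mat} and the mixed-product property of Kronecker products," which is precisely your Steps 1--3. Your observation that the block-diagonal factor must really contain $\dmat^{(k+1-j)}_{N-j}$ (not $\dmat^{(k+1-j)}_N$) for the dimensions to match is a correct and worthwhile clarification of the paper's notation.
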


This follows directly from the the univariate recursion in \eqref{eq:diff_mat},
and the Kronecker structure in \eqref{eq:ktf_pen_mat} (particularly, the 
mixed-product property of Kronecker products), and therefore we omit its proof.
Observe that each diagonal block of the block-diagonal matrix
\smash{$M^{(k+1-j)}_{n,d}$} is itself---possibly after appropriate permutation
of the row and column order---a block-diagonal matrix with all diagonal blocks
equal to \smash{$\dmat^{(k+1-j)}_N$}. This is a key fact that we will leverage
shortly.   

Now fix any $j \in \{1,\ldots,k+1\}$. We can reformulate \eqref{eq:ktf} as: 
\begin{equation}
\label{eq:ktf_admm}
\begin{alignedat}{2}
&\minimize_{\theta,z} \quad && \frac{1}{2} \|y-\theta\|_2^2 + 
\lambda\|M^{(k+1-j)}_{n,d} z\|_1 \\
&\st && z =   \kronmat^{(j)} \theta.
\end{alignedat}
\end{equation}
% Note that $\|\text{diag}(D^{(k+1-j)_{\mathrm{1d}}}) z\|_1$ decomposes into
% $$
% \|\text{diag}(D^{(k+1-j)}) z\|_1 = \sum_{i=1}^d  \sum_{x_j \text{ for }j\neq
% i}  \|\tilde{D}^{(k+1-j)}_{\mathrm{1d}}  Z_i[x_1,...,x_{i-1}, : ,
% x_{i+1},...,x_d]\|_1. 
% $$ 
% where  $Z_i$ is the reshaping of $i$th block of $z$ representing the $i$th
% coordinate into dimension $n_1\times ...\times n_{i-1}\times (n_i-j)\times
% n_{i+1}\times...\times n_d$.  Denote the reshaped tensor notation of $\theta$
% as $\Theta$, we can alternatively write $Z_i$ as 
% $$
% Z_i =  \Theta[:,...,:, \explain{\tilde{D}^{(j)}_{\mathrm{1d}}}{i^{th}},:,...:]
% \in \R^{n_1\times ...\times n_{i-1}\times (n_i-j)\times i+1\times...\times
% n_d}. 
% $$
The augmented Lagrangian associated with \eqref{eq:ktf_admm}, for an augmented 
Lagrangian parameter $\rho \geq 0$, is:
$$
L_\rho(\theta, z,u) = \frac{1}{2}\|y-\theta\|_2^2 + 
\lambda\|M^{(k+1-j)}_{n,d}  z\|_1 + 
\frac{\rho}{2} \|z - \kronmat^{(j)} \theta + u\|_2^2 - 
\frac{\rho}{2} \|u\|_2^2. 
$$
ADMM iteratively performs a separate minimization over the primal variables
$\theta,z$, and then updates the dual variable $u$ via gradient ascent. Namely,
given some initialization \smash{$\theta^{(0)}, z^{(0)}, u^{(0)}$}, it repeats
the following, for $t=1,2,3,\ldots$:  
\begin{align}
\label{eq:ktf_admm_theta}
\theta^{(t)} &= \Big( I_n + \rho [\kronmat^{(j)}]^\T\kronmat^{(j)} \Big)^{-1}  
\Big( y + \rho [\kronmat^{(j)}]^\T(z^{(t-1)}+ u^{(t-1)}) \Big), \\
\label{eq:ktf_admm_z}
z^{(t)} &=  \prox_{\frac{\lambda}{\rho}\|M^{(k+1-j)}_{n,d} (\cdot)\|_1}
\Big( \kronmat^{(j)} \theta^{(t)}  -  u^{(t-1)} \Big), \\
\label{eq:ktf_admm_u}
u^{(t)} &= u^{(t-1)} +z^{(t)} - \kronmat^{(j)} \theta^{(t)}.
\end{align}
Here we use the notation $\prox_h(\cdot)$ for the proximal operator associated
with a function $h$. Below we make a few remarks about the computational costs
associated with the updates \eqref{eq:ktf_admm_theta}--\eqref{eq:ktf_admm_u}. 

\begin{itemize}
\item When $j=1$, the matrix \smash{$[\kronmat^{(j)}]^\T\kronmat^{(j)}$} in  
	\eqref{eq:ktf_admm_theta} is the graph Laplacian of the $d$-dimensional grid, 
	which decomposes into the Kronecker sum of Laplacians of (univariate) chain
  graphs. This can be diagonalized by a $d$-dimensional discrete cosine
  transform (DCT) (see, for example, the proof of Corollary 8 in 
  \citet{wang2016trend}).  Computationally, this is essentially sequentially 
  applying univariate DCTs to every dimension. % The eigenvalues are also
  % readily known to be  
	% $$
	% \xi_{j_1,...,j_d} = \sum_{i=1}^d 4\sin^2(\pi\frac{j_i -1}{2N}) 
  %\text{ for all $j_1,...,j_d$.} 
	% $$ 
  % where $N = n^{1/d}.$
	This implies that the $\theta$-update in \eqref{eq:ktf_admm_theta} can be
  done in $O(n\log n)$ time. Further improvements (to linear-time) should be
  possible with multi-grid methods. 

\item By the key fact mentioned after the proposition, the $z$-update in
  \eqref{eq:ktf_admm_z} can be decomposed into $d N^{d-1}$ univariate trend  
  filtering problems, each of order $k+1-j$ and each with sample size
  $N$. These can be solved in parallel, in a total time that is nearly-linear in
  $n$, using either the univariate ADMM approach of \citet{ramdas2016fast} or
  the primal-dual interior point method (PDIP) of \citet{kim2009trend}. PDIP is
  likely the best option for reasonably small $N$, and using it, the update
  \eqref{eq:ktf_admm_z} can be done in \smash{$O(dN^{d-1}N^{1.5}) = O(n^{1 +
      1/(2d)})$} time.  

\item When $j=k$ or $k+1$, the $z$-update \eqref{eq:ktf_admm_z} can be performed
  even more efficiently, in $O(n)$ time. This is because it reduces to
  soft-thresholding for $j=k+1$, and reduces to separate univariate TV denoising
  problems for $j=k$. In the former case, the linear time complexity is obvious; 
  in the latter, it is due to the dynamic programming (DP) method of
  \citet{johnson2013dynamic}.  

\item The case $k=0$ is quite favorable, as we can use DCT in
  \eqref{eq:ktf_admm_theta} and soft-thresholding in \eqref{eq:ktf_admm_z}, by
  choosing $j=1$, or simple coordinatewise shrinkage in
  \eqref{eq:ktf_admm_theta} and DP in \eqref{eq:ktf_admm_z}, by choosing
  $j=0$. Both are efficient, but the latter ends up being generally the better
  approach, and can be seen as the ADMM-analog of \citet{barbero2018modular}.    

\item Among the higher-order cases $k \geq 1$, the case $k=1$ ends up being
  quite special, because $j=1$ simultaneously supports the DCT solver in
  \eqref{eq:ktf_admm_theta} and DP in \eqref{eq:ktf_admm_z}. When $k \geq 2$, we
  essentially need to decide in between these highly efficient subroutines
  (choosing either $j=1$ or $j=k$).     
\end{itemize}

In summary, each iteration (cycle of updates over $\theta,z,u$) of the proposed
ADMM algorithm is $O(n)$ for $k=0,1$. For $k \geq 2$, the time complexity is
\smash{$O(n^{1+1/(2d)})$} when we choose $j=1$, which we refer to as ADMM
\emph{Type I}. When we choose $j=k$, which we refer to as ADMM \emph{Type II},
the time complexity is dominated by the sparse linear system solve in
\eqref{eq:ktf_admm_theta}. This linear system should be well-conditioned for
reasonable ranges of $\rho$, thus the standard conjugate gradient method will be
able to solve it in approximately linear-time (proportional to the number of
nonzero elements).

\subsection{Empirical comparisons}

We now compare the ADMM algorithms developed in the last subsection to
Douglas-Rachford and proximal Dykstra algorithms applied to \eqref{eq:ktf}, as
well as the Gurobi general purpose solver (free for academic use) applied to the
dual of \eqref{eq:ktf}.\footnote{We add tiny amount of regularization to the
  dual problem to avoid numerical issues that cause Gurobi to fail. The solution
  of this regularized problem is first used to reconstruct the primal solution,
  but then evaluated on the objective function of the original problem, when 
  computing the suboptimality gaps.}  
From the family of specialized ADMM algorithms, we pay particular attention to
ADMM Types I and II, which correspond to $j=1$ and $j=k$, respectively. We also
consider $j=0$, which we ADMM \emph{Type 0}, mainly because it is closely
related and should perform similarly to the Douglas-Rachford and proximal
Dykstra methods. In all ADMM algorithms, we adopt an adaptive choice of $\rho$
that balances the primal and dual suboptimality \citep{boyd2011distributed}.     

To ensure a fair comparison between ADMM Types I and II, which we will see are
generally the best performing methods (and thus the comparison between them is
of particular interest), we use optimized C++ implementations for each of their
prox subroutines; for Type I, this is the DP algorithm for univariate TV
denoising, and for Type II, this is the PDIP algorithm for univariate trend
filtering; and in both cases, we use C++ implementations from
\citet{ramdas2016fast}. Aside from specialized subroutines, the implementation
of all iterative algorithms is in MATLAB.  

The results are presented in Figures \ref{fig:optimization_exps} and
\ref{fig:optimization_admm_vs_gurobi}. Figure \ref{fig:optimization_exps}
compares the operator splitting algorithms for denoising the standard ``Lena''
method at a resolution of $256 \times 256$. The KTF orders are taken to be
$k=1,2,3$, corresponding to the columns in the figure. For each $k$, the
solution returned by Gurobi is used to define the optimal criterion value, which
is is then used to measure the suboptimality gap of solutions returned the
iterative methods. ADMM Type I is generally the winner in all cases, whether
measured by iteration or (especially) by wall-clock time.  

Figure \ref{fig:optimization_admm_vs_gurobi} compares ADMM Type I to Gurobi for 
varying $k$, and also for varying resolutions of the underlying Lena image. In
all cases, ADMM Type I obtains a moderate-quality solution in less one
second---which is sometimes two orders of magnitude faster than the
off-the-shelf solver provided by Gurobi. It should be further noted that Gurobi
is highly-optimized, whereas our ADMM Type I implementation is not---recall,
only the prox subroutine is optimized, and the outer looping is performed in
MATLAB. Transporting the entire algorithm to C++ would clearly yield further
improvements in efficiency. Of course, if a truly high-accuracy solution is
required, then Gurobi may be the best option. However, its strong performance in
this subsection suggests that ADMM Type I is an efficient and useful approach
for many applications in statistics and machine learning, where
moderate-accuracy solutions suffice.

\begin{figure}[p]
\centering
\includegraphics[width=0.325\textwidth]{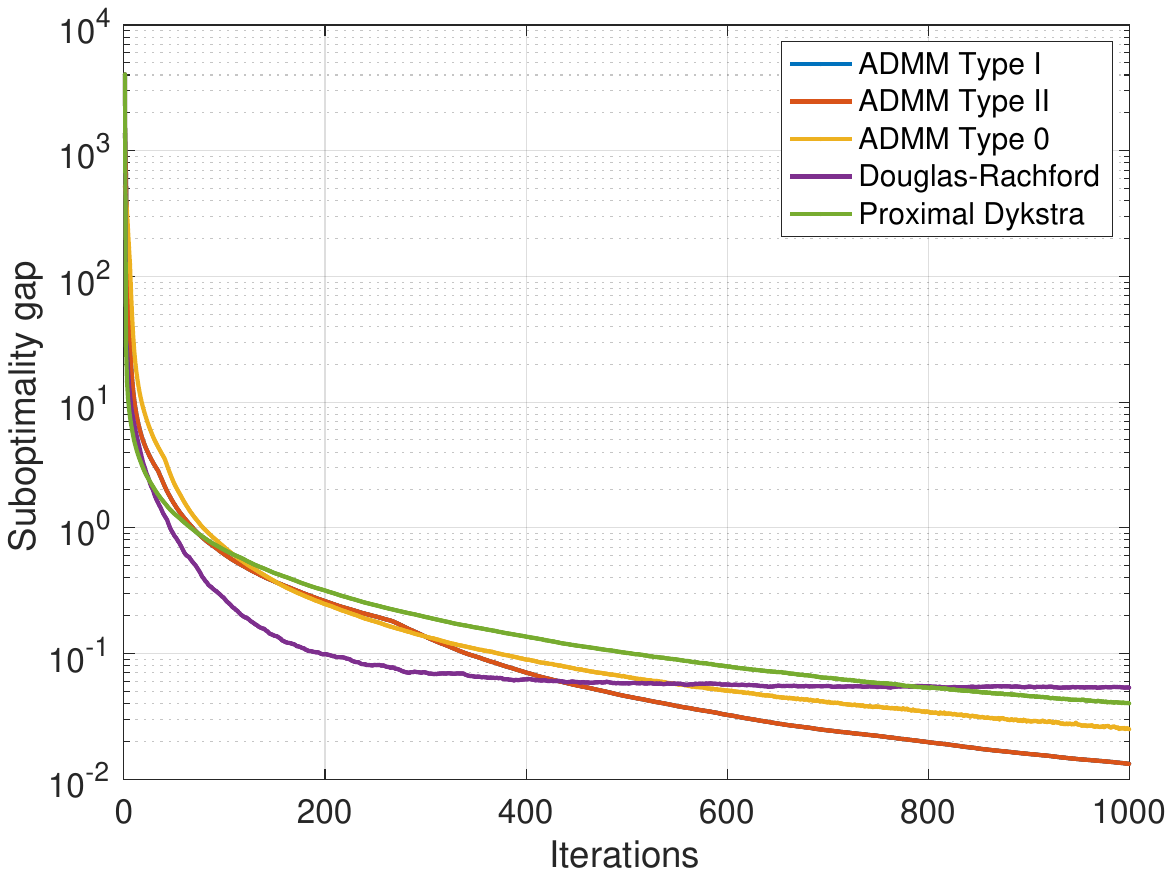}
\includegraphics[width=0.325\textwidth]{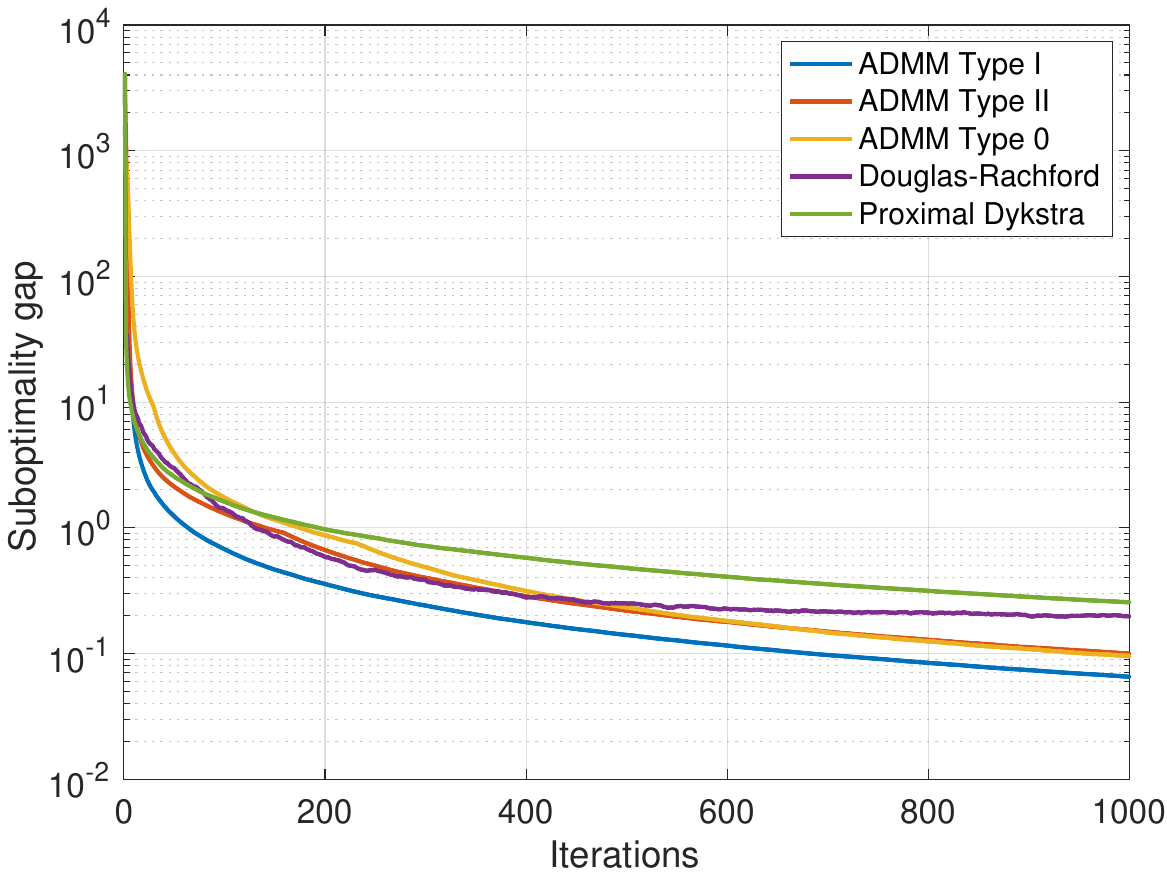}
\includegraphics[width=0.325\textwidth]{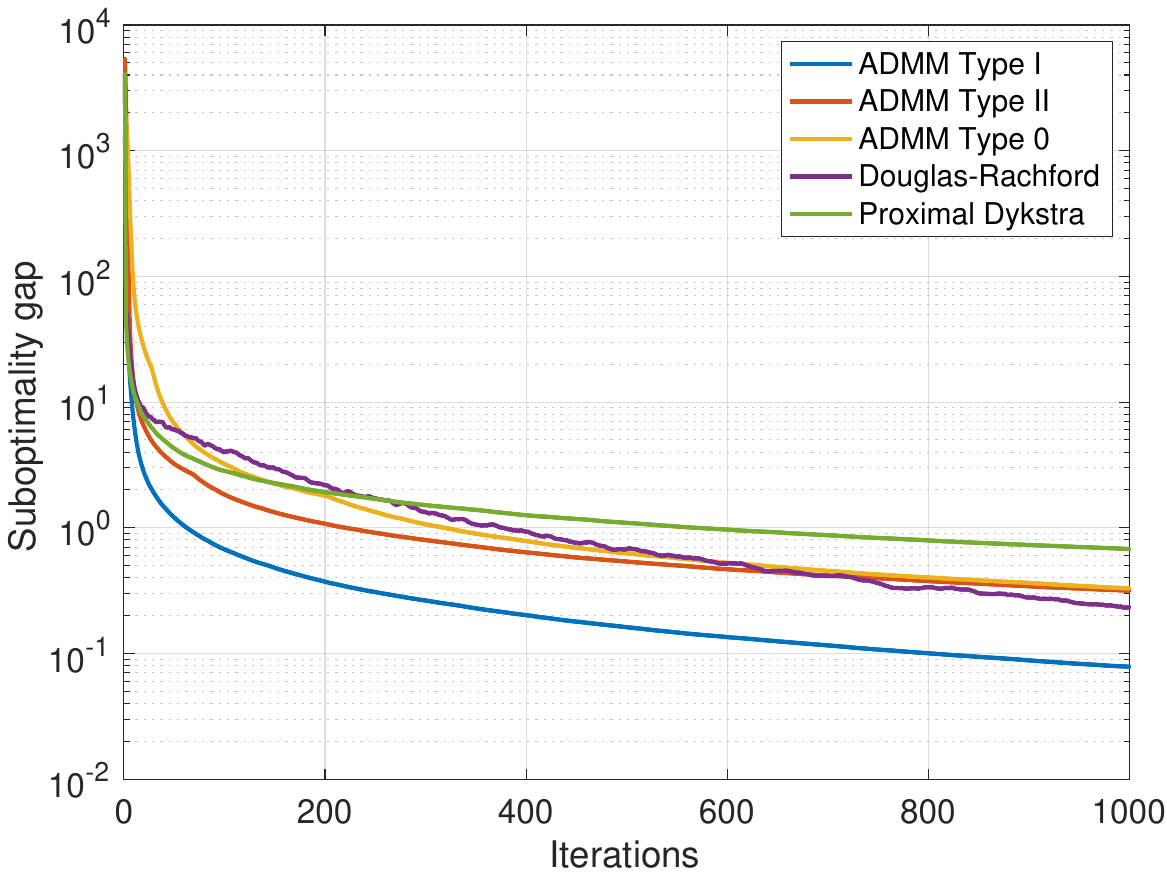} \\
\includegraphics[width=0.325\textwidth]{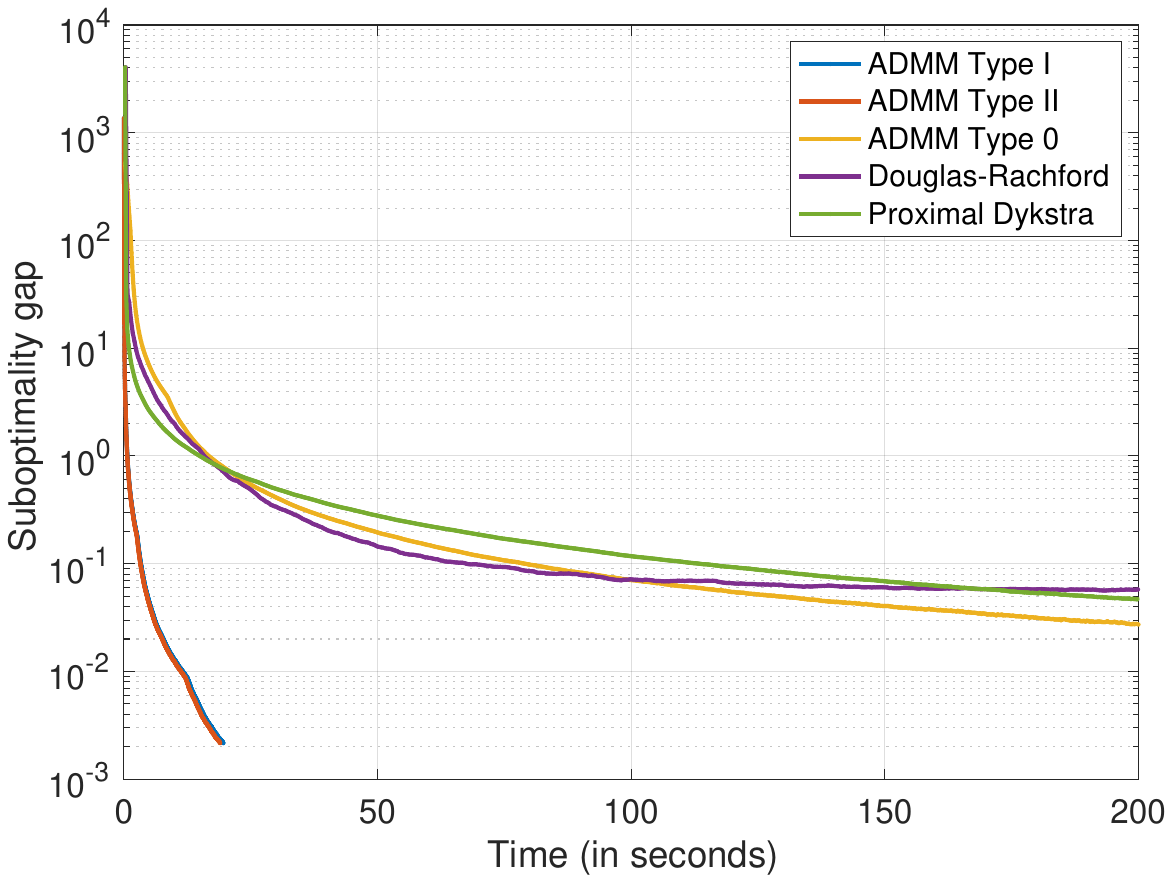}
\includegraphics[width=0.325\textwidth]{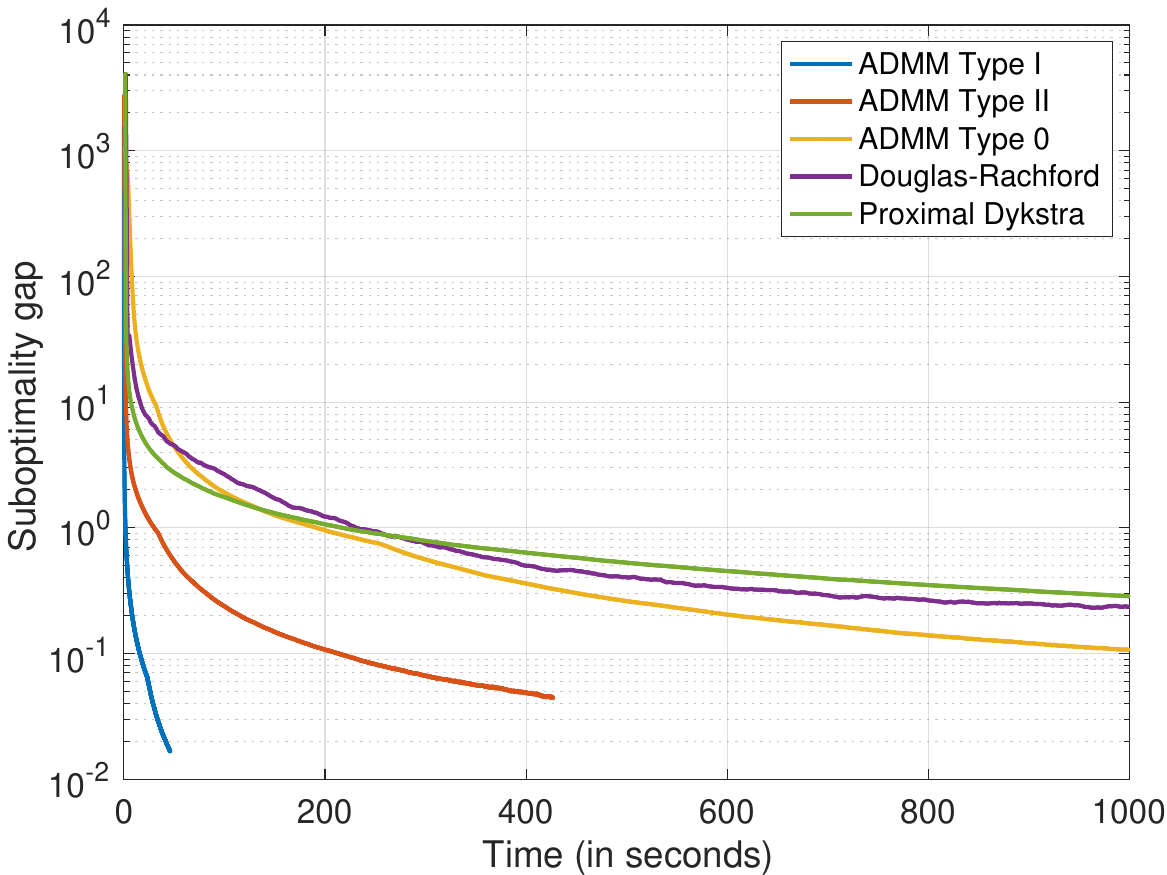}
\includegraphics[width=0.325\textwidth]{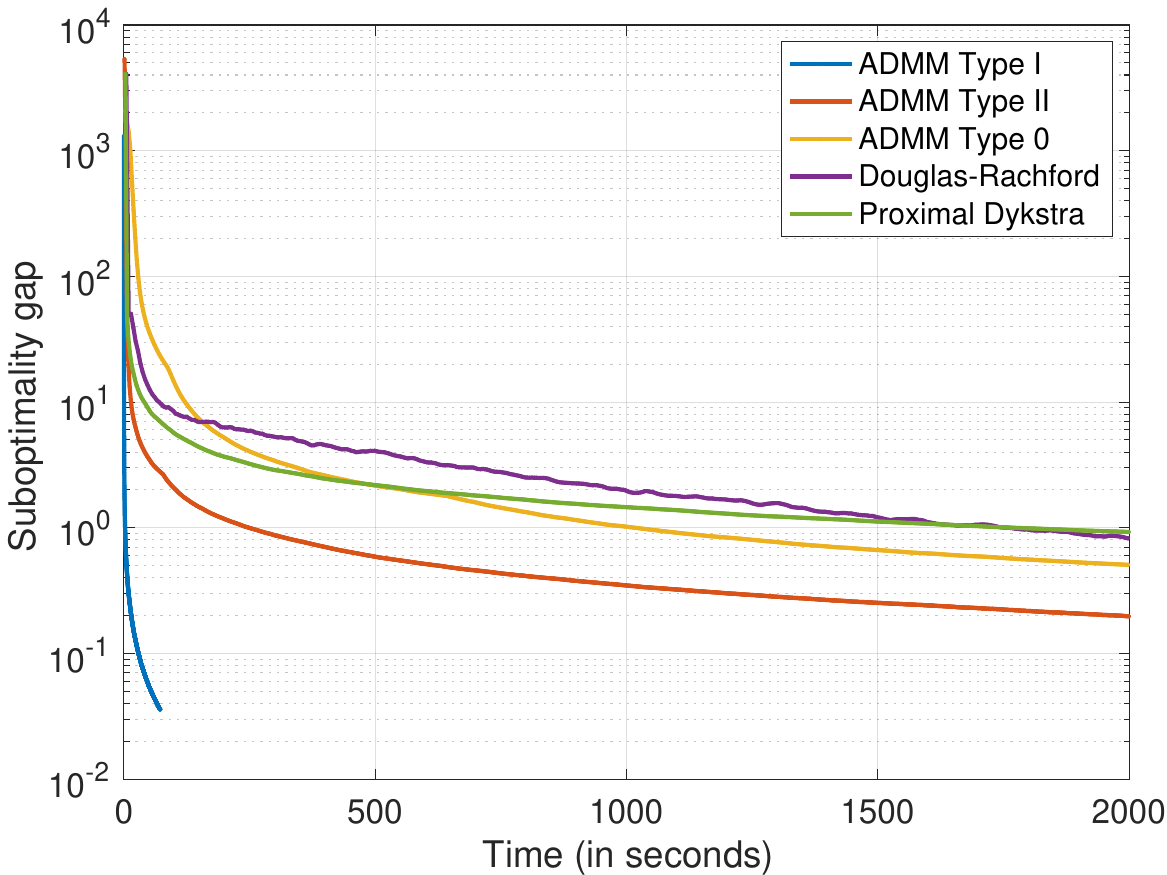} \\
\caption{\it\small Comparison of iterative algorithms for KTF on the standard
  Lena image of resolution $256 \times 256$ (that is, $n=65536$), when
  $k=1,2,3$, corresponding to the three columns, from left to right. The top row   
  compares the convergence of the suboptimality gap as a function of the number
  of iterations. The bottom row shows the same but parametrized by wall-clock
  time in seconds. While these methods have similar sublinear convergence rates
  (top row), ADMM Types I and II are clearly the fastest (bottom row) to reach a
  small suboptimality gap, due to their low per-iteration cost. Type I is the
  overall winner. (Recall that when $k=1$, Types I and II coincide, so the blue   
  curve is hidden behind the red curve).}  
\label{fig:optimization_exps}

\bigskip\bigskip

\centering
\includegraphics[width=0.325\textwidth]{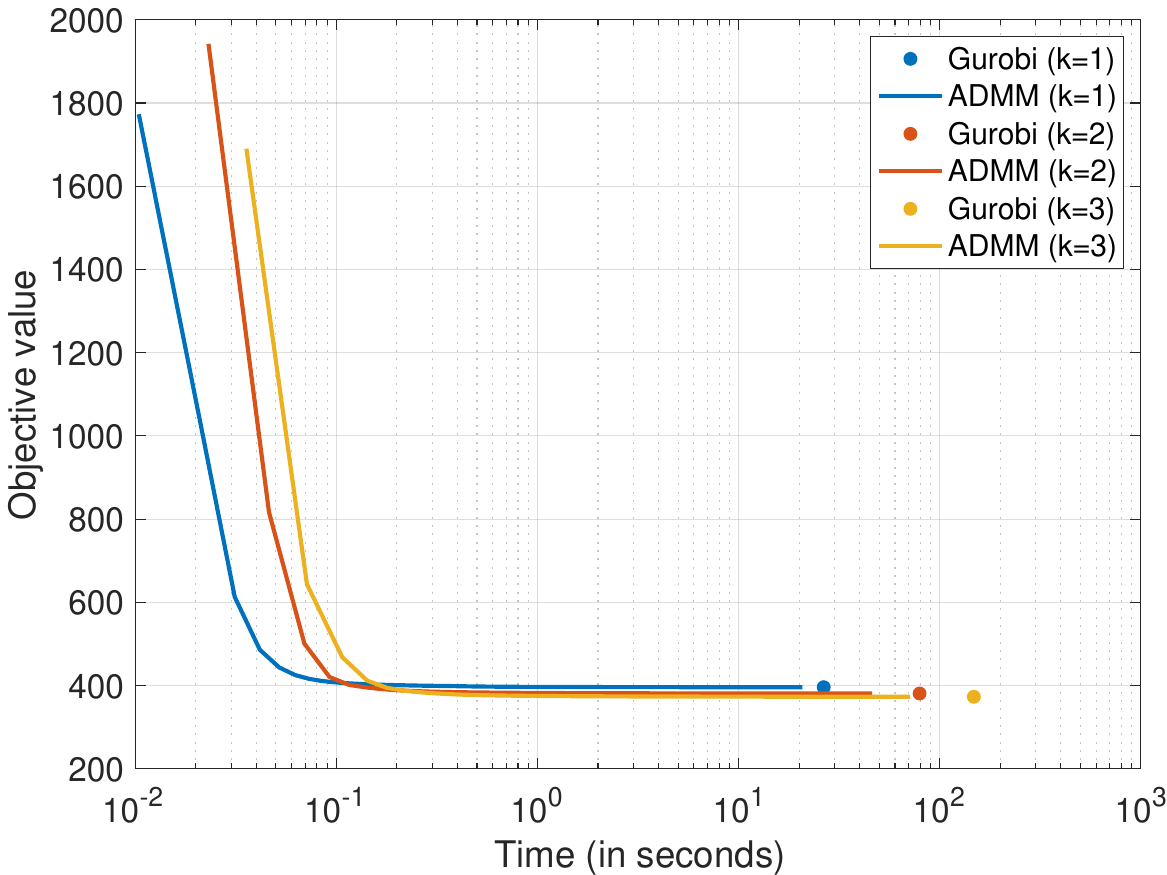}
\includegraphics[width=0.325\textwidth]{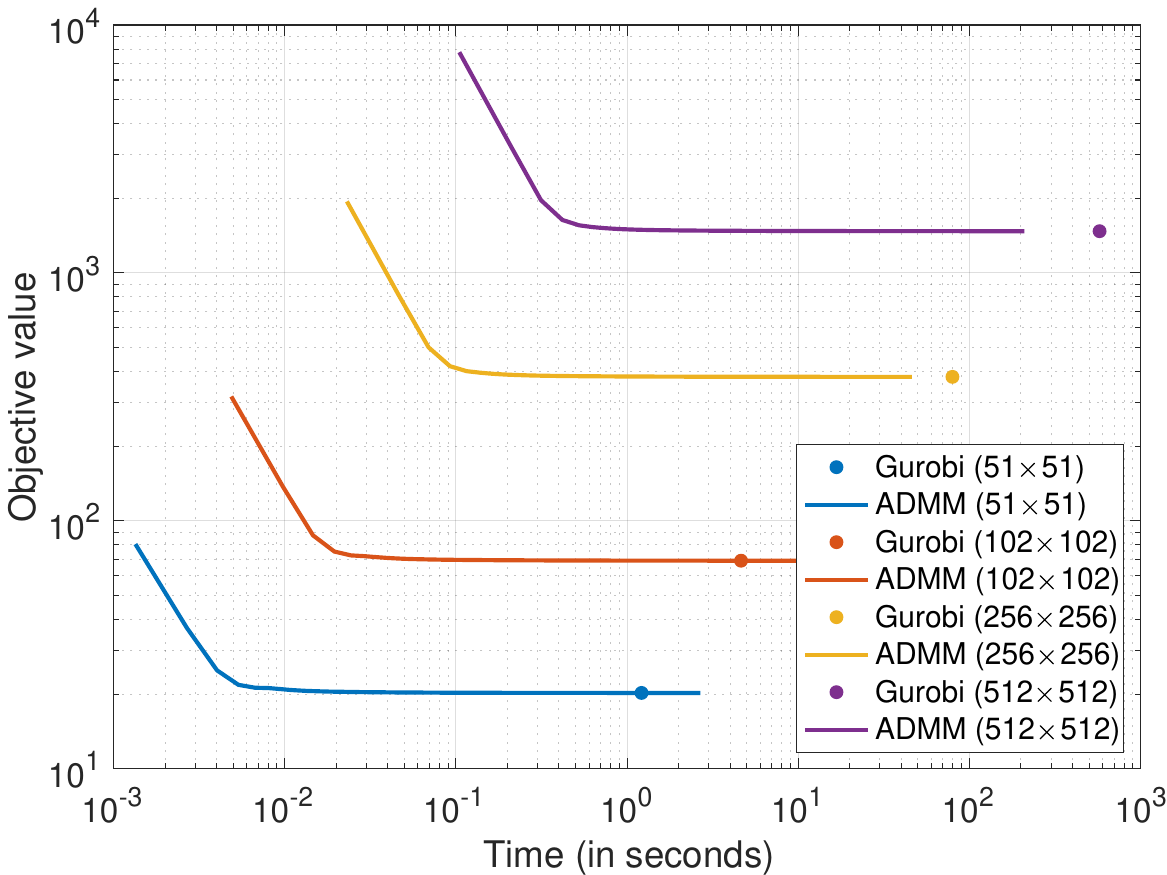}
\includegraphics[width=0.325\textwidth]{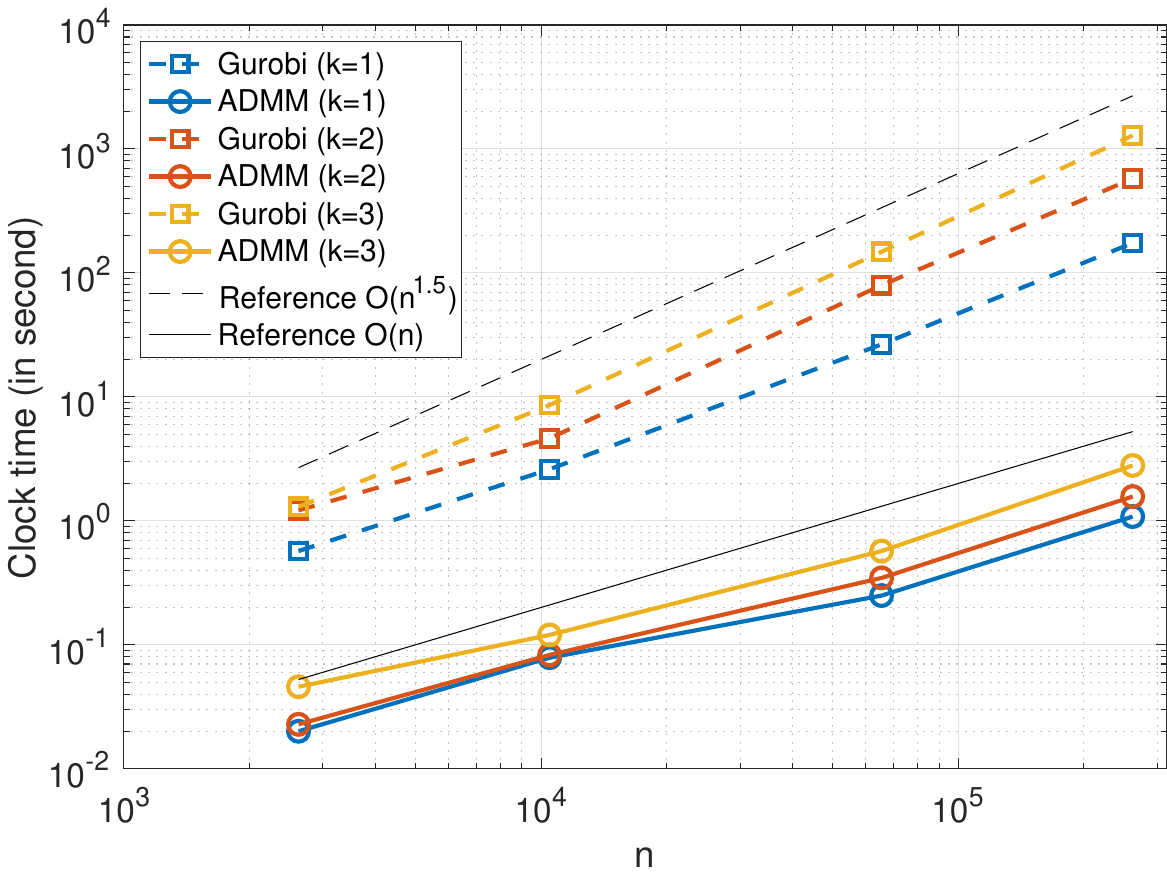}
\caption{\it\small Comparison of ADMM Type I to Gurobi for the same Lena
  problem. The left panel compares the two methods for varying $k$ at a fixed
  resolution of $256 \times 256$. The middle panel fixes $k=2$ and compares them
  across varying resolutions. The right panel plots the time needed to achieve a
  certain relative error, defined by the ratio of the suboptimality gap to the
  objective value being less than $10^{-2}$. Altogether, we see that ADMM Type I
  achieves a moderate-quality  solution several orders of magnitude faster than
  Gurobi. Furthermore, the right panel shows that Gurobi appears to scale as 
  $O(n^{1.5}$) (to be expected, if it is based on interior point methods
  internally) whereas ADMM Type I appears to scale closer to $O(n)$.}   
\label{fig:optimization_admm_vs_gurobi}  
\end{figure}

\section{Interpolation algorithm}
\label{sec:interpolation}
 
In this section, we derive an algorithm to extend the KTF solution in
\eqref{eq:ktf}, defined only at points in the lattice $Z_{n,d}$, to a function
defined on all of $[0,1]^d$. As discussed in Remark \ref{rem:ktf_basis}, this is
made possible by the continuous-time formulation for KTF in 
\eqref{eq:ktf_continuous} of Proposition \ref{prop:ktf_continuous}, which,
recall, relates to the original discrete-time problem \eqref{eq:ktf} in that at
their solutions \smash{$\hf,\htheta$}, respectively, we have \smash{$\hf(x_i) =
  \theta_i$}, for $i=1,\ldots,n$. Given the coefficients that define the
function \smash{$\hf \in \cH_{n,d}^k$} in its expansion the tensor product basis
of univariate falling factorial functions, that is, given the solution
\smash{$\halpha$} in \eqref{eq:ktf_basis}, we can form the interpolated
prediction \smash{$\hf(x)$} at an arbitrary point $x \in [0,1]^d$ by simply
evaluating this basis expansion at $x$, as shown in \eqref{eq:ktf_extend}.   

While this is conceptually the easiest way to interpolate the fitted values
\smash{$\hf(x_i)$}, $i=1,\ldots,n$ to form the prediction at an arbitrary $x \in
[0,1]^d$, it is not the most efficient. The expression in \eqref{eq:ktf_extend}
takes \smash{$O(\|\halpha\|_0 (k+1)^d)$} operations, where
\smash{$\|\halpha\|_0$} denotes the number of nonzero elements in
\smash{$\halpha$} (the number of active basis functions), because evaluating
each univariate basis function $h_{i_j}(x_j)$ takes $O(k+1)$ operations (it
being a product of $k+1$ terms, recall \eqref{eq:ffb}). In what follows, we 
present an interpolation algorithm that takes only \smash{$O((k+1)^{d+1})$}
operations, a big savings over \eqref{eq:ktf_extend} when
\smash{$\|\halpha\|_0$} is large. Moreover, our algorithm acts directly on the
solution \smash{$\htheta$} in \eqref{eq:ktf}, meaning that we never have to
solve \eqref{eq:ktf_basis} in the first place.

\subsection{Review: univariate interpolation}

Our interpolation algorithm for KTF in the multivariate case builds from the
univariate discrete spline interpolation algorithm derived in Corollary 2 of
\citet{tibshirani2020divided}. For completeness, we transcribe this in Algorithm
\ref{alg:interp_1d}. Here and henceforth, we use the abbreviation $x_{a:b} =
(x_a, \ldots, x_b)$ for integers $a \leq b$. Also, we use $f[z_1,\ldots,z_r]$
for the divided difference of a function $f$ at distinct points
$z_1,\ldots,z_r$. Recall, this is defined for $r=2$ by
$$
f[z_1,z_2] =  \frac{f(z_2)-f(z_1)}{z_2-z_1},
$$
and for any $r \geq 3$ by the recursion
$$
f[z_1,\ldots,z_r] = \frac{f[z_2,\ldots,z_r] - f[z_1,\ldots,z_{r-1}]}{z_r-z_1}.  
$$
Note that for evenly-spaced points, this simply coincides with a scaled
forward difference; in particular, recalling the notation introduced in Section
\ref{sec:tf}, we have 
$$
f[z, \ldots, z + (r-1)/n] = \frac{(r-1)!}{n^r} (\dop^r f)(z). 
$$
For more background on divided differences, discrete splines, their connection
to the falling factorial basis and to trend filtering, we refer to
\citet{tibshirani2020divided}.  

Algorithm \ref{alg:interp_1d} takes $O((k+1)^2)$ operations, as
\eqref{eq:ffb_interp1}, \eqref{eq:ffb_interp2} are each linear systems in just
one unknown, and forming the coefficients in either linear system can be done in
$O((k+1)^2)$ operations (this uses a representation of a divided difference as
an explicit linear combination of the underlying function evaluations; refer to,
for example, Section 2.1 of \citet{tibshirani2020divided}). Note that this
assumes $x_{1:n}$ are evenly-spaced design points, because in this case
identifying the smallest index $i$ such that $x_i>x$ can be done with integer
division. For general design points, the identification step takes $O(\log n)$
operations via binary search, so the total cost would be $O(\log n + (k+1)^2$). 
 
Moreover, Corollary 2 in \citet{tibshirani2020divided} establishes that the
value $f(x)$ returned by Algorithm \ref{alg:interp_1d} is equal to that produced
by the falling factorial basis representation in \eqref{eq:ffb_tensor} (for
$d=1$), where $\alpha$ is the unique coefficient vector such that $f(x_i) =
\theta_i$, $i=1,\ldots,n$.

\begin{algorithm}[tb]
\textbf{Input:} design points $x_{1:n}$ with entries in increasing order; values  
$\theta_{1:n}$ to interpolate; query point $x$; integer $k \geq 0$. \\  
\textbf{Output:} interpolated value $f(x)$, where $f$ is the unique $k\th$
order discrete spline with knots in \smash{$x_{(k+1):(n-1)}$}, such that $f(x_i) 
= \theta_i$, $i=1,\ldots,n$.  

\begin{enumerate}[topsep=0pt,itemsep=-1ex,partopsep=1ex,parsep=1ex]

\item If $x = x_i$ for some $i=1,\ldots,n$, then return $\theta_i$.

\item Else, if $x > x_{k+1}$ and $i$ is the smallest index such that $x_i>x$
  (with $i=n$ when $x>x_n$), then return $f(x)$ as the unique solution of the   
  linear system:
  \begin{equation}
    \label{eq:ffb_interp1}
    f[x_{i-k}, \ldots, x_i, x] = 0.
  \end{equation}

\item Else, if $x < x_{k+1}$, then return $f(x)$ as the unique solution of the
  linear system: 
  \begin{equation}
    \label{eq:ffb_interp2}
    f[x_1, \ldots, x_{k+1}, x] = 0.
  \end{equation}
\end{enumerate}
(Note that both \eqref{eq:ffb_interp1}, \eqref{eq:ffb_interp2} are linear
systems in just one unknown, $f(x)$, since we interpret $f(x_i) = \theta_i$, 
$i=1,\ldots,n$.) 
\caption{\textsc{Interpolate-1d}$(x_{1:n}, \theta_{1:n}, x, k)$}
\label{alg:interp_1d}
\end{algorithm}

\begin{algorithm}[tb]
\textbf{Input:} lattice {$\{z_{i1}\}_{i=1}^{N_1} \times \cdots \times 
  \{z_{id}\}_{i=1}^{N_d}$} where each set {$\{z_{ij}\}_{i=1}^{N_j}$} in
the Cartesian product is sorted in increasing order; values
\smash{$\{\theta_i\}_{i \in [N_1] \times \cdots \times [N_d]}$} over the
lattice to interpolate; query point $x$; integer $k \geq 0$. \\  
\textbf{Output:} interpolated value $f(x)$, for the unique function $f$ in the
tensor product space of $k\th$ degree discrete splines with knots in
\smash{$\{z_{i1}\}_{i=k+1}^{N_1-1} \times \cdots \times
  \{z_{id}\}_{i=k+1}^{N_d-1}$}, such that \smash{$f(z_{i_1,1},\ldots,z_{i_d,d}) 
  = \theta_{i_1,\ldots,i_d}$}, $(i_1,\ldots,i_d) \in [N_1] \times \cdots \times
[N_d]$.  

\begin{enumerate}[topsep=0pt,itemsep=-1ex,partopsep=1ex,parsep=1ex]
\item If $d=1$, then return \textsc{Interpolate-1d}$(z_{1:N_1}, \theta_{1:N_1},
  x, k)$.
\item Else, let $i_1$ denote the smallest index such that $x_{i_1,1} \geq
  z_{i_1,1}$. 
\item Let $\ell_1 = \min\{\max\{i_1-k, 1\}, N_1-k\}$. 
\item Let \smash{$\vartheta_p = \textsc{Interpolate}\big(
  \{z_{i2}\}_{i=1}^{N_2} \times \cdots \times \{z_{id}\}_{i=1}^{N_d}, \, 
  \{\theta_i\}_{i \in \{i_1+p-1\} \times [N_2] \times \cdots \times [N_d]}, \, 
  x_{2:d}, \, k \big)$}, for $p \in [k+1]$.
\item Return \smash{$\textsc{Interpolate-1d}(
  z_{\ell_1:(\ell_1+k), 1}, \vartheta_{1:(k+1)}, x_1, k)$}.
\end{enumerate}
\caption{\textsc{Interpolate}$\big(
  \{z_{i1}\}_{i=1}^{N_1} \times \cdots \times \{z_{id}\}_{i=1}^{N_d}, \,
  \{\theta_i\}_{i \in [N_1] \times \cdots \times [N_d]}, \,  x, \, k \big)$}
\label{alg:interp}
\end{algorithm}

\subsection{Multivariate interpolation}

In the multivariate case, it turns out that we can interpolate within the space
of tensor products of $k\th$ degree discrete splines in $O((k+1)^{d+1})$ time, 
assuming a uniformly-spaced lattice. The idea is to reduce the $d$-dimensional
problem calculation to $k+1$ interpolation problems, each one in dimension
$d-1$. Figure \ref{fig:interp_intuition} gives the intuition for $d=2$. The 
algorithm in described in Algorithm \ref{alg:interp}, where we recall the
notation from Section \ref{sec:theory}, abbreviating $[a] = \{1,\ldots,a\}$ for
an integer $a \geq 1$. 

Algorithm \ref{alg:interp} assumes each side length of the lattice is at least
$k+1$. Its proof of correctness, as well as the running time of $O((k+1)^{d+1})$
for a uniformly-spaced lattice, follows from a straightforward inductive
argument over $d$, whose proof we omit. We note that for an arbitrary lattice,
the running time is \smash{$O(\sum_{j=1}^d \log N_j + (k+1)^{d+1})$}. Figure
\ref{fig:interp_example} gives some examples of interpolation for $d=2$.

\begin{figure}[p]
\begin{minipage}[c]{0.375\textwidth}
\centering
\includegraphics[width=\textwidth]{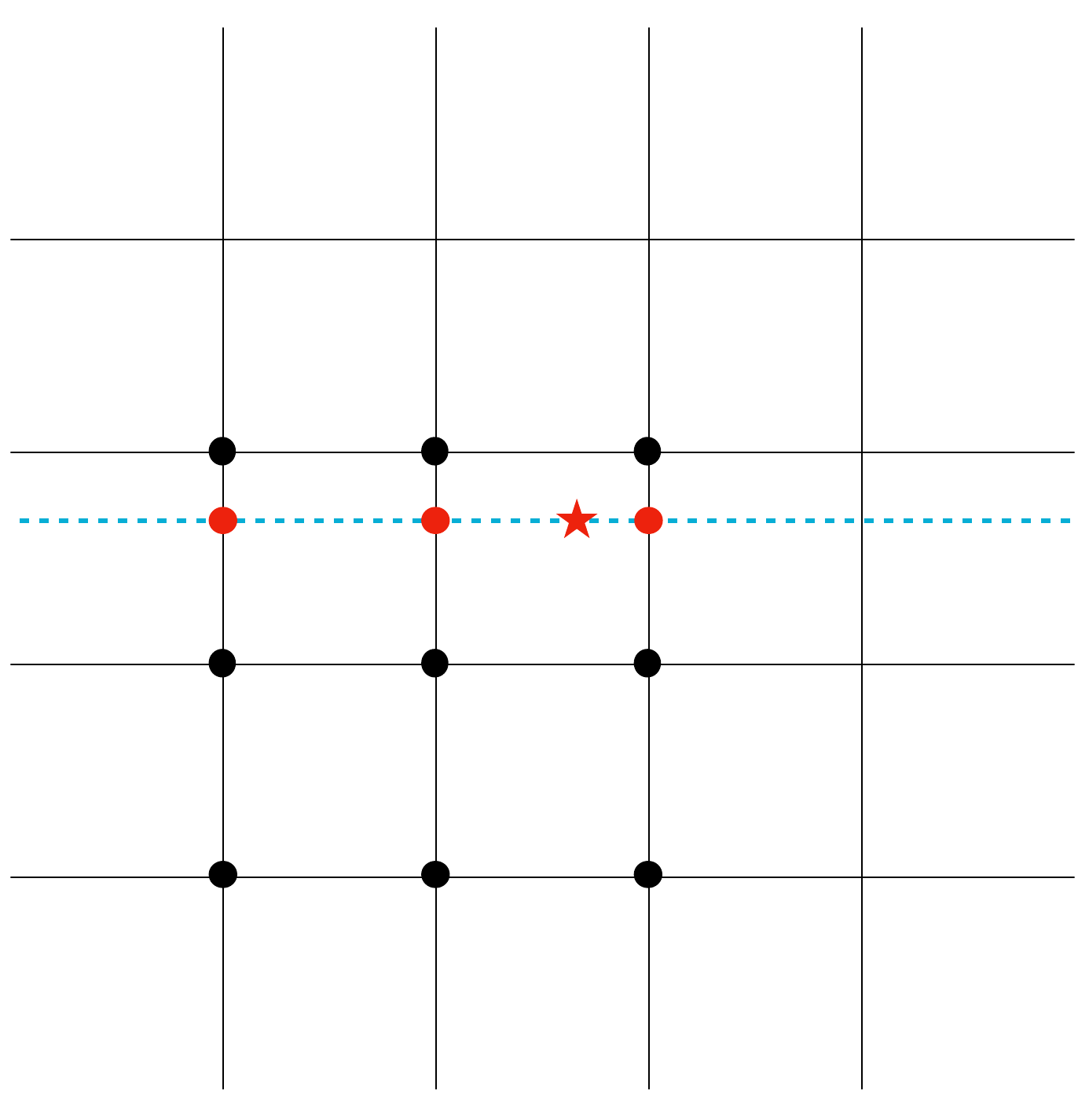}
\end{minipage}
\hspace{15pt}
\begin{minipage}[c]{0.575\textwidth}
\caption{\it\small Illustration of multivariate interpolation from Algorithm
  \ref{alg:interp}, when $d=2$ and $k=2$. The value to be interpolated is marked
  by a red star. The algorithm first interpolates the $k+1=3$ values indicated
  by red dots, each time using univariate interpolation along the y-axis, from
  Algorithm \ref{alg:interp_1d}. As the final step, these red dots are used to
  interpolate the value at the red star, using univariate interpolation along
  the x-axis, again from Algorithm \ref{alg:interp_1d}.}
\label{fig:interp_intuition}
\end{minipage}

\bigskip\bigskip

\centering
\includegraphics[width=0.425\textwidth]{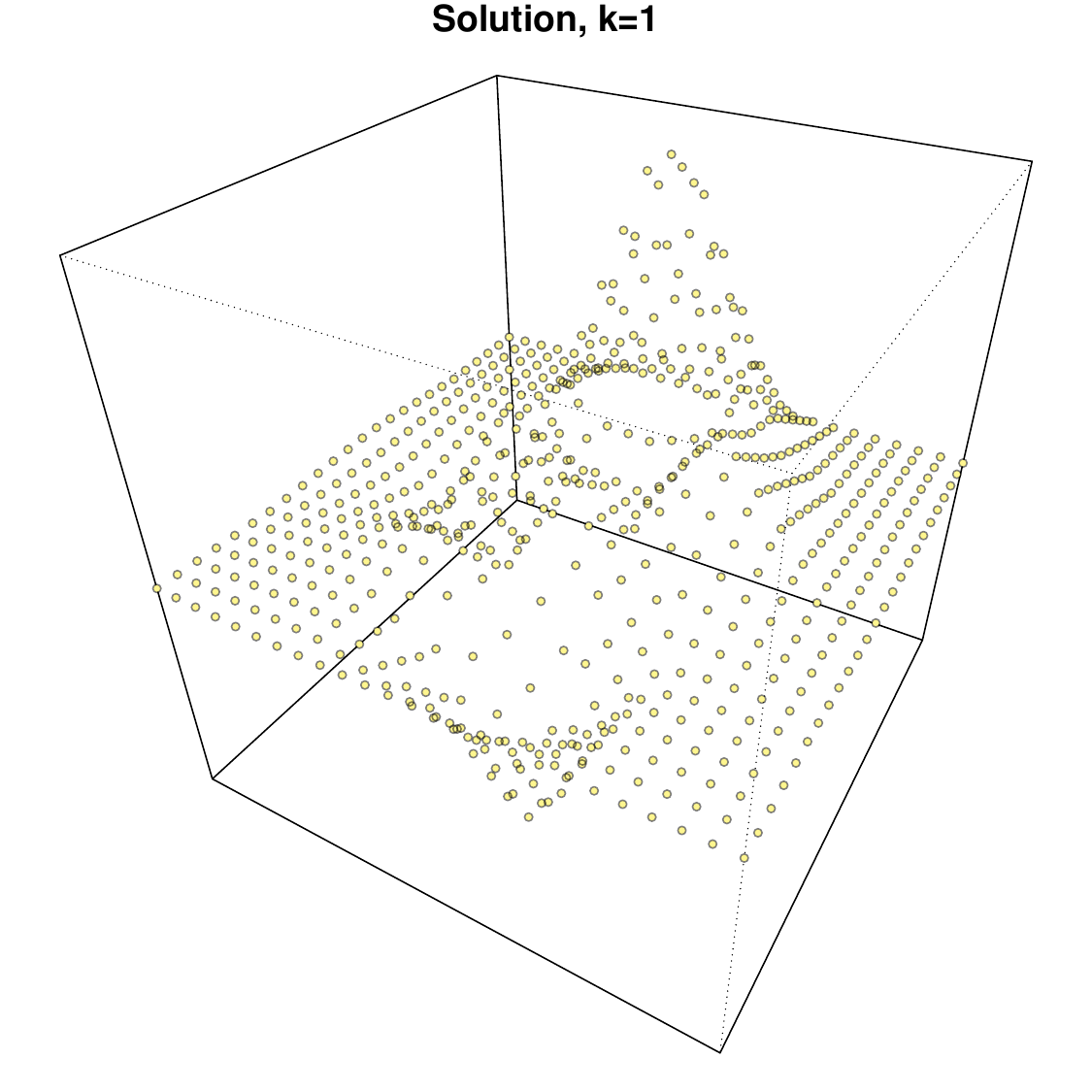}
\includegraphics[width=0.425\textwidth]{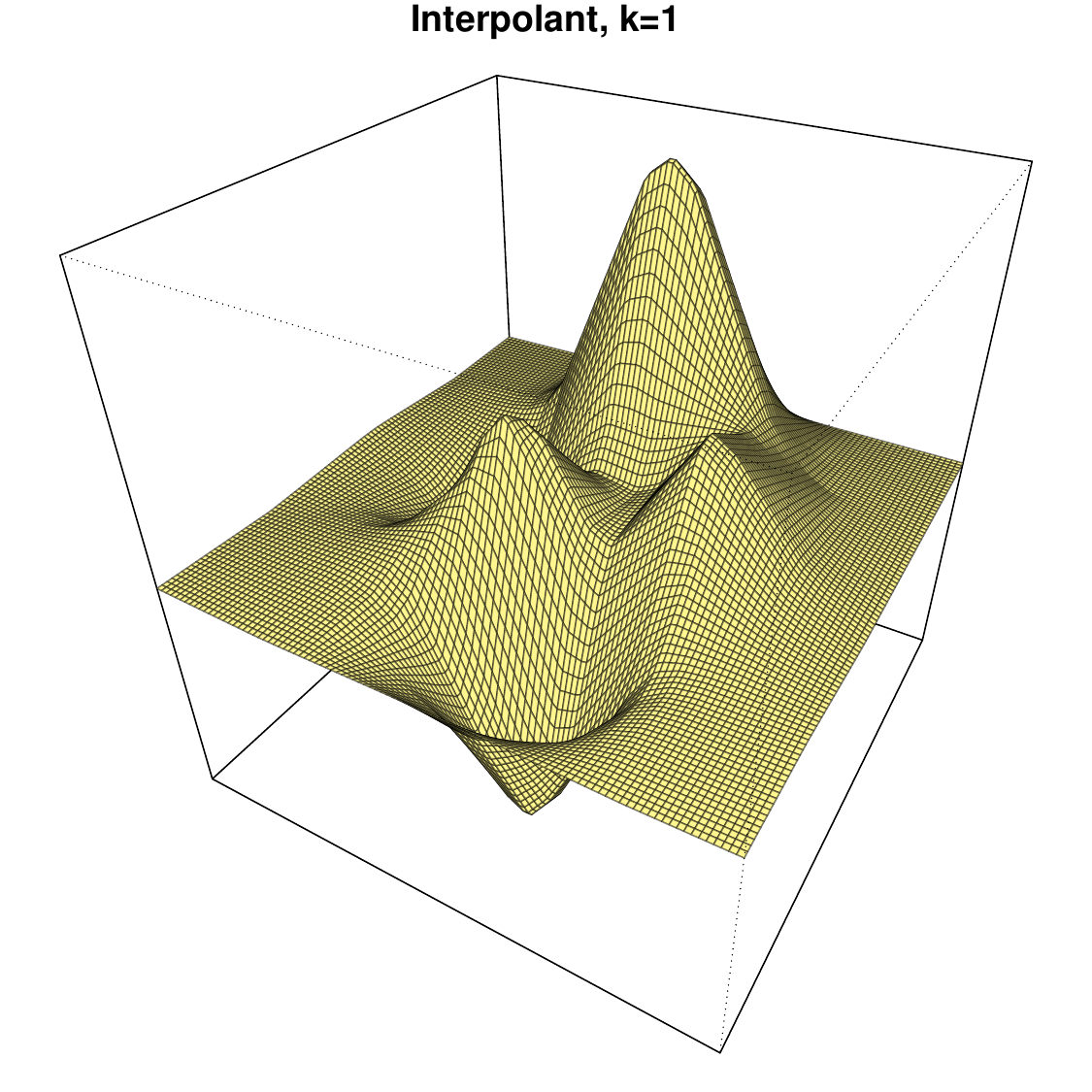} \\
\includegraphics[width=0.425\textwidth]{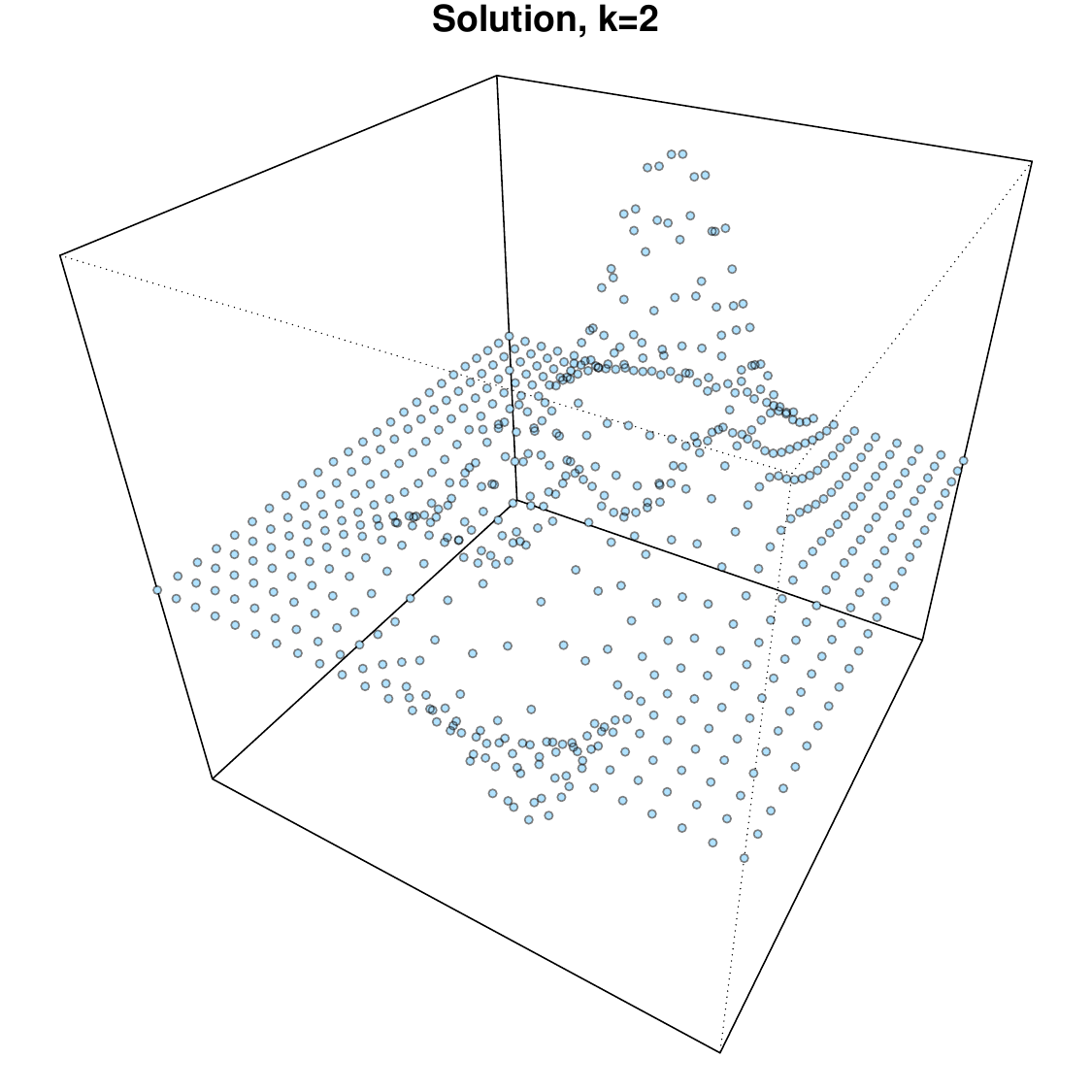}
\includegraphics[width=0.425\textwidth]{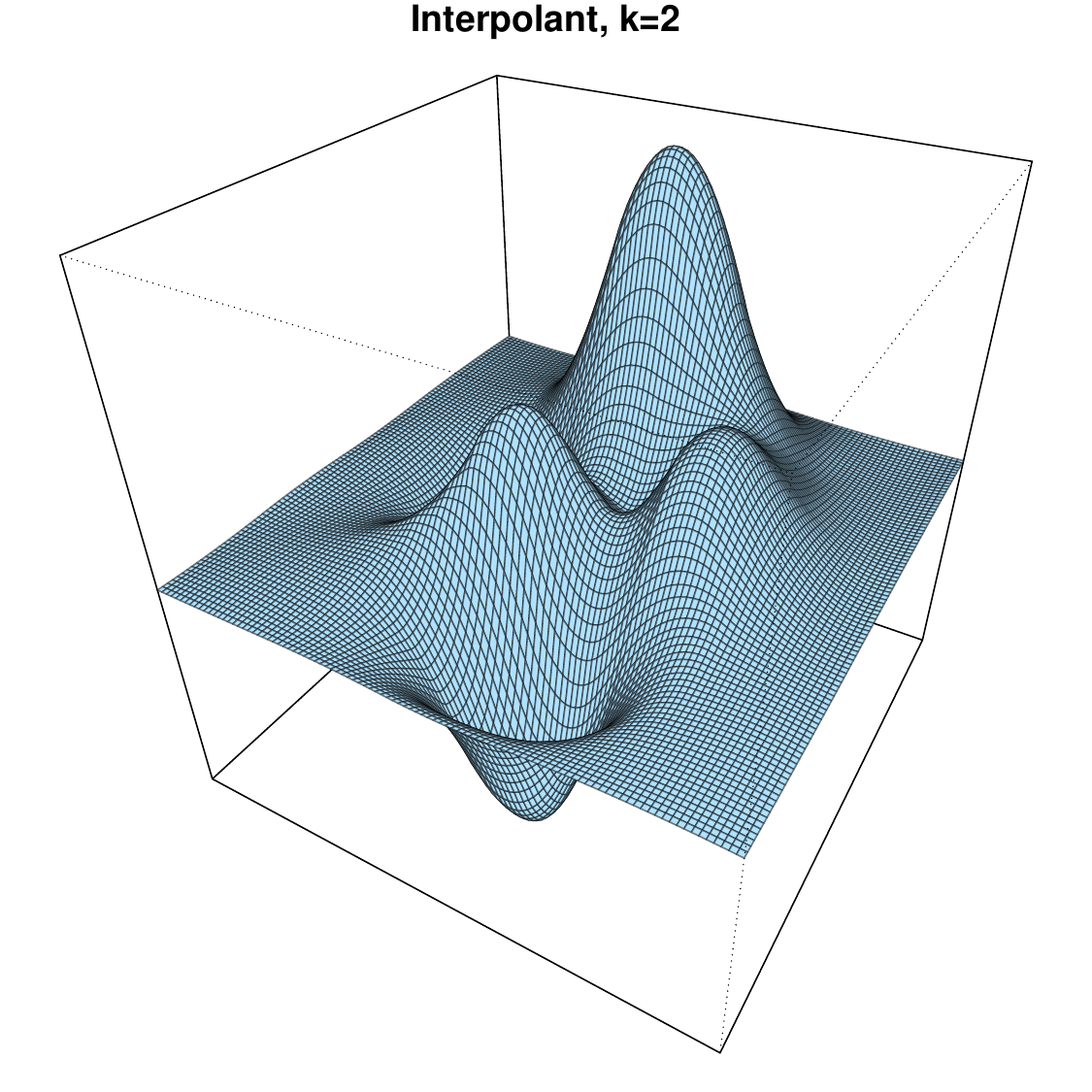}
\caption{\it\small Top left: KTF solution, with $k=1$, for a problem over a
  square lattice with side length $N=25$ (that is, $n=25^2=625$ points). Top
  right: the interpolated surface from Algorithm \ref{alg:interp} (itself
  evaluated over a grid with 4x the resolution along each dimension; that is,
  $N=100$). Bottom row: analogous but for $k=2$. In either case the interpolated 
  function is in the tensor space of $k\th$ order discrete splines.}
	\label{fig:interp_example}
\end{figure}
\section{Experiments}
\label{sec:experiments}

We explore the empirical properties of KTF through a series of experiments that
compare its performance against other nonparametric methods.  

\subsection{Comparison of methods}%
\label{sec:error_analysis}

We study the performance of KTF against other nonparametric estimators by
analyzing their empirical mean squared error (MSE) in estimating the function
$f_0$ defined in Figure \ref{fig:intro}, over a square lattice with
$n=256^2=65536$ points. In the experiments that follow, we generate data by
adding Gaussian noise to evaluations of $f_0$, of the same magnitude illustrated
in Figure \ref{fig:intro}, yielding a signal-to-noise ratio (SNR) of 0.5. Aside
from KTF and graph trend filtering (GTF) of orders $k=0, 1, 2$ (for $k=0$, they
both reduce to TV denoising), we also consider second-order Laplacian smoothing
(using the squared Laplacian $L^2$, where $L$ is the Laplacian matrix of the 2d
grid graph), the eigenmaps estimator in \eqref{eq:eigenmaps} with $k=1$, kernel
smoothing (using a Gaussian kernel), and wavelet smoothing (using Daubechies'
least asymmetric wavelets, ten vanishing moments, and hard thresholding). For
the latter two estimators, we use the implementations from the R packages
\texttt{np} and \texttt{wavethresh}, respectively.
% \citep{hayfield2008nonparametric, nason2016wavethresh}.

% RJT: Commenting this out for now not because I don't think we should cite it,
% but because we haven't cited Gurobi and I don't want to try to dig up the
% appropriate ref right now, so easier just to not cite any packages at the 
% moment for consistency.

\begin{figure}[tb]
\centering
\includegraphics[width=\textwidth]{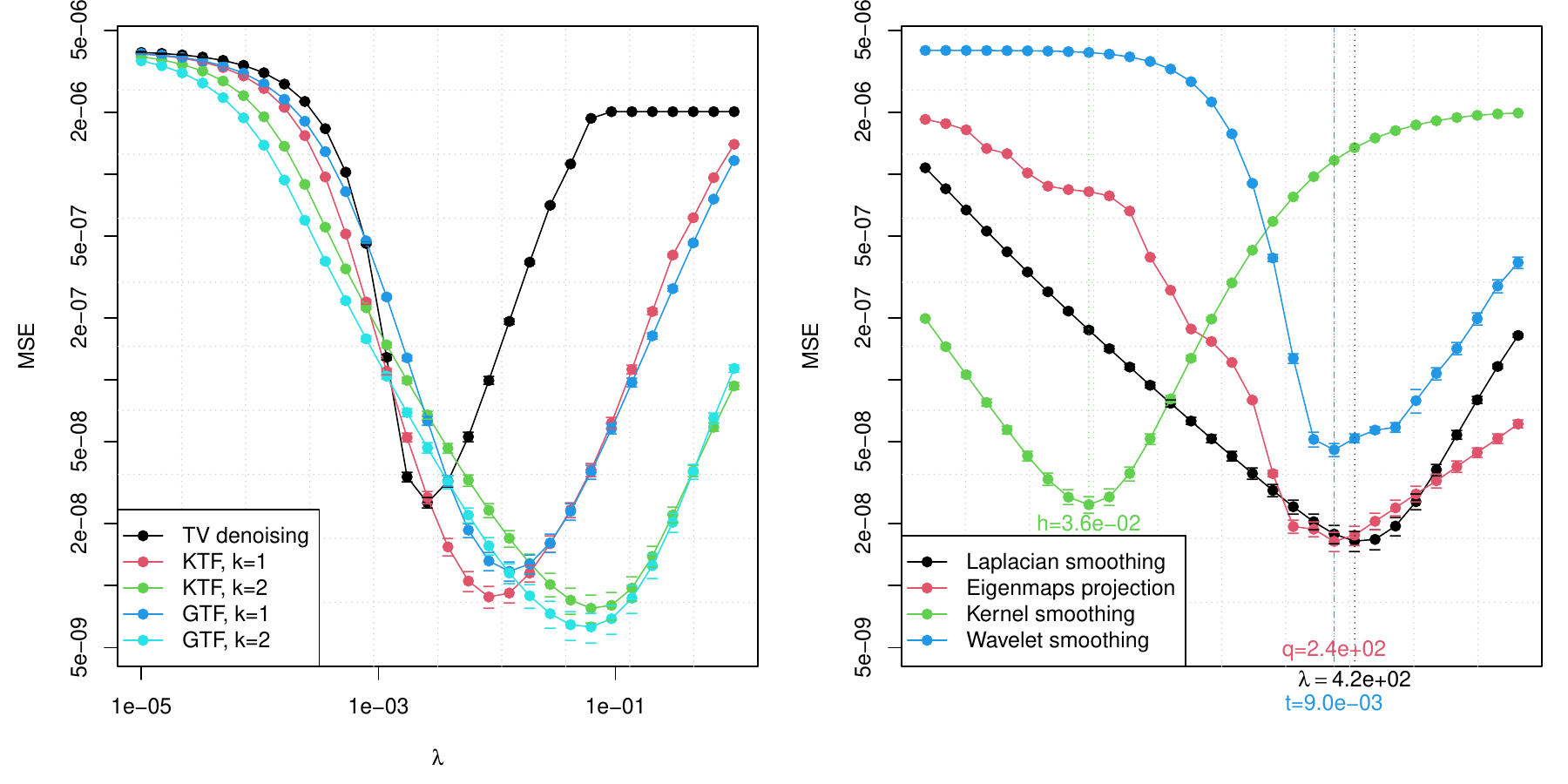}
\caption{\it\small MSE of various methods for estimating the signal function
  $f_0$ in Figure \ref{fig:intro}, over a square lattice with $n=256^2=65536$
  points, and subject to Gaussian noise that yields an SNR of 0.5. Left panel:
  the average performance (over 20 repetitions) of trend filtering estimators
  over a common range of tuning parameters $\lambda$. Right panel: the average
  performance of other nonparametric methods. As their tuning parameters lie
  on different scales, their values are scaled to fit onto a single x-axis. The
  takeaway is that KTF and GTF, in particular for $k=2$, achieve clearly the
  best performance.}  
\label{fig:error_analysis}
\end{figure}

Each estimator is fit over a range of tuning parameters, and its MSE as a
function of the tuning parameter is shown in Figure \ref{fig:error_analysis}.
The MSE curves here are averaged over 20 repetitions (each repetition forms a
data set by adding noise to $f_0$). Error bars are shown as well, denoting the
standard deviations of the MSE over these 20 repetitions. We can see that the
trend filtering estimators (with the exception of TV denoising, which returns a
piecewise constant fit that is not well-suited to an underlying signal with this
level of smoothness) perform quite a bit better than all competitors, and
factoring in the variability over repetitions, KTF and GTF achieve essentially
equivalent performance (for common values of $k$). The superiority of KTF over
the linear estimators (Laplacian smoothing, eigenmaps projection, and kernel
smoothing), should not come as a surprise---our theory prescribes that KTF
should perform better in a minimax sense than any linear smoother when the
underlying signal displays heterogeneous smoothness. These results thus serve as
a quantitative complement to the qualitative findings in Figure \ref{fig:intro},
and to the theoretical findings in Section \ref{sec:theory}.

\subsection{Rates for heterogeneous signals}%
\label{sec:heterogeneous_smoothness}

We now examine the empirical error rates of KTF and a linear smoother in
estimating signals belonging to KTV classes, for $k=0,1$. In both
cases, we choose $k,d$ so that the effective degree of smoothness remains
$s=1/2$, and we use the canonical scaling in \eqref{eq:ktv_canonical}, so that
the two classes under consideration are:  
\begin{equation}
\label{eq:ktv_classes_bdry}
\cT_{n,2}^0(\sqrt{n}) \quad\text{and}\quad \cT_{n,4}^1(\sqrt{n}).
\end{equation}
As representatives for ``hard'' signals within these two classes, we take the
true mean $\theta_0$ to be a ``one-hot'' signal when $k=0$, and a linear
``spike'' signal when $k=1$ (each scaled to the appropriate magnitude). For each
sample size $n$, we compute the MSE of KTF, and the eigenmaps projection
estimator in \eqref{eq:eigenmaps} with $k=0,1$, averaged over 20 repetitions, 
where each method is tuned to have the optimal average MSE over a range of
tuning parameter values. 

Figure \ref{fig:heterogeneous_smoothness} shows these average MSE curves as
functions of $n$, where the error bars again denote standard deviations. We can
see that in both cases, the KTF error decays faster than the minimax rate
(suggesting that the particular signals under consideration do not achieve the
worst-case rate for KTF), while the linear method exhibits a perfectly flat
error curve, suggesting that it fails to be consistent entirely.

\begin{figure}[p]
\centering
\includegraphics[width=\textwidth]{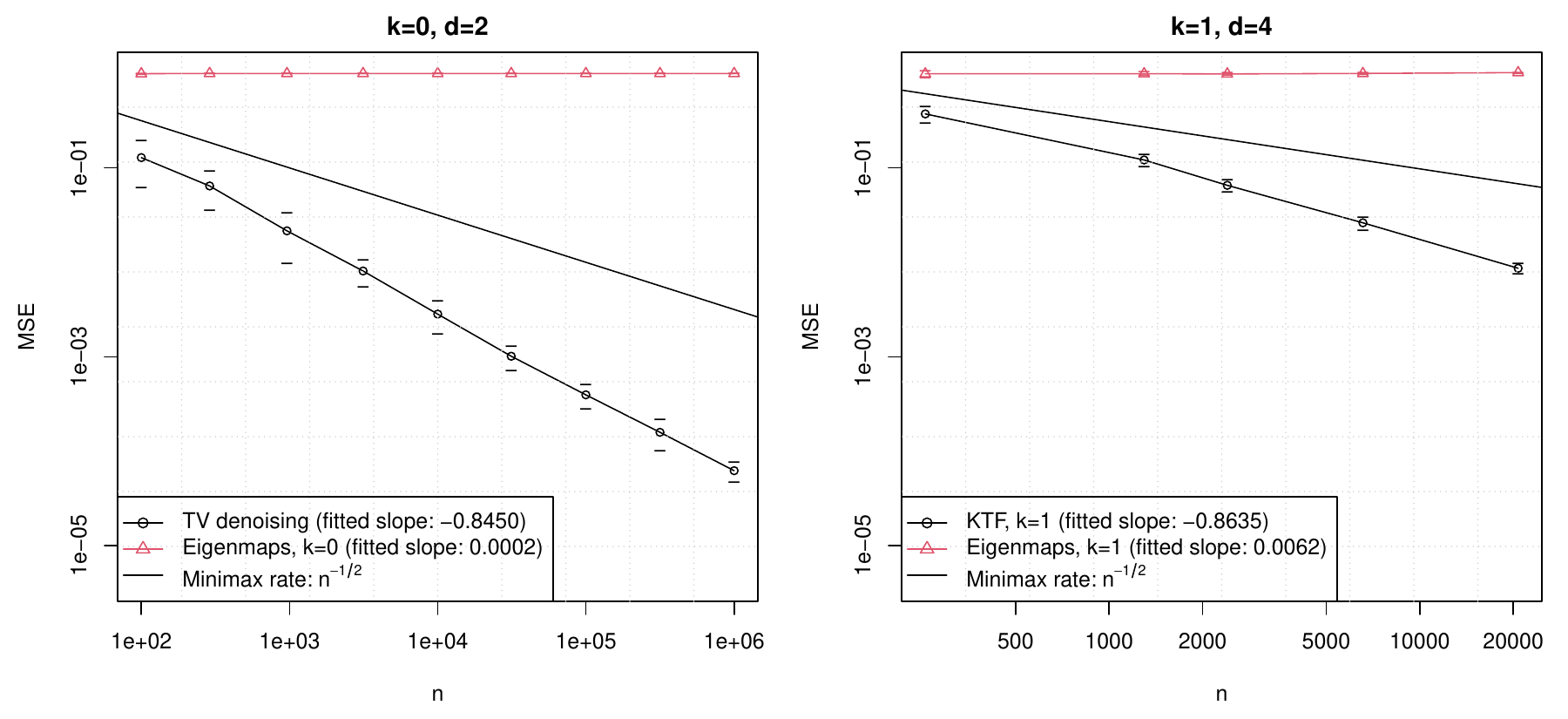}
\caption{\it\small Empirical error rates of KTF versus the eigenmaps projection 
  estimator, for two cases: $k=0, \, d=2$, and $k=1, \, d=4$. The true mean in
  each case was chosen to be representative of a ``hard'' signal in the
  appropriate KTV class in \eqref{eq:ktv_classes_bdry}. KTF converges faster
  than the minimax rate, whereas the eigenmaps estimator fails to be consistent
  entirely.} 
\label{fig:heterogeneous_smoothness}

\bigskip\bigskip

\centering
\includegraphics[width=\textwidth]{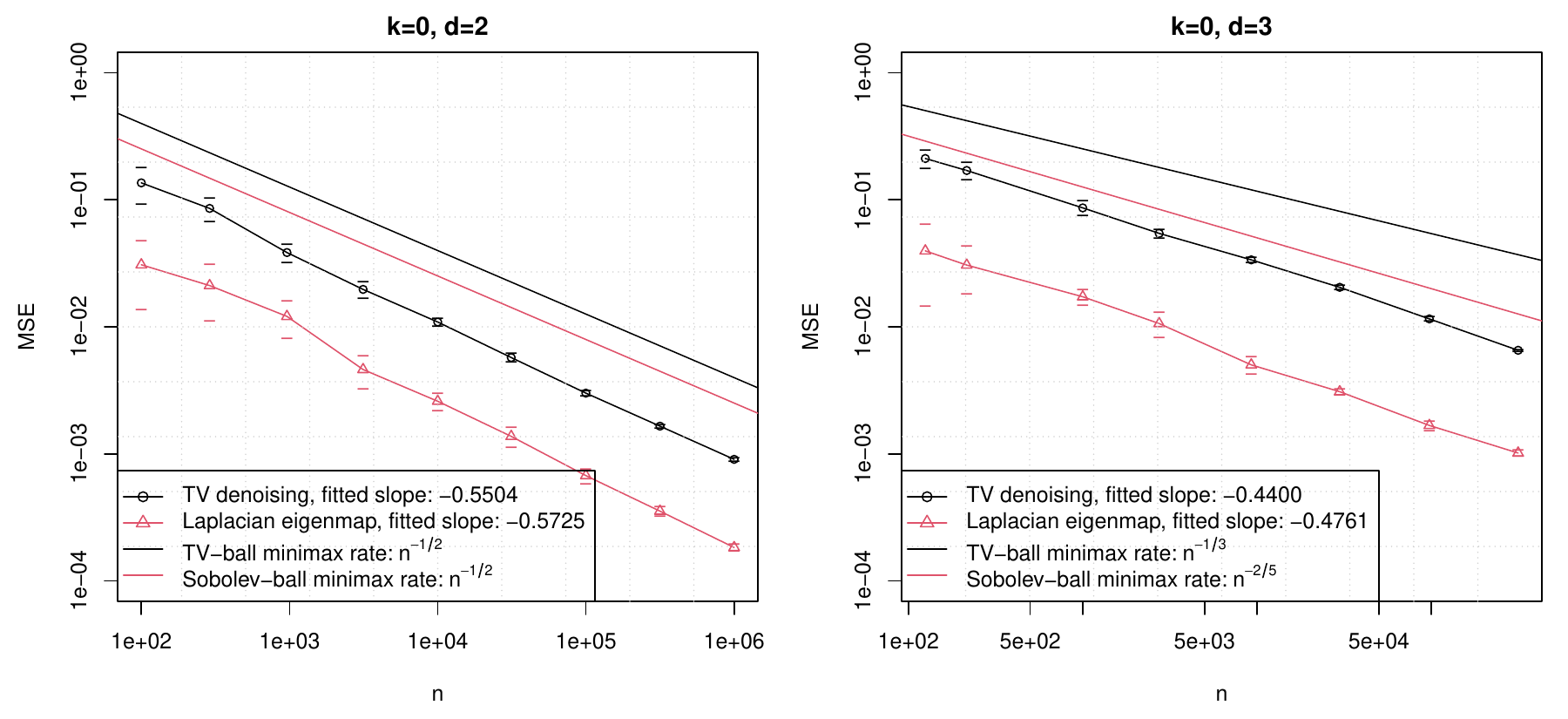}
\caption{\it\small Empirical error rates of KTF versus the eigenmaps projection
  estimator, for two cases: $k=0, \, d=2$, and $k=0, \,d=3$. The true mean was
  chosen to be a linear function in either case. KTF does not appear to be
  outperformed (in rate) by the eigenmaps estimator, leaving its adaptivity to
  smooth signals when $s < 1/2$ an open question.}   
\label{fig:linear_scaling}
\end{figure}

\subsection{Rates for homogeneous signals}%
\label{sec:homogeneous_smoothness}

Lastly, we examine the empirical error rates of KTF and the eigenmaps projector
for estimating homogeneously smooth signals. Recall we established that the
latter is minimax rate optimal over Sobolev (and Holder) classes in Theorem 
\ref{thm:sobolev_minimax}, whereas for $s < 1/2$, it is not clear (from its
minimax rate over the inscribing KTV class) that KTF achieves the faster rate
for Sobolev (or Holder) signals. We fix $k=0$ and consider two values for the
dimension, $d=2$ and $d=3$, which correspond to $s = 1/2$ and $s < 1/2$,
respectively. The true mean $\theta_0$ was taken to be the evaluations of a
linear function over the lattice, scaled appropriately (according to the
canonical scaling). Figure \ref{fig:linear_scaling} reports the MSE curves from
each method, under the same general setup as in the last subsection (averaged
over 20 repetitions, optimal tuning per $n$). Moving from $s = 1/2$ to $s <
1/2$, we do not find that the empirical performance of TV denoising becomes
markedly worse than its linear competitor, and the adaptivity of KTF to the
smooth signals remains an open question. 

\section{Discussion}
\label{sec:discussion}

We proposed and studied a method, Kronecker trend filtering, that 
extends univariate trend filtering to the multivariate setting, where the design
points lie on a uniformly-spaced lattice. We derived a continuous-time
representation for the optimization problem defining the KTF estimator, which
led to a method for rapid interpolation to off-lattice locations. We also
established a comprehensive set of minimax results which show that KTF is rate
optimal over heterogeneous classes of signals defined in terms of higher-order
TV regularity, whereas linear estimators fail to be optimal (and even
consistent, in some cases) over such classes. Lastly, we presented fast and
specialized ADMM methods for computing KTF solutions, which were shown to
perform favorably against numerous alternatives. 

There are numerous possible directions for future work. We finish by describing
five such directions. 

\subsection{General lattice structures}
\label{sec:general_lattice}

All along we have assumed a uniformly-spaced lattice with equal side lengths,
but much of this paper carries over to a more general lattice structure, namely
a Cartesian product \smash{$\{z_{i1}\}_{i=1}^{N_1} \times \{z_{i2}\}_{i=1}^{N_2}
  \times \cdots \times \{z_{id}\}_{i=1}^{N_d}$}, with \smash{$n = \prod_{j=1}^d
  N_j$}. The KTF penalty operator over this general lattice becomes:
\begin{equation}
\label{eq:ktf_general_lattice_pen_mat}
\dmatk_{z_1, \ldots, z_d} = \left[\begin{array}{c}
\dmatk_{z_1} \otimes I_{N_2} \otimes \cdots \otimes I_{N_d} \\     
I_{N_1} \otimes \dmatk_{z_2} \otimes \cdots \otimes I_{N_d} \\  
\vdots \\
I_{N_1} \otimes I_{N_2} \otimes \cdots \otimes \dmatk_{z_d}  
\end{array}\right],
\end{equation}
where for each $j=1,\ldots,d$, we abbreviate \smash{$z_j =
  \{z_{ij}\}_{i=1}^{N_j}$}, and denote by \smash{$\dmatk_{z_j}$} the $k\th$
order univariate trend filtering penalty matrix defined with respect to $z_j$, 
which (assuming the points in $z_j$ are sorted in increasing order) is given for
$k=1$ by \smash{$\dmat^{(1)}_{z_j} = \dmat^{(1)}_{N_j}$}, and for all other $k$ 
by:  
$$
\dmatk_{z_j} = \dmat^{(1)}_{N_j-k} \cdot 
\diag\bigg( \frac{k}{z_{k+1,j} - z_{1j}}, \frac{k}{z_{k+2,j} - z_{2j}}, \ldots,
\frac{k}{z_{N_j,j} - z_{N_j-k,j}} \bigg) \cdot \dmat^{(k)}_{z_j}, \quad
k=1,2,3,\ldots.
$$
Many of the properties and results derived in this paper carry over more or less   
immediately to the KTF estimator on a general lattice defined using the penalty  
matrix \eqref{eq:ktf_general_lattice_pen_mat}. All properties in Section
\ref{sec:properties} carry over with suitable adjustments, as do the specialized
ADMM algorithms in Section \ref{sec:optimization}, and the fast interpolation 
routine in Section \ref{sec:interpolation} (Algorithm \ref{alg:interp} is in
fact already written assuming a general lattice structure). Extending the theory
in Section \ref{sec:theory} is certainly less trivial, though we expect that this
should be possible under mild assumptions on the spacings between points in
$z_j$, $j=1,\ldots,d$. 

% AH: A potential surrogate graph argument  for Theorem~\ref{thm:ktf_upper_bd}:
% pull out the spacing weights to see what is necessary for the graph inequality
% (unevenly spaced $D$ must dominate equally spaced $D$).  Need adjacent lattice
% elements to ``not be too far away from each other'' (in which case the
% corresponding difference would vanish and the graph doesn't dominate), which
% is loosely the requirement in \cite{tibshirani2014adaptive}. what about
% canonical scaling? 

\subsection{Generalized linear models}

The KTF problem may be extended by replacing the squared loss in \eqref{eq:ktf}
with a generalized linear model (GLM) loss:
\begin{equation}
\label{eq:ktf_glm}
\minimize_{\theta \in \R^n} \; -\sum_{i=1}^n T(y_i) \theta_i + \psi(\theta) +
\lambda \|\kronmatk \theta\|_1,
\end{equation}
where $\psi : \R^n \to \R$ is convex and twice differentiable. The loss in
\eqref{eq:ktf_glm} corresponds to the negative log-likelihood of an exponential
family model for $y_i$, $i=1,\ldots,n$, with log-partition function $\psi$,
natural sufficient statistic $T$, and natural parameters $\theta_i$,
$i=1,\ldots,n$. The squared loss in \eqref{eq:ktf} corresponds to the working
model where each $y_i$ is Gaussian with mean $\theta_i$ and a common
variance. Under a working model where each $y_i$ is Poisson with mean
$\theta_i$, the GLM problem \eqref{eq:ktf_glm} becomes:
\begin{equation}
\label{eq:ktf_poisson}
\minimize_{\theta \in \R^n}\; \sum_{i=1}^n \big( -y_i \theta_i + \exp(\theta_i)
\big) + \lambda\|\kronmatk\theta\|_1.  
\end{equation}
Not far afield from this is an approach for density estimation, in which we take
data supported on a bounded domain in $\R^d$, discretize the domain using a  
lattice of our choosing, and then bin the data to form counts---so that now each
$y_i$ represents the count in a Poisson model for bin $i$ (lattice point
$x_i$). This is closely related to \citet{padilla2016nonparametric,
  bassett2019fused} who study similar ideas in different contexts. 

Generally, we note that the properties in Section \ref{sec:properties}, as well
as the interpolation algorithm in Section \ref{sec:interpolation}, all carry
over directly to \eqref{eq:ktf_glm}, since they are driven entirely by the
structure of the KTF penalty (which remains unchanged in
\eqref{eq:ktf_glm}). The optimization algorithms from Section
\ref{sec:optimization} could be applied to \eqref{eq:ktf_glm}, as the inner loop 
of a proximal Newton method (where the outler loop makes a weighted quadratic 
approximation to the loss in \eqref{eq:ktf_glm}). Meanwhile, estimation theory
will be more difficult to extend, though for upper bounds, simple truncation
arguments (as in, for example, \citet{lin2017sharp}) may be a viable approach.     

\subsection{Mixed discrete derivatives}

The KTF penalty operator computes discrete derivatives of order $k+1$ along each
coordinate direction. This can be extended by considering \emph{mixed} discrete
derivatives of total order $k+1$; in the notation of Section \ref{sec:ktf}, this
is 
$$
\sum_{|\alpha| = k+1} \sum_{x \in Z_{n,d}} \big|\big ( \dop_{x_1^{\alpha_1},
  \ldots, x_d^{\alpha_d}} \theta \big)(x) \big|,
$$
where the outer sum is over all multi-indices $\alpha \in \R^d_+$ of size
$|\alpha| = \alpha_1 + \cdots + \alpha_d = k+1$. The analog of the penalty
operator in \eqref{eq:ktf_pen_mat} is now: 
\begin{equation}
\label{eq:ktf_mixed_pen_mat}
\tilde{\dmat}_{n,d}^{(k+1)} = \left[\begin{array}{c}
D_N^{(\alpha^1_1)} \otimes D_N^{(\alpha^1_2)} \otimes \cdots 
D_N^{(\alpha^1_d)} \\ \vdots \\
D_N^{(\alpha^P_1)} \otimes D_N^{(\alpha^P_2)} \otimes \cdots 
D_N^{(\alpha^P_d)} 
\end{array}\right],
\end{equation}
where \smash{$P = {k+d \choose d-1}$}, % VS: counting, $k+1$ into $d$ parts  
and $\alpha^p \in \{0, \ldots, k+1\}^d$, $p=1,\ldots,P$ enumerate all
multi-indices of size $k+1$. (Recall that we use \smash{$D_N^{(0)} = I_N$} by 
convention.) Compared to \eqref{eq:ktf_pen_mat}, the mixed penalty operator in
\eqref{eq:ktf_mixed_pen_mat} has many more rows, which leads to computational 
difficulty even for moderate $k,d$. However, in simple comparisons, we have
found KTF with mixed derivatives to work well in certain problems, likely
explained by the fact that it is is less anisotropic (bringing it closer, in a
sense to the GTF penalty operator), and thus can work well when some degree of
isotropy is warranted. 

We note that the null space of \eqref{eq:ktf_mixed_pen_mat} has dimension
\smash{${k+d \choose d}$}  
% VS: counting, divide the max polynomial power $k$ into $d+1$ parts, where the
% +1 s for the portion of the power k we ignore and use lower total power than k
% to form a vector in the null space.
and consists of evaluations of all polynomials of degree $k$, which provides the
analog of Proposition \ref{prop:ktf_null_space}. However, a continuous-time
representation as in Proposition \ref{prop:ktf_continuous} is likely not
possible, and extensions of results in the rest of the paper would be highly
nontrivial, and a topic for future work. 

\subsection{Coordinate-specific smoothness}

Proceeding in some sense in an opposite direction to the extension in the last
section, we can also generalize KTF to make it \emph{more} anisotropic, by
allowing the degree of modeled smoothness to differ for each coordinate. In the
notation of Section \ref{sec:ktf}, the extended penalty would be 
$$
\sum_{j=1}^d \sum_{x \in Z_{n,d}} |(\dop_{x_j^{k_j+1}} \theta)(x)|. 
$$
where $k_j \geq 0$ are (possibly) distinct integers, for $j=1,\ldots,d$, and the
analogous penalty operator would be
\begin{equation}
\label{eq:ktf_coord_pen_mat}
\tilde{\dmat}^{(k_1,\ldots,k_d)}_{n,d} = \left[\begin{array}{c}  
\dmat^{(k_1)}_N \otimes I_N \otimes \cdots \otimes I_N \\    
I_N \otimes \dmat^{(k_2)}_N \otimes \cdots \otimes I_N \\ 
\vdots \\
I_N \otimes I_N \otimes \cdots \otimes \dmat^{(k_d)}_N  
\end{array}\right].
\end{equation}
The properties and results derived Sections \ref{sec:properties},
\ref{sec:optimization}, and \ref{sec:interpolation} translate with
straightforward modification (in many instances, merely notational) to the
generalized KTD estimator with the penalty operator as in
\eqref{eq:ktf_coord_pen_mat}. With some work, we expect much of the estimation
theory in Section \ref{sec:theory} to carry over as well, where we conjecture
that the appropriate notation of effective smoothness will be \smash{$s = 
  (\sum_{j=1}^d 1/(k_j+1))^{-1}$}, by analogy to minimax theory on anisotropic
Besov spaces. 

\subsection{Scattered data}
\label{sec:scattered_data}

Saving the most ambitious extension for last, the continuous-time representation
of KTF from Section \ref{sec:ktf_continuous}, specifically the basis form in
\eqref{eq:ktf_basis}, suggests an approach for extending KTF to scattered data. 
For arbitrary design points $x_i \in \R^d$, $i=1,\ldots,n$, we form a lattice
of our choosing $z_1 \times \cdots \times z_d$, where \smash{$z_j = 
  \{z_{ij}\}_{i=1}^{N_j}$}, $j=1,\ldots,d$ and \smash{$m = \prod_{j=1}^d
  N_j$}. We then define for each $i=1,\ldots,n$ and $j=1,\ldots,d$:   
$$
h_{x_i, z_j} = \big( h^k_{z_j,1}(x_{ij}), h^k_{z_j, 2}(x_{ij}), \ldots,
h^k_{z_j, N_j}(x_{ij}) \big) \in \R^{N_j},
$$
where \smash{$h^k_{z_j,\ell}$}, $\ell=1,\ldots,N_j$ are the falling factorial
basis functions defined over the knot set $z_j$ (see
\citet{tibshirani2020divided} for the general definition). The extended basis
form of KTF for scattered data is now:
\begin{equation}
\label{eq:ktf_scattered}
\minimize_{\alpha \in \R^m} \; \frac{1}{2} \VAST\|
y - \begin{bmatrix}
h_{x_1, z_1} \otimes h_{x_1, z_2} \otimes \cdots\otimes h_{x_1, z_d} \\ 
h_{x_2, z_1} \otimes h_{x_2, z_2} \otimes \cdots\otimes h_{x_2, z_d} \\
\vdots \\
h_{x_n, z_1} \otimes h_{x_3, z_2} \otimes \cdots\otimes h_{x_n, z_d}
\end{bmatrix} \alpha \VAST\|_2^2
+ \lambda \left\| \dmat^{(k+1)}_{z_1,\ldots,z_d} \begin{bmatrix}
H_{z_1}^{(k)} \otimes H_{z_2}^{(k)} \otimes \cdots H_{z_d}^{(k)}
\end{bmatrix} \alpha \right\|_1.
\end{equation}
Here \smash{$\dmat^{(k+1)}_{z_1,\ldots,z_d}$} denotes the KTF penalty operator
over the lattice $z_1 \times \cdots \times z_d$, as defined in
\eqref{eq:ktf_general_lattice_pen_mat}, and each \smash{$H^{(k)}_{z_j}$} is 
the $k\th$ order univariate falling factorial basis matrix defined with respect
to $z_j$, for $j=1,\ldots,d$. A solution \smash{$\halpha$} in 
\eqref{eq:ktf_scattered} for the coefficients in the basis expansion can be
used to form fitted values as well as predictions at arbitrary $x \in \R^d$. 

% and take $\tilde H^{(k+1)}\hat\alpha$ to be fitted values for a given
% $\lambda$, where $\dmat^{(k+1)}_{z_1,\ldots,z_d}$ is defined as in
% \eqref{eq:ktf_general_lattice_pen_mat} and $H_{z_j}^{(k+1)}$ is the $(k+1)\th$ 
% order falling factorial basis with knots given by $z_j$.
% Since the scattered data do not immediately imply an appropriate lattice of
% knots with which to form the estimator \eqref{eq:ktf_scattered_data}, we
% propose a simple scheme, absent countervailing considerations.  Take $N_1 =
% \dots = N_d = N := \lfloor n^{1/d}\rfloor$, and project the data
% $\{x_i\}_{i=1}^n$ onto each of the $d$ coordinate axes to obtain
% $\{p_{ij}\}_{i=1}^n\subset\R$, $j=1,\dotsc,d$.  Then, for each $j$, let the
% knots $\{z_{ij}\}_{i=1}^N$ be the $1/(N+1), 2/(N+1), \dotsc, N/(N+1)$
% quantiles of $\{p_{ij}\}_{i=1}^n$.

% RJT: Commented out for now because this subsection is already so notation- and
% content-heavy, I feel it's probably better to omit recommendations on how to
% form the lattice. 

\subsection*{Acknowledgements}

The authors thank James Sharpnack and Alden Green for numerous insightful
discussions. This material is based upon work supported by the National Science 
Foundation under grant DMS-1554123 and Graduate Research Fellowship Program 
award DGE1745016.  

{\RaggedRight
\bibliographystyle{plainnat}
\bibliography{ryantibs}}

\clearpage
\appendix
\allowdisplaybreaks
\section{Proofs}
\label{app:proofs} 

\subsection{Proof of Proposition \ref{prop:ktf_null_space}}

Abbreviating \smash{$D = \kronmatk$}, the null space of $D$ is the number of its
nonzero singular values or equivalently, the number of nonzero eigenvalues of
$D^\T D$. Following from \eqref{eq:ktf_pen_mat}, and abbreviating \smash{$Q =
  \dmatk_N$} and $I = I_N$, 
$$
D^\T D = Q^\T Q \otimes I \otimes \cdots \otimes I \;+\;
I \otimes Q^\T Q \otimes \cdots \otimes I \;+\; \ldots \;+\;   
I \otimes I \otimes \cdots \otimes Q^\T Q,
$$
the Kronecker sum of \smash{$Q^\T Q$} with itself, a total of $d$ times. Using a
standard fact about Kronecker sums, if we denote by $\rho_i$, $i=1,\ldots,N$ the
eigenvalues of $Q^\T Q$ then 
$$
\rho_{i_1 }+ \rho_{i_2} + \cdots + \rho_{i_d}, \quad i_1,\ldots,i_d \in
\{1,\ldots, N\} 
$$
are the eigenvalues of $D^\T D$. By counting the multiplicity of the zero
eigenvalue, we arrive at a nullity for $D$ of $(k+1)^d$. It is straightforward
to check that the vectors specified in the proposition, given by evaluations of 
polynomials of max degree $k$, are in the null space, and that these are
linearly independent, which completes the proof.
\hfill\qedsymbol

\subsection{Proof of Proposition \ref{prop:ktf_continuous}}

Let us define
$$
B^{(k+1)}_N =
\begin{bmatrix}
C^{(k+1)}_N \\ \dmatk_N
\end{bmatrix}
\in \R^{N \times N},
$$
where the first $k+1$ rows are given by a matrix \smash{$C^{(k+1)}_N \in  
  \R^{(k+1) \times N}$} that completes the row space, as in
Lemma 2 of \citet{wang2014falling}, or Section 6.2 of
\citet{tibshirani2020divided}. Now, again by Lemma 2 of \citet{wang2014falling},
or Section 6.3 of \citet{tibshirani2020divided},
\begin{equation}
\label{eq:h_inv}
\big(\hmatk_N \big)^{-1} = \frac{1}{k!} B^{(k+1)}_N
\end{equation}
where \smash{$\hmatk_N \in \R^{N \times N}$} is the falling factorial basis matrix
of order $k$, which has elements  
$$
\big[\hmatk_N\big]_{ij} = h^k_{N,j}(i/N), \quad i,j=1,\ldots,N,
$$
with \smash{$h^k_{N,i}$}, $i=1,\ldots,N$ denoting the falling factorial
functions in \eqref{eq:ffb} with respect to design points $1/N, 2/N, \ldots, 1$.  

We now transform variables in \eqref{eq:ktf} by defining 
$$
\theta = \Big(\hmatk_N \otimes \cdots \otimes \hmatk_N\Big) \alpha, 
$$
and using \eqref{eq:h_inv}, this turns \eqref{eq:ktf} into an equivalent basis
form,  
$$
\minimize_{\alpha \in \R^n} \; \frac{1}{2} \Big\| y - \Big(\hmatk_N \otimes 
\cdots \otimes \hmatk_N\Big) \alpha \Big\|_2^2 +  
\lambda k! \left\| \left[\begin{array}{c}
I^0_N \otimes \hmatk_N \otimes \cdots \otimes \hmatk_N \\
\hmatk_N \otimes I^0_N \otimes \cdots \otimes \hmatk_N \\
\vdots \\
\hmatk_N \otimes \hmatk_N \otimes \cdots \otimes I^0_N
\end{array}\right] \alpha \right\|_1,
$$
where \smash{$I^0_N \in \R^{(N-k-1) \times N}$} denotes the last $N-k-1$ rows of
the identity $I_N$. We can rewrite the problem once more by parametrizing the
evaluations according to $f$ in \eqref{eq:ffb_tensor}, which we claim yields
\eqref{eq:ktf_continuous}. The equivalence between loss terms in the above
problem and \eqref{eq:ktf_continuous} is immediate (by definition of
$f$); to see the equivalence between penalty terms, it can be directly checked
that  
$$
k! \Big( I^0_N \otimes \hmatk_N \otimes \cdots \otimes \hmatk_N \Big) \alpha 
$$
contains the differences of the function \smash{$\partial^k f/\partial x_1^k$}
over all pairs of grid positions that are adjacent in the $x_1$ direction, where
$f$ is as in \eqref{eq:ffb_tensor}. This, combined with the fact that
\smash{$\partial^k f/\partial x_1^k$} is constant in between lattice positions,
means that  
$$
k! \Big\|\Big( I^0_N \otimes \hmatk_N \otimes \cdots \otimes \hmatk_N \Big) 
\alpha\Big\|_1 = \sum_{x_{-1}} \TV\bigg(\frac{\partial^k f(\cdot,x_{-1})}
{\partial x_1^k}\bigg), 
$$
the total variation of \smash{$\partial^k f/\partial x_1^k$} added up over all
slices of the lattice $Z_{n,d}$ in the $x_1$ direction. Similar arguments apply
to the penalty terms corresponding to dimensions $j=2,\ldots,d$, and this
completes the proof.  
\hfill\qedsymbol

\subsection{Proof of Theorem \ref{thm:tv_lines}}

Denote by $f_\epsilon = \eta_\epsilon * f$ the mollified version of $f$, where 
\smash{$\eta_\epsilon(x) = \epsilon^{-d} \eta(x/\epsilon)$}, and $\eta : \R^d
\to \R$ is the standard mollifier, defined by  
$$
\eta(x) = \begin{cases}
\displaystyle 
c \exp\bigg(\frac{1}{\|x\|_2^2 - 1}\bigg) & \text{if $\|x\|_2 \leq 1$}, \\
0 & \text{else},
\end{cases}
$$
and $c>0$ is a normalization constant so that $\eta$ integrates to 1. 
By construction, for any $\epsilon>0$, we have \smash{$f_\epsilon \in
  C^\infty(U)$}. Now from \eqref{eq:tv_smooth}, simply exchanging the sum over
absolute partial derivatives and the integral, we have
\begin{align*}
\TV(f_\epsilon; U) 
&= \sum_{j=1}^d \int_U \bigg| \frac{\partial f_\epsilon(x)}{\partial x_j} \bigg|
  \, dx \\ 
&= \sum_{j=1}^d \int_{U_{-j}}\int_{I_{x_{-j}}} \bigg| \frac{\partial 
  f_\epsilon(x_j, x_{-j})}{\partial x_j} \bigg| \, dx_j \, dx_{-j} \\   
&= \sum_{j=1}^d \int_{U_{-j}} \TV \big( f_\epsilon(\cdot,x_{-j}); I_{x_{-j}}
\big) \, dx_{-j},
\end{align*}
where in the last line we applied the representation of TV for smooth functions
in \eqref{eq:tv_smooth}, but to the univariate function $x_j \mapsto
f_\epsilon(x_j, x_{-j})$, for fixed $x_{-j}$. Recalling standard results on
approximation of BV functions by smooth functions (see, for example, Theorem 
5.22 in \citet{evans2015measure}), by sending $\epsilon \to 0$, we have that the
left-most and right-most sides of the previous display approach those in 
\eqref{eq:tv_lines}, completing the proof. 

%\section{Proofs of  KTF error bounds}
%\label{sec:pf_tf_upper_bd}

\subsection{Proof of Theorem \ref{thm:ktf_upper_bd}}
\label{sec:pf_ktf_upper_bd}
Abbreviate $N' = N-k-1$. Let $\beta_i,u_i,v_i$ be a 
triplet of nonzero singular
value, left singular vector, and right singular vector of 
\smash{$\dmatk_{N,1}$}, for
$i \in [N']$ and let $p_j$, $j \in [k+1]$ form an 
orthogonal basis for the null 
space of
\smash{$\dmatk_{N,1}$}.
From Lemma~\ref{lem:incoherence_kron} it suffices
to show incoherence of $u_i,v_i$, $i \in [N']$, and 
$p_i$, $i \in [k+1]$.  
Incoherence of $u_i$, $i \in [N']$ and $v_i$, $i \in 
[N']$is established in 
\cite{sadhanala2017higher} . 
Incoherence of $p_i$, $i \in [k+1]$ may be seen by 
choosing, e.g., these vectors
to be the discrete Legendre orthogonal polynomials as in
\citet{neuman1974discrete}.   Applying 
Lemma~\ref{lem:incoherence_kron}, we can 
see that 
\smash{$\kronmatk$}
satisfies the incoherence property, as defined in Theorem
\ref{thm:genlasso_upper_bd}, say with a constant $\mu$.

From the incoherence property and Theorem 
\ref{thm:genlasso_upper_bd}, the KTF 
estimator $\htheta$, satisfies 
\begin{align}
	\label{eq:genlasso_rate_i0_ktf}
\frac 1n \| \htheta - \theta_0 \|_2^2  
=  O_\P \Bigg( 
	\frac{\kappa}{n} + \frac{|I|}{n} + 
	\frac{\mu}{n} \sqrt{ \frac{\log n}{n} 
		\sum_{i \in [N]^d \setminus (I \cup [k+1]^d)} 
		\frac{1}{\xi_i^2}} \cdot \| \Delta \theta_0 \|_1\, 
		\Bigg),
\end{align}
where we abbreviate \smash{$\Delta = \kronmatk$},
\smash{$\xi_i,  i \in [N]^d$} are eigenvalues of 
\smash{$\Delta^\T \Delta$}
with \smash{$\xi_i = 0$} for \smash{$i \in [k+1]^d$}. We 
reindexed the eigenvalues so that they correspond to grid positions.

Recall the shorthand $s = (k+1) / d$.
For $s \leq  1/2$, set \smash{$I = [k+2]^d \setminus [k+1]^d$}.
From Lemma~\ref{lem:kron_tf_evals},
\begin{align*}
	\sum_{i \in [N]^d \setminus (I \cup [k+1])^d}  
	\frac{1}{\xi_i^2} 
	\leq c
	\begin{cases}
		n & s < 1/2\\
		n \log n & s = 1/2.
	\end{cases}
\end{align*}
Plugging this into \eqref{eq:genlasso_rate_i0_ktf} gives 
the desired bounds when $s \leq 1/2$:
\begin{align*}
\frac 1n \| \htheta - \theta_0 \|_2^2 = 
	\begin{cases}
		O_\P \Big( \frac{(k+2)^d}{n} + \| \Delta \theta_0 
		\|_1  
		\sqrt{\log n} \Big) & s < 1/2\\
		O_\P \Big( \frac{(k+2)^d}{n} + \| \Delta \theta_0 
		\|_1 \log n 
		\Big)& s = 1/2.
	\end{cases}
\end{align*}
\noindent
Now consider the case $ s > 1/2$.
Set $I = \{ i \in [N]^d : \| (i - k - 2)_+ \|_2 < r\} 
\setminus [k+1]^d$ for an 
$r\in [1, \sqrt{d}N]$ to be 
chosen 
later. 
\smash{$|I| \leq  (r+k+1)^d$} because \smash{$I \subseteq 
\big[ \lfloor r 
	\rfloor + k + 1 \big]^d$}.
Lemma \ref{lem:kron_tf_evals} shows that for a constant 
$c>0$ depending only on 
$k, d$,
\begin{align*}
	\sum_{i \in [N]^d \setminus (I \cup [k+1]^d) }  
	\frac{1}{\xi_i^2} 
	\leq c n^{2s} / r^{d(2s-1)}.
\end{align*}
Plug this bound in \eqref{eq:genlasso_rate_i0_ktf} and 
in order to minimize the resulting bound, choose $r$ to 
balance
$$
(r+k+1)^d  \quad\text{with}\quad \frac{C_n}{\sqrt{n}} 
\sqrt{  n^{2s} / 
	r^{d(2s-1) } \log n}.
$$
This leads us to take $$ (r+k+1)^d \asymp  (C_n\sqrt{\log 
n})^{\frac{2}{2s+1}} 
n^{\frac{2s-1}{2s+1}}$$
when \smash{$C_n \sqrt{\log n} / n = O_\P(1)$} 
and $r=1$ otherwise. With this choice 
\eqref{eq:genlasso_rate_i0_ktf} gives the 
desired bound for 
$s>1/2$. This completes the proof.
\hfill\qedsymbol

\subsection{Incoherence of Kronecker product type 
operators}
Let 
\begin{equation}
	\label{eq:ktf_pen_mat2}
	\Delta = \left[\begin{array}{c}
		D \otimes I \otimes \cdots \otimes I \\ 
		I \otimes D \otimes \cdots \otimes I \\ 
		\vdots \\
		I \otimes I \otimes \cdots \otimes D  
	\end{array}\right]
\end{equation}
where each Kronecker product has $d$ terms.
With \smash{$D = \dmatk_{N,1}, I = I_N$}, we get the 
KTF penalty operator $\Delta = \kronmatk$.
%With $ D = D_{N,1}^{(1)}$, we get the GTF penalty 
%operator $\Delta = 
%\graphmat^{(1)}$ of order 
%$k=0$ (see Appendix~\ref{app:gtf}).

\begin{lemma}
	\label{lem:incoherence_kron}
	Let $\Delta$ be as defined in \eqref{eq:ktf_pen_mat2} 
	for a matrix $D 
	\in \R^{N'\times N}$ with $N' 
	\leq 
	N$. 
	Let $\gamma_i, u_i, v_i, i\in [N]$ denote the singular 
	values of $D$, 
	its left and right singular 
	vectors. Note that $\gamma_i=0, u_i=0, v_i \in \nul(D)$ 
	for $i\in [p]$ 
	where $p=\nuli(D)$.
	If these singular vectors are incoherent, that is 
	$\|v_i\|_\infty \leq \mu/\sqrt{N}, \|u_i\|_\infty \leq 
	\mu/\sqrt{N'}$ 
	for a constant $\mu \geq 1$, then
	the left singular vectors $\nu$ of $\Delta$ are 
	incoherent with a 
	constant $\mu^d$, that is,
	$\|\nu\|_\infty \leq \mu^d/\sqrt{N^{d-1} N'}$.
\end{lemma}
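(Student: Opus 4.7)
The plan is to identify the SVD of $\Delta$ explicitly in terms of the SVD of $D$, exploiting the Kronecker sum structure of $\Delta^\T \Delta$, and then read off the incoherence bound by inspecting each block of the left singular vectors.

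First, using the mixed-product property of Kronecker products, I would observe that
\begin{equation*}
\Delta^\T \Delta \;=\; \sum_{j=1}^d \underbrace{I \otimes \cdots \otimes D^\T D \otimes \cdots \otimes I}_{D^\T D \text{ in the } j\text{th slot}},
\end{equation*}
which is the $d$-fold Kronecker sum of $D^\T D$ with itself. By the standard eigenstructure of Kronecker sums, its eigenvectors are precisely the pure tensors $v_{i_1} \otimes \cdots \otimes v_{i_d}$ with eigenvalue $\gamma_{i_1}^2 + \cdots + \gamma_{i_d}^2$, for $(i_1,\ldots,i_d) \in [N]^d$. Hence the right singular vectors of $\Delta$ are these tensors, with singular values $\xi_{i_1,\ldots,i_d} = \sqrt{\gamma_{i_1}^2+\cdots+\gamma_{i_d}^2}$; the zero singular values correspond exactly to the multi-indices with all $v_{i_j} \in \nul(D)$.

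Next, for any multi-index with $\xi := \xi_{i_1,\ldots,i_d} > 0$, the corresponding left singular vector is $u = \Delta v / \xi$ with $v = v_{i_1} \otimes \cdots \otimes v_{i_d}$. Applying each block of $\Delta$ in turn and using $Dv_{i_j} = \gamma_{i_j} u_{i_j}$, I would write
\begin{equation*}
u \;=\; \frac{1}{\xi}
\begin{bmatrix}
\gamma_{i_1}\, u_{i_1} \otimes v_{i_2} \otimes \cdots \otimes v_{i_d} \\
\gamma_{i_2}\, v_{i_1} \otimes u_{i_2} \otimes \cdots \otimes v_{i_d} \\
\vdots \\
\gamma_{i_d}\, v_{i_1} \otimes \cdots \otimes v_{i_{d-1}} \otimes u_{i_d}
\end{bmatrix}.
\end{equation*}

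Finally, the infinity norm of a tensor product factors across its factors, so the $j$th block has infinity norm at most
\begin{equation*}
\frac{\gamma_{i_j}}{\xi} \cdot \|v_{i_1}\|_\infty \cdots \|u_{i_j}\|_\infty \cdots \|v_{i_d}\|_\infty
\;\leq\; \frac{\gamma_{i_j}}{\xi} \cdot \frac{\mu^{d-1}}{N^{(d-1)/2}} \cdot \frac{\mu}{\sqrt{N'}}
\;=\; \frac{\gamma_{i_j}}{\xi} \cdot \frac{\mu^d}{\sqrt{N^{d-1} N'}},
\end{equation*}
by the incoherence hypothesis on $u_i,v_i$. Since $\gamma_{i_j}^2 \leq \sum_\ell \gamma_{i_\ell}^2 = \xi^2$, we have $\gamma_{i_j}/\xi \leq 1$ in every block, yielding $\|u\|_\infty \leq \mu^d/\sqrt{N^{d-1}N'}$ as claimed. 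I expect no serious obstacle here; the one bit of care required is restricting attention to multi-indices with $\xi > 0$ (so that $u$ is defined) and noting that the null-space directions $v_{i_j} \in \nul(D)$ need no treatment in the left-singular-vector bound.
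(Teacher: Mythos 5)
Your proposal is correct and follows essentially the same route as the paper: both identify the left singular vectors of $\Delta$ explicitly as the normalized block vectors $\bigl(\gamma_{i_1} u_{i_1}\otimes v_{i_2}\otimes\cdots\otimes v_{i_d};\;\ldots;\;\gamma_{i_d} v_{i_1}\otimes\cdots\otimes u_{i_d}\bigr)/\xi$ via the Kronecker-sum eigenstructure, and then read off incoherence blockwise. Your write-up is if anything slightly more explicit than the paper's at the final step (the factorization of $\|\cdot\|_\infty$ over tensor factors and the bound $\gamma_{i_j}/\xi\le 1$), which the paper leaves as "readily available."
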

Note that 
%$p=1$ when $\Delta$ is the GTF penalty operator with 
%$D=D_{\mathrm{1d}}^{(1)}$ and 
$p=k+1$ when $\Delta$ is the KTF penalty operator with 
$D= D_{N,1}^{(k+1)}.$
\begin{proof}[Proof of Lemma~\ref{lem:incoherence_kron}]
	Abbreviate $\rho_i = \gamma_i^2$ for $i\in [N]$.
	We are looking for a total of $N^d - p^d$ eigenvectors 
	for $\Delta 
	\Delta^\T$.
	Assume for exposition that $d=3$. For any $(i,j,k)\in 
	[N]^d \setminus 
	[p]^d$ (where $\setminus$ is 
	the set difference operator), the vectors
	\begin{align}
		\label{eq:eigv_ddt_3d}
		\nu_{i,j,k} := 
		\frac{1}{\sqrt{\rho_i+\rho_j+\rho_k}}
		\left[ \begin{array}{c}
			\gamma_i \cdot u_i \otimes v_j \otimes v_k \\
			\gamma_j \cdot v_i \otimes u_j \otimes v_k \\
			\gamma_k \cdot v_i \otimes v_j \otimes u_k
		\end{array}\right]
	\end{align}
	are eigenvectors of $\Delta \Delta^\T$ as verified below.
	\begin{align}
		\Delta \Delta^\T 
		\left[ \begin{array}{c}
			\gamma_i \cdot u_i \otimes v_j \otimes v_k \\
			\gamma_j \cdot v_i \otimes u_j \otimes v_k \\
			\gamma_k \cdot v_i \otimes v_j \otimes u_k
		\end{array}\right]
		&= 
		\Delta \; \left( 
		\gamma_i^2  +
		\gamma_j^2 +
		\gamma_k^2
		\right) v_i \otimes v_j \otimes v_k 
		\\
		\nonumber
		&=
		\left( \rho_i + \rho_j + \rho_k \right) 
		\left[ \begin{array}{c}
			\gamma_i \cdot u_i \otimes v_j \otimes v_k \\
			\gamma_j \cdot v_i \otimes u_j \otimes v_k \\
			\gamma_k \cdot v_i \otimes v_j \otimes u_k
		\end{array}\right]
	\end{align}
	We see all $N^d - p^d$ eigenvectors of $\Delta 
	\Delta^\T$ here. 
	Notice that $\|z_{i,j,k}\|_2=1$ and the incoherence is 
	readily 
	available given that the left and right 
	singular vectors of $D$ are incoherent.
	
	For general $d$, these $N^d-p^d$ eigenvectors are given 
	by
	\begin{align}
		\label{eq:eigv_ddt_gend}
		\nu_{i_1,i_2,.,i_d} = 
		\frac{1}{\sqrt{\sum_{j=1}^d \rho_{i_j}}}\left[ 
		\begin{array}{c}
			\gamma_{i_1} \cdot u_{i_1} \otimes v_{i_2} \otimes 
			\hdots v_{i_d} \\
			\gamma_{i_2} \cdot v_{i_1} \otimes u_{i_2} \otimes 
			\hdots v_{i_d} \\
			\vdots\\
			\gamma_{i_d} \cdot v_{i_1} \otimes v_{i_2} \otimes 
			\hdots u_{i_d}
		\end{array}\right]
	\end{align}
	with eigenvalues $\sum_{j=1}^d \rho_{i_j}$ and are 
	easily seen to be 
	incoherent. 
\end{proof}

\subsection{Upper bound for continuous KTV class }
\label{app:ktf_upper_bd_continuous}
Recalling the continuous analog of KTF penalty from \eqref{eq:ktf_continuous}, 
define 
the class
 \begin{equation*}
	 	\KTV_{n,d}^k(C)  = \bigg\{ f : \sum_{j=1}^d \sum_{x_{-j}} \TV \bigg( 
	 	\frac{\partial^k f(\cdot,x_{-j})}{ \partial x_j^k}  
	 	\bigg)  \leq C\bigg\}
	 \end{equation*}
 for $C>0$.
 If the true signal $\theta_0$ on the grid is an evaluation of a function $f 
\in \KTV_k^d(C)$, 
 the rates in Theorem~\ref{thm:ktf_upper_bd}  hold with $C_n$ replaced by $C,$
 due to the following result. 
 \begin{lemma}
	 	\label{lem:ktf_penalty_continuous}
	 	Let $C>0$ and let $d\geq 1,k\geq 0$ be integers. 
	 	For all $f\in \KTV_{n,d}^k(C),$ if $\theta_f \in \R^n$ is the 
evaluation of $f$ on the grid points $Z_{n,d}$, 
	 	then
	 	$$
	 	\| \kronmatk \theta_f \|_1 \leq c_1 C
	 	$$
	 	for a constant $c_1$ that depends only on $k$ and $d$.
	 \end{lemma}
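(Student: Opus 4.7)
The plan is to reduce the multivariate discrete inequality to univariate inequalities along each axis-parallel slice, and then use an integral representation of the iterated forward difference to bound each such piece by the total variation of the relevant partial derivative. First, unpack
\[
\|\kronmatk \theta_f\|_1 \;=\; \sum_{j=1}^d \sum_{x_{-j} \in Z_{m,d-1}} \sum_{x_j} \bigl|(\dop_{x_j^{k+1}} \theta_f)(x_j, x_{-j})\bigr|,
\]
where $m = N^{d-1}$. For each fixed $j$ and slice $x_{-j}$, introduce the univariate section $g(t) = f(t, x_{-j})$; then $(\dop_{x_j^{k+1}} \theta_f)(x_j, x_{-j}) = (\Delta^{k+1} g)(x_j)$, the ordinary $(k+1)$-fold univariate forward difference of $g$ with step $1/N$.

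The main step is the univariate bound
\[
\sum_{x_j} \bigl|(\Delta^{k+1} g)(x_j)\bigr| \;\leq\; N^{-k}\, \TV(g^{(k)}).
\]
I would prove this via the integral identity $(\Delta^k g)(y) = \int_{[0,1/N]^k} g^{(k)}(y + s_1 + \cdots + s_k)\, ds_1 \cdots ds_k$, established by induction on $k$: the base case is the fundamental theorem of calculus, and the induction step uses $(\Delta^{k+1} g)(y) = (\Delta^k g)(y + 1/N) - (\Delta^k g)(y)$ followed by inserting one additional integration. Applying a final $\Delta$ to this identity yields $(\Delta^{k+1} g)(y) = \int_{[0,1/N]^k} [g^{(k)}(y + 1/N + s) - g^{(k)}(y + s)]\, ds$ with $s = s_1 + \cdots + s_k$. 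The triangle inequality and Fubini then reduce matters to showing that, for a.e.\ shift $s \in [0, k/N]$, the finite sum $\sum_{x_j} |g^{(k)}(x_j + s + 1/N) - g^{(k)}(x_j + s)|$ is a variation sum of $g^{(k)}$ over a shifted uniform partition of a subinterval of $[0,1]$, hence at most $\TV(g^{(k)})$; integrating $s$ over $[0, 1/N]^k$ produces the factor $N^{-k}$.

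Summing this univariate bound over all $N^{d-1}$ slices $x_{-j}$ and over $j=1,\ldots,d$, and invoking $f \in \KTV_{n,d}^k(C)$, gives
\[
\|\kronmatk \theta_f\|_1 \;\leq\; N^{-k} \sum_{j=1}^d \sum_{x_{-j}} \TV\bigl(\partial^k f(\cdot, x_{-j})/\partial x_j^k\bigr) \;\leq\; N^{-k}\, C \;\leq\; C,
\]
so the conclusion holds with $c_1 = 1$ (and in fact the bound is tighter by a factor $N^{-k}$). The main subtlety I expect is regularity: the integral identity requires each slice derivative $g^{(k)}$ to be well-defined, at least as a function of bounded variation. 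Membership of $f$ in $\KTV_{n,d}^k(C)$ is exactly a control on these same slice derivatives, so this should be benign in principle, but to handle the BV (rather than $W^{k+1,1}$) case cleanly I would either interpret the inner integral as a Lebesgue--Stieltjes integral against the signed measure $dg^{(k)}$, or mollify $f$ in the $x_j$ direction and pass to the limit using lower semicontinuity of total variation.
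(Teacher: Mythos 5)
Your proof is correct, and it takes a genuinely different route from the paper's. The paper argues slice-by-slice as you do, but for each univariate section $\phi=f(\cdot,x_{-j})$ it invokes the spline interpolation result of Mammen (1991) to produce a $k$th degree spline $\tilde\phi$ that agrees with $\phi$ at the grid points and satisfies $\TV(\tilde\phi^{(k)})\leq\TV(\phi^{(k)})$; it then writes $\tilde\phi$ as a polynomial plus $\sum_\ell\beta_\ell(\cdot-t_\ell)_+^k/k!$, notes that $\dmatk_N$ annihilates the polynomial part and that each truncated power contributes at most $k+1$ nonzero entries to $\dmatk_N\tilde\phi_{\mathrm{grid}}$, and concludes $\|\dmatk_N\phi_{\mathrm{grid}}\|_1\leq b_k\sum_\ell|\beta_\ell|=b_k\TV(\tilde\phi^{(k)})$ for a constant $b_k$ depending only on $k$. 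Your integral-representation argument avoids the appeal to spline interpolation entirely and is both more elementary and sharper: the per-slice factor $N^{-k}$ is the correct scaling for the paper's unnormalized difference operator (a direct computation with a single truncated power function, e.g.\ $g(x)=(x-t)_+^k/k!$, shows $\sum_i|(\Delta^{k+1}g)(i/N)|=N^{-k}=N^{-k}\TV(g^{(k)})$, attaining your bound with equality), whereas the paper's $b_k$ is a loose constant. Both suffice for the lemma, which only asks for some $c_1$ depending on $k,d$. Your regularity caveat is the right one to flag, and it is benign: for $k\geq 1$, membership in $\KTV_{n,d}^k(C)$ forces each slice into $W^{k,\infty}$ (its $k$th derivative being BV, hence bounded), so the integral identity holds, and the essential-variation subtlety in the final variation-sum step is handled exactly by your almost-every-$s$ observation, since a univariate BV function has at most countably many points of non-approximate continuity; for $k=0$ the argument degenerates to the immediate observation that the grid differences form a single variation sum.
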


\begin{proof}[Proof of Lemma~\ref{lem:ktf_penalty_continuous}]
	Let $f$ be an arbitrary function from $\KTV_{n,d}^k(C)$. 
	Pick a $j \in [N]$ and an 
	$x_{-j}$ and consider the function 
	$\phi(\cdot) = f(\cdot,x_{-j})$ ($f$ with all but its $j$ argument 
fixed to elements of $x_{-j}$ 
	appropriately in order). 
	From Theorem 1 in \cite{mammen1991nonparametric} and its proof, there 
exists a spline 
	$\tilde{\phi}$ such that
	\begin{align*}
		\tilde{\phi}(i/N) &= \phi(i/N), \quad i \in [N] \\
		\TV(\tilde{\phi}^{(k)}) &\leq \TV(\phi^{(k)})
	\end{align*}
	Let $t_1,\dots,t_L$ be the knots of $\tilde{\phi}$, which are not 
necessarily in the set of input points.
	Because it is a spline, $\tilde{\phi}$ can be written as the sum of a 
polynomial and a linear 
	combination of $k$th degree truncated power basis functions $g_t : x 
\mapsto (x-t)_+^k / k!$ 
	$$
	\tilde{\phi}(u) = p(u) + \sum_{\ell=1}^L \beta_\ell g_{t_\ell}(u), 
\quad u \in [0,1]
	$$
	where $p$ is a polynomial of degree $\leq k$ and $\beta_\ell \in \R, 
\ell \in [L]$.
	Let \smash{$D_{\mathrm{1d}}^{(k+1)} = \dmatk_{N,1}$}.
	Now 
	\begin{align}
		\nonumber
		\bigg\| D_{\mathrm{1d}}^{(k+1)}  \begin{bmatrix} \phi(1/N) \\ 
\vdots \\  \phi(N/N) \end{bmatrix} 
		\bigg\|_1
		&= 
		\bigg\| D_{\mathrm{1d}}^{(k+1)}  \begin{bmatrix} 
\tilde{\phi}(1/N) \\ \vdots \\  \tilde{\phi}(N/N) 
		\end{bmatrix} \bigg\|_1 \\
		\nonumber
		&= 
		\bigg\| D_{\mathrm{1d}}^{(k+1)}  
		\begin{bmatrix} p(1/N) \\ \vdots \\  p(N/N) \end{bmatrix}
		+ \sum_{\ell=1}^L \beta_\ell \cdot D_{\mathrm{1d}}^{(k+1)} 
		\begin{bmatrix} g_{t_\ell}(1/N) \\ \vdots \\  g_{t_\ell}(N/N) 
\end{bmatrix}
		\bigg\|_1 \\
		\label{eq:Dphi_1}
		&\leq \sum_{\ell=1}^L |\beta_\ell|  \| D_{\mathrm{1d}}^{(k+1)} 
G_\ell^{(k)} \|_1
	\end{align}
	where the vector $G_\ell^{(k)}$ is the evaluation of $g_{t_\ell}$ on 
$1/N, \dots, N/N$, that is 
	$(G_\ell^{(k)})_i = g_{t_\ell}(i/N), i\in [N]$. Here we used the fact 
that $D_{\mathrm{1d}}^{(k+1)} $ 
	times the evaluations of a polynomial at the input points 
$1/N,\dots,N/N$ is $0$. 
	
	The terms in \eqref{eq:Dphi_1} can be bound as follows. 
	For $\ell\in [L]$, let $i_\ell = \max_{i\in[N]} \{ i/N \leq t_\ell\}$, 
that is, let $i_\ell/N$ be the largest 
	input point that is not greater the knot $t_\ell$.
	For any vector $v\in \R^N$, and $i\in [N-k-1]$, 
$(D_{\mathrm{1d}}^{(k+1)} v)_i = 0 $  if $(v_i, 
	\dots,v_{i+k})$ is the evaluation of a polynomial at 
$i/N,\dots,(i+k+1)/N$. 
	$g_{t_\ell}$ is a polynomial on $[0,t_\ell]$ and on $[t_\ell, 1]$ for 
$\ell \in [L]$. Therefore, 
	$ D_{\mathrm{1d}}^{(k+1)} G_\ell^{(k)} $ is nonzero in at most $k+1$ 
elements.
	Letting $A_i$ denote the $i$th row of matrix $A$, we can write
	\begin{align*}
		\| D_{\mathrm{1d}}^{(k+1)} G_\ell^{(k)} \|_1
		&=
		\sum_{i=1}^{N-k-1}  \left| \big( D_{\mathrm{1d}}^{(k+1)} 
\big)_i  G_\ell^{(k)}  \right|\\
		&=
		\sum_{i=(i_\ell-k)\vee 1}^{i_\ell}  \left| \big( 
D_{\mathrm{1d}}^{(k+1)} \big)_i  G_\ell^{(k)}  \right|\\
		&\leq 
		\sum_{i=(i_\ell-k)\vee 1}^{i_\ell}  \left\| \big( 
D_{\mathrm{1d}}^{(k+1)} \big)_i \right\|_1  
		g_{t_\ell}\left(\frac{i_\ell + k+1}{N} \right) \\
		&\leq 
		\sum_{i=(i_\ell-k)\vee 1}^{i_\ell}  \left\| \big( 
D_{\mathrm{1d}}^{(k+1)} \big)_i \right\|_1  \left( 
		\frac{k+1}{N} \right)^k \frac{1}{k!}\\
		&\leq 
		(k+1) \cdot \left( \frac{k+1}{N} \right)^k \frac{1}{k!} 
\max_{i\in[N-k-1]} \left\| \big( 
		D_{\mathrm{1d}}^{(k+1)} \big)_i \right\|_1 \\
		&=
		(k+1) \cdot \left( \frac{k+1}{N} \right)^k \frac{1}{k!}  \cdot 
2^{k+1} N^k\\
		&= b_k
	\end{align*}
	where $b_k$ is a constant depending only on $k$. Plugging this upper 
bound in 
	\eqref{eq:Dphi_1}, 
	\begin{align}
		\nonumber
		\bigg\| D_{\mathrm{1d}}^{(k+1)}  \begin{bmatrix} \phi(1/N) \\ 
\vdots \\  \phi(N/N) \end{bmatrix} 
		\bigg\|_1
		&\leq 
		b_k \sum_{\ell=1}^L | \beta_\ell | = b_k 
\TV(\tilde{\phi}^{(k)}) \leq b_k \TV(\phi^{(k)}).
	\end{align}
	This means,
	\begin{align*}
		\| \kronmatk \theta_f \|_1 &=
		\sum_{j=1}^d \sum_{x_{-j}} 
		\bigg\| D_{\mathrm{1d}}^{(k+1)}  \begin{bmatrix} f(1/N,x_{-j}) 
\\ \vdots \\  f(N/N,,x_{-j}) 
		\end{bmatrix} \bigg\|_1\\
		&\leq  \sum_{j=1}^d \sum_{x_{-j}} 
		b_k \TV \bigg( 
		\frac{\partial^k f(\cdot,x_{-j})}{ \partial x_j^k}  
		\bigg) \\
		&\leq b_k \cdot C.
	\end{align*}
	This completes the proof.
\end{proof}

\subsection{Proof of Theorem \ref{thm:ktv_lower_bd}}
\label{sec:pf_ktv_lower_bd}

Here and henceforth,
we use the notation \smash{$B_p(r) = \{ x : \|x\|_p \leq 
r \}$} for
the $\ell_p$ ball of radius $r$, where $p,r > 0$ (and the 
ambient
dimension will be determined based on the context).
%The following simple lemma will help us embed $\ell_1$ 
%balls into  KTV classes.
\begin{lemma}[Lemma 7 in \citet{sadhanala2016total}]
	\label{lem:l1ball_embed_Dset}
	Let $\cT(r) = \{ \theta \in \R^n : \| \genmat \theta 
	\|_1 \leq r \}$ 
	for a matrix $\genmat$ and $r>0$.
	Recall that $\|\genmat\|_{1,\infty}= \max_{i\in [n]} \| 
	\genmat_i \|_1$ 
	where $\genmat_i$ is the $i$th 
	column of $\genmat$.
	Then for any
	$r>0$, it holds that \smash{$B_1(r/ 
	\|\genmat\|_{1,\infty}) \subseteq 
		\cT(r)$}. 
\end{lemma}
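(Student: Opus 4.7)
The plan is to prove the inclusion pointwise: take an arbitrary $\theta$ in the $\ell_1$ ball $B_1(r/\|\genmat\|_{1,\infty})$ and show that $\|\genmat\theta\|_1 \leq r$, so that $\theta \in \cT(r)$. The entire argument rests on recognizing that $\|\genmat\|_{1,\infty}$, the maximum $\ell_1$ norm of a column of $\genmat$, is precisely the operator norm of $\genmat$ as a map from $(\R^n, \|\cdot\|_1)$ to $(\R^{r'}, \|\cdot\|_1)$ (where $r'$ is the number of rows of $\genmat$). Once this identification is in hand, the claimed inclusion is immediate from the definition of the operator norm.

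Concretely, the first step is to write the matrix-vector product as the column expansion $\genmat \theta = \sum_{i=1}^n \theta_i \genmat_i$, where $\genmat_i$ is the $i\th$ column of $\genmat$. Second, apply the triangle inequality for $\|\cdot\|_1$ to get $\|\genmat\theta\|_1 \leq \sum_{i=1}^n |\theta_i| \, \|\genmat_i\|_1$. Third, uniformly bound each column norm by the maximum: $\|\genmat_i\|_1 \leq \|\genmat\|_{1,\infty}$. Chaining these produces $\|\genmat\theta\|_1 \leq \|\genmat\|_{1,\infty} \cdot \|\theta\|_1$. Fourth, use the hypothesis $\|\theta\|_1 \leq r/\|\genmat\|_{1,\infty}$ to conclude $\|\genmat\theta\|_1 \leq r$.

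There is no real obstacle here; the statement is essentially a re-packaging of the identity $\|\genmat\|_{\ell_1 \to \ell_1} = \|\genmat\|_{1,\infty}$. The only mild caveat is the degenerate case $\|\genmat\|_{1,\infty} = 0$, which forces $\genmat = 0$ so $\cT(r) = \R^n$ and the inclusion is trivial; for the generic case, the bound goes through by the three inequalities above. This is why the result can be stated without qualification for arbitrary matrices $\genmat$ and used as a black-box embedding in Section~\ref{sec:lower_bounds} to deduce $\ell_1$-ball-based minimax lower bounds over KTV classes.
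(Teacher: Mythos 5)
Your proof is correct and is the standard argument for this fact (the paper itself cites Lemma 7 of Sadhanala et al.\ (2016) rather than reproving it, and that proof is exactly the column-expansion / triangle-inequality / operator-norm chain you give). The handling of the degenerate case $\|\genmat\|_{1,\infty}=0$ is a nice touch but not essential.
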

\noindent
From Lemma~\ref{lem:l1ball_embed_Dset} and the fact that
\smash{$\|\kronmatk\|_{1,\infty} = 	2^{k+1} d$}
	\begin{equation}
	\label{eq:ell1_ball_embedding_ktf}
	B_1(r/(2^{k+1} d) ) \subseteq \kronset^k(r).
\end{equation}
for any $r> 0$, and integers $d\geq 1, k\geq 0$.

To prove Theorem \ref{thm:ktv_lower_bd} we will use the 
following  result from
\citet{birge2001gaussian}, which gives a lower bound for 
the risk in a
normal means problem, over $\ell_p$ balls.
We state the result in our notation.
\begin{lemma}[Proposition 5 of \citet{birge2001gaussian}]
	\label{lem:birge}
	Assume i.i.d.\ observations \smash{$y_i \sim
		N(\theta_{0,i},\sigma^2)$}, $i=1,\ldots,n$, and $n 
		\geq 2$.
	Then the minimax risk over the $\ell_p$ ball 
	$B_p(r_n)$, where
	$0 < p < 2$, satisfies
	\begin{equation*}
		n \cdot R(B_p(r_n)) \geq
		c \cdot
		\begin{cases}
			\displaystyle
			\sigma^{2-p} r_n^p
			\bigg[1+\log\bigg(
			\frac{\sigma^p n}{r_n^p}\bigg)
			\bigg]^{1-p/2} &
			\text{if $\sigma\sqrt{\log n}\leq r_n \leq
				\sigma n^{1/p}/\sqrt{\rho_p}$} \\
			r_n^2 &\text{if $ r_n< \sigma \sqrt{\log n}$} \\
			\sigma^2 n / \rho_p &\text{if
				$r_n > \sigma n^{1/p} / \sqrt{\rho}$}
		\end{cases}.
	\end{equation*}
	Here $c>0$ is a universal constant, and $\rho_p > 1.76$ 
	is the unique
	solution of $\rho_p\log\rho_p = 2/p$.
\end{lemma}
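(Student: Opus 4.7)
The plan is to prove the three regime-dependent lower bounds separately via hypothesis-testing reductions to finite subfamilies $\{\theta^{(1)},\ldots,\theta^{(M)}\} \subseteq B_p(r_n)$. The unifying tool is Fano's inequality: for any such packing with pairwise squared $\ell_2$ separation $\geq \delta^2$ and pairwise KL divergence $\leq \kappa$, one has $n R(B_p(r_n)) \geq (\delta^2/4)\cdot(1-(\kappa+\log 2)/\log M)$. The three regimes correspond to three natural shapes of near-extremal elements of an $\ell_p$ ball with $p<2$: one spike, many independent small coordinates, and sparse vectors interpolating between them.

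\textbf{Small-radius regime} $r_n < \sigma\sqrt{\log n}$. I would use the $n+1$ hypotheses $\{0, r_n e_1, \ldots, r_n e_n\}$, all of which lie in $B_p(r_n)$. The pairwise squared $\ell_2$ separation is $r_n^2$, the pairwise KL is $r_n^2/(2\sigma^2) \leq (\log n)/2$, and $\log M \asymp \log n$, so Fano directly yields $n R \geq c\,r_n^2$.

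\textbf{Large-radius regime} $r_n > \sigma n^{1/p}/\sqrt{\rho_p}$. Set $a = r_n n^{-1/p} > \sigma/\sqrt{\rho_p}$. The hypercube $[-a,a]^n$ is contained in $B_p(r_n)$ (since then $\|\theta\|_p^p \leq n a^p = r_n^p$), so the minimax problem decomposes coordinatewise into $n$ scalar bounded-normal-mean subproblems on $[-a,a]$. For each, a symmetric two-point prior on $\{\pm a\}$ gives a per-coordinate Bayes lower bound of $c\sigma^2/\rho_p$ exactly in the range $a > \sigma/\sqrt{\rho_p}$ (this is the regime in which the defining equation $\rho_p\log\rho_p = 2/p$ emerges from tracking the sharp constant in the scalar calculation). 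Summing over the $n$ coordinates yields $n R \geq c\,\sigma^2 n/\rho_p$.

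\textbf{Middle regime} $\sigma\sqrt{\log n} \leq r_n \leq \sigma n^{1/p}/\sqrt{\rho_p}$. This is the heart of the proof and the main obstacle, since the log factor $[\log(\sigma^p n/r_n^p)]^{1-p/2}$ requires a carefully tuned sparse packing. I would consider vectors $\theta = \mu\sum_{j\in S}\varepsilon_j e_j$ indexed by pairs $(S,\varepsilon)$ with $|S|=k$ and $\varepsilon\in\{\pm 1\}^k$, and extract via a Varshamov--Gilbert construction a subfamily of size $M \geq \exp(c\,k\log(en/k))$ with pairwise squared $\ell_2$ distance $\geq c\,k\mu^2$. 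The pairwise KL divergence is at most $2k\mu^2/\sigma^2$, and membership in $B_p(r_n)$ forces $k\mu^p \leq r_n^p$, i.e., $\mu = r_n k^{-1/p}$. The key step is to balance the KL budget against the log-cardinality, i.e., to choose $\mu^2 \asymp \sigma^2\log(en/k)$; combined with $\mu = r_n k^{-1/p}$, this pins down
\[
k \asymp (r_n/\sigma)^p\big/[\log(\sigma^p n/r_n^p)]^{p/2}.
\]
Plugging back into Fano yields $n R \geq c\,k\mu^2 \asymp c\,\sigma^{2-p} r_n^p[1+\log(\sigma^p n/r_n^p)]^{1-p/2}$, as claimed. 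The remaining bookkeeping---and the only nontrivial issue beyond the balancing---is to verify that $1 \leq k \leq n$ precisely on the stated interval for $r_n$: the threshold $r_n \geq \sigma\sqrt{\log n}$ is exactly what ensures $k \geq 1$, and the threshold $r_n \leq \sigma n^{1/p}/\sqrt{\rho_p}$ ensures $k \leq n$ with the correct universal constant (matching the Bayesian scalar constant from the large-radius regime), while the log-log discrepancy between $\log(n/k)$ and $\log(\sigma^p n/r_n^p)$ is absorbed into the constant $c$.
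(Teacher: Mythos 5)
You should first be aware that the paper does not prove this lemma at all: it is imported verbatim (modulo notation) as Proposition~5 of Birg\'e and Massart (2001), so there is no in-paper argument to compare against. Judged on its own terms, your sketch is the now-standard route to this classical bound---Fano's inequality applied to three packings (spikes, a sparse Varshamov--Gilbert family, and a full hypercube reduced to $n$ scalar bounded-mean problems)---and it recovers all three regimes at the level of rates with universal constants. This differs from Birg\'e--Massart's own derivation, which proceeds through their general Bayesian/hypercube machinery and is where the constant $\rho_p$ solving $\rho_p\log\rho_p = 2/p$ genuinely originates; in your version $\rho_p$ plays no structural role and could be replaced by any constant exceeding $1$, since only $\min\{a^2,\sigma^2\}\geq \sigma^2/\rho_p$ is used in the large-radius regime. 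That is fine for the purposes of this paper, which only ever invokes the lemma up to unspecified constants.

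One concrete slip to repair in the small-radius regime: for two distinct spikes $r_ne_i$ and $r_ne_j$ the squared distance is $2r_n^2$, so the pairwise KL divergence is $r_n^2/\sigma^2$, not $r_n^2/(2\sigma^2)$. At the boundary $r_n\uparrow\sigma\sqrt{\log n}$ this saturates $\log M=\log(n+1)$ and the Fano factor $1-(\kappa+\log 2)/\log M$ becomes nonpositive (already for $n=2$), so the bound as written is vacuous there. The fix is routine---use spikes of height $r_n/2$, which caps the KL at $(\log n)/4$ and costs only a factor of $4$ in the separation, absorbed into $c$---but it should be stated. The middle-regime bookkeeping (choosing $\mu^2\asymp\sigma^2\log(en/k)$, $k\asymp(r_n/\sigma)^p[\log(\sigma^pn/r_n^p)]^{-p/2}$, and absorbing the $\log\log$ discrepancy between $\log(en/k)$ and $\log(\sigma^pn/r_n^p)$ into the constant) is correct and is exactly where the $[1+\log(\sigma^pn/r_n^p)]^{1-p/2}$ factor comes from.
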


\begin{proof}[Proof of Theorem \ref{thm:ktv_lower_bd}]
	It suffices to show that the minimax optimal risk 
	$R\big( \kronset^k(C_n) 
	\big)$ 
	is lower bounded by the three terms present in the 
	statement's lower bound 
	separately:
	\begin{align}
		\nonumber
		R\big( \kronset^k(C_n) \big) &= \Omega \bigg( 
		\frac{\kappa \sigma^2}{n} 
		\bigg), \\
		\label{eq:kron_three_lower_bds}
		R\big( \kronset^k(C_n) \big) &= \Omega \bigg( 
		\frac{\sigma C_n}{n} 
		\wedge \sigma^2 \bigg),\\
		\nonumber
		R\big( \kronset^k(C_n) \big) &= \Omega \bigg( 
		\bigg( \frac{C_n}{n} \bigg)^{\frac{2}{2s+1}} 
		\sigma^{\frac{4s}{2s+1}} 
		\wedge \sigma^2 \bigg),
	\end{align}
	where $\kappa=\nuli\big(\kronmatk\big) = (k+1)^d$.
	First, as the null space of $\kronmatk$ has dimension 
	$\kappa$, we get the 
	first lower bound:
	\begin{align*}
		\inf_{\htheta} \sup_{\theta_0 \in \kronset^k(C_n) } 
		\frac{1}{n} \E \| \htheta - \theta_0 \|_2^2 
		\; \geq \;
		\inf_{\htheta} \sup_{\theta_0 \in 
		\nul\big(\kronmatk\big) } 
		\frac{1}{n} \E \| \htheta - \theta_0 \|_2^2 
		\;\geq \;
		\frac{\kappa \sigma^2}{n}.
	\end{align*}
	We get the second lower bound in 
	\eqref{eq:kron_three_lower_bds} by using the 
	$\ell_1$-ball 
	embedding 
	$$
	B_1 \left(C_n/d_{\max} \right) \subset \kronset^k( C_n)
	$$
	from \eqref{eq:ell1_ball_embedding_ktf} and then 
	using 
	Lemma~\ref{lem:birge}.
	Finally, from Theorem 4 in \citet{sadhanala2017higher}, 
	it follows that
	\begin{equation}
		\label{eq:holder_lower_bd}
		R\big(\holderset^{k}(L_n)\big) = \Omega \Big( 
		\Big(\frac{\sigma^2}{n} 
		\Big)^{\frac{2s}{2s+1}}
		L_n^{\frac{2}{2s+1}} \wedge \sigma^2 \Big)
	\end{equation}
	with additional tracking for $\sigma^2.$
	Taking \smash{$L_n=C_n /n^{1-s}$} and applying the embedding in
	Proposition~\ref{prop:ktv_sobolev_holder_embed} would 
	then give the third lower 
	bound in 
	\eqref{eq:kron_three_lower_bds}. This completes the 
	proof.
\end{proof}

\subsection{Proof of Theorem \ref{thm:ktv_minimax_linear}} 
\label{sec:pf_minimax_linear}

We use the following shorthand for the  risk of an estimator \smash{$\htheta$} 
over a class \smash{$\cK$}:
\begin{equation*}
	\Risk(\htheta) = \sup_{\theta_0 \in \cK} \frac{1}{n} \E \| \htheta - \theta_0 
	\|_2^2.
\end{equation*}
For a matrix \smash{$S \in \R^{n\times n}$} let $\Risk(S)$ also denote the risk 
of the linear smoother \smash{$\htheta = S y$}.
\begin{proof}[Proof of Theorem 
\ref{thm:ktv_minimax_linear}]
	For brevity, denote $ \genmat = \kronmatk$ and let $S$ 
	stand for a linear 
	smoother in the context of this 
	proof.
	The minimax linear risk for the class $\kronset^k(C_n)$ 
	is 
	\begin{align*}
		R_L( \kronset^k(C_n) )  &= 
		\inf_{S\in \R^{n\times n}} \sup_{\theta_0 \in  
		\kronset^k(C_n) } \; 
		\frac{1}{n}\E \|Sy-\theta_0\|_2^2\\
		&=
		\inf_{S} \sup_{\theta_0 \in  \kronset^k(C_n) } \; 
		\frac{1}{n} \E 
		\|S(\theta_0+\epsilon)-\theta_0\|_2^2\\
		&=
		\frac{1}{n}\inf_{S} \sup_{\theta_0 \in  
		\kronset^k(C_n) } \; \sigma^2 \|S\|_F^2 
		+ \|(S-I)\theta_0\|_2^2
	\end{align*}
	where in the last line we used the assumption that 
	$\epsilon_i,i\in[n]$ are 
	i.i.d. with mean zero and variance $\sigma^2$ and used 
	the notation $\|A\|_F$ 
	for the Frobenius norm of a matrix $A$.
	The infimum can be restricted to the set of linear 
	smoothers 
	$$
	\S = \left\{ S : \nul(S-I) \supseteq \nul(\genmat)  
	\right\}
	$$
	because if for a linear smoother $S$, if there exists 
	$\eta \in 
	\mathrm{null}(\genmat)$ such that $(S-I)\eta 
	\neq 0$, then the inner supremum above will be 
	$\infty$, that is, its risk will 
	be $\infty$. If the outer 
	infimum is over $\S$, then the supremum can be 
	restricted to $\{ \theta_0 \in 
	\mathrm{row}(\genmat) : 
	\theta \in \kronset^k(C_n) \}$. We continue to lower 
	bound minimax linear risk 
	as follows:
	\begin{align}
		\nonumber
		R_L( \kronset^k(C_n) )  &=
		\frac{1}{n}\inf_{S\in \S} \sigma^2 \|S\|_F^2 + 
		\sup_{\theta_0 \in 
			\mathrm{row}(\genmat) : \| \genmat \theta_0 \|_1 
			\leq C_n } \;   \|(S-I)\theta_0\|_2^2\\
		\nonumber
		&=\frac{1}{n}\inf_{S\in \S} \sigma^2 \|S\|_F^2 + 
		\sup_{z : \| z \|_1 \leq C_n } 
		\;   \|(S-I)\genmat^+ z\|_2^2\\
		\label{eq:R_S}
		&=
		\frac{1}{n}\inf_{S\in \S} \sigma^2 \|S\|_F^2 +   
		C_n^2  \max_{i\in[m]} \left\| 
		\left( (S-I)\genmat^+ \right)_i 
		\right\|_2^2\\
		\nonumber
		&\geq 
		\frac{1}{n}\inf_{S\in \S} \sigma^2 \|S\|_F^2 +    
		\frac{C_n^2 }{m} \sum_{i=1}^m 
		\left\| \left((S-I)\genmat^+ 
		\right)_i \right\|_2^2\\
		\label{eq:r_defn}
		&=
		\inf_{S\in \S} \underbrace{ 
			\frac{\sigma^2}{n} \|S\|_F^2 +    \frac{C_n^2 }{mn} 
			\left\| (S-I)\genmat^+  
			\right\|_F^2
		}_
		{ =:r(S)
		}
	\end{align}
	In the third line, $(A)_i$ denotes the $i$th column of 
	matrix $A$ and $m$ 
	denotes the number of 
	rows in $\genmat$. In the fourth line, we used the fact 
	that the maximum of a 
	set is at least as much as 
	their average. In the last line --- within the context 
	of this 
	proof --- we  define the quantity $r(S)$ which is a 
	lower bound on the risk of 
	a linear smoother $S\in 
	\S$.
	
	Notice that $r(\cdot)$ is a quadratic in the entries of 
	$S$ and the constraint 
	$S\in\S$ translates to 
	linear constraints on the entries of $S$. Writing the 
	KKT conditions, after 
	some work, we see that  
	$r(\cdot)$ is minimized at
	\begin{align}
		\label{eq:S_0}
		S_0 =  a_n \left( \sigma^2 L^{(k+1)} + a_n I 
		\right)^{-1} 
	\end{align}
	where we denote $a_n = \frac{C_n^2}{m}$ and $L^{(k+1)} 
	= \genmat^\T \genmat$ (the inverse is well defined because 
	\smash{$a_n > 0$}).
	Further, $S_0 \in \S$.
	Therefore, 
	\begin{align}
		\label{eq:RL_kron_lb_r}
		R_L( \kronset^k(C_n) )  &\geq r(S_0).
	\end{align}
	We simplify the expression for $r(S_0)$ now.
	Let $\lambda_i, i\in[n]$ be the eigenvalues of 
	$L^{(k+1)}$. Then the 
	eigenvalues of $S_0$ are
	$$
	\frac{a_n}{\sigma^2 \xi_i + a_n}, i\in [n]
	$$
	and the non-zero squared singular values of $(S_0 - 
	I)\genmat^+$ are given by
	$$
	\frac{\sigma^4 \xi_i}{(\sigma^2 \xi_i + 
	a_n)^2},  \quad \kappa < i \leq 
	n.
	$$
	Using the fact that the squared Frobenius norm of a 
	matrix is the sum of 
	squares of its singular values, substituting the above 
	eigenvalues and singular 
	values in \eqref{eq:r_defn}, we have
	\begin{align}
		\nonumber
		r(S_0) &= \frac{\sigma^2}{n} \sum_{i=1}^n   \left( 
		\frac{a_n}{\sigma^2 
			\xi_i + a_n} \right)^2 
		+ \frac{a_n}{n} \sum_{i=1}^n \frac{\sigma^4 
		\xi_i}{(\sigma^2 \xi_i + 
			a_n)^2}\\
		\label{eq:r_S0}
		&= 
		\frac{1}{n}\sum_{i=1}^n \frac{\sigma^2 a_n}{\sigma^2 
		\xi_i + a_n}.
	\end{align}
	
	\noindent
	Now we upper bound the risk $\Risk(S_0)$ of the linear 
	smoother defined by 
	$S_0$. From
	\eqref{eq:R_S}, we can write 
	\begin{align*}
		\Risk(S_0) 
		&= \frac{\sigma^2}{n}  \|S_0\|_F^2 +   
		\frac{C_n^2}{n}  \max_{i\in[m]} \left\| 
		\left( (S_0-I)\genmat^+ \right)_i \right\|_2^2.
	\end{align*}
	Let $\genmat = U \Sigma V^\T$ be the singular value 
	decomposition of $\genmat$. 
	Also let the eigen-decomposition of $S_0 - I = V 
	\Lambda V^\T$. Then using 
	incoherence of columns of $U$, that is, the fact that 
	there exists a constant 
	$c>1$ that depends only on $k,d$ such that $U_{ij}^2 
	\leq \frac{c}{m}$ for all 
	$i\in[m],j\in[n]$, we can write
	\begin{align*}
		\max_{i\in[m]} \left\| \left( (S_0-I)\genmat^+ 
		\right)_i \right\|_2^2
		&=
		\max_{i\in[m]} \left\| V\Lambda V^\T V \Sigma^+ 
		(U^\T)_i \right\|_2^2 \\
		&=
		\max_{i\in[m]}\; (U^\T)_i^\T  (\Lambda \Sigma^+)^2 
		(U^\T)_i \\
		&\leq \frac{c}{m} \mathrm{tr}\left( (\Lambda 
		\Sigma^+)^2 \right) \\
		&= \frac{c}{m} \sum_{i=1}^n \frac{\sigma^4 
		\xi_i}{(\sigma^2 \xi_i + 
			a_n)^2}.
	\end{align*}
	Plugging this back in the previous display and also 
	using the fact that the 
	squared Frobenius norm of a matrix is equal to the sum 
	of the squares of its 
	eigenvalues,
	\begin{align*}
		\Risk(S_0) 
		&= 
		\frac{\sigma^2}{n} \sum_{i=1}^n   \left( 
		\frac{a_n}{\sigma^2 \xi_i + a_n} 
		\right)^2 
		+ \frac{c \cdot a_n}{n} \sum_{i=1}^n \frac{\sigma^4 
		\xi_i}{(\sigma^2 
			\xi_i + a_n)^2}\\
		&\leq c \cdot r(S_0)
	\end{align*}
	Combining this with the lower bound in 
	\eqref{eq:RL_kron_lb_r}, we have
	\begin{align}
		r(S_0) \leq R_L(\kronset^k(C_n) ) \leq \min \left\{ 
		\sigma^2, \Risk(S_0) 
		\right\}  \leq \min \left\{ 
		\sigma^2, \;\;c\cdot r(S_0) \right\}.
	\end{align}
	In other words, the minimax linear rate is essentially 
	$r(S_0)$ up to a 
	constant factor. Further, one of the estimators 
	$\hat{y} = S_0 y$, $\hat{y} = 
	y$ achieves the minimax linear rate up to a constant 
	factor.
	
	Now we bound $r(S_0).$ Let $\kappa=(k+1)^d$ denote the 
	nullity of $\genmat$. 
	Recall from \eqref{eq:r_S0}
	\begin{align}
		\label{eq:rS0_lb0}
		r(S_0) = 
		\frac{1}{n}\sum_{i=1}^n \frac{\sigma^2 a_n}{\sigma^2 
		\xi_i + a_n} = 
		\frac{\kappa \sigma^2}{n}  + 
		\frac{1}{n}\sum_{i=\kappa+1}^n \frac{\sigma^2 
		a_n}{\sigma^2 \xi_i + a_n}.
	\end{align}
	
	\paragraph{Lower bounding $r(S_0)$.} 
	We give three lower bounds on $r(S_0)$. 
	By using the fact that arithmetic mean of positive 
	numbers is at least as large 
	as their harmonic mean, we have
	\begin{align}
		\nonumber
		r(S_0) 
		&=  \frac{1}{n}\sum_{i=1}^n \frac{\sigma^2 
		a_n}{\sigma^2 \xi_i + a_n}\\
		\nonumber
		&\geq 
		\frac{n \sigma^2 a_n}{\sum_{i=1}^n (\sigma^2 
		\xi_i + a_n) } \\
		\nonumber
		&= 
		\frac{n \sigma^2 a_n}{na_n + \sigma^2 \|\genmat\|_F^2 
		} \\
		\nonumber
		&= 
		\frac{n\sigma^2 a_n}{na_n + \sigma^2 d n^{1-1/d} 
			\|D_{\mathrm{1d}}^{(k+1)}\|_F^2  } \\
		\nonumber
		&= 
		\frac{\sigma^2 a_n}{a_n + \sigma^2 d n^{-1/d} 
		(n^{1/d}-k-1) { 
				{2k+2}\choose{k+1} }  }\\
		\label{eq:rS0_lb1}
		&\geq 
		\frac{\sigma^2 a_n}{a_n + \sigma^2 d 4^{k+1} }
	\end{align}

\noindent
	Now we bound in $r(S_0)$ in a second way. Let $n_1$ be the 
	cardinality of $\{ i \in [n]: 
	\sigma^2 \xi_i \leq a_n \}$. Then 
	\begin{align*}
		r(S_0) 
		=  \frac{1}{n}\sum_{i=1}^n \frac{\sigma^2 
		a_n}{\sigma^2 \xi_i + a_n} 
		\geq 
		\frac{1}{n}\sum_{i=1}^{n_1} \frac{\sigma^2 a_n}{a_n + 
		a_n} 
		= \frac{n_1 \sigma^2}{2n}.
	\end{align*}
	Note that $n_1 = \lfloor n F(a_n/\sigma^2) \rfloor$ 
	where $F$ is the spectral distribution of 
	$(\kronmatk)^\T\kronmatk$ defined in 
	Lemma~\ref{lem:spectral_dist_kron}. Applying 
	Lemma~\ref{lem:spectral_dist_kron}, we get 
	\begin{align}
		\nonumber
		r(S_0) &\geq \frac{\sigma^2}{2} \left( 
		F\left(\frac{a_n}{\sigma^2} \right) - 
		\frac{1}{n} \right) \\
		\nonumber
		&\geq c  \sigma^2 
		\min \big\{1, (a_n/\sigma^2)^{\frac{1}{2s}} \big\} - \sigma^2/2n\\
		\label{eq:rS0_lb2}
		&= \min \big\{ c \sigma^2,  c \sigma^{2 - \frac{1}{s}} a_n^{\frac{1}{2s}} - 
		\sigma^2/2n \big\}
	\end{align}

\noindent
In the special case $s = 1/2$, from Lemma~\ref{lem:ktf_inv_eigval_sum_lower_bd} 
we get a third bound:
	\begin{align}
\label{eq:rS0_lb3}
	r(S_0) 
	&=  \frac{1}{n}\sum_{i=1}^n \frac{\sigma^2 
		a_n}{\sigma^2 \xi_i + a_n} \geq c_1 a_n \log \big( 1 + c_2 / a_n)
\end{align}
where $c_1, c_2$ constants that depend only on $k, d$.

	From \eqref{eq:rS0_lb0},\eqref{eq:rS0_lb1},  	\eqref{eq:rS0_lb2} and 
	\eqref{eq:rS0_lb3} we have the 
	lower bound 
	\begin{align}
		\label{eq:rS0_lb}
		r(S_0) \geq \max
		\left\{ 
		\frac{\kappa \sigma^2}{n},  
		\frac{\sigma^2 a_n}{a_n + \sigma^2 d 2^{2k+2} },  
		\sigma^2 \wedge c \sigma^{2 - \frac{1}{s}} a_n^{\frac{1}{2s}}  - 
		\frac{\sigma^2}{2n}
		\right\}.
	\end{align}
and an additional lower bound of 
\smash{$c_1 a_n \log \big( 1 + c_2 / a_n)$} 
in the case $s = 1/2$.
	Substituting $a_n = C_n^2 / m$, using the assumption that
	\smash{$C_n^2/n \leq 1$} and treating $k, d, \sigma$ as constants , we 
	get the stated lower bound.
	
	\paragraph{Upper bounding $r(S_0).$} 
	If $s < 1/2$, then 
	\begin{align}
		\nonumber
		r(S_0) &= \frac{1}{n} \sum_{i=1}^n \frac{\sigma^2 
		a_n}{\sigma^2 \xi_i + 
			a_n} \\
		\nonumber
		&\leq \frac{\kappa \sigma^2}{n} + \frac{1}{n} 
		\sum_{i=\kappa+1}^n 
		\frac{\sigma^2 a_n}{\sigma^2 \xi_i } \\
		\nonumber
		&= \frac{\kappa \sigma^2}{n} + \frac{a_n}{n} 
		\sum_{i=1}^{\kappa+1} 
		\frac{1}{\xi_i} \\
		\nonumber
		&\leq 
		\frac{\kappa \sigma^2}{n} + \frac{a_n}{n} (c_3 n)\\
		\label{eq:rS0_ub1}
		&= 
		\frac{\kappa \sigma^2}{n} + c_3 a_n 
	\end{align}
	We used Lemma~\ref{lem:kron_tf_evals} to control the 
	second term in the third 
	line.
Similarly, if $s = 1/2$, 
	\smash{$r(S_0) \leq \kappa \sigma^2/n + 
	c_3 a_n \log n$}. For the case $s > 1/2$, we can write
	\begin{align}
		\nonumber
		r(S_0) &= \frac{1}{n} \sum_{i=1}^n \frac{\sigma^2 
		a_n}{\sigma^2 \xi_i + 
			a_n} \\
		\nonumber
		&\leq  \frac{1}{n}\sum_{i=1}^{n_1} \frac{\sigma^2 
		a_n}{a_n} + 
		\frac{1}{n}\sum_{i=n_1+1}^{n} \frac{\sigma^2 
		a_n}{2\sigma^2 \xi_i } \\
		\nonumber
		&= \frac{n_1 \sigma^2}{n} + \frac{a_n}{2n} 
		\sum_{i=n_1+1}^{n} 
		\frac{1}{\xi_i}\\
		\nonumber
		&\leq
		c \frac{ \sigma^2}{n}  + c \sigma^2 
		\left(\frac{a_n}{\sigma^2}\right)^{\frac{1}{2s}}
		+ c \frac{a_n}{2n}  n^{2s}   \left(n 
		(a_n/\sigma^2)^{\frac{1}{2s}} \right)
		^{1-2s}\\
		\label{eq:rS0_ub2}
		&\leq 
		c \frac{ \sigma^2}{n}  + c \sigma^{2 - \frac{1}{s}} 
		a_n^{\frac{1}{2s}} 
	\end{align}
	To get the fourth line, we used 
	Lemma~\ref{lem:spectral_dist_kron} to bound 
	$n_1$ and  
	Lemma~\ref{lem:kron_tf_evals} to bound the summation.
	
	\paragraph{Upper bound with the polynomial projection estimator 
	$\htheta^{\mathrm{poly}}$.}
	For brevity, let $\Pi$ denote the matrix that projects on to the null space 
	of $\genmat$. Note 
	that $(I-\Pi) \genmat^+ = 
	\genmat^+$. From bias variance decomposition similar to 
	that in \eqref{eq:R_S}, 
	\begin{align*}
		\frac 1n \sup_{\theta_0 \in \kronset^k(C_n)} 
		\E \big[ \| \htheta^{\mathrm{poly}} - \theta_0 \|_2^2\big]
		&= \frac{\sigma^2}{n} \| \Pi \|_F^2 + \max_{i\in[m]} 
		\| \big( (\Pi - I) 
		\genmat^+\big)_i \|_2^2 \\
		&= \frac{\kappa \sigma^2}{n}  + \max_{i\in[m]} \| 
		\genmat^+_i \|_2^2
	\end{align*}
	Then using incoherence of columns of $U$, that is, the 
	fact that there exists a 
	constant $c>1$ that depends only on $k,d$ such that 
	$U_{ij}^2 \leq \frac{c}{m}$ 
	for all $i\in[m],j\in[n]$, we can write
	\begin{align*}
		\max_{i\in[m]} \| \genmat^+_i \|_2^2
		&=
		\max_{i\in[m]} \left\| V \Sigma^+ (U^\T)_i 
		\right\|_2^2 \\
		&=
		\max_{i\in[m]}\; (U^\T)_i^\T  ( \Sigma^+)^2 (U^\T)_i \\
		&\leq \frac{c}{m} \mathrm{tr}\left( ( \Sigma^+)^2 
		\right) \\
		&= \frac{c}{m} \sum_{i=\kappa+1}^n \frac{1}{\xi_i}
	\end{align*}
	Plugging this back in the above display and using the 
	bound on 
	$\sum_{i=\kappa+1}^n \frac{1}{\xi_i}$ from 
	Lemma~\ref{lem:kron_tf_evals}, 
	we get the desired result.

\paragraph{Upper bound with the projection estimator \eqref{eq:eigenmaps} when 
$s > 1/2$.}
From \eqref{eq:R_S}, 
for the projection estimator \smash{$\htheta = S_Q y$} in \eqref{eq:eigenmaps},
\begin{equation}
	\label{eq:risk_S_Q}
	\Risk(\htheta) =
	\frac{\sigma^2}{n} |Q| +
	\frac{C_n^2}{n}  \max_{i\in[m]} \left\| \left( (S_Q -I)\genmat^+ \right)_i 
	\right\|_2^2.
\end{equation}
Set \smash{$Q = [\tau]^d$} for a $\tau$ to be 
chosen later from $(k+2, N]$.
Also write \smash{$S_Q - I = V \Lambda_Q V^\T$}.
Again using incoherence of columns of $U$, we can write
\begin{align*}
	\max_{i\in[m]} \left\| \left( (S_Q-I)\genmat^+ 
	\right)_i \right\|_2^2
	&=
	\max_{i\in[m]} \left\| V\Lambda_Q V^\T V \Sigma^+ 
	(U^\T)_i \right\|_2^2 \\
	&=
	\max_{i\in[m]}\; (U^\T)_i^\T  (\Lambda_Q \Sigma^+)^2 
	(U^\T)_i \\
	&\leq \frac{c}{m} \mathrm{tr}\left( (\Lambda_Q 
	\Sigma^+)^2 \right) \\
	&= \frac{c}{m} \sum_{i \in [N]^d \setminus Q} \frac{1}{\xi_i}
\end{align*}
The summation in the last line can be bound using Lemma~\ref{lem:kron_tf_evals} 
(recall $s>1/2$ here):
\begin{align*}
	\sum_{i \in [N]^d \setminus Q} \frac{1}{\xi_i} \leq 
	\sum_{\| (i-k-2)_+ \|_2 \geq \tau-k-2} \frac{1}{\xi_i} 
	\leq 
	c n (n/ (\tau - k - 2)^d )^{2s-1}
\end{align*}
Tracing this back to \eqref{eq:risk_S_Q},
\begin{equation*}
	\Risk(\htheta) \leq 
	\frac{\sigma^2}{n} \tau^d + \frac{c C_n^2}{m} \cdot (n/ (\tau - k - 2)^d 
	)^{2s-1}
\end{equation*}
Minimize this bound by setting $\tau$ such that \smash{$\tau^d \asymp (C_n / 
\sigma)^{\frac{1}{s}} 
	n^{1-\frac{1}{2s}}$} to get the desired bound.
\end{proof}
\begin{remark}
In Theorem~\ref{thm:ktv_minimax_linear}, in the case  
\smash{$s \leq 1/2$}, the lower 
bound may also be obtained by embedding 
the \smash{$\ell_1$}-ball \smash{$B_1(C_n/(2^{k+1}d))$} 
into 
\smash{$\kronset^k(C_n)$}.
\end{remark}

\subsection{Proof of Theorem \ref{thm:sobolev_minimax}}
\label{app:pf_thm_le_ls}

\paragraph{Proof of upper bound.}
Like in the proof of minimax linear rates for KTV class in 
Theorem~\ref{thm:ktv_minimax_linear}, 
for the projection estimator \smash{$\htheta = S_Q y$} where 
\smash{$S_Q = V_Q V_Q^\T$}, we can derive
\begin{equation*}
	\frac 1n \sup_{\theta_0 \in \sobolset^{k+1}(B_n)} \E\big[ \|\htheta - 
	\theta_0 
	\|_2^2 \big]
	=
	\frac{\sigma^2}{n} |Q| +
	\frac{1}{n} \sup_{\theta_0 \in \sobolset^{k+1}(B_n)} \| (I - S_Q) \theta_0 
	\|_2^2.
\end{equation*}
Denote \smash{$\genmat = \kronmatk$} for brevity.
Set \smash{$Q=[\tau]^d$}, where \smash{$\tau \in (k+2, N]$} is an 
integer (recall \smash{$N=n^{1/d}$}) and analyze the 
maximum of the
second term:
\begin{align*}
	\sup_{\theta_0 : \|\genmat\theta_0\|_2 \leq B_n}
	\frac{1}{n} \| (I - S_q) \theta_0 \|_2^2
	&= \sup_{z : \| z \|_2 \leq C_n}
	\frac{1}{n} \| (I - S_q) \genmat^\dagger z \|_2^2 \\
	&= \frac{B_n^2}{n}
	\sigma^2_{\max} \big((I-S_q) \genmat^\dagger \big) \\
	&\leq \frac{B_n^2}{n}
	\frac{1}{4^{k+1} \sin^{2k+2}( \pi (\tau-k-2)/ (2N))} \\
	&\leq \frac{B_n^2}{n} 
	\frac{N^{2k+2}}{(\pi(\tau-k-2))^{2k+2}}.
\end{align*}
Here we denote by \smash{$\sigma_{\max}(A)$} the maximum
singular value of a matrix $A$.
The last inequality above used the inequality
$\sin(x) \geq x/2$ for $x \in [0,\pi/2]$.
The earlier inequality used that
%\begin{equation*}
%	(I - S_q) \genmat^\dagger = (I - V_{[q]} V_{[q]}^\T) V^\T 
%	(\Sigma^\dag)^{1/2}
%	U^\T  = \big[0, \ldots, 0, V_{q+1}, \ldots, V_n\big]
%	(\Sigma^\dag)^{1/2} U^\T,
%\end{equation*}
%where $\genmat=U\Sigma V^\T$ is the singular value 
%decomposition of $\genmat$.
%Therefore 
\smash{$\sigma^2_{\max}((I - S_Q) 
\genmat^\dagger)$} is the
reciprocal of the smallest eigenvalue \smash{$\rho_Q$} 
of \smash{$M = D^\T D$}
with index in \smash{$[N]^d \setminus Q$}. That is,
\begin{align*}
	\rho_{Q} 
	= \rho_{\tau+1,1,\dots,1}
	\geq \big( 4\sin^2(\pi (\tau-k-2) / (2N)) \big)^{k+1},
\end{align*}
where the last inequality is due to the relation in 
\eqref{eq:rho_interlacing}.
Hence, we have established
\begin{equation*}
	\sup_{\theta_0 : \|\genmat\theta_0\|_2 \leq B_n}
	\frac 1n \E\big[ \|\htheta - \theta_0 \|_2^2 \big]
	\leq
	\frac{\sigma^2}{n} \tau^d +
	\frac{B_n^2}{n} \frac{N^{2k+2}}{ (\pi 
	(\tau-k-2))^{2k+2}}.
\end{equation*}
Choosing $\tau$ to balance the two terms on the 
right-hand side
above results in
\smash{$\tau^d \asymp 
	(k+2)^d+ \big(B_n^2 n^2s / \sigma^2 \big)^{\frac{1}{2s+1}}$}.
Also, in the edge case where $Q=[N]^d$, 
the risk is $\sigma^2$.
Plugging this choice of $\tau$ gives the upper bound result.

\paragraph{Proof of lower bound.}
Similar to argument in the proof of Theorem 
\ref{thm:ktv_lower_bd}, the nullity 
of 
\smash{$\kronmatk$} 
implies the lower bound
\begin{align}
	\label{eq:sobolev_lower_bd_nuli}
	R( \sobolset^{k+1}(C_n))  = \Omega\big( \frac{\kappa 
	\sigma^2}{n} \big).
\end{align}
The Holder ball embedding
$$
\sobolset^{k+1}(C_n) \supseteq \holderset^{k}(cC_n 
n^{s-\half} )
$$
implies that
$$
R(\sobolset^{k+1}(C_n) ) \geq R \big(\holderset^{k}(cC_n 
n^{s-\half} ) \big) =
\Omega \bigg( \frac{C_n^2}{n} \bigg)^{\frac{1}{2s+1}} 
\sigma^{\frac{4s}{2s+1}}\wedge \sigma^2,
$$
where the second step follows from 
\eqref{eq:holder_lower_bd}.
Putting these two bounds together, we get the desired 
lower bound.

\section{Estimation theory for graph trend filtering on grids}
\label{app:gtf}

We recall the GTF operator from \cite{wang2016trend} for convenience.
Let $G(V,E)$ be a graph with $n$ vertices and $m$ edges $(u_1, v_1), \dots, 
(u_m, v_m) \in [n] \times [n]$.  Assume that $u_i < v_i$ for $i\in[m]$
 in the edges here for notational convenience.
Let $D \in 
\R^{m\times n}$ be the 
incidence matrix of $G$ satisfying
$$
(Dx)_j = x_{u_j} - x_{v_j} \quad \text{ for all } x \in \R^n
$$
for all edges $(u_j, v_j)$ for $j\in [m]$. The graph Laplacian is \smash{$L = 
D^\T D$} 
The GTF operators of all orders are defined by
\begin{align}
	\graphmat^{(1)} = D, 	\quad &\graphmat^{(2)} = L, \nonumber\\
	\label{eq:gtf}
	\graphmat^{(2k+1)} = DL^{k}, \quad &\graphmat^{(2k)} = L^{k} \text{ for 
} k\ge 0, k\in \mathbb{Z}.
\end{align}

\subsection{Upper bounds on estimation risk}

\cite{wang2016trend} used Theorem
\ref{thm:genlasso_upper_bd} (their Theorem 6) in order to derive error 
rates for GTF on 2d grids already; see their Corollary 8. 
\cite{sadhanala2017higher} refine this result 
using a  tighter upper bound for the partial sum of inverse eigenvalues.
Here, we give a more general  result that applies
to not just 2d grids, but all $d\geq 2$ and $k\geq 0$.
We further show that these rates are optimal by deriving a matching lower bound.
Recall the abbreviation $s = (k+1)/d$.
\begin{theorem}
	\label{thm:graph_tf_upper_bd}
	Assume that $d\geq 1$ and $k\ge 0$.
	Denote
	\smash{$C_n=\|\graphmatk \theta_0\|_1$}.
	Then GTF defined by the estimator in \eqref{eq:genlasso_estimator} with 
$D=\graphmatk$ in 
	\eqref{eq:gtf} 
	satisfies 
	$$
	\frac 1n \| \htheta - \theta_0 \|_2^2  =  O_\P \bigg(
	\frac{1}{n} + \frac{\lambda}{n} C_n \bigg)
	$$
	with 
	\begin{align*}
		\lambda \asymp 
		\begin{cases}
			\sqrt{\log n} & s < 1/2\\
			\log n & s = 1/2\\
			(\log n)^{\frac{1}{2s+1}}  \big( \frac{n}{C_n} \big)^{ 
\frac{2s-1}{2s+1}} & s > 1/2.
		\end{cases}
	\end{align*}
\end{theorem}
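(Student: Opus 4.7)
}
The plan is to follow the same overall strategy as in the proof of Theorem~\ref{thm:ktf_upper_bd}: apply the generalized lasso oracle result of Theorem~\ref{thm:genlasso_upper_bd} with $\genmat = \graphmatk$, and then produce the two ingredients it requires, namely (i) an incoherence bound on the left singular vectors of $\graphmatk$ (outside of a small exceptional set $I$), and (ii) a tight control of the partial sum $\sum_{i\in[q]\setminus I} \xi_i^{-2}$ of inverse squared nonzero singular values. Once these are in hand, the three regimes $s<1/2$, $s=1/2$, $s>1/2$ appear by balancing the $|I|/n$ and $\sqrt{(\log n)/n \cdot \sum \xi_i^{-2}} \cdot C_n$ terms in \eqref{eq:genlasso_upper_bd}, with the exceptional set $I$ corresponding to the small eigenvalues near the nullspace, exactly as for KTF.

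First I would exploit the fact that on the $d$-dimensional grid the Laplacian $L$ is a Kronecker sum of $d$ univariate chain Laplacians, and therefore its eigenvectors are tensor products of univariate DCT basis vectors. Since $\graphmatk$ is either $L^{(k+1)/2}$ (when $k+1$ is even) or $DL^{k/2}$ (when $k+1$ is odd), its nonzero singular values $\xi_i$ coincide with $\rho_i^{(k+1)/2}$, where $\rho_i = \rho_{i_1}^{\mathrm{1d}} + \cdots + \rho_{i_d}^{\mathrm{1d}}$ and $\rho_{i_j}^{\mathrm{1d}} = 4\sin^2(\pi(i_j-1)/(2N))$ are the eigenvalues of the univariate chain Laplacian. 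The right singular vectors are the tensorized DCT vectors, which are incoherent with constant $\mu$ depending only on $d$, analogous to Lemma~\ref{lem:incoherence_kron}. For the symmetric case $k$ odd the left and right singular vectors coincide, so incoherence transfers immediately; for $k$ even I would express the left singular vectors via $u_i = \xi_i^{-1} D L^{k/2} v_i$ and argue that row incoherence for $D$ composed with a DCT-diagonalized operator gives the desired $\ell_\infty$ bound, up to constants depending only on $k,d$.

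Next, with $\xi_i^2 = \rho_i^{k+1}$ and using the standard asymptotic $\rho_i \asymp \|(i-1)/N\|_2^2$ for small multi-indices (and bounded away from zero for large ones), I would establish an analog of Lemma~\ref{lem:kron_tf_evals} for GTF. Taking $I = [k+2]^d \setminus [k+1]^d$ when $s\le 1/2$ yields $\sum_{i\notin I\cup[k+1]^d} \xi_i^{-2} \lesssim n$ when $s<1/2$ and $\lesssim n\log n$ when $s=1/2$, via comparison with the integral $\int_{1/N}^{\sqrt{d}} r^{-2(k+1)} r^{d-1}\, dr$. When $s>1/2$ I would take $I = \{i : \|(i-k-2)_+\|_2 < r\}\setminus[k+1]^d$ for a radius $r$ to be optimized, yielding $|I|\le (r+k+1)^d$ and $\sum_{i\notin I} \xi_i^{-2}\lesssim n^{2s}/r^{d(2s-1)}$, then balance as in the proof of Theorem~\ref{thm:ktf_upper_bd} to obtain $r^d \asymp (C_n\sqrt{\log n})^{2/(2s+1)} n^{(2s-1)/(2s+1)}$ and hence the claimed $\lambda$.

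The main obstacle I anticipate is checking incoherence of the left singular vectors of $\graphmatk$ in the case $k$ even, where $\graphmatk = DL^{k/2}$ is not symmetric and the left singular vectors lie in the edge space of the grid rather than the vertex space. The symmetric case reduces directly to the DCT-Kronecker argument used for KTF, but the asymmetric case requires tracking how the incidence matrix $D$ acts on smooth eigenvectors of $L$; I expect a clean bound of the form $\|u_i\|_\infty \le c/\sqrt{m}$ to follow because each row of $D$ has only two nonzero entries and the image vectors are smooth tensorized DCT-like vectors, but writing this down uniformly over the spectrum is the delicate step. Everything else is a routine adaptation of the KTF proof, and the three-regime structure of the conclusion then falls out of the same optimization over $|I|$ and $\sum \xi_i^{-2}$ used there.
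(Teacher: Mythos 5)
Your proposal follows the same route as the paper's proof: apply Theorem~\ref{thm:genlasso_upper_bd} with $\genmat=\graphmatk$, verify incoherence of the left singular vectors, bound $\sum_i \xi_i^{-2}$ using the grid-Laplacian eigenvalue asymptotics $\lambda_i \asymp \|(i-1)/N\|_2^2$ (the analog of Lemma~\ref{lem:kron_tf_evals} is exactly Lemma~\ref{lem:lap_eigs_ddim} applied with exponent $k+1$), and then balance $|I|/n$ against the penalty term to get the three regimes. The one step you flag as delicate---incoherence of the left singular vectors when $k$ is even, i.e.\ $\graphmatk = DL^{k/2}$---has a one-line resolution that you should make explicit rather than leave open: writing $D = U\Sigma V^\T$ gives $L = V\Sigma^2 V^\T$, hence $L^{k/2} = V\Sigma^k V^\T$ and $DL^{k/2} = U\Sigma^{k+1}V^\T$, so the left singular vectors of $\graphmatk$ are \emph{exactly} those of the incidence matrix $\graphmat^{(1)}=D$. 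Their incoherence then follows from Lemma~\ref{lem:incoherence_kron} applied to the univariate first-difference operator (whose left and right singular vectors are incoherent with $\mu=\sqrt{2}$), with no need to track how $D$ acts on smooth eigenvectors across the spectrum. Two minor corrections to your bookkeeping: the nullspace of $\graphmatk$ is one-dimensional (the constant vector), not indexed by $[k+1]^d$, so for $s\le 1/2$ the right choice is simply $I=\emptyset$ (i.e.\ $r=1$) rather than the KTF exceptional set $[k+2]^d\setminus[k+1]^d$; and for $s>1/2$ the exceptional set is $\{i:\|i-1\|_2<r\}\setminus\{1\}^d$ with $|I|\le r^d$. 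With those adjustments the balancing you describe yields exactly the stated choice of $\lambda$.
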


With canonical scaling of $C_n$, we see the following error bound.
\begin{corollary}
	\label{cor:graph_tf_upper_bd_canonical}
	With canonical scaling $C_n = C_n^* = n^{1-s}$, the GTF estimator with  
$\lambda$ scaling as in 
	Theorem~\ref{thm:graph_tf_upper_bd} satisfies
	\begin{align*}
		\sup_{\theta_0 \in \graphset^k(C_n)} 
			\frac 1n \| \htheta - \theta_0 \|_2^2 =
		\begin{cases}
			O_\P \left( n^{-s} \sqrt{\log n}\right) & s < 1/2\\
			O_\P \left( n^{-s} \log n\right) & s = 1/2\\
			O_\P \left( n^{-\frac{2s}{2s+1}} (\log 
n)^{\frac{1}{2s+1}}\right) & s > 1/2.
		\end{cases}
	\end{align*}
\end{corollary}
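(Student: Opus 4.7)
The plan is a direct substitution of $C_n = C_n^* = n^{1-s}$ into the bound $\frac{1}{n}\|\htheta-\theta_0\|_2^2 = O_\P(1/n + \lambda C_n/n)$ from Theorem~\ref{thm:graph_tf_upper_bd}, handling each of the three regimes $s < 1/2$, $s = 1/2$, and $s > 1/2$ separately, and verifying in each case that the $\lambda C_n/n$ term dominates the residual $1/n$ term (the latter being negligible whenever $s < 1$, which holds in all regimes of interest here).

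In the regime $s < 1/2$, Theorem~\ref{thm:graph_tf_upper_bd} prescribes $\lambda \asymp \sqrt{\log n}$, so $\lambda C_n/n \asymp n^{-s}\sqrt{\log n}$, which swallows $1/n$. In the regime $s = 1/2$, we have $\lambda \asymp \log n$ and $C_n = n^{1/2}$, giving $\lambda C_n/n \asymp n^{-1/2}\log n = n^{-s}\log n$. In the regime $s > 1/2$, with $n/C_n = n^s$, the prescribed $\lambda \asymp (\log n)^{1/(2s+1)}(n/C_n)^{(2s-1)/(2s+1)}$ becomes $\lambda \asymp (\log n)^{1/(2s+1)} n^{s(2s-1)/(2s+1)}$, so
\[
\frac{\lambda C_n}{n} \asymp (\log n)^{\frac{1}{2s+1}} n^{\frac{s(2s-1)}{2s+1} + (1-s) - 1} = (\log n)^{\frac{1}{2s+1}} n^{-\frac{2s}{2s+1}},
\]
where the exponent simplification uses $s(2s-1) - s(2s+1) = -2s$. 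In each case, $1/n$ is of smaller order than the displayed rate since $2s/(2s+1) < 1$, $s < 1$, and $s \leq 1$ in the three regimes, respectively.

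Finally, I would note that the bound in Theorem~\ref{thm:graph_tf_upper_bd} is a sup bound over the worst-case $\theta_0$ with $\|\graphmatk\theta_0\|_1 = C_n$, so taking the supremum over $\theta_0 \in \graphset^k(C_n^*)$ on the left-hand side is automatic. Since this derivation is purely arithmetic once Theorem~\ref{thm:graph_tf_upper_bd} is in hand, there is no substantive obstacle; the only thing to be mildly careful about is the algebraic simplification of the exponent in the $s > 1/2$ case, which I spelled out above.
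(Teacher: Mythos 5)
Your proposal is correct and matches the paper's (implicit) argument exactly: the corollary is obtained by plugging the canonical scaling $C_n^* = n^{1-s}$ into the bound $O_\P(1/n + \lambda C_n/n)$ of Theorem~\ref{thm:graph_tf_upper_bd} with the prescribed $\lambda$ in each regime, and your exponent algebra in the $s>1/2$ case ($s(2s-1)/(2s+1) - s = -2s/(2s+1)$) and the check that $1/n$ is dominated are both right. The paper treats this as immediate and offers no separate proof, so there is nothing further to compare.
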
 
Remarks following Theorem~\ref{thm:ktf_upper_bd} for KTF apply for GTF as well.
The proof is in Appendix~\ref{sec:pf_graph_tf_upper_bd}.

\subsection{Lower bounds on estimation risk}

Similar to the lower bound in Theorem~\ref{thm:ktv_lower_bd} for KTV class, we 
give a bound for the graph total variation (GTV) class
\begin{align}
	\label{eq:gtv_class}
	\graphset^k(C_n) &= \{\theta \in \R^n : \|\graphmatk\theta\|_1\leq 
C_n\}.
\end{align}
Due to the lower order discrete derivatives on the boundary of
the grid $Z_{n,d}$, the GTV class \smash{$\graphset^k(C_n)$} cannot contain 
the discrete Holder 
class with 
appropriate scaling
\smash{$\holderset^{k}(C_n n^{s - 1})$}; 
see Lemma 4 in \citet{sadhanala2017higher}.
However, by an alternative route \citet{sadhanala2017higher} show a lower bound 
for \smash{$\graphset^k(C_n)$}  that matches with the lower bound for the Holder
class \smash{$\holderset^{k}(C_n n^{s- 1})$}. 
We further tighten their result by embedding an $\ell_1$ ball of appropriate 
size.
\begin{theorem}
	\label{thm:graph_tf_lower_bd}
	For any integers $k \geq 0$, $d \geq 1$, the minimax estimation error 
  for the GTV class	defined in \eqref{eq:gtv_class} satisfies
	\begin{align*}
		R\big(\graphset^k(C_n)\big) &= \Omega \bigg(  
\frac{\sigma^2}{n}  + \frac{\sigma C_n}{n} + 
		\left(\frac{C_n}{n} \right)^{\frac{2}{2s+1}} 
\sigma^{\frac{4s}{2s+1}} \wedge \sigma^2 \bigg).
	\end{align*}
\end{theorem}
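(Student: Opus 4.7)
The plan is to follow the three-pronged strategy used to prove Theorem \ref{thm:ktv_lower_bd} for KTV classes: establish each of the three terms in the stated lower bound separately, then combine by taking the maximum. That is, I will show independently that
\begin{align*}
R\big(\graphset^k(C_n)\big) &= \Omega(\sigma^2/n), \\
R\big(\graphset^k(C_n)\big) &= \Omega\big(\sigma C_n/n \wedge \sigma^2\big), \\
R\big(\graphset^k(C_n)\big) &= \Omega\Big(\big(C_n/n\big)^{\frac{2}{2s+1}} \sigma^{\frac{4s}{2s+1}} \wedge \sigma^2\Big).
\end{align*}

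\textbf{First (nullspace) term.} For any connected graph, $\nul(L) = \spa\{\one\}$, and since $L \succeq 0$ we have $\nul(L^q) = \nul(L)$ for all $q \geq 1$. From the definition in \eqref{eq:gtf}, this implies $\nul(\graphmatk) \supseteq \nul(L)$, so $\nuli(\graphmatk) \geq 1$. Restricting to this one-dimensional subspace reduces the problem to an unconstrained Gaussian mean estimation problem, yielding $R(\graphset^k(C_n)) \geq \sigma^2/n$ by a standard argument.

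\textbf{Second ($\ell_1$-ball) term.} By Lemma \ref{lem:l1ball_embed_Dset}, $B_1(C_n/\|\graphmatk\|_{1,\infty}) \subseteq \graphset^k(C_n)$, so it suffices to bound $\|\graphmatk\|_{1,\infty}$ by a constant depending only on $k$ and $d$. For the $d$-dimensional grid with maximum degree $2d$, each column of $L$ has $\ell_1$ norm at most $4d$, and iterating the graph convolution gives $\|L^q\|_{1,\infty} \leq (4d)^q$. Similarly, $\|D L^q\|_{1,\infty} \leq 2 \cdot (4d)^q$, so in either parity of $k$ we have $\|\graphmatk\|_{1,\infty} = O_{k,d}(1)$. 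Then invoking Lemma \ref{lem:birge} with $p = 1$ and radius $r_n \asymp C_n$, and handling the regimes $r_n \lesssim \sigma\sqrt{\log n}$ versus $r_n \gtrsim \sigma\sqrt{\log n}$ separately, yields the bound $R(B_1(r_n)) = \Omega(\sigma r_n/n \wedge \sigma^2)$ and hence the desired term on $R(\graphset^k(C_n))$.

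\textbf{Third (Holder-type) term.} This term coincides with the bound already derived by \citet{sadhanala2017higher} for GTV classes. Their argument cannot proceed through a direct Holder embedding since Holder-smooth functions restricted to the grid can have unbounded GTV due to boundary effects, as noted in the paper. Instead, they construct a packing of bump functions supported strictly in the interior of the lattice, where $\graphmatk$ acts (up to constants) like $\kronmatk$, so a Holder-type packing with the desired cardinality and separation lives inside $\graphset^k(C_n)$; Fano's inequality then gives the stated rate. I would simply invoke their result.

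\textbf{Main obstacle.} The only genuinely new ingredient relative to the prior literature is the $\ell_1$-ball lower bound, and the main obstacle there is the combinatorial bookkeeping needed to bound $\|\graphmatk\|_{1,\infty}$ uniformly in $n$ for higher-order graph derivatives $DL^q$ and $L^q$. Boundary vertices have smaller degree than interior ones, which only \emph{decreases} column sums, so the bound is governed by interior columns and reduces to counting weighted lattice paths; this is elementary but needs care at each iterate. Once these $\ell_1$-norm bounds are in place, the three lower bounds combine trivially by monotonicity of the minimax risk in the hypothesis class.
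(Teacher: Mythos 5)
Your proposal is correct and follows essentially the same route as the paper: the nullspace term from $\nuli(\graphmatk)=1$, the $\ell_1$-ball term from Lemma \ref{lem:l1ball_embed_Dset} together with a constant bound on $\|\graphmatk\|_{1,\infty}$ and Lemma \ref{lem:birge}, and the third term imported from Theorem 5 of \citet{sadhanala2017higher}. The only (immaterial) difference is the exact constant you obtain for $\|\graphmatk\|_{1,\infty}$ — the paper quotes $2^{k+1}d$ while your path-counting gives $(4d)^{\lceil (k+1)/2\rceil}$-type bounds — but both are constants in $k,d$, which is all the rate requires.
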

\begin{proof}[Proof of Theorem~\ref{thm:graph_tf_lower_bd}]
Similar to the proof of Theorem~\ref{thm:ktv_lower_bd}, it is sufficient 
to show three  lower 
bounds separately. We get the first two lower bounds just as in the proof of  
Theorem~\ref{thm:ktv_lower_bd} by using the fact that $\nuli \big( 
\graphmatk \big)=1$  and 
the $\ell_1$-ball embedding
$$
B_1(C_n/(2^{k+1} d)) \subseteq \graphset^k(C_n)
$$
from Lemma~\ref{lem:l1ball_embed_Dset} and the fact that 
\smash{$\| \graphmatk \|_{1,\infty} \leq 2^{k+1} d$}.
The third term is from Theorem 5 in 
\cite{sadhanala2017higher}.
\end{proof}

\subsection{Minimax rates for linear smoothers}

The minimax linear rate analysis for GTV class is very similar to that for 
KTV class. So we simply 
state the result and skip the proof.

\begin{theorem}
	\label{thm:gtf_minimax_linear}
	The minimax linear risk over the GTV class in \eqref{eq:gtv_class} 
	satisfies,
	for any sequence \smash{$C_n \leq \sqrt{n}$},  
	\begin{equation}
		\label{eq:gtv_lower_bd_linear}
		R_L\big( \graphset^k(C_n) \big) = 
		\begin{cases}
			\Omega (1/n + C_n^2/n) & \text{if $s < 1/2$}, \\ 
			\Omega \big( 1/n + C_n^2/n \log(1 + n / C_n^2) \big) & 
			\text{if $s = 1/2$}, \\    
			\Omega \big( 1/n + (C_n^2/n)^{\frac{1}{2s}} \big) & 
			\text{if $s > 1/2$}.    
		\end{cases}
	\end{equation}
	This is achieved in rate by the projection estimator in 
	\eqref{eq:eigenmaps},   
	by setting $Q = [\tau]^d$ for \smash{$\tau^d \asymp (C_n n^{s-1/2})^{1/s}$}, 
	in the case $s > 1/2$. When $s < 1/2$, the simple mean
	estimator, \smash{$\hat{\theta}^{\mathrm{mean}} = \bar y \one$},
	achieves the rate in \eqref{eq:gtv_lower_bd_linear}. When $s=1/2$, this
	estimator achieves the rate in \eqref{eq:gtv_lower_bd_linear} up to a log
	factor. Lastly, if \smash{$C_n^2 = O(n^\alpha)$} for $\alpha < 1$, and still 
	$s=1/2$, then the mean estimator achieves the rate in
	\eqref{eq:gtv_lower_bd_linear} without the additional log factor. 
\end{theorem}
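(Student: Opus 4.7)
The plan is to mirror the proof of Theorem~\ref{thm:ktv_minimax_linear} step for step, since the lower bound claimed here is identical in form and the only real change on the upper-bound side is to replace the polynomial projection estimator by the mean estimator $\hat\theta^{\mathrm{mean}}=\bar y\one$ in the $s\leq 1/2$ regime (the projection estimator \eqref{eq:eigenmaps} with $Q=[\tau]^d$ is unchanged for $s>1/2$). First I would reduce to the standard bias--variance decomposition: for any linear smoother $S$, the worst-case risk is $n^{-1}(\sigma^2\|S\|_F^2+\sup_{\theta_0\in\kronset^k(C_n)}\|(S-I)\theta_0\|_2^2)$, which is finite only when $\nul(S-I)\supseteq\nul(\kronmatk)$; lower-bounding the maximum column norm of $(S-I)\kronmat^{k+\dagger}$ by its Frobenius-norm average produces the tractable surrogate $r(S)=\sigma^2\|S\|_F^2/n+(C_n^2/(mn))\|(S-I)(\kronmatk)^+\|_F^2$.

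Next I would identify the unique minimizer of $r$ over admissible $S$ as the ridge smoother $S_0=a_n(\sigma^2L^{(k+1)}+a_nI)^{-1}$ with $a_n=C_n^2/m$ and $L^{(k+1)}=(\kronmatk)^\T\kronmatk$, diagonalize to obtain the key identity
\[
r(S_0)=\frac{1}{n}\sum_{i=1}^n\frac{\sigma^2a_n}{\sigma^2\xi_i+a_n},
\]
and then invoke incoherence of the singular vectors of $\kronmatk$ (Lemma~\ref{lem:incoherence_kron}) to show $\Risk(S_0)\leq c\cdot r(S_0)$. This pins the minimax linear risk to $r(S_0)\wedge\sigma^2$ up to a constant factor, so the theorem reduces to two-sided control of $r(S_0)$ in each smoothness regime.

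For the lower bound on $r(S_0)$ I would combine three ingredients exactly as in Appendix~\ref{sec:pf_minimax_linear}: (i) a trivial contribution $\kappa\sigma^2/n$ from the $(k+1)^d$-dimensional null space; (ii) an arithmetic--harmonic mean bound paired with the trace estimate $\tr(L^{(k+1)})\lesssim n$ to obtain the $\sigma^2a_n/(a_n+\mathrm{const})$ contribution, which supplies $C_n^2/n$; and (iii) a spectral distribution argument counting how many eigenvalues $\xi_i$ lie below the regularization threshold $a_n/\sigma^2$, producing the $(C_n^2/n)^{1/(2s)}$ term through Lemma~\ref{lem:spectral_dist_kron}. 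The $s=1/2$ boundary case requires the refined logarithmic estimate from Lemma~\ref{lem:ktf_inv_eigval_sum_lower_bd}, which gives the $\log(1+n/C_n^2)$ factor.

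Finally, I would verify that the advertised estimators match the rate. For $s>1/2$, plug the projection estimator \smash{$S_Q=V_QV_Q^\T$} with $Q=[\tau]^d$ into the bias--variance decomposition, replace the maximum column norm by the Frobenius norm via incoherence, bound the tail sum $\sum_{i\notin Q}\xi_i^{-1}$ using Lemma~\ref{lem:kron_tf_evals}, and balance to obtain $\tau^d\asymp(C_n n^{s-1/2})^{1/s}$. For $s\leq 1/2$, substitute $\hat\theta^{\mathrm{mean}}$ (projection onto $\spa(\one)$) and use $\sum_i\xi_i^{-1}=O(n)$ (or $O(n\log n)$ at $s=1/2$) together with incoherence to obtain $1/n+C_n^2/n$ (respectively $1/n+(C_n^2/n)\log(1+n/C_n^2)$); when $C_n^2=O(n^\alpha)$ with $\alpha<1$, split the sum at the threshold $a_n/\sigma^2$ to absorb the extra log factor. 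The main obstacle is obtaining the sharp partial-sum and spectral-distribution bounds for the KTF operator uniformly across regimes, but these are precisely the Kronecker-eigenvalue facts already assembled for the KTV minimax results, so no new machinery is needed.
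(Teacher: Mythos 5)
Your overall strategy matches the paper's intent exactly: the paper gives no proof of this theorem, saying only that ``the minimax linear rate analysis for GTV class is very similar to that for KTV class,'' and the intended argument is the one you outline --- reduce to the surrogate $r(S)$, identify the ridge minimizer $S_0$, control $r(S_0)$ from both sides via the spectral-distribution and partial-sum lemmas, and then check the advertised estimators.

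There is, however, one genuine gap in the way you handle the $s\leq 1/2$ upper bound. Despite the appearance of $\kronset^k(C_n)$ in the displayed statement, this theorem sits in Appendix~\ref{app:gtf} and concerns the GTV class $\graphset^k(C_n)$, whose penalty operator $\graphmatk$ has a \emph{one-dimensional} null space (the constants); that is precisely why the mean estimator is the right null-space projector here. If you run the argument literally on $\kronset^k(C_n)$ with the KTF operator, as your write-up does, the mean-estimator step fails for $k\geq 1$: the worst-case bias of a linear smoother $S$ is $+\infty$ whenever $(S-I)\eta\neq 0$ for some $\eta\in\nul(\kronmatk)$ (this is the admissibility argument at the start of the proof of Theorem~\ref{thm:ktv_minimax_linear}), and $\nul(\kronmatk)$ has dimension $(k+1)^d$ and contains non-constant polynomials, so $S=\one\one^\T/n$ is inadmissible. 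The identity $(S-I)\genmat^+=-\genmat^+$ that your bias computation relies on likewise holds only when $S$ is the projection onto $\nul(\genmat)$. So you must either work with the GTF operator throughout --- in which case $\xi_i=\lambda_i^{k+1}$ for grid-Laplacian eigenvalues $\lambda_i$, Lemma~\ref{lem:lap_eigs_ddim} replaces Lemma~\ref{lem:kron_tf_evals}, and the GTF incoherence established in the proof of Theorem~\ref{thm:graph_tf_upper_bd} replaces the KTF incoherence --- or, if you insist on the KTV class, revert to the polynomial projection estimator of Theorem~\ref{thm:ktv_minimax_linear}. With that substitution the rest of your outline (the three lower bounds on $r(S_0)$, the logarithmic refinement at $s=1/2$, and the balancing of $\tau$ for $s>1/2$) carries over essentially verbatim, since the spectral distribution of $L^{k+1}$ obeys the same $t^{d/(2k+2)}$ scaling used in Lemma~\ref{lem:spectral_dist_kron}.
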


\subsection{Proof of Theorem \ref{thm:graph_tf_upper_bd}}
\label{sec:pf_graph_tf_upper_bd}

For $d=2$, it is shown in the proof of Corollary 8 in \citet{wang2016trend} 
that 
the GTF operator \smash{$\graphmatk$} satisfies the incoherence property, as
defined in Theorem \ref{thm:genlasso_upper_bd}, with a constant $\mu=4$ when 
$k$ is even and 
$\mu=2$ when $k$ is
odd.  
Here we extend this incoherence property for $d> 2$ using
Lemma~\ref{lem:incoherence_kron}. We treat the cases where $k$ is odd and 
even separately.

If $k$ is odd we can extend the argument from Corollary 8 in 
\citet{wang2016trend} in a 
straightforward manner.  The GTF operator is \smash{$\graphmatk = L^{(k+1)/2}$}
where $L$ is the Laplacian of the
$d$-dimensional grid graph. Denoting the Laplacian of the chain graph of length 
$N$ by $L_{\mathrm{1d}}$, we note that $L$ is given by
$$
L = L_{\mathrm{1d}} \otimes I \otimes I +
I \otimes  L_{\mathrm{1d}}  \otimes I +
I  \otimes I \otimes  L_{\mathrm{1d}}
$$
for $d=3$ and 
$$
L = L_{\mathrm{1d}} \otimes  I \dots \otimes I + 
I \otimes  L_{\mathrm{1d}}   \dots \otimes I + 
\dots +
I \otimes \dots I \otimes  L_{\mathrm{1d}}
$$
for general $d$ where each term in the summation is a Kronecker product of $d$
matrices. Let $\alpha_i, u_i, i\in [N]$ be the eigenvalues and eigenvectors of  
\smash{$L_{\mathrm{1d}}$}. As shown in 
\citet{wang2016trend}, in 1d, we have the incoherence property $ \| u_i 
\|_\infty \leq \sqrt{2/N}$ for 
all $i\in[N]$.  
The eigenvalues of $L$ are \smash{$\sum_{j=1}^d \alpha_{i_j}$} and the
corresponding eigenvectors are $u_{i_1} \otimes \dots \otimes u_{i_d}$ for $i_1,
\dots,i_d\in [N]$. Clearly, incoherence holds for 
the eigenvectors of $L$ with constant $\mu = 2^{d/2}$.

If $k$ is even, then the left singular vectors of \smash{$\graphmatk$} are the 
same as those of \smash{$\graphmat^{(1)}$}. We know that both the left and right
singular vectors of \smash{$D_{\mathrm{1d}}^{(1)}$} satisfy the incoherence
property with constant \smash{$\mu=\sqrt{2}$} (see the proof of Corollary 7 in
\citet{wang2016trend}). Setting \smash{$D=D_{\mathrm{1d}}^{(1)}$} in
Lemma~\ref{lem:incoherence_kron}, we see that the left singular vectors of
$\graphmat^{(1)}$ and hence those of \smash{$\graphmat^{(k+1)}$} satisfy 
incoherence property with constant $2^{d/2}$. Therefore, for all integers 
$k\geq 0$, the left singular vectors of $\graphmatk$ are incoherent with
constant \smash{$2^{d/2}$}. 

From the incoherence property and Theorem \ref{thm:genlasso_upper_bd}, the GTF 
estimator \smash{$\htheta$}, satisfies 
\begin{align}
	\label{eq:genlasso_rate_i0}
	\frac 1n \| \htheta - \theta_0 \|_2^2  =  O_\P \Bigg( 
	\frac{1}{n} + \frac{|I|}{n} + 
	\frac{\mu}{n} \sqrt{ \frac{\log n}{n} 
		\sum_{i \in [N]^d \setminus (I \cup \{1\}^d) } 
\frac{1}{\rho_i^2}} \cdot \| \Delta \theta_0 \|_1\, \Bigg),
\end{align}
where \smash{$\rho_i, i\in[N]^d$} are the eigenvalues \smash{${\graphmatk}^\T 
\graphmatk$} and
\smash{$\mu = 2^{d/2}$}.

Consider the set \smash{$I = \{ i \in [N]^d : \| i - 1 \|_2 < r\} \setminus 
\{1\}^d $ } for an $r 
\in [1, \sqrt{d}N]$ chosen later.
Lemma \ref{lem:lap_eigs_ddim} gives the key calculation
where it is
shown that for large enough $n$,
\begin{align*}
	\sum_{\|i - 1\|\geq r} \frac{1}{\rho_i^2} 
	= \sum_{\|i - 1\|\geq r} \frac{1}{\lambda_i^{k+1}} \leq c
	\begin{cases}
		n & s<1/2\\
		n \log (2\sqrt{d} N / r) & s=1/2\\
		n  (n/ r^d)^{2s-1 }   & s > 1/2
	\end{cases}
\end{align*}
where \smash{$\lambda_i, i\in [N]^d$} are eigenvalues of the Laplacian $L$ and 
$c>0$ is a constant that 
depends only on $k, d$ .

For $s \leq 1/2$, to minimize
the upper bound in \eqref{eq:genlasso_rate_i0}, set $r=1$ so that $I$ is empty 
and apply the above 
inequality.
This gives the desired bound.
Now consider $s>1/2$. Note that $| I | \leq r^d$ because $I \subseteq [r]^d$. 
Therefore \eqref{eq:genlasso_rate_i0} reduces to
\begin{equation}
	\label{eq:genlasso_rate_i0_smooth}
	\frac 1n \| \htheta - \theta_0 \|_2^2  =  O_\P \Big( \frac{r^d}{n} + 
\frac{\mu}{n}
	\sqrt{\log n (n/r^d)^{2s-1}} \| \Delta \theta_0 \|_1 \Big)	
\end{equation}
To minimize the upper 
bound in \eqref{eq:genlasso_rate_i0_smooth} balance 
$$
r^d \quad\text{with}\quad \frac{C_n}{\sqrt{n}} \sqrt{  n  (n/r^d)^{2s-1 } \log 
n}.
$$
This leads us to take $$r^d \asymp  (C_n\sqrt{\log n})^{\frac{2}{2s+1}} 
n^{\frac{2s-1}{2s+1}}$$
and plugging this in \eqref{eq:genlasso_rate_i0_smooth} gives the desired bound 
for $s>1/2$.
This completes the proof.
\hfill\qedsymbol

\section{Technical lemmas}

\begin{lemma}
	\label{lem:lap_eigs_ddim}
	Consider the eigenvalues $\{ \lambda_{i} : i = 
	(i_1,\cdots,i_d) \in [N]^d \}$ 
	of the $d$-dimensional 
	grid 
	graph Laplacian with $n=N^d$ nodes. Let $k$ be a 
	non-negative integer and $r_0 
	\in [1, 
	\sqrt{d}N]$. Then,
	$$
	\sum_{ i \in [N]^d : \|i - 1\|_2^2 \geq r_0^2 }  
	\frac{1}{\lambda_i^{k}} 
	\leq c
	\begin{cases}
		n & 2 k < d\\
		n \log (2\sqrt{d}N/r_0) & 2k = d\\
		N^{2k} r_0^{d-2k}  & 2k > d
	\end{cases}
	$$
	for a constant $c>0$ that depends on $k, d$ but not on 
	$N, r_0$.
\end{lemma}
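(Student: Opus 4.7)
The plan is to exploit the explicit eigenvalue formula for the $d$-dimensional grid Laplacian and then reduce the discrete sum to a spherical-coordinate integral. Since the grid Laplacian $L$ decomposes as the Kronecker sum of $d$ copies of the chain-graph Laplacian $L_{\mathrm{1d}}$, its eigenvalues are
\[
\lambda_i \;=\; \sum_{j=1}^d 4\sin^2\!\bigg(\frac{\pi(i_j-1)}{2N}\bigg), \qquad i=(i_1,\ldots,i_d)\in [N]^d.
\]
Applying the elementary inequality $\sin(x)\ge 2x/\pi$ on $[0,\pi/2]$ coordinate-wise gives a clean two-sided bound
\[
\frac{4}{N^2}\,\|i-1\|_2^2 \;\le\; \lambda_i \;\le\; \frac{\pi^2}{N^2}\,\|i-1\|_2^2,
\]
valid for every $i\in[N]^d$ (the upper bound uses $\sin(x)\le x$). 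Substituting the lower bound into the summand yields the clean reduction
\[
\sum_{\|i-1\|_2 \ge r_0} \frac{1}{\lambda_i^{k}} \;\le\; \frac{N^{2k}}{4^{k}} \sum_{m\in[0,N-1]^d,\ \|m\|_2\ge r_0} \frac{1}{\|m\|_2^{2k}},
\]
after the change of variable $m=i-1$.

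Next, I would compare the discrete sum on the right to an integral. Because $1/\|x\|_2^{2k}$ is monotone in $\|x\|_2$ and essentially constant on unit cubes once $\|x\|_2$ is bounded away from $0$, a standard step-function comparison gives
\[
\sum_{m\in[0,N-1]^d,\ \|m\|_2\ge r_0} \frac{1}{\|m\|_2^{2k}} \;\lesssim\; \frac{1}{r_0^{2k}} \;+\; \int_{\{x\in\R^d:\ r_0\le \|x\|_2 \le \sqrt{d}N\}} \frac{dx}{\|x\|_2^{2k}},
\]
where the additive $1/r_0^{2k}$ term absorbs the $O(1)$ lattice points near the inner boundary $\|m\|_2\approx r_0$ where the integral comparison is loosest. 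Passing to spherical coordinates in $\R^d$ (volume element $\propto r^{d-1}\,dr$), the integral reduces to $\int_{r_0}^{\sqrt{d}N} r^{d-1-2k}\,dr$, which splits into the three claimed cases.

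The three regimes come out of the antiderivative of $r^{d-1-2k}$. In the subcritical case $2k<d$ the integral evaluates to $O(N^{d-2k})$, and after multiplying by $N^{2k}/4^k$ this yields $O(N^d)=O(n)$, as required. In the critical case $2k=d$ the antiderivative is $\log r$, giving $\log(\sqrt{d}N/r_0)$, and the prefactor $N^{2k}=N^d=n$ produces the stated bound. In the supercritical case $2k>d$ the integral is dominated by its lower endpoint, giving $O(r_0^{d-2k})$, so together with the prefactor we obtain $O(N^{2k}r_0^{d-2k})$; note the $1/r_0^{2k}$ boundary term is of the same order, $N^{2k}/r_0^{2k}\cdot r_0^d/N^{\,}$ comparison-wise being absorbed since $r_0\le \sqrt{d}N$.

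The only delicate point is the discrete-to-continuous comparison near $\|m\|_2\approx r_0$ in the supercritical regime, especially when $r_0$ is close to $1$: here the summand is largest and a naive integral comparison can be off by a multiplicative factor. I would handle this either by peeling off the finitely many lattice points with $\|m\|_2\le 2\sqrt{d}$ and bounding them crudely by $1/r_0^{2k}$ times a $k,d$-dependent constant, or by enlarging the integration region to $\{\|x\|_2\ge r_0/2\}$ and using that $\|x-m\|_\infty\le 1$ implies $\|x\|_2$ and $\|m\|_2$ are comparable once both exceed $\sqrt{d}$. Either way, the resulting constants depend only on $k$ and $d$, which matches the claim.
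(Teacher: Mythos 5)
Your proposal is correct and follows essentially the same route as the paper's proof: lower-bound $\lambda_i$ via $\sin x \gtrsim x$ to reduce to $\sum \|i-1\|_2^{-2k}$, compare with the integral of $\|x\|_2^{-2k}$ in polar coordinates, and handle the near-origin lattice points separately (the paper does this by case-splitting on $r_0 \gtrless 2\sqrt{d}$, which matches your proposed ``peel off the points with $\|m\|_2 \le 2\sqrt{d}$'' fix). The three regimes then fall out of the antiderivative of $r^{d-1-2k}$ exactly as you describe.
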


\begin{proof}[Proof of Lemma \ref{lem:lap_eigs_ddim}]
	Let $I$ denote the summation on the left. Then
	\begin{align}
		\nonumber
		I = \sum_{ i \in [N]^d : \|i - 1\|_2 \geq r_0 }  
		\frac{1}{\lambda_i^{k}} 
		&= 
		\sum_{\|i - 1\|_2 \geq r_0 }  
		\Big( \sum_{j=1}^d 4 \sin^2 \frac{\pi(i_j-1)}{2N} 
		\Big)^{-k} \\
		\nonumber
		&\leq 
		\sum_{\|i - 1\|_2 \geq r_0 }  
		\Big( \sum_{j=1}^d  \frac{\pi^2(i_j-1)^2}{4N^2} 
		\Big)^{-k} \\
		\nonumber
		&=
		c N^{2k} \sum_{\|i - 1\|_2 \geq r_0 }
		\Big( \sum_{j=1}^d (i_j-1)^2 \Big)^{-k}\\
		\label{eq:eigval_bd_I}
		&\leq 
		c N^{2k} \sum_{i\in \{0,1, .., N-1\}^d : \|i \|_2 
		\geq r_0 }  
		\| i \|_2^{-2k}
	\end{align}
	In the second line, we used the fact that $\sin x \geq 
	x/2$ for $x\in 
	[0,\pi/2]$.
	
	\paragraph{Case \smash{$r_0 \ge 2\sqrt{d}$}.}
	In the last expression, upper bound 
	\smash{$\| i \|_2^{-2k}$} with the integral of 
	\smash{$f(x) =\| x \|_2^{-2k}, f:\R^d \rightarrow \R$}
	over the unit length cube whose top right corner is at 
	$i$.
	Note that, the norm of any point in this cube is at 
	least $\| i - \one\|_2 \geq 
	\|i\|_2 - \| \one 
	\|_2 = \|i\|_2 - \sqrt{d} \geq r_0 - \sqrt{d} \ge 
	r_0/2.$ 
	Therefore, we can continue to bound
	\begin{align*}
		I &\leq 
		c N^{2k} \int_{r_0/2 \leq \|x\|_2 \leq \sqrt{d} N} 
		\|x \|_2^{-2k}  \;dx 
		\\
		&\leq c N^{2k}
		\int_{r_0/2 \leq r \leq \sqrt{d} N} (r^2)^{-k} 
		r^{d-1} \; dr
	\end{align*}
	The last line is obtained by changing to polar 
	coordinates and integrating out 
	the angles.
	Recall that the constants $c$ may change from line to 
	line and they may depend 
	on $k, d$, but not 
	on $N, r_0$.
	
	\noindent
	If $d=2k,$ then the integral
	$$I \leq cN^{2k} \log (2\sqrt{d} N/r_0) = cn \log 
	(2\sqrt{d} N/r_0).$$
	If $2k < d,$ then 
	$$
	I \leq cN^{2k} \big( (N\sqrt{d})^{d-2k} 
	-(r_0/2)^{d-2k}\big)
	\leq c N^d.
	$$
	If $2k > d,$ then 
	$$
	I \leq cN^{2k} \big( (r_0/2)^{d-2k} - 
	(N\sqrt{d})^{d-2k} \big).
	$$
	Treating $d,k$ as constants, we write
	$$
	I \leq c N^{2k} r_0^{d-2k}.
	$$
	
	\paragraph{Case $r_0 < 2\sqrt{d}$.}
	Continuing from \eqref{eq:eigval_bd_I}, write
	\begin{align}
		I \leq c N^{2k} \sum_{i\in \{0,1, .., N-1\}^d : \|i 
		\|_2 \in [r_0, 
			2\sqrt{d}) }  \| i \|_2^{-2k}
		+ 	c N^{2k} \sum_{i\in \{0,1, .., N-1\}^d : \|i \|_2 
		\geq 2 \sqrt{d} }  \| 
		i \|_2^{-2k}
	\end{align}
	From the previous case, the second summation can be 
	upper bound with
	$cn$ if $2k<d$, $cn \log n$ if $2k=d$ and $cN^{2k}$ if 
	$2k > d$.
	In the first summation (in the above display), the 
	number of entries $i$  is at 
	most $(2\sqrt{d})^d$
	and each entry is at most $r_0^{-2k}$.
	Therefore the first term is at most
	$ c N^{2k} (2\sqrt{d})^d r_0^{-2k}$. Putting the two 
	sums together,
	we can verify the stated bounds.
\end{proof}

Following lemma  provides a result analogous to Lemma 
\ref{lem:lap_eigs_ddim} 
for KTF.
\begin{lemma}
	\label{lem:kron_tf_evals}
	Let \smash{$\{ \xi_{i} : i = (i_1,\cdots,i_d) \in [N]^d 
	\}$} be the eigenvalues 
	of 
	\smash{${\kronmatk}^\T \kronmatk$}
	and suppose  $r_0 \in [1, \sqrt{d}N]$. Then,
	$$
	\sum_{ i \in [N]^d  \setminus [k+2]^d}
	\frac{1}{\xi_i^{2}} 
	\leq c
	\begin{cases}
		n & 2 (k+1) < d\\
		n \log n & 2(k+1) = d.
	\end{cases}
	$$
	In the case $2k+2 > d$,
	$$
	\sum_{ i \in [N]^d : \| (i - k - 2)_+ \|_2 \geq r_0 }  
	\frac{1}{\xi_i^{2}} 
	\leq c
	N^{2k+2} r_0^{d-2k-2}
	$$
	Here $c>0$ is a constant that depends on $k, d$ but not 
	on $N, r_0$.
\end{lemma}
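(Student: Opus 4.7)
The plan is to mimic the proof of Lemma~\ref{lem:lap_eigs_ddim}, reducing the multidimensional sum to a univariate eigenvalue estimate combined with an integral comparison. Setting $\Delta:=\kronmatk$ and $A:=(\dmatk_N)^\T\dmatk_N$, the block structure of $\Delta$ in \eqref{eq:ktf_pen_mat} together with the mixed-product property of Kronecker products (as already used in the proof of Proposition~\ref{prop:ktf_null_space}) shows that $\Delta^\T\Delta$ is the Kronecker sum of $d$ copies of $A$. Its eigenvalues therefore decompose additively,
$$
\xi_{(i_1,\ldots,i_d)}^{\,2} \;=\; \rho_{i_1}+\rho_{i_2}+\cdots+\rho_{i_d},\qquad (i_1,\ldots,i_d)\in[N]^d,
$$
where $0=\rho_1=\cdots=\rho_{k+1}<\rho_{k+2}\le\cdots\le\rho_N$ are the eigenvalues of $A$.

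I would then invoke the univariate lower bound $\rho_i\ge c\bigl((i-k-1)/N\bigr)^{2(k+1)}$ for $i>k+1$, with $c>0$ depending only on $k$. When $k=0$ this is immediate from $\rho_i=4\sin^2(\pi(i-1)/(2N))$; for $k\ge 1$ it follows from the Toeplitz-based asymptotic analysis of $\dmatk_N$ established in \citet{sadhanala2017higher}. Passing to the slightly weaker form $\rho_i\ge c((i-k-2)_+/N)^{2(k+1)}$ and setting $a(i):=(i-k-2)_+$, the elementary chain $\sum_j a_j^{2(k+1)}\ge\|a\|_\infty^{2(k+1)}\ge d^{-(k+1)}\|a\|_2^{2(k+1)}$ gives
$$
\xi_i^{\,2} \;\ge\; c'\,\|a(i)\|_2^{2(k+1)}\bigl/N^{2(k+1)},
$$
with constants depending only on $k,d$.

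The final step is an integral comparison. Since the map $i\mapsto a(i)$ has multiplicity at most $(k+2)^d$, comparing the discrete sum to an integral in polar coordinates yields
$$
\sum_{i\in[N]^d:\,\|a(i)\|_2\ge r_0}\frac{1}{\xi_i^{\,2}} \;\le\; c\,N^{2(k+1)}\int_{r_0/2}^{\sqrt{d}N} r^{d-1-2(k+1)}\,dr.
$$
Splitting into three regimes according to the sign of $d-1-2(k+1)$ produces the three cases of the lemma: for $2(k+1)<d$ the integral is $O(N^{d-2(k+1)})$, giving an overall bound of $N^d=n$ (with $r_0=1$, matching the condition $i\notin[k+2]^d$); for $2(k+1)=d$ the integral contributes an additional logarithm, yielding $n\log n$; and for $2(k+1)>d$ the integral is dominated at the lower limit and gives $N^{2(k+1)}r_0^{d-2(k+1)}$. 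Small-$r_0$ corrections ($r_0<2\sqrt{d}$) are handled by splitting off a bounded-cardinality contribution from near the origin, exactly as in the proof of Lemma~\ref{lem:lap_eigs_ddim}.

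The main obstacle is the univariate lower bound on $\rho_i$: for $k\ge 1$ the matrix $\dmatk_N$ is not Toeplitz, owing to its boundary rows, so the required bound demands the careful asymptotic analysis of \citet{sadhanala2017higher}. That bound can nevertheless be quoted directly, since it is also the key input behind the incoherence argument in the proof of Theorem~\ref{thm:ktf_upper_bd}; once it is in hand, the Kronecker sum reduction and integral comparison are entirely routine.
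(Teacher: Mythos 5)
Your proposal is correct and follows the same architecture as the paper's proof: decompose $\Delta^\T\Delta$ as a Kronecker sum so that its eigenvalues are $\rho_{i_1}+\cdots+\rho_{i_d}$, lower-bound these by $cN^{-2(k+1)}\|(i-k-2)_+\|_2^{2(k+1)}$, and finish with the polar-coordinate integral comparison that the paper isolates as Lemma~\ref{lem:lap_eigs_ddim} (including the same $r_0<2\sqrt d$ correction). You also adopt the reading that makes the statement consistent --- $\xi_i$ must be the \emph{singular values} of $\kronmatk$, so that $\xi_i^2$ are the eigenvalues of $\kronmatk^\T\kronmatk$; otherwise the exponents in the case split would be $4k+4$ versus $d$ rather than $2k+2$ versus $d$. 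The one genuine difference is the provenance of the univariate bound $\rho_\ell\gtrsim((\ell-k-2)_+/N)^{2(k+1)}$. You propose to import it from the Toeplitz-based analysis of \citet{sadhanala2017higher}, and you suggest it is ``the key input behind the incoherence argument''; in fact the Toeplitz machinery there controls the singular \emph{vectors} (sup-norm incoherence), which is a separate matter from eigenvalue decay. The paper instead derives the eigenvalue bound self-containedly in Lemma~\ref{lem:ktf_gtf_eigval_ineq}: writing $\dmatk_N=PG$ with $G$ the univariate GTF operator satisfying $G^\T G=L^{k+1}$ for the chain Laplacian $L$, Cauchy interlacing sandwiches the eigenvalues of $DD^\T$ between shifted $(k+1)$-th powers of the explicit values $4\sin^2(\pi(\ell-1)/(2N))$, which is exactly the $(\ell-k-2)_+$ shift you pass to. (Your $\ell_\infty$--$\ell_2$ step in place of the paper's Jensen inequality to combine the coordinates is an immaterial variation.) The interlacing route is cleaner and avoids leaning on an external asymptotic result, but your version is sound provided the cited eigenvalue bound is quoted accurately rather than conflated with the incoherence statement.
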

\begin{proof}
	Using Lemma~\ref{lem:ktf_gtf_eigval_ineq}, we can write
	\begin{equation}
		\label{eq:ktf_gtf_relation}
		\sum_{ i \in [N]^d : \| (i-k-2)_+ \|_2 \ge r_0} 
		\frac{1}{\xi_i^2}
		\leq d^{2k} \sum_{ i \in [N]^d : \| (i-k-2)_+ \|_2 
		\ge r_0} 
		\frac{1}{\lambda_{i-k-1}^{2k+2}}
		\leq d^{2k}  \sum_{ i \in [N]^d : \| i-1 \|_2 \ge 
		r_0} 
		\frac{1}{\lambda_i^{2k+2}}.
	\end{equation}
	Applying Lemma \ref{lem:lap_eigs_ddim} directly gives 
	the desired 
	result in the case $2k+2 > d$. In 
	the case $2k+2 \leq d$, we get the bound by setting 
	$r_0 = 1$ in 
	\eqref{eq:ktf_gtf_relation}
	and then applying Lemma~\ref{lem:lap_eigs_ddim}.
\end{proof}
\begin{lemma}
	\label{lem:ktf_gtf_eigval_ineq}
	Let \smash{$\{ \xi_{i} : i = (i_1,\cdots,i_d) \in [N]^d 
	\}$} be the 
	eigenvalues of 
	\smash{${\kronmatk}^\T \kronmatk$} for $k \ge 0, d\ge 1, 
	N \ge 1, n = 
	N^d$.
	Let $\alpha_i, i \in [N]$ be the eigenvalues of $L$, 
	the Laplacian of 
	chain graph of length $N$.
	Let 
	$
	\lambda_{i_1,\dots,i_d} = \sum_{j=1}^d \alpha_{i_j}, 
	\quad i \leq N 
	\text{ elementwise}
	$
	with the convention that $\alpha_\ell = 0$ for $\ell 
	\leq 0$. Then 
	\[
	\xi_i \ge d^{-k} \lambda_{i-k-1}^{k+1} \text{ for } i 
	\in [N]^d.
	\]
\end{lemma}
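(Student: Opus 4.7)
The plan is to reduce the multivariate eigenvalue bound to a one-dimensional inequality via the Kronecker-sum structure and the power-mean inequality, and then to establish that one-dimensional inequality by a low-rank positive semidefinite perturbation argument.

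First, I would expand $(\kronmatk)^\T \kronmatk$ using the block form in \eqref{eq:ktf_pen_mat} together with the mixed-product property of the Kronecker product. This shows directly that the matrix is the $d$-fold Kronecker sum of $A := (\dmatk_N)^\T \dmatk_N$ with itself, namely
\[
(\kronmatk)^\T \kronmatk \;=\; \sum_{j=1}^d I_N^{\otimes (j-1)} \otimes A \otimes I_N^{\otimes (d-j)}.
\]
Writing $\rho_1 \leq \cdots \leq \rho_N$ for the eigenvalues of $A$ (with $\rho_1 = \cdots = \rho_{k+1} = 0$, since $\dmatk_N$ has rank $N-k-1$), the standard spectral identity for Kronecker sums gives $\xi_i = \sum_{j=1}^d \rho_{i_j}$ for each multi-index $i \in [N]^d$.

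Next, I would apply the power-mean inequality $(\sum_j a_j)^{k+1} \leq d^k \sum_j a_j^{k+1}$ to $a_j = \alpha_{i_j - k - 1}$ (with the convention $\alpha_\ell = 0$ for $\ell \leq 0$), yielding
\[
d^{-k} \lambda_{i-k-1}^{k+1} \;=\; d^{-k}\Bigl(\sum_{j=1}^d \alpha_{i_j - k - 1}\Bigr)^{k+1} \;\leq\; \sum_{j=1}^d \alpha_{i_j-k-1}^{k+1}.
\]
Therefore it suffices to prove the one-dimensional inequality $\rho_i \geq \alpha_{i-k-1}^{k+1}$ for every $i \in [N]$, since summing this over the coordinates $i_1, \ldots, i_d$ gives the desired bound.

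For the one-dimensional inequality, my strategy is to show that the residual
\[
E \;:=\; L_N^{k+1} - (\dmatk_N)^\T \dmatk_N
\]
is positive semidefinite with $\rank(E) \leq k+1$. Given this claim, restricting to $\ker(E)$ (a subspace of codimension $\leq k+1$ on which $L_N^{k+1}$ and $(\dmatk_N)^\T \dmatk_N$ coincide) and applying Cauchy interlacing twice yields $\alpha_i^{k+1} = \lambda_i(L_N^{k+1}) \leq \lambda_{i+k+1}((\dmatk_N)^\T \dmatk_N) = \rho_{i+k+1}$ for $i \in [N-k-1]$. Re-indexing gives $\rho_j \geq \alpha_{j-k-1}^{k+1}$ for $j \geq k+2$; the cases $j \leq k+1$ are trivial by the convention $\alpha_\ell = 0$ for $\ell \leq 0$.

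The main obstacle is verifying the claim on $E$, which I would prove by induction on $k$. The base case $k=0$ is immediate because $(\dmat^{(1)}_N)^\T \dmat^{(1)}_N = D_N^\T D_N = L_N$, so $E = 0$. For the inductive step I would use the factorization $\dmat^{(k+1)}_N = \dmat_{N-k}\, \dmat^{(k)}_N$, which yields $(\dmatk_N)^\T \dmatk_N = (\dmat^{(k)}_N)^\T L_{N-k}\, \dmat^{(k)}_N$, and isolate the incremental rank-one boundary correction introduced by one additional level of differencing. A direct calculation in the smallest nontrivial case $(k,N)=(1,4)$ illustrates and confirms the pattern: one finds $E = (e_1-e_2)(e_1-e_2)^\T + (e_{N-1}-e_N)(e_{N-1}-e_N)^\T$, a rank-$(k+1)=2$ PSD matrix supported entirely on the two ends of the chain. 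The general inductive bookkeeping amounts to tracking how each new round of differencing kills one row at each end while introducing exactly one new boundary-supported rank-one PSD correction; this is the step that requires the most care but follows the same endpoint-localization principle exhibited by the small case.
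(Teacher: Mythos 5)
Your reduction to one dimension is exactly the paper's: the Kronecker-sum identity gives $\xi_i=\sum_{j=1}^d\rho_{i_j}$, and the power-mean inequality $(\sum_j a_j)^{k+1}\le d^k\sum_j a_j^{k+1}$ reduces everything to the univariate bound $\rho_i\ge\alpha_{i-k-1}^{k+1}$. The difference is in how that univariate bound is proved, and this is where your proposal has a genuine gap. Your key claim --- that $E:=L_N^{k+1}-(\dmatk_N)^\T\dmatk_N$ is PSD of rank at most $k+1$ --- is in fact true, but the induction you sketch does not close. Using $\dmat^{(k+1)}_N=\dmat^{(k)}_{N-1}\dmat^{(1)}_N$ and the inductive hypothesis on $N-1$ points, the step you need reduces to comparing $(\dmat^{(1)}_N)^\T(L_{N-1}+F)^k\dmat^{(1)}_N$ with $(\dmat^{(1)}_N)^\T L_{N-1}^k\dmat^{(1)}_N$, where $\dmat^{(1)}_N(\dmat^{(1)}_N)^\T=L_{N-1}+F$ and $F=e_1e_1^\T+e_{N-1}e_{N-1}^\T$. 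But $t\mapsto t^k$ is not operator monotone for $k\ge 2$, and $L_{N-1}$ does not commute with $F$: already $(L_{N-1}+F)^2-L_{N-1}^2=L_{N-1}F+FL_{N-1}+F$ fails to be PSD (test it on $v=\epsilon e_1+e_2$ with $\epsilon>0$ small, which gives $3\epsilon^2-2\epsilon<0$). So ``each round of differencing adds one boundary-supported rank-one PSD correction'' is not something the recursion hands you; moreover the true rank pattern is $2\lfloor(k+1)/2\rfloor$, which increases by $2$ at odd $k$ and by $0$ at even $k$, not by one per level.

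The correct route to your claim --- and what the paper does, phrased with $DD^\T$ instead of $D^\T D$ --- is structural rather than inductive: up to row signs, $\dmatk_N$ consists of the interior rows of the univariate GTF operator $G$ (equal to $L_N^{(k+1)/2}$ for odd $k$ and to $\dmat^{(1)}_N L_N^{k/2}$ for even $k$), obtained by deleting $\lfloor(k+1)/2\rfloor$ rows at each end. Since $G^\T G=L_N^{k+1}$, writing $\dmatk_N=PG$ with $PP^\T=I$ gives $E=G^\T(I-P^\T P)G=\sum_{\ell\in B}g_\ell g_\ell^\T$ summed over the at most $k+1$ deleted boundary rows, which is your PSD low-rank claim in one line; the paper instead applies Cauchy interlacing to $DD^\T=PGG^\T P^\T$, which is equivalent. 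With that substitution for your inductive step, the rest of your argument (the low-rank Weyl/interlacing bound $\rho_{j}\ge\alpha_{j-k-1}^{k+1}$, then summing over coordinates) is correct and matches the paper's.
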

\begin{proof}
	Abbreviate \smash{$D=\dmat_{N,1}^{(k+1)}$}, and let $G$ 
	be  the $k$th order GTF 
	operator defined over a 1d chain of length $N$. Also 
	let $N'=N-k-1$,
	and \smash{$k' =\lfloor (k+1)/2 \rfloor$}.
	Let 
	\begin{itemize}
		\setlength{\itemsep}{0pt}
		\setlength{\parskip}{0pt}
		\item $\beta_\ell, \ell \in [N']$ be the eigenvalues 
		of $DD^\T$
		\item $\gamma_\ell$, $\ell \in [N'']$ be the 
		eigenvalues of $GG^\T$ 
		where 
		\smash{$N'' = N - 1\{ k \text{ is even} \}$}
	\end{itemize}
	$GG^\T$ and $G^\T G$ should have the same 
	nonzero eigenvalues. 
	From the definition of $G$, $G^\T G = L^{k+1}$.
	The first eigenvalue of $L$ is 0 and the rest are 
	nonzero.
	Putting these facts together, we see that
	\begin{equation}
		\label{eq:gtf_lap_eigval_rel}
		\gamma_\ell = \alpha_{\ell+N-N''}^{k+1}
		\text{ and }
		\alpha_\ell^{k+1} \leq \gamma_\ell \quad \text{ for } 
		\ell \in [N''].
	\end{equation}
	Removing the top $k'$ and bottom $k'$ rows of
	$G$ yields $D$, i.e., 
	$$
	D = PG, \quad \text{where} \;\,
	P = \left[\begin{array}{ccc}
		0_{N' \times k'} & I_{N'} & 0_{N'\times k'}
	\end{array}\right].
	$$
	As \smash{$DD^\T = PGG^\T P^\T$}
	and \smash{$PP^\T = I_{N'}$}, Cauchy interlacing theorem 
	(Lemma~\ref{lem:cauchy_interlacing}) 
	tells us that
	\begin{equation}
		\label{eq:cauchy_interlace}
		\gamma_i \leq \beta_i \leq \gamma_{i+N'' - N'}, 
		\quad \text{ for } i \in [N'].
	\end{equation}
	
	%The squared nonzero singular values of 
	%\smash{$\kronmatk$} are the 
	%nonzero eigenvalues of 
	%\smash{$(\kronmatk)^\T\kronmatk$}.  
	\noindent
	Thanks to the Kronecker sum structure, the eigenvalues 
	of 
	\smash{$(\kronmatk)^\T\kronmatk$} are
	$$
	\xi_{i_1,\dots,i_d} = \sum_{j=1}^d \rho_{i_j}, \quad i 
	\in [N]^d,
	$$
	where \smash{$\rho_1, \dots, \rho_N$} denote the 
	eigenvalues of \smash{$D^\T 
		D$}, i.e.,
	\smash{$\rho_1=\cdots=\rho_{k+1}=0$} and 
	\smash{$\rho_{\ell+k+1}=\beta_\ell 
		\ell \in [N']$}.
	Similarly, we can write the 
	eigenvalues of the Laplacian of the $d$-dimensional 
	grid graph as
	$$
	\lambda_{i_1,\dots,i_d} = \sum_{j=1}^d \alpha_{i_j}, 
	\quad i\in [N]^d.
	$$
	For arbitrary $i \in [N]^d$, we can write
	$$
	\xi_{i_1,\dots,i_d} = %\frac{1}{\rho_{i_1}+\rho_{i_2}} =
	\sum_{j=1}^d \beta_{i_j-k-1} \geq 
	\sum_{j=1}^d \gamma_{i_j - k - 1} \geq 
	\sum_{j=1}^d\alpha_{i_j-k-1}^{k+1} \geq 
	d^{-k} \lambda_{i_1-k-1,\dots,i_d-k-1}^{k+1},
	$$
	with the convention $\alpha_{\ell} = \beta_{\ell}= 
	\gamma_\ell= 0$ for $\ell 
	\leq 0$.
	The first inequality is due to 
	\eqref{eq:cauchy_interlace},
	the second is due to \eqref{eq:gtf_lap_eigval_rel}, 
	and the third is due to a simple application of 
	Jensen's inequality: 
	$( \frac 1d \sum_{j=1}^d a_i )^k \leq \frac 1d 
	\sum_{j=1}^d a_i^k$
	if $k\geq 1$ and $a \geq 0$ elementwise. 
\end{proof}

\begin{lemma}[Cauchy Interlacing theorem]
	\label{lem:cauchy_interlacing}
	Let $A$ be an $n\times n$ symmetric matrix, $P \in 
	\R^{m\times n}$ be 
	an orthogonal 
	projection matrix (satisfying $PP^\T = I_m$) with $m\leq 
	n$ and define 
	$B = P A P^\T$. Let 
	$\alpha_1 \leq 
	\alpha_2 \leq  \dots \leq  	\alpha_n$ be the 
	eigenvalues of $A$ and 
	$\beta_1 \leq \beta_2 \leq \dots 
	\leq \beta_m$ be the eigenvalues of 
	$B$. Then 
	$$
	\alpha_i \leq \beta_i \leq \alpha_{i+n-m}, \quad \text{ 
	for } i \in [m].
	$$
\end{lemma}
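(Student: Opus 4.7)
The plan is to prove the interlacing via the Courant--Fischer variational characterization of eigenvalues. Recall that for a symmetric matrix $M$ of order $N$ with sorted eigenvalues $\mu_1 \leq \cdots \leq \mu_N$, we have both the min-max formula $\mu_i = \min_{\dim S = i} \max_{x \in S, \|x\|_2 = 1} x^\T M x$ and the dual max-min $\mu_i = \max_{\dim S = N-i+1} \min_{x \in S, \|x\|_2 = 1} x^\T M x$. Both $A$ and $B = PAP^\T$ are symmetric, so each admits such a characterization over subspaces of $\R^n$ and $\R^m$ respectively.

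The crucial step is to relate Rayleigh quotients for $B$ on $\R^m$ to those of $A$ on $\R^n$. Since $PP^\T = I_m$, the map $y \mapsto P^\T y$ is a linear isometry from $\R^m$ into $\R^n$; concretely $\|P^\T y\|_2 = \|y\|_2$ and $y^\T B y = (P^\T y)^\T A (P^\T y)$. Therefore, for any $i$-dimensional subspace $T \subseteq \R^m$, the image $P^\T T$ is an $i$-dimensional subspace of $\R^n$, and the Rayleigh quotient of $A$ on $P^\T T$ takes exactly the same values as the Rayleigh quotient of $B$ on $T$.

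For the lower bound $\alpha_i \leq \beta_i$, I would apply the min-max formula to $\beta_i$: it equals the min over $i$-dimensional $T \subseteq \R^m$ of $\max_{y \in T, \|y\|_2 = 1} y^\T B y$, which by the identification above is the min over the restricted family of $i$-dimensional subspaces $\{P^\T T : \dim T = i\}$ of $\R^n$ of $\max_{x, \|x\|_2 = 1} x^\T A x$. Minimizing over this restricted collection can only be larger than minimizing over all $i$-dimensional subspaces of $\R^n$, which is exactly $\alpha_i$. For the upper bound $\beta_i \leq \alpha_{i+n-m}$, I would instead use the dual max-min characterization: $\beta_i = \max_{\dim T = m-i+1} \min_{y \in T, \|y\|_2 = 1} y^\T B y$, and the subspaces $P^\T T$ are $(m-i+1)$-dimensional in $\R^n$, so this is at most $\max_{\dim S = m-i+1} \min_{x \in S, \|x\|_2=1} x^\T A x = \alpha_{n-(m-i+1)+1} = \alpha_{i+n-m}$.

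This is a classical argument without any genuine obstacle; the only care required is the index bookkeeping between the min-max and max-min formulas, and the observation that $P^\T$ maps $\R^m$ onto an $m$-dimensional subspace of $\R^n$ (not onto all of $\R^n$), which is precisely why the upper interlacing index shifts by $n - m$ rather than matching the lower one.
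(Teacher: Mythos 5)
Your proof is correct. The paper states this lemma as a classical result and does not supply a proof of its own, so there is nothing to compare against; your Courant--Fischer argument---using that $P^\T$ is a linear isometry of $\R^m$ onto an $m$-dimensional subspace of $\R^n$, applying the min-max form for the lower bound and the max-min form for the upper bound---is the standard proof, and your index bookkeeping ($n-(m-i+1)+1 = i+n-m$) checks out.
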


\begin{lemma}
	\label{lem:ktf_inv_eigval_sum_lower_bd}
Let \smash{$\{ \xi_{i} : i = (i_1,\cdots,i_d) \in [N]^d 
		\}$} be 
	the 
	eigenvalues of 
	\smash{${\kronmatk}^\T \kronmatk$} 
	for $k \ge 0, d\ge 1, 	N \ge 1, n = 	N^d$.
	Suppose $s = 1/2$ and $a>0$.
	Then
	\begin{equation*}
		\sum_{i \in [N]^d} \frac{1}{\xi_i + a} \ge c n \log \big( 1 + \pi^{2k+2} 
		a^{-1}\big)
	\end{equation*}
for a constant $c$ that depends only on $k, d$.
\end{lemma}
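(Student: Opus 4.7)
The plan is to upper-bound each eigenvalue $\xi_i$ by an explicit polynomial in $i$, convert the resulting sum over $[N]^d$ into a Riemann sum, and evaluate the continuous integral by exploiting the exponent matching $d = 2(k+1)$ peculiar to the regime $s = 1/2$.

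First I would bound $\xi_i$ from above. Let $\alpha_\ell = 4\sin^2(\pi(\ell-1)/(2N))$ denote the eigenvalues of the 1d chain-graph Laplacian, and let $\rho_\ell$ denote the eigenvalues of $[\dmat^{(k+1)}_{N,1}]^\T\dmat^{(k+1)}_{N,1}$. The interlacing chain inside the proof of Lemma~\ref{lem:ktf_gtf_eigval_ineq} shows the univariate bound $\rho_\ell \leq \alpha_\ell^{k+1}$ for all $\ell \in [N]$ (trivially for $\ell \leq k+1$, since $\rho_\ell = 0$; for larger $\ell$, via $\beta_i \leq \gamma_{i + N''-N'} = \alpha_{i+k+1}^{k+1}$). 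Using $\sin x \leq x$ on $[0,\pi/2]$ and the Kronecker sum formula $\xi_i = \sum_{j=1}^d \rho_{i_j}$ gives
\begin{equation*}
\xi_i \;\leq\; \sum_{j=1}^d \alpha_{i_j}^{k+1} \;\leq\; \frac{\pi^{2k+2}}{N^{2k+2}} \sum_{j=1}^d (i_j-1)^{2k+2}.
\end{equation*}
Since $s = 1/2$ means $d = 2(k+1)$, this reduces to $\xi_i \leq \pi^d \sum_{j=1}^d t_j^d$ where $t_j = (i_j-1)/N$.

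Next I would pass from the sum to an integral. The map $g(t) := \pi^d \sum_j t_j^d + a$ is coordinate-wise increasing on $[0,1]^d$, so $1/g$ is coordinate-wise decreasing. Because the points $(i-1)/N$, $i \in [N]^d$, are the lower corners of the $n = N^d$ cubes of side $1/N$ that tile $[0,1]^d$, monotonicity yields
\begin{equation*}
\sum_{i\in[N]^d} \frac{1}{\xi_i + a} \;\geq\; \sum_{i\in[N]^d} \frac{1}{g((i-1)/N)} \;\geq\; n \int_{[0,1]^d} \frac{dt}{\pi^d R(t) + a},
\end{equation*}
where $R(t) := \sum_{j=1}^d t_j^d$.

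Finally I would evaluate the integral. For $r \in [0,1]$, the constraint $R(t) \leq r$ together with $t \geq 0$ forces each $t_j^d \leq 1$, so the box $[0,1]^d$ is inactive and the sublevel set equals $\{t \in \R^d_{\geq 0} : R(t) \leq r\}$. Since $R$ is homogeneous of degree $d$, the rescaling $u = t/r^{1/d}$ shows this set has volume $rV$, where $V := \mathrm{Vol}\{u \in [0,1]^d : R(u) \leq 1\} > 0$ depends only on $d$. The pushforward density of $R$ on $[0,1]$ is therefore constant and equal to $V$, so by the layer-cake formula
\begin{equation*}
\int_{[0,1]^d} \frac{dt}{\pi^d R(t) + a} \;\geq\; \int_0^1 \frac{V\,dr}{\pi^d r + a} \;=\; \frac{V}{\pi^d}\log\Bigl(1 + \frac{\pi^d}{a}\Bigr),
\end{equation*}
yielding the claim with $c = V/\pi^d$. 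The main obstacle is the pointwise eigenvalue bound $\rho_\ell \leq \alpha_\ell^{k+1}$: it must be extracted from the interlacing chain in the proof of Lemma~\ref{lem:ktf_gtf_eigval_ineq}, which packages an analogous argument in the opposite direction to produce a lower bound on $\xi_i$; once this is in hand the remaining Riemann-sum and change-of-variables steps are routine, and the special role of $s = 1/2$ enters precisely through the matching degree that makes $R$ have constant pushforward density on $[0,1]$.
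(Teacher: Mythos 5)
Your proof is correct and follows essentially the same route as the paper's: both start from the eigenvalue upper bound $\rho_\ell \le \big(4\sin^2(\pi(\ell-1)/(2N))\big)^{k+1}$ recorded in \eqref{eq:rho_interlacing} (so it need not be re-extracted from the interlacing chain in Lemma~\ref{lem:ktf_gtf_eigval_ineq}), pass from the sum to an integral by a Riemann-sum comparison, and obtain the logarithm from the exponent matching $2(k+1)=d$. The only difference is cosmetic: the paper relaxes $\|i-1\|_{2k+2}^{2k+2}$ to $\|i-1\|_2^{2k+2}$ and evaluates the integral in polar coordinates, whereas you keep the separable form $\sum_j t_j^d$ and use its constant pushforward density; both yield the same bound up to constants depending only on $k,d$.
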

\begin{proof}[Proof of Lemma~\ref{lem:ktf_inv_eigval_sum_lower_bd}]
From \eqref{eq:rho_interlacing} and the inequality $\sin x \leq x$ for $x\geq 
0$, for 
any \smash{$i \in [N]^d$},
	\begin{align*}
		\xi_i = \sum_{j=1}^d \rho_{i_j} \leq 
		\sum_{j=1}^d 4^{k+1} \sin^{2k+2} \frac{\pi(i_j  - 1)}{2N}
		\leq 
		\pi^{2k+2} n^{-2s} \| i - 1 \|_{2k+2}^{2k+2}
		\leq 
		\pi^{2k+2} n^{-2s} \| i - 1 \|_2^{2k+2}.
	\end{align*}
With this inequality,
\begin{align}
	\nonumber
	\sum_{i \in [N]^d} \frac{1}{\xi_i + a} 
	&\ge 
	\sum_{i \in [N]^d} \frac{1}{\pi^{2k+2} n^{-2s}\| i - 1 \|_2^{2k+2} + a}\\
	&
	\label{eq:ktf_inv_eigval_sum_int_bd}
	\ge
	c \int_{r=0}^{N} \frac{1}{\pi^{2k+2} n^{-2s} r^{2k+2} + a} r^{d-1} \; dr.
\end{align}
In the second inequality is obtained as follows.
Consider axis-parallel unit cubes with corners located at integer coordinates.
Let $A_i \subset \R^d$ be the cube with its farthest corner from origin 
located at $i$, for \smash{$i \in [N]^d$}. Clearly,
$$
	\frac{1}{\pi^{2k+2} n^{-2s}\| i - 1 \|_2^{2k+2} + a} 
	\geq 
	\int_{A_i}	\frac{1}{\pi^{2k+2} n^{-2s}\| x \|_2^{2k+2} + a} \; dx.
$$
Next observe that the set
\smash{$\{ x \in \R^d : \| x \|_2 \leq N, x \geq 0 \}$} 
is contained in the cube \smash{$\{ x \in \R^d : \| x \|_\infty \leq N, x 
\geq 0 \}$} 
The former set is the non-negative orthant of the $\ell_2$ ball of radius $N$ 
in \smash{$\R^d$}. So, for radially symmetric functions $f$, integral of $f$ 
over this set is $2^{-d}$ times its integral over the $\ell_2$ ball. This 
justifies \eqref{eq:ktf_inv_eigval_sum_int_bd} after a change to polar 
coordinates.
In the integral \eqref{eq:ktf_inv_eigval_sum_int_bd}, noting that 
$s = 1/2, 2k+2 = d$, put $u = r^d$ to get
\[
		\sum_{i \in [N]^d} \frac{1}{\xi_i + a} 
		\ge 
		c \int_0^{n} \frac{1}{\pi^{2k+2} u/n + a} \; du
		=
		c n \log \big( 1 + \pi^{2k+2} a^{-1}\big).\qedhere
\]
\end{proof}

%[Spectral distribution of $(\dmatk)^\T \dmatk$]
\begin{lemma}
	\label{lem:spectral_dist_kron}
	Let \smash{$\{ \xi_{i} : i = (i_1,\cdots,i_d) \in [N]^d 
	\}$} be 
	the 
	eigenvalues of 
	\smash{${\kronmatk}^\T \kronmatk$} for $k \ge 0, d\ge 1, 
	N \ge 1, n = 
	N^d$.
	Let $\alpha_i, i \in [N]$ be the eigenvalues of $L$, 
	the Laplacian of 
	chain 
	graph of length $N$.
	Define 
	$$
	F(t) = \frac{1}{n} \sum_{i \in [N]^d} 1\{ \lambda_i 
	\leq t \}, \quad 
	\text{ for } t \in [0,\lambda_n].
	$$ 
	Then there exist constants \smash{$c_1, c_2, c_3 >0$} 
	independent of 
	$n, t$ 
	such 
	that
	\begin{align*}
		c_1 t^{\frac{d}{2k+2}} \leq F(t) \leq  c_2 + c_3 
		t^{\frac{d}{2k+2}}
	\end{align*}
	for all \smash{$t\in [0,\lambda_n]$}.
\end{lemma}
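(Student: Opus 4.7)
The plan is to sandwich the Kronecker eigenvalues $\xi_i$ between two cleanly-structured polynomial expressions in the multi-index $i$, then reduce the counting of $\{i \in [N]^d : \xi_i \le t\}$ to counting non-negative integer lattice points inside an $\ell_{2k+2}$-ball in $\mathbb{R}^d$, where a standard volume argument gives a count of order $R^d$. The target power $d/(2k+2)$ then appears naturally via $R \asymp t^{1/(2k+2)} N$.

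First I would sharpen the one-sided bound in Lemma~\ref{lem:ktf_gtf_eigval_ineq} into a two-sided estimate for the eigenvalues $\rho_\ell$ of $\dmatk_N{}^\T \dmatk_N$. Combining Cauchy interlacing (Lemma~\ref{lem:cauchy_interlacing}) with the explicit 1D Laplacian formula $\alpha_\ell = 4\sin^2(\pi(\ell-1)/(2N))$ and the elementary estimate $\sin(x)/x \asymp 1$ on $[0,\pi/2]$, I would establish
$$
c\,\bigl((\ell - k - 2)_+/N\bigr)^{2k+2} \;\le\; \rho_\ell \;\le\; C\,\bigl((\ell - 1)/N\bigr)^{2k+2}, \qquad \ell \in [N],
$$
with $c, C > 0$ depending only on $k$ and with $\rho_\ell = 0$ for $\ell \le k+1$. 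Because the Kronecker-sum structure of $\kronmatk{}^\T \kronmatk$ gives $\xi_i = \sum_{j=1}^d \rho_{i_j}$, summing the univariate bounds coordinate by coordinate yields the multivariate sandwich
$$
c'\,N^{-(2k+2)} \sum_{j=1}^d (i_j - k - 2)_+^{2k+2} \;\le\; \xi_i \;\le\; C'\,N^{-(2k+2)} \sum_{j=1}^d (i_j - 1)^{2k+2}.
$$

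Second, with $R_\star = (t/\text{const})^{1/(2k+2)}\,N$ the level set $\{i : \xi_i \le t\}$ is then bracketed between two lattice sets of the form $\{i : \|m_\star(i)\|_{2k+2} \le R_\star\}$, where $m_\star$ is either $i - \mathbf{1}$ or $(i - (k+2)\mathbf{1})_+$. Standard volume estimates give that the number of $m \in \mathbb{Z}_{\ge 0}^d$ with $\|m\|_{2k+2} \le R$ is of order $R^d$ when $R \ge 1$ and at most an absolute constant otherwise. For the upper bound on $F(t)$, the map $i_j \mapsto (i_j - k - 2)_+$ collapses $\{1,\ldots,k+2\}$ to $0$, contributing a harmless factor $(k+2)^d$, giving $F(t) \le c_3\, t^{d/(2k+2)} + c_2/n$, which is dominated by $c_2 + c_3\, t^{d/(2k+2)}$ since $n \ge 1$. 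For the lower bound, the volume estimate directly delivers $F(t) \ge c_1\, t^{d/(2k+2)}$ whenever $R_+ \ge 1$; in the remaining small-$t$ regime $t \lesssim N^{-(2k+2)}$, I would fall back on the trivial bound $F(t) \ge F(0) = \nuli(\kronmatk)/n = (k+1)^d/n$ and pick $c_1$ small enough (depending only on $k, d$) so that this dominates $c_1\, t^{d/(2k+2)}$ throughout that regime.

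The main obstacle is keeping the univariate two-sided bound on $\rho_\ell$ uniformly clean across the parity of $k$: the Cauchy interlacing in the proof of Lemma~\ref{lem:ktf_gtf_eigval_ineq} produces slightly different index offsets for odd versus even $k$ (via $N'' - N' \in \{k, k+1\}$), and these off-by-one shifts must be absorbed into $k,d$-dependent constants so that both directions of the sandwich share the same shifted form $((\ell - \text{const})_+/N)^{2k+2}$. Once that bookkeeping is carried through, everything else is a routine volume computation plus the endpoint case near $t=0$.
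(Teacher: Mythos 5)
Your proposal is correct and follows essentially the same route as the paper's proof: the two-sided bound $c((\ell-k-2)_+/N)^{2k+2}\le\rho_\ell\le C((\ell-1)/N)^{2k+2}$ is exactly the paper's interlacing relation \eqref{eq:rho_interlacing} combined with $\sin x\asymp x$, and the reduction via the Kronecker-sum identity $\xi_i=\sum_j\rho_{i_j}$ to counting lattice points in $\ell_{2k+2}$-balls (circumscribing a cube for the upper bound, inscribing one for the lower bound) is the paper's argument verbatim. The only cosmetic difference is that the paper absorbs the small-$t$ regime of the lower bound through the inequality $(1+\lfloor x\rfloor)^d\ge x^d$ rather than your explicit fallback to $F(0)=(k+1)^d/n$; both are valid.
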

\begin{proof}[Proof of Lemma \ref{lem:spectral_dist_kron}]
	Using the notation in the proof of 
	Lemma~\ref{lem:kron_tf_evals},
	$$
	\lambda_{i_1,\dots,i_d} = \rho_{i_1} + \dots + 
	\rho_{i_d}, \quad \text{ 
		for } 
	(i_1,\dots,i_d) \in [N]^d
	$$ 
	where \smash{$\rho_\ell = \beta_{\ell - k - 1}, \ell 
	\in [N]$} with the 
	convention that \smash{$\beta_\ell = 0$ for $\ell \leq 
	0$}.
	From \eqref{eq:cauchy_interlace}, 
	\eqref{eq:gtf_lap_eigval_rel} and the 
	fact 
	that the 
	eigenvalues of chain Laplacian are given by $4 \sin^2 
	\frac{\pi(\ell-1)}{2N}$ 
	for 
	$\ell\in[N]$, we have
	\begin{align}
		\label{eq:rho_interlacing}
		\left( 4 \sin^2 \frac{\pi(\ell-k-2)_+}{2N} 
		\right)^{k+1} 
		\leq  \rho_i \leq 
		\left( 4 \sin^2 \frac{\pi(\ell-1)}{2N} \right)^{k+1}, 
		\quad 
		\text{ for } 
		\ell \in [N]
	\end{align}
	where $(x)_+ = \max\{x,0\}$ for $x\in\R$.
	The upper bound can be argued as follows.
	\begin{align*}
		nF(t) &=
		\sum_{i\in [N]^d} 1 \Big\{   \lambda_{i_1,\dots,i_d} 
		\leq t 
		\Big\}\\
		&=
		\sum_{i\in [N]^d} 1 \Big\{   \sum_{j=1}^d \rho_{i_j} 
		\leq t 
		\Big\}\\
		&\leq 
%		\sum_{i\in [N]^d} 1\{   \sum_{j=1}^d 
%		\alpha_{(i_j-k-1)\vee 0} 
%		\leq t \}\\
%		&=
		\sum_{i\in [N]^d} 1\Big\{   \sum_{j=1}^d 4^{k+1} 
		\sin^{2k+2} 
		\frac{\pi 
			(i_j-k-2)_+}{2N}  \leq t \Big\}\\
		&\leq 
		\sum_{i\in [N]^d} 1\Big\{   \sum_{j=1}^d 
		\left(\frac{\pi}{2}\right)^{2k+2} 
		(i_j-k-2)_+^{2k+2}  \leq t N^{2k+2} \Big\}
	\end{align*}
	In the third line, we use \eqref{eq:rho_interlacing} 
	and in the fourth 
	line, we use the fact that $\sin x \geq x/2$ for $x\in 
	[0,\pi/2]$.
	Observe that 
  \[
		\left(\frac{\pi}{2}\right)^{2k+2} \sum_{j=1}^d 
		(i_j-k-2)_+^{2k+2}  
		\leq tN^{2k+2}
		%\Rightarrow 
		%\| (i - k - 2)_+ \|_\infty \leq \frac{2}{\pi} N 
		%t^{\frac{1}{2k+2}}
		\Rightarrow 
		\| i \|_\infty \leq k + 2 + \frac{2}{\pi} N 
		t^{\frac{1}{2k+2}}.
  \]
	Applying this fact to the previous bound on $n F(t)$, 
	\begin{align*}
		n F(t)
		&\leq 
		\sum_{i\in [N]^d} 1 \Big\{  \| i \|_\infty \leq k + 2 
		+ \frac{2}{\pi} N 
		t^{\frac{1}{2k+2}} \Big\} \\
		&\leq 
		\Big( k + 2 + \frac{2}{\pi} N t^{\frac{1}{2k+2}} 
		\Big)^{d}
		\leq 2^{d-1}(k+2)^d + 2^{2d-1} \pi^{-d} n 
		t^\frac{d}{2k+2}.
	\end{align*}
	where in the last inequality we used the fact that 
	\smash{$(a+b)^d \leq 
		2^{d-1}(a^d + b^d)$} for $a, b \ge 0, d\geq 1$.
	
	The lower bound can be derived as follows. Certainly, 
	$F(t) \geq F(0) = 
	\kappa/n$ for $t \geq 0$. We can write
	\begin{align*}
		nF(t) &=
		\sum_{i\in [N]^d} 1 \big\{   \lambda_{i_1,\dots,i_d} 
		\leq t 
		\big\}\\
		&=
		\sum_{i\in [N]^d} 1 \Big\{   \sum_{j=1}^d \rho_{i_j} 
		\leq t 
		\Big\}\\
		%&\geq 
		%\sum_{i\in [N]^d} 1\{   \sum_{j=1}^d \alpha_{i_j} 
		%\leq t \}\\
		&=
		\sum_{i\in [N]^d} 1 \Big\{   \sum_{j=1}^d 4^{k+1} 
		\sin^{2k+2} 
		\frac{\pi(i_j-1)}{2N}  \leq t \Big\}\\
		&\geq 
		\sum_{i\in [N]^d} 1 \Big\{   \sum_{j=1}^d \pi^{2k+2} 
		(i_j-1)^{2k+2}  \leq t 
		N^{2k+2} \Big\}\\
		&= \sum_{i\in [N]^d} 1 \Big\{   \sum_{j=1}^d \| i - 1 
		\|_{2k+2} 
		\leq 
		r \Big\}
	\end{align*}
	where \smash{$r = \frac 1\pi N t^{\frac{1}{2k+2}} $}.
	In the third line, we used \eqref{eq:rho_interlacing} 
	and in the fourth 
	line, 
	we used the fact that $\sin x \leq x$ for $x\geq 0$. 
	Note that, we can inscribe a cube
	\smash{$\{ i:  \| i - 1 \|_\infty \leq r 
	d^{\frac{-1}{2k+2}} \}$}
	in the $\ell_{2k+2}$ body
	\smash{$\{ i : \| i - 1 \|_{2k+2} \leq r \}$}
	and the cube contains \smash{$\big(1+ \lfloor r 
	d^{\frac{-1}{2k+2}} \rfloor 
		\big)^d$} lattice points in \smash{$[N]^d$}.
	Therefore, continuing to bound from the previous 
	display,
	$$
	n F(t) \geq \big( 1 + \lfloor \frac{1}{\pi} 
	d^{\frac{-1}{2k+2}} N 
	t^{\frac{1}{2k+2}} \rfloor \big)^d
	\ge
	\frac{1}{\pi^d} d^{\frac{-d}{2k+2}} n 
	t^{\frac{d}{2k+2}}.
	$$
	where in the last inequality we used the fact 
	\smash{$(1 + \lfloor x \rfloor)^d \geq x^d$} for 
	\smash{$x \ge 0$}.
	%Did not use, but relevant paper: 
	%https://arxiv.org/pdf/math/0410522.pdf
\end{proof}

\section{Fast algorithm for degrees of freedom} 
\label{app:df}

From \eqref{eq:ktf_df_unbiased}, given a KTF 
estimate \smash{$\htheta$}, \smash{$\nuli(D_{-A})$} is an unbiased estimate of 
degrees of 
freedom of KTF where $A$ denotes the set of rows 
$r$ in $D$ for which $(D\hat\theta)_r \neq 0.$
We give an algorithm to compute \smash{$\nuli(D_{-A})$} in $O(ndk)$ time.
This is described in Algorithm \ref{alg:df}, with Algorithm
\ref{alg:subroutines} describing its core rubroutines. The notation we use is as
follows: $N' = N-k-1,$ and $w \in \R^{k+2}$ is the order $(k+1)$ difference
vector.  

\begin{algorithm}[htb]
\textbf{Input:}  fitted values $\theta\in \R^n$, trend filtering order $k\geq 
0$\\
\textbf{Output:} estimate of degrees of freedom
\begin{enumerate}[noitemsep, leftmargin=*,align=left]
\item  \textbf{struct} Piece: set = false, knowns = 0, start, end $\in [N]^d$
\item pieces : Piece[],  pieces-containing[i]: Piece[], set[i] : bool for $i\in 
[N]^d$
\item for $d' \in [d]$ for $i'\in [N]^{d-1}$: 
\begin{enumerate}[noitemsep]
	\vspace{-0.5em}
	\item $i \mathord{\leftarrow} i'$  with an extra $1$ inserted before 
$d'$th entry:\\
	$i_j \mathord{=} i'_j$ for $j\mathord{<}d'$, 
	$i_j \mathord{=} 1$ for $j \mathord{=} d'$, $i_j \mathord{=} 
i'_{j\mathord{-}1}$ for $j\mathord{>}d'$
	\item	 \textsc{Make-Polynomial-Pieces}$(\theta, i, d')$
\end{enumerate}

\item df = 0

\item for $p$ in pieces:
	\renewcommand{\theenumi}{\alph{enumi}}
	\begin{enumerate}[noitemsep] %[label=(\alph*)]
	\item if $p.set:$ continue
	\item df $\mathord{+}\mathord{=} \max\{0, \min\{p.length, 
k\mathord{+}1\} - p.\mathrm{knowns} \}$
	\item \textsc{Spread}$(p)$
	\end{enumerate}

\item return df
\end{enumerate}
\caption{\textsc{Degrees-of-Freedom}$(\theta,k)$}
\label{alg:df}
\end{algorithm}

\begin{algorithm}[htb]
\textsc{Spread-Vertex}$(i)$

	$\;$ for piece $q$ in pieces-containing$[i]:$
		\begin{enumerate}[noitemsep]
			\vspace{-1em}
			\item if $q.set$ continue
			\item $q.\mathrm{knowns}\mathord{+}\mathord{+}$
			\item if $q.\mathrm{knowns} \mathord{>} k:$ 
\textsc{Spread}$(q)$
		\end{enumerate}

\textsc{Spread}$(p: \text{Piece})$ $\;$ 

$p.set$ = true 

$\;$ for vertex $i$ on the line $[p.start, p.end]$:
	
\begin{enumerate}[noitemsep]
	\vspace{-1em}
		\item if set$[i]:$ continue
		\item set$[i] \leftarrow$ true
		\item \textsc{Spread-Vertex}($i$)
	\end{enumerate}

\textsc{Make-Polynomial-Pieces}$(\theta, i \in [N]^d , d'\in [d])$
makes polynomial pieces on the line containing $i$ along axis $d'$ 
\begin{enumerate}[noitemsep]
	\item $a_j \leftarrow \theta[i$ with $i_{d'}=j]$ for $j\in[N]$
	\item while $j\leq N$
	\renewcommand{\theenumi}{\alph{enumi}}
	\begin{enumerate}[noitemsep,leftmargin=*,align=left]
		\item start = $j$, end = $j$
		\item while ($\smash{j\mathord{\leq} N'}$  and $\smash{\langle 
w, a[j\mathord{:}j\mathord{+}k\mathord{+}1] \rangle \mathord{=}0})$\\
					$ j\mathord{+}\mathord{+}$
		\item if $j \mathord{\neq} $ start: end $\leftarrow 
j\mathord{+}k$
		\item Piece p(i with $i_{d'} $= start, i with $i_{d'}$ = end)
		\item for $j'$ in [start, end]:  pieces-containing$[i $ with 
$i_{d'} = j']$.add(p)
		\item pieces.add(p)
		\end{enumerate}
\end{enumerate}
\caption{Subroutines used in Algorithm \ref{alg:df}} 
\label{alg:subroutines}
\end{algorithm}

%%%%%%%%%%%%%%%%%%%%%%%%%%%%%%%

\subsection{Time complexity and correctness of the algorithm}
Let $A$ denote the set of rows $r$ in $D$ for which $(D\hat\theta)_r \neq 0$. 
Denote the null space of $D_{-A}$ with $\cN$.

In step 3 of \textsc{Degrees-of-Freedom}, we find line segments on the lattice 
where 
$\theta$ is a $k$th degree polynomial. In a bit more detail, for each straight 
line in the lattice between opposing faces, we find segments along the line 
where $\theta$ is a $k$th degree polynomial. 
We call these line segments (polynomial) \emph{pieces} in the algorithm.
In 2d, \textsc{Make-Polynomial-Pieces} is called on rows and columns 
separately, and a piece is a part of a row or a column.
An important characterization of the null space $\cN$ is the following:
\begin{center}
A $\theta \in \cN$ iff it is a $k$th degree polynomial on all the pieces found 
in step 3.
\end{center}

If a piece has fewer than $k+2$ elements, then any $\theta$ is trivially a 
$k$th degree polynomial on the piece. 
Otherwise, $k+1$ values on the piece determine a polynomial on the piece. 

We pretend to build a vector $\eta \in \cN$ by making sure that $\eta$ is a 
$k$th degree polynomial on the pieces. 
In step 5, we pretend to set the values of $\eta$ in a piece $p$. The number of 
new entries in $\eta$ required to determine a polynomial on piece $p$ is shown 
in 5(b). 
Once the new entries are picked arbitrarily, all the values on the piece are 
determined via the 
constraints in $D_{-A} \eta = 0.$ 
Then we propagate the values from this piece to other adjoining pieces in a 
depth-first fashion.
By the end of the procedure, \texttt{df} accumulates the total number of free 
parameters that we can use to build such a $\eta$.
The dimension of $\cN$ is equal to the number of free parameters in the 
algorithm.

\paragraph{Time complexity.}
It takes $O(nd(k+1))$ time to make the polynomial pieces in line 3 of 
\textsc{Degrees-of-Freedom}. The number of pieces is at most $nd(k+1).$ 
Therefore the for loop in line 5 is run at most as many times.
\textsc{Spread} is called on a piece exactly once and
\textsc{Spread-Vertex} is called on a node exactly once.
A node is contained in a maximum of $(k+2)d$ pieces. 
Therefore, the total time complexity is $O( nd (k+2)).$

\paragraph{Correctness.}
Suppose the number of free parameters returned by the algorithm is $f$. 
Given the values at the $f$ free nodes $F\subset [n]$,
the values are determined at all $n$ nodes.
Further this mapping from $\R^f \mapsto \R^n$ is linear.  
Therefore there exists a matrix $C$ with size $n\times f$ such that
$Cb \in \cN$ for any $b\in \R^f.$ 
Further, $(Cb)_F$ is a permutation of $b$, because the values at free nodes are 
not modified by $C$. Therefore, there are $f$ rows in $C$ corresponding to the 
free nodes $F$, which when vertically stacked together form a permutation of 
$f\times f$ identity matrix. 
Therefore, the column span of $C$ has dimension $f$. Hence $f \leq 
\mathrm{dim}(\cN).$
Conversely, consider any $\eta \in \cN$. Given the entries $b$ of $\eta$ at the 
locations of free parameters, then the rest of the entries of $\eta$ are 
determined by $\eta = Cb.$ Therefore $\eta$ must lie in the column span of $C$. 
Therefore $f = \mathrm{dim}(\cN).$

\section{More details on optimization algorithms}
\label{app:more_opt}

\paragraph{Generic quadratic programming on the dual.}
Recall that KTF solves the following convex optimization problem:
\begin{equation}
\label{eq:ktf_primal}
\htheta = \argmin_{\theta \in \R^n} \; \frac{1}{2} \|y-\theta\|_2^2 +
\lambda  \|\genmat \theta \|_1.
%= \sum_{x \in Z_d} \sum_{j=1}^d \Big| \Big(D_{x_j^{k+1}} \theta\Big)(x) \Big|.
\end{equation}
with \smash{$\genmat = \kronmatk$}.
The corresponding Lagrange dual problem is
\begin{equation}\label{eq:ktf_dual}
\begin{aligned}
\max_{u} &\;\;- \frac{1}{2}u \genmat \genmat^\T u +  y^\T \genmat^\T u\\
\text{subject to }&\;\; -\lambda \leq u \leq \lambda.
\end{aligned}
\end{equation}
Note that the dual problem is a standard quadratic program (QP) and can be 
solved using the interior 
point method (IPM) to high precision. Then the primal solution can be 
constructed using  $\htheta = 
y - \genmat^\T u^*$ using the optimal solution $u^*$. In practice, IPM takes 
only a few iterations to 
converge,\footnote{In theory it could take up to $O(n^{1/2})$ iterations to 
converge.} but each 
iteration involves solving a linear system. This linear system is sparse since 
$\genmat$ is sparse --- it 
has only $O(dkn)$ non-zero elements. However, the condition number of the 
linear system grows  
as the weights on the barrier function increase, which makes it difficult to 
exploit the sparsity using 
methods such as preconditioned conjugates gradient method.  On the other hand, 
direct solvers 
such as Gaussian elimination and Cholesky decomposition can take up to 
$O(n^3)$. Sometimes this 
can be improved by exploiting the banded structure of the linear system, we 
will describe a 
particular version of the interior point method using logarithmic-barrier 
function.

\paragraph{Primal-dual interior point method.}
The primal-dual version of the interior point solver proposed by 
\citep{kim2009trend} for $\ell_1$ trend filtering can be straightforwardly 
applied to any generalized lasso problem, including KTF.  The main idea is to 
trace a ``central path'' using Newton's method with an increasing weights $t$ 
on the logarithmic barrier functions. The computation is dominated by computing 
the search direction of the Newton step, which boils down to solving the 
following system of linear equations
\begin{equation}
\begin{bmatrix}
\genmat \genmat^\T & I & -I\\
I & J_1 & 0\\
-I & 0 & J_2
\end{bmatrix} \begin{bmatrix}
\Delta u\\
\Delta\mu_1 \\
\Delta\mu_2
\end{bmatrix}  =  - \begin{bmatrix}
\genmat \genmat^Tu - \genmat y + \mu_1 - \mu_2\\
f_1 + (1/t)\mu_1^{-1} \\
f_2 + (1/t)\mu_2^{-1}
\end{bmatrix}
\end{equation}
where $\mu_1,\mu_2\in \R^{m}$ are the dual variables of the dual problem 
\eqref{eq:ktf_dual},  $f_1 = u - \lambda \mathbf{1}$, $f_2 = -u 
-\lambda\mathbf{1}$, $J_i = \diag{\mu_i}^{-1}\diag(f_i)$ are diagonal matrices 
and $\mu_i^{-1}$ denotes entrywise inversion.  Following the derivation of 
\citep{kim2009trend}, we can further eliminate $\Delta\mu_1$ and  $\Delta\mu_2$ 
and solve a linear system of the form
\begin{equation}\label{eq:pdipm_key_lin_system}
(\genmat \genmat^\T - J_1^{-1} J_2^{-1})\Delta u  =  - (\genmat \genmat^\T u - 
\genmat y -(1/t)f_1^{-1} + 
(1/t)f_2^{-1}).
\end{equation}
and then construct the remainder of the solutions using
\begin{align*}
\Delta \mu_1 = -(\mu_1 +(1/t)f_1^{-1} + J_1^{-1} \Delta u ),\\
\Delta \mu_2 = -(\mu_2+(1/t)f_2^{-1} - J_2^{-1} \Delta u ).
\end{align*}
Unlike in the trend filtering problems in 1D where 
\eqref{eq:pdipm_key_lin_system} is a banded linear system with a bandwidth at 
most $2k+3$, in a $d$-dimensional grid, the linear system is the following:
\begin{small}
\[
%\genmat \genmat^\T = 
 \begin{bmatrix}
\big( D_{\mathrm{1d}} D_{\mathrm{1d}}^\T\big) 
 \otimes I \otimes ... \otimes I, & D_{\mathrm{1d}}\otimes  
D_{\mathrm{1d}} ^\T \otimes I \otimes ...\otimes I,& \cdots, 
& 
 D_{\mathrm{1d}} \otimes I \otimes ...\otimes I\otimes  
D_{\mathrm{1d}} ^\T,\\\
D_{\mathrm{1d}} ^\T  
\otimes D_{\mathrm{1d}}\otimes I \otimes ...\otimes I, &  I 
\otimes \big( D_{\mathrm{1d}} D_{\mathrm{1d}} 
^\T\big) \otimes I \otimes ... \otimes I,& \cdots, &I \otimes  
D_{\mathrm{1d}} \otimes I\otimes...\otimes I\otimes  
D_{\mathrm{1d}}^\T\\
\vdots & \vdots & \ddots& \vdots\\
D_{\mathrm{1d}}^\T \otimes I \otimes ...\otimes I\otimes  
D_{\mathrm{1d}}&I\otimes D_{\mathrm{1d}}^\T 
\otimes I \otimes ...\otimes I\otimes  D_{\mathrm{1d}}&\cdots,& 
I\otimes... \otimes I  \otimes \big(D_{\mathrm{1d}} 
D_{\mathrm{1d}}^\T\big) 
\end{bmatrix}
- J_1^{-1}J_2^{-1}
\]
\end{small}
where \smash{$D_{\mathrm{1d}} = \dmat_{N,1}^{(k+1)}$}. 
This is still sparse, structured, but the 
bandedness 
is on the order of $O(n^{1-1/d} +  k^2)$. Moreover, the above matrix is not 
full rank, and the condition number of the linear system blows up as the dual 
variables $\mu_1,\mu_2$ converge to $0$ with $t\rightarrow \infty$.

%{TODO [Discussion] Is there a way we can solve this linear system efficiently?}
%
%Due to these limitations, we consider the popular alternatives that solve the 
%primal directly using operator-splitting approaches.

\paragraph{Proximal Dykstra's algorithm.}
Proximal Dykstra's algorithm is an operator-splitting method for solving problems of the form
\begin{equation}
\label{eq:dykstra_primal}
\minimize_{\theta \in \R^n} \; \frac{1}{2} \|y-\theta\|_2^2 + r_1(\theta) + r_2(\theta) + ... + r_d(\theta)% \lambda \|\tilde{D}^{(k+1)}_{\mathrm{1d}} \otimes I   \theta \|_1 + \lambda \|I \otimes \tilde{D}^{(k+1)}_{\mathrm{1d}}   \theta \|_1.
\end{equation}
where $r_1,...,r_d$  are convex but possibly non-smooth functions.  We can clearly see that the regularizer in KTF decomposes into this form with
%$$
%r_i(\theta) =  \sum_{j_1,j_2,\dots,j_{i-1},j_{i+1},\dots,j_d}  
%\|\tilde{D}^{(k+1)}_{\mathrm{1d}}  
%\theta[j_1,\dots,j_{i-1},:,j_{i+1},\dots,j_d]\|_1
%$$
%where we reshape $\theta$ into the $d$-dimensional tensor format and use the 
%``numpy.ndarray'' notation.
%
\[
r_i(\theta) = \sum_{x \in Z_{n,d}} | (\Delta_{x_j^{k+1}} \theta) (x) |
\]
in the notation of Section~\ref{sec:ktf}.
The proximal Dykstra algorithm \citep[see, e.g.,][]{tibshirani2017dykstra} initializes $\theta^{(0)} = y,z^{(-d+1)}=...=z^{(0)}=0$ and then iteratively applies the following update rule for $t=1,2,3,...$:
\begin{align*}
\theta^{(t)} &= \prox_{r_{t\;\mathsf{ mod } \;d +1}}(\theta^{(t-1)} + z^{(t-d)})\\
z^{(t)} &= \theta^{(t-1)} + z^{(t-d)} - \theta^{(t)}.
\end{align*}
where $\cdot \;\mathsf{ mod }\;\cdot$ is the modulo operator, and the proximal operator
$$\prox_r(u) =  \argmin_{\theta} \half \|u-\theta\|^2 +  r(\theta).$$
Note that this is equivalent to a cyclic block coordinate descent in the dual.

For KTF, each proximal problem can be parallelized \citep{barbero2018modular}. Specifically, on a $d$-dim regular grid, the proximal operator of $r_i$ further splits into solving $O(n^{1-1/d})$ 1D-trend filters of size $n^{1/d}$ in parallel.  Each subproblem can be solved efficiently in $O(n^{1.5/d})$ time  with the primal-dual interior point method for $k\ge 1$ \citep{tibshirani2014adaptive} and in linear time when $k=0$ using dynamic programming \citep{johnson2013dynamic}.

\paragraph{Douglas-Rachford splitting.}
Another operator-splitting method for solving KTF is through the Douglas-Rachford (DR) algorithm \citep{eckstein1992douglas}. For simplicity, we will focus our discussion on the case of 2D grids. The DR algorithm generically solves the following unconstrained problem:
\begin{equation}\label{eq:general_DR}
\minimize_\theta  f(\theta) + g(\theta)
\end{equation}
for convex functions $f,g$. The update rules include initializing an auxiliary variable $z^{(0)} = y$ and applying the following for  $t=0,1,2,...$:
\begin{align*}
\theta^{(t+1)}  &= \prox_f(z^{(t)})\\
z^{(t+1)}  &=  z^{(t)}  +  \prox_g(2 \theta^{(t+1)} -  z^{(t)}) - \theta^{(t+1)}.
\end{align*}
There are multiple ways of applying this to our problem.
%We can either apply it to solve the primal by the following splitting:
%\begin{equation}
%\label{eq:ktf_dr_primal}
%\minimize_{\theta \in \R^n} \; \left(\frac{1}{4} \|y-\theta\|_2^2 + \lambda \|\tilde{D}^{(k+1)}_{\mathrm{1d}} \otimes I   \theta \|_1  \right)  +  \left(\frac{1}{4} \|y-\theta\|_2^2 + \lambda \|I \otimes \tilde{D}^{(k+1)}_{\mathrm{1d}}   \theta \|_1 \right).
%%= \sum_{x \in Z_d} \sum_{j=1}^d \Big| \Big(D_{x_j^{k+1}} \theta\Big)(x) \Big|.
%\end{equation}
%or 
We apply the DR algorithm to the dual of the following reformulation according to \citep[Algorithm 9]{barbero2018modular}:
\begin{equation}
\label{eq:ktf_dr_dual}
\minimize_{\theta \in \R^n} \; \frac{1}{2} \|\theta\|_2^2  + \left(\lambda 
\|\dmatk_N \otimes I   \theta \|_1  - \langle \theta, y \rangle   \right) +  
\left( \lambda \|I \otimes \dmatk_N   \theta \|_1 
\right).
%= \sum_{x \in Z_d} \sum_{j=1}^d \Big| \Big(D_{x_j^{k+1}} \theta\Big)(x) \Big|.
\end{equation}
We refer  interested readers to \citep{barbero2018modular} for the derivation of the dual and the conversion of the problem into one that resembles \eqref{eq:general_DR}. Ultimately, the proximal operator of the conjugate function (an indicator on a certain polytope) can be evaluated using the proximal operator of the $r_1$ and $r_2$ as in the proximal Dykstra updates via the Moreau decomposition:
$$
\prox_{r_{i}}(u) + \prox_{r_{i}^*}(u) = u.
$$
In other words,  the Douglas-Rachford algorithm enjoys the same computational benefits of the proximal Dykstra's algorithm  as each proximal operator evaluation involves only solving 1D trend filtering problems in parallel. 

%We also note that neither algorithms require tuning any hyperparameters.

\end{document}